    \newtheorem{theorem}{Theorem}[section]
    \newtheorem{lemma}{Lemma}[section]
    \newtheorem{definition}{Definition}[section]
\newcites{App}{Supplementary references}
\def\nn{\nonumber \\ }
\def\no{\nonumber }
\def\ti{\tilde}
\def\mc{\mathcal} 
\def\mr{\mathrm} 
\def\f{\frac}
\def\lt{\left}
\def\rt{\right}
\def\mbr{\mathbb{R}} 
\def\mbz{\mathbb{Z}}
\def\mbn{\mathbb{N}}
\def\mbe{\mathbb{E}}
\def\mcx{\mathcal{X}}
\def\mcy{\mathcal{Y}}
\def\al{\alpha}
\def\be{\beta}
\def\ga{\gamma}
\def\de{\delta}
\def\ep{\epsilon}
\def\th{\theta}
\def\la{\lambda}
\def\rh{\rho}
\def\si{\sigma}
\def\ta{\tau}
\def\ph{\phi}
\def\ps{\psi}
\def\om{\omega}
\def\Th{\Theta}
\def\La{\Lambda}
\def\Up{\Upsilon}
\def\Ps{\Psi}
\def\Om{\Omega}
\title{The Power of Log-Sum-Exp:\\ Sequential Density Ratio Matrix Estimation\\for Speed-Accuracy Optimization}
\author{%
  Taiki Miyagawa \\
  NEC Corporation, Japan\\
  \texttt{miyagawataik@nec.com} \\
  \And
  Akinori F. Ebihara \\
  NEC Corporation, Japan\\
  \texttt{aebihara@nec.com} \\
}
\begin{document}
\maketitle

\begin{abstract}
We propose a model for multiclass classification of time series to make a prediction as early and as accurate as possible.
The \textit{matrix sequential probability ratio test} (MSPRT) is known to be asymptotically optimal for this setting, but contains a critical assumption that hinders broad real-world applications; the MSPRT requires the underlying probability density.
To address this problem, we propose to solve \textit{density ratio matrix estimation} (DRME), a novel type of density ratio estimation that consists of estimating matrices of multiple density ratios with constraints and thus is more challenging than the conventional density ratio estimation.
We propose a log-sum-exp-type loss function (LSEL) for solving DRME and prove the following: (i) the LSEL provides the true density ratio matrix as the sample size of the training set increases (\textit{consistency}); (ii) it assigns larger gradients to harder classes (\textit{hard class weighting effect}); and (iii) it provides discriminative scores even on class-imbalanced datasets (\textit{guess-aversion}).
Our overall architecture for early classification, \textit{MSPRT-TANDEM}, statistically significantly outperforms baseline models on four datasets including action recognition, especially in the early stage of sequential observations.
Our code and datasets are publicly available\footnote{\url{https://github.com/TaikiMiyagawa/MSPRT-TANDEM}}.
\end{abstract}
\section{Introduction}\label{sec:introduction}
Classifying an incoming time series as early and as accurately as possible is challenging yet crucial, especially when the sampling cost is high or when a delay results in serious consequences \cite{Xing2009ECTSorig, Xing2012ECTS, Mori2015early_cost_minimization_point_of_view_NIPSw, Mori2018}. Early classification of time series is a multi-objective optimization problem, and there is usually no ground truth indicating when to stop observation and classify a sequence.

\begin{figure}[ht]
    \centering
    \begin{minipage}[b]{0.49\linewidth}
        \centering
        \includegraphics[width=\columnwidth, keepaspectratio]
        {./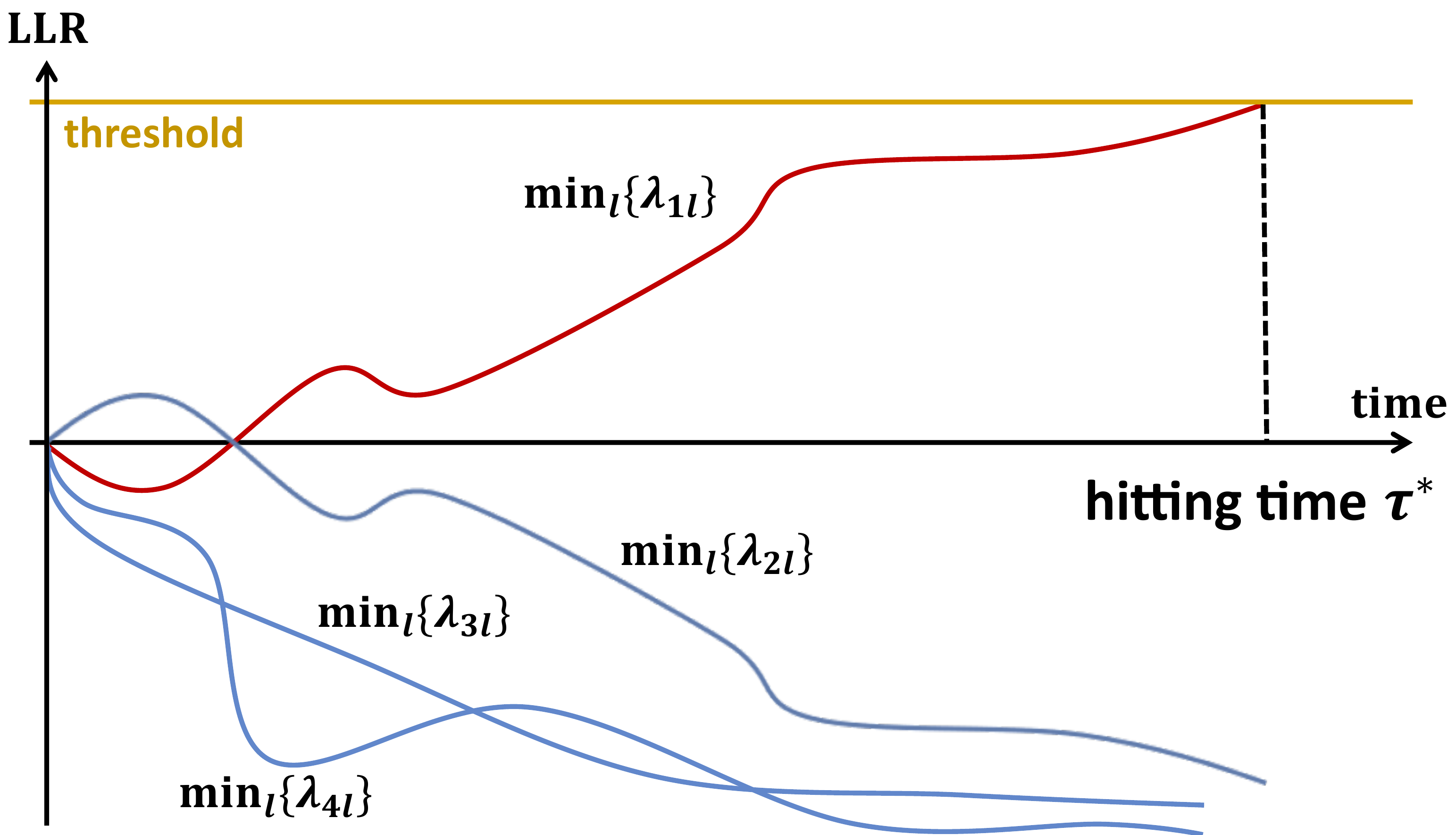} 
    \end{minipage}
    \begin{minipage}[b]{0.49\linewidth}
        \centering
        \includegraphics[width=\columnwidth, keepaspectratio]
        {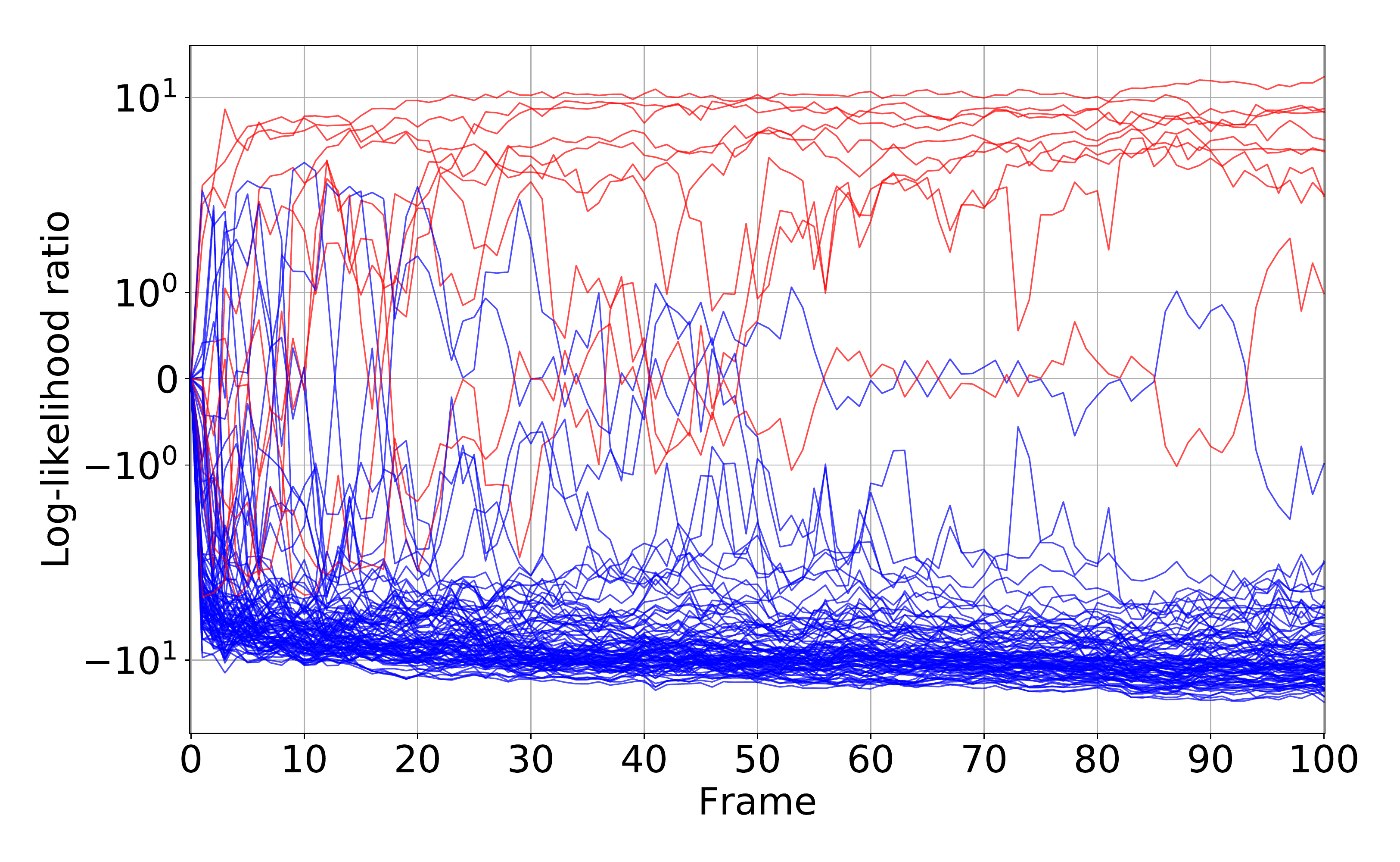} 
    \end{minipage}
    \caption{
        \textbf{Left: Early Classification of Time Series with MSPRT.} The figure illustrates how the MSPRT predicts the label $y$ of an incoming time series $X^{(1,t)} = \{ x^{(1)}, x^{(2)},...x^{(t)} \}$. The MSPRT uses the LLR matrix denoted by $\la_{kl} (X^{(1,t)}) := \log ( p(X^{(1,t)} | y=k) / p(X^{(1,t)} | y=l) )$, where $k, l = 1,2,...,K$. $K (\in \mbn)$ is the number of classes. If one of $\min_{l} \la_{kl} = \log {p(X^{(1,t)} | k)}/{\max_l p(X^{(1,t)} | l)}$ ($k \in \{1,2,...,K\}$) reaches the threshold, the prediction is made; otherwise, the observation continues.
        In this figure, $K=4$, the prediction is $y = 1$, and the hitting time is $\tau^*$.
        A larger threshold leads to more accurate but delayed predictions, while a smaller threshold leads to earlier but less accurate predictions. 
        \textbf{Right: Estimated LLRs of ten sequences.} 
        (See Appendix \ref{app: Figure fig: LLR trajectories of NMNIST-100f} for exact settings.)
        }
    \label{fig: MSPRT}
\end{figure}

\begin{figure}[t]
    \centering
    \begin{minipage}[b]{0.49\linewidth}
    \centerline{\includegraphics[width=\columnwidth, keepaspectratio]
    {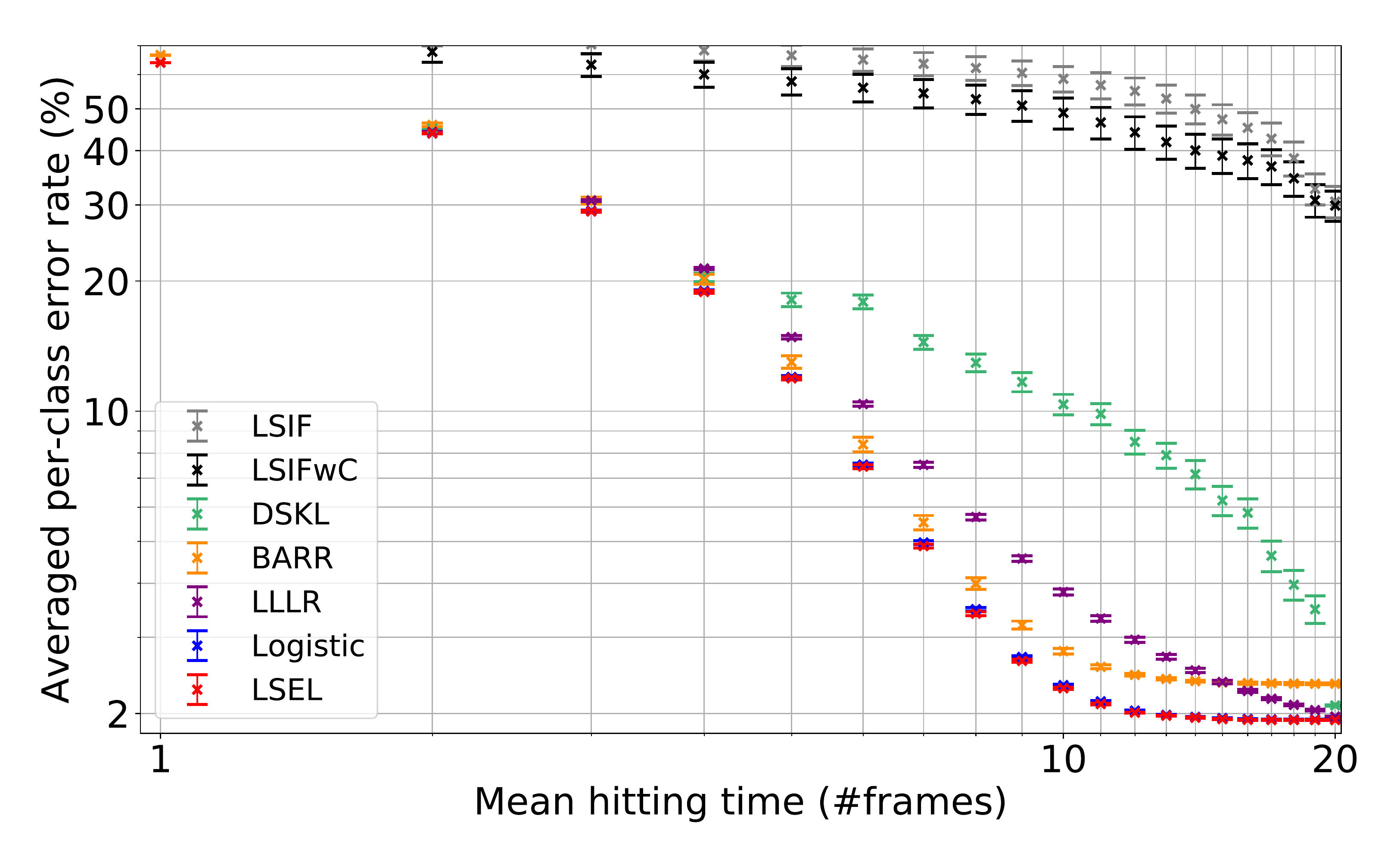}}
    \end{minipage}
    \begin{minipage}[b]{0.49\linewidth}
        \centering
        \includegraphics[width=\columnwidth, keepaspectratio]
        {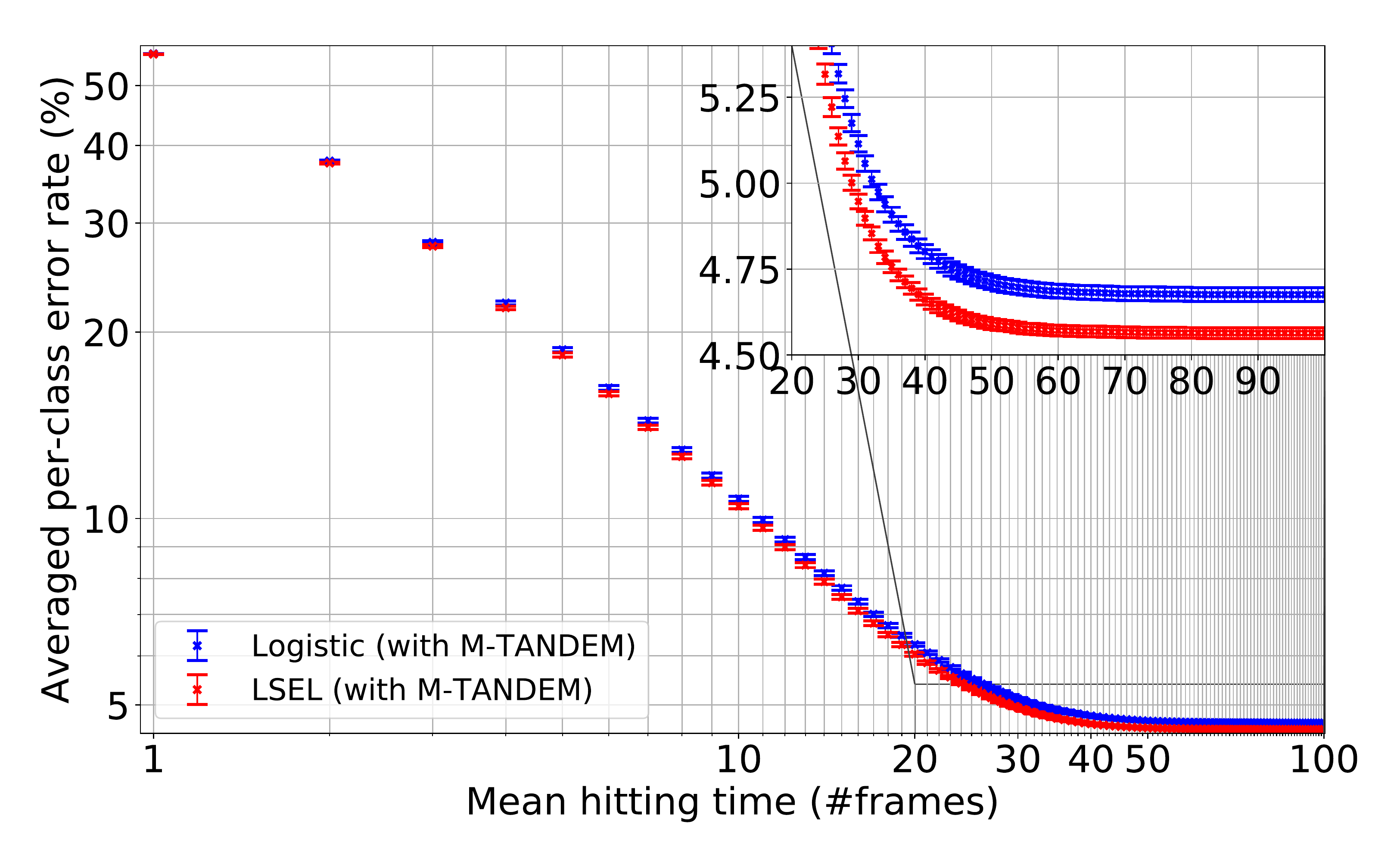} 
    \end{minipage}
    \caption{\textbf{LSEL v.s. Conventional Losses.}
        The datasets (NMNSIT-H and NMNIST-100f) are introduced in Section \ref{sec:experiment}. Curves in the lower left region are better.
        \textbf{Left: LSEL v.s. Binary DRE-based Losses on NMNSIT-H.}  The conventional losses do not generalize well in DRME.
        The MSPRT is run, using the LLR matrices estimated with seven different loss functions: LSIF \cite{kanamori2009least_LSIF_original} minimizes the mean squared error of $p$ and $\hat{r}q$ ($\hat{r}=\hat{p}/\hat{q}$); LSIFwC stabilizes LSIF by adding a normalization constraint of $\hat{r}q$; DSKL \cite{khan2019deep_DSKL_BARR_original} is based on KLIEP \cite{sugiyama2008direct_KLIEP_original} and minimizes the Kullback-Leibler divergence between $p$ and $\hat{r}q$; BARR \cite{khan2019deep_DSKL_BARR_original} stabilizes DSKL by adding the normalization constraint; LLLR \cite{SPRT-TANDEM} is similar to DSKL but is bounded above and below and is thus more stable; the logistic loss is the standard sum-log-exp-type loss; and the LSEL is our proposed loss. Their formal definitions are summarized in Appendix \ref{app: Figure fig: DRE Losses vs LSEL}. Only the logistic loss shows a comparable performance, but the LSEL is consistently better (Tables \ref{tab: DRE Losses vs LSEL (Right: NMNIST-100f, 1--50)}--\ref{tab: DRE Losses vs LSEL (Right: NMNIST-100f, 51--100)}). 
        \textbf{ Right: LSEL v.s. Logistic Loss on NMNIST-100f.} The M-TANDEM approximation is used, which is introduced in Section \ref{sec: Overall Architecture: MSPRT-TANDEM}. The error gap is statistically significant (Appendix \ref{app: Statistical Tests}).
        } \label{fig: DRE Losses vs LSEL}
\end{figure}

The MSPRT is a provably optimal algorithm for early multiclass classification and has been developed in mathematical statistics \cite{armitage1950MSPRT_TartarBook12, chernoff1959TartarBook97, kiefer1963TartarBook231, lorden1967TartarBook269, lorden1977TartarBook275_GaryMSPRT, pavlov1984TartarBook351, dragalin1987TartarBook123, pavlov1991TartarBook352, baum1994MSPRT_TartarBook50, dragalin1999TartarBook127}. 
The MSPRT uses a \textit{matrix} of log-likelihood ratios (LLRs), the $(k,l)$-entry of which is the LLR of hypothesis $H_k$ to hypothesis $H_l$ and depends on the current time $t$ through consecutive observations of sequential data $X^{(1,t)}$ (Figure \ref{fig: MSPRT}). 
A notable property of the MSPRT is that it is \textit{asymptotically optimal} \cite{tartakovsky1998TartarBook455}: It achieves the minimum stopping time among all the algorithms with bounded error probabilities as the thresholds go to infinity, or equivalently, as the error probabilities go to zero or the stopping time goes to infinity (Appendix \ref{app: MSPRT}).
Therefore, the MSPRT is a promising approach to early multiclass classification with strong theoretical support.

However, the MSPRT has a critical drawback that hinders its real-world applications in that it requires the true LLR matrix, which is generally inaccessible. To address this problem, we propose to solve \textit{density ratio matrix estimation} (DRME); i.e., we attempt to estimate the LLR matrix from a dataset. 
DRME has yet to be explored in the literature but can be regarded as a generalization of the conventional density ratio estimation (DRE), which usually focuses on only two densities \cite{sugiyama2012Density_Ratio_Estimation_in_Machine_Learning}. The difficulties with DRME come from simultaneous optimization of multiple density ratios; the training easily diverges when the denominator of only one density ratio is small. In fact, a naive application of conventional binary DRE-based loss functions does not generalize well in this setting, and sometimes causes instability and divergence of the training (Figure \ref{fig: DRE Losses vs LSEL} Left). 

Therefore, we propose a novel loss function for solving DRME, the \textit{log-sum-exp loss (LSEL)}. 
We prove three properties of the LSEL, all of which contribute to enhancing the performance of the MSPRT.
(i) The LSEL is \textit{consistent}; i.e., by minimizing the LSEL, we can obtain the true LLR matrix as the sample size of the training set increases. 
(ii) The LSEL has the \textit{hard class weighting effect}; i.e., it assigns larger gradients to harder classes, accelerating convergence of neural network training. Our proof also explains why log-sum-exp-type losses, e.g., \cite{song2016CVPR_lifted_structure_loss, wang2019CVPR_multi-similarity_loss, sun2020CVPR_circle_loss}, have performed better than sum-log-exp-type losses.
(iii) We propose the cost-sensitive LSEL for class-imbalanced datasets and prove that it is \textit{guess-averse} \cite{beijbom2014guess-averse}. Cost-sensitive learning \cite{elkan2001foundations_cost-sensitive_learning}, or loss re-weighting, is a typical and simple solution to the class imbalance problem \cite{Kubat97addressing_class_imbalance_maybeOriginal, japkowicz2002class_imbalance_systematic_study, he2009learning_class_imbalance, buda2018systematic_class_imbalance}. Although the consistency does not necessarily hold for the cost-sensitive LSEL, we show that the cost-sensitive LSEL nevertheless provides discriminative ``LLRs'' (scores) by proving its guess-aversion.

Along with the novel loss function, we propose the first DRE-based model for early multiclass classification in deep learning, \textit{MSPRT-TANDEM}, enabling the MSPRT's practical use on real-world datasets. MSPRT-TANDEM can be used for arbitrary sequential data and thus has a wide variety of potential applications.
To test its empirical performance, we conduct experiments on four publicly available datasets.
We conduct two-way analysis of variance (ANOVA) \cite{FisherBook_ANOVA} followed by the Tukey-Kramer multi-comparison test \cite{Tukey1949, Kramer1956} for reproducibility and find that MSPRT-TANDEM provides statistically significantly better accuracy with a smaller number of observations than baseline models.

Our contributions are summarized as follows.
\begin{enumerate}
    \item We formulate a novel problem setting, DRME, to enable the MSPRT on real-world datasets.
    \item We propose a loss function, LSEL, and prove its consistency, hard class weighting effect, and guess-aversion.
    \item We propose MSPRT-TANDEM: the first DRE-based model for early multiclass classification in deep learning. We show that it outperforms baseline models statistically significantly.
\end{enumerate}



\section{Related Work}
\paragraph{Early classification of time series.}
Early classification of time series aims to make a prediction as early and as accurately as possible \cite{Xing2009ECTSorig, Mori2015early_cost_minimization_point_of_view_NIPSw, Mori2016reliable_ECDIRE_non-deep_famous, Mori2018}.
An increasing number of real-world problems require earliness as well as accuracy, especially when a sampling cost is high or when a delay results in serious consequences; e.g.,
early detection of human actions for video surveillance and health care \cite{vats2016early}, 
early detection of patient deterioration on real-time sensor data \cite{mao2012integrated}, 
early warning of power system dynamics \cite{zhang2017intelligent_early_warning_of_power_system_dynamics}, 
and autonomous driving for early and safe action selection \cite{dona2020MSPRT_autonomous_driving}.
In addition, early classification saves computational costs \cite{amir2021FrameExit_CVPR_oral_early_exit}.

\paragraph{SPRT.}
Sequential multihypothesis testing has been developed in \cite{sobel1949TartarBook435, armitage1950MSPRT_TartarBook12, paulson1963TartarBook350, simons1967TartarBook432}. The extension of the binary SPRT to multihypothesis testing for i.i.d. data was conducted in \cite{armitage1950MSPRT_TartarBook12, chernoff1959TartarBook97, kiefer1963TartarBook231, lorden1967TartarBook269, lorden1977TartarBook275_GaryMSPRT, pavlov1984TartarBook351, dragalin1987TartarBook123, pavlov1991TartarBook352, baum1994MSPRT_TartarBook50,dragalin1999TartarBook127}.
The MSPRT for non-i.i.d. distributions was discussed in \cite{lai1981TartarBook248, tartakovsky1998TartarBook455, dragalin1999TartarBook128_MSPRT-I, tartakovsky2014TartarBook}. The asymptotic optimality of the MSPRT was proven in \cite{tartakovsky1998TartarBook455}.

\paragraph{Density ratio estimation.}
DRE consists of estimating a ratio of two densities from their samples without separately estimating the numerator and denominator \cite{sugiyama2012Density_Ratio_Estimation_in_Machine_Learning}. 
DRE has been widely used for, e.g., covariate shift adaptation \cite{sugiyama2008direct_KLIEP_original}, representation learning \cite{oord2018representation, hjelm2018learning}, mutual information estimation \cite{belghazi2018mutual}, and off-policy reward estimation in reinforcement learning \cite{liu2018breaking_off-policy_estimation}.
Our proof of the consistency of the LSEL is based on \cite{gutmann2012noise_NCE}.


We provide more extensive references in Appendix \ref{app: Supplementary Related Work}.
To the best of our knowledge, only \cite{SPRT-TANDEM} and \cite{moustakides2019training} combine the SPRT with DRE. Both restrict the number of classes to only two. The loss function proposed in \cite{SPRT-TANDEM} has not been proven to be unbiased; there is no guarantee for the estimated LLR to converge to the true one. \cite{moustakides2019training} does not provide empirical validation for the SPRT.

\section{Density Ratio Matrix Estimation for MSPRT} \label{sec:DRME}

\subsection{Log-Likelihood Ratio Matrix}

Let $p$ be a probability density over $(X^{(1,T)}, y)$. $X^{(1,T)} = \{ \bm{x}^{(t)} \}_{t=1}^{T} \in \mc{X}$ is an example of sequential data, where $T \in \mbn$ is the sequence length. $\bm{x}^{(t)} \in \mbr^{d_x}$ is a feature vector at timestamp $t$; e.g., an image at the $t$-th frame in a video $X^{(1,T)}$. $y \in \mc{Y} = [K] := \{ 1,2,...,K \}$ is a multiclass label, where $K \in \mbn$ is the number of classes. The LLR matrix is defined as
$\la (X^{(1,t)}) 
:= ( \la_{kl} (X^{(1,t)}) )_{k,l \in [K]}
:= ( \log  p(X^{(1,t)} | y = k) / p(X^{(1,t)} | y = l)  )_{k,l \in [K]}$,
where $p(X^{(1,t)} | y)$ is a conditional probability density. $\la (X^{(1,t)})$ is an anti-symmetric matrix by definition; thus the diagonal entries are 0. Also, $\la$ satisfies $\la_{kl} + \la_{l m} = \la_{km}$ ($\forall k, l, m \in [K]$). Let 
$\hat{\la} (X^{(1,t)}; \bm{\th})
:= ( \hat{\la}_{kl} (X^{(1,t)}; \bm{\th}) )_{k,l \in [K]}
:= ( \log  \hat{p}_{\bm{\th}} ( X^{(1,t)} | y=k ) / \hat{p}_{\bm{\th}} ( X^{(1,t)} | y=l ) )_{k,l \in [K]}$ 
be an estimator of the true LLR matrix $\la (X^{(1,t)})$, where $\bm{\th} \in \mbr^{d_\th}$ ($d_\th \in \mbn$) denotes trainable parameters, e.g., weight parameters of a neural network. We use the hat symbol ($\hat{\cdot}$) to highlight that the quantity is an estimated value. The $\hat{\la}$ should be anti-symmetric and satisfy $\hat{\la}_{kl} + \hat{\la}_{lm} = \hat{\la}_{km}$ ($\forall k, l, m \in [K]$). To satisfy these constraints, one may introduce additional regularization terms to the objective loss function, which can cause learning instability. Instead, we use specific combinations of the posterior density ratios $\hat{p}_{\bm{\th}} ( y=k | X^{(1,t)} ) / \hat{p}_{\bm{\th}} ( y=l | X^{(1,t)} )$, which explicitly satisfy the aforementioned constraints (see the following M-TANDEM and M-TANDEMwO formulae).

\subsection{MSPRT}

Formally, the MSPRT is defined as follows (see Appendix \ref{app: MSPRT} for more details):
\begin{definition}[Matrix sequential probability ratio test] \label{def: MSPRT}
    Let $P$ and $P_k$ ($k \in [K]$) be probability distributions.
    Define a threshold matrix $a_{kl} \in \mbr$ ($k,l \in [K]$), where the diagonal elements are immaterial and arbitrary, e.g., 0. The MSPRT of multihypothesis $H_k: P = P_k$ ($k \in [K]$) is defined as $\de^* := (d^*, \ta^*)$, where $d^* := k$ if $\ta^* = \ta_k$ ($k \in [K]$),
    $\ta^* := \min \{ \ta_k | k \in [K] \}$, and
    $\ta_k := \inf \{ 
        t \geq 1 
        | \underset{l (\neq k) \in [K]}{\min} \{
            \la_{kl}( X^{(1,t)} ) - a_{lk}
        \} \geq 0
    \}$.    
\end{definition}
In other words, the MSPRT terminates at the smallest timestamp $t$ such that for a class of $k \in [K]$, $\la_{kl}(t)$ is greater than or equal to the threshold $a_{lk}$ for all $l (\neq k)$ (Figure \ref{fig: MSPRT}). By definition, we must know the true LLR matrix $\la (X^{(1,t)})$ of the incoming time series $X^{(1,t)}$; therefore, we estimate $\la$ with the help of the LSEL defined in the next section. For simplicity, we use single-valued threshold matrices ($a_{kl} = a_{k^\prime l^\prime}$ for all $k,l,k^\prime,l^\prime \in [K]$) in our experiment.

\subsection{LSEL for DRME} \label{sec: LSEL for DRME}
To estimate the LLR matrix, we propose the \textit{log-sum-exp loss} (LSEL): 
\begin{align} \label{eq:LLLR-E}
    L_{\rm \text{LSEL}} [\tilde{\la}] 
    := \f{1}{KT} \sum_{k \in [K]} \sum_{t \in [T]} \int d X^{(1,t)} p ( X^{(1,t)} | k )
    \log ( 
        1 + \sum_{ l ( \neq k ) } e^{ - \tilde{\la}_{k l} ( X^{(1,t)} ) }
    ) \, .
\end{align}
Let $S := \{ (X_i^{(1,T)}, y_i) \}_{i=1}^{M} \sim p(X^{(1,T)}, y)^M$ be a training dataset, where $M \in \mbn$ is the sample size. The empirical approximation of the LSEL is
\begin{align} \label{eq:LLLR-E with empirical approx.}
    \hat{L}_{\rm \text{LSEL}} (\bm{\th}; S) :=
     \f{1}{KT} \sum_{k \in [K]} \sum_{t \in [T]} \f{1}{M_k} \sum_{i \in I_k}  \log ( 
        1 + \sum_{ l ( \neq k ) } e^{ - \hat{\la}_{k l} ( X_i^{(1,t)}; \bm{\th} ) }
    ) \, .
\end{align}
$M_k$ and $I_k$ denote the sample size and index set of class $k$, respectively; i.e., $M_k = |\{ i \in [M] | y_i = k \}| = |I_k|$ and $\sum_k M_k = M$. 

\subsubsection{Consistency}
A crucial property of the LSEL is \textit{consistency}; therefore, by minimizing (\ref{eq:LLLR-E with empirical approx.}), the estimated LLR matrix $\hat{\la}$ approaches the true LLR matrix $\la$ as the sample size increases. The formal statement is given as follows:
\begin{theorem}[Consistency of the LSEL]\label{thm:consistency of LSEL}
    Let $L(\bm{\th})$ and $\hat{L}_S (\bm{\th})$ denote $L_{\rm \text{LSEL}} [\hat{\la} (\cdot; \bm{\th})]$ and $\hat{L}_{\rm \text{LSEL}} (\bm{\th} ;S)$ respectively. 
    Let $\hat{\bm{\th}}_S$ be the empirical risk minimizer of $\hat{L}_S$; namely, $\hat{\bm{\th}}_S := \mr{argmin}_{\bm{\th}} \hat{L}_S (\bm{\th})$.
    Let $\Th^* := \{ \bm{\th}^* \in \mbr^{d_{\th}} | \hat{\la} ( X^{(1,t)} ; \bm{\th}^* ) = \la (X^{(1,t)}) \,\, (\forall t \in [T]) \}$ be the target parameter set. Assume, for simplicity of proof, that each $\bm{\th}^*$ is separated in $\Th^*$; i.e., $\exists \de > 0$ such that $B ( \bm{\th}^* ; \de ) \cap B ( \bm{\th}^{*\prime}; \de ) = \emptyset$ for arbitrary $\bm{\th}^*$ and $\bm{\th}^{*\prime} \in \Th^*$, where $B(\bm{\th}; \de)$ denotes an open ball at center $\bm{\th}$ with radius $\de$. Assume the following three conditions:
    \begin{itemize}
        \item[(a)] $\forall k, l \in [K]$, $\forall t \in [T]$ , $p(X^{(1,t)} | k) = 0 \Longleftrightarrow p(X^{(1,t)} | l) = 0$.
        
        \item[(b)] $\mr{sup}_{\bm{\th}} | \hat{L}_S (\bm{\th}) - L (\bm{\th}) | \xrightarrow[M\rightarrow\infty]{P} 0$; i.e., $\hat{L}_S (\bm{\th})$ converges in probability uniformly over $\bm{\th}$ to $L(\bm{\th})$.

        \item[(c)] For all $\th^* \in \Th^*$, there exist $t \in [T]$, $k\in[K]$ and $l\in[K]$, such that the following $d_\th \times d_\th$ matrix is full-rank:
        \begin{align}
            \int d X^{(1,t)} p( X^{(1,t)} | k )
            \nabla_{\bm{\th}^*} \hat{\la}_{k l}(X^{(1,t)}; \bm{\th}^*)
            \nabla_{\bm{\th}^*} \hat{\la}_{k l}(X^{(1,t)}; \bm{\th}^*)^{\top} \, .
        \end{align}
        
    \end{itemize}
    Then,  $P( \hat{\bm{\th}}_S \notin \Th^* ) \xrightarrow[]{M\rightarrow\infty}0$; i.e., $\hat{\bm{\th}}_S$ converges in probability into $\Th^*$.
\end{theorem}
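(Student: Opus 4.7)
The plan is to reduce the claim to the classical $M$-estimator consistency argument, whose two ingredients are (i) identifiability of the population minimum, $\mr{argmin}_{\bm{\th}}\, L(\bm{\th}) = \Th^*$, and (ii) uniform convergence of the empirical loss, which is exactly assumption (b). The main work lies in (i), which I would prove as follows. Since $\hat{\la}_{kl}$ is anti-symmetric and cycle-consistent, one may write $\hat{\la}_{kl}(X;\bm{\th}) = \hat{f}_k(X;\bm{\th}) - \hat{f}_l(X;\bm{\th})$ for some $\hat{f}_k$ (unique up to an additive $X$-function that cancels in differences). Defining the induced softmax $\hat{q}_k(X;\bm{\th}) := e^{\hat{f}_k(X;\bm{\th})} / \sum_j e^{\hat{f}_j(X;\bm{\th})}$, the identity $\log ( 1 + \sum_{l \neq k} e^{-\hat{\la}_{kl}} ) = -\log \hat{q}_k$ collapses each $(k,t)$-summand of the LSEL into a cross-entropy. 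Writing $p_+(X) := \sum_k p(X|k)$ and $p_k^*(X) := p(X|k)/p_+(X)$, this rearranges $L(\bm{\th})$ as
\[
L(\bm{\th}) = \f{1}{KT}\sum_{t\in [T]}\int p_+(X^{(1,t)}) \left[ \mr{KL}(p^*(X^{(1,t)}) \,\|\, \hat{q}(X^{(1,t)}; \bm{\th})) + H(p^*(X^{(1,t)})) \right] d X^{(1,t)} \, .
\]
By Gibbs' inequality this is minimized iff $\hat{q}_k(\cdot;\bm{\th}) = p_k^*$ a.s.\ with respect to $p_+$, which by condition (a) is equivalent to $\hat{\la}_{kl}(\cdot;\bm{\th}) = \la_{kl}$ for all $k,l,t$, i.e.\ $\bm{\th} \in \Th^*$. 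Hence $\mr{argmin}_{\bm{\th}}\, L(\bm{\th}) = \Th^*$ and the minimum $L^* := \min_{\bm{\th}} L(\bm{\th})$ is attained exactly on $\Th^*$.

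Next I would run the standard Wald-style argument. Fix $\ep > 0$ and let $U_\ep := \bigcup_{\bm{\th}^* \in \Th^*} B(\bm{\th}^*; \ep)$; the goal is $P(\hat{\bm{\th}}_S \notin U_\ep) \to 0$. The separation assumption on $\Th^*$ together with condition (c) supplies a strictly positive gap $\de_\ep := \inf_{\bm{\th} \notin U_\ep} L(\bm{\th}) - L^* > 0$: differentiating the KL representation above twice at $\bm{\th}^* \in \Th^*$ yields $\nabla^2 L(\bm{\th}^*) = \f{1}{2KT} \sum_t \sum_{k,l} \int \f{p(X|k)\, p(X|l)}{p_+(X)} \nabla \hat{\la}_{kl}\nabla \hat{\la}_{kl}^{\top} d X$, a sum of PSD terms; the triple $(t,k,l)$ furnished by (c) forces one term to be positive definite (using condition (a) to transfer the full-rank property from $p(X|k)$-integration to $p(X|k)p(X|l)/p_+(X)$-integration), so $L$ has a strict local quadratic lower bound near each $\bm{\th}^*$. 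On the high-probability event $\{ \sup_{\bm{\th}} | \hat{L}_S(\bm{\th}) - L(\bm{\th}) | < \de_\ep/3 \}$ supplied by (b), any $\bm{\th} \notin U_\ep$ satisfies $\hat{L}_S(\bm{\th}) \geq L(\bm{\th}) - \de_\ep/3 \geq L^* + 2\de_\ep/3 > L^* + \de_\ep/3 \geq \hat{L}_S(\bm{\th}^*)$ for any chosen $\bm{\th}^* \in \Th^*$, contradicting optimality of $\hat{\bm{\th}}_S$. Hence $\hat{\bm{\th}}_S \in U_\ep$ with probability tending to $1$, which is the claim.

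The main obstacle is step (i): the cycle-consistency of $\hat{\la}$ couples all entries of the matrix, so one cannot minimize the LSEL entry by entry over each $\hat{\la}_{kl}$ independently. The softmax representation $\hat{\la}_{kl} = \hat{f}_k - \hat{f}_l$ is precisely what disentangles the constraints and exposes the loss as a cross-entropy, after which Gibbs' inequality finishes the argument. A secondary subtlety is the well-separated-minimum condition on the non-compact parameter space $\mbr^{d_\th}$: condition (c) yields only a local quadratic gap around each $\bm{\th}^*$, so ruling out far-away $\bm{\th}$ with $L(\bm{\th})$ close to $L^*$ must rely on the uniform-in-$\bm{\th}$ strength of (b) together with an implicit tightness or growth condition on $\hat{\la}(\cdot;\bm{\th})$ that a careful write-up would make explicit.
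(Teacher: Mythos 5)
Your proposal is correct, and it shares the paper's overall skeleton (Appendix \ref{app:Proof of Consistency Theorem of LSEL}): first establish that the population minimizers of $L$ are exactly $\Theta^*$, then conclude via the uniform convergence in (b). But your route to the identifiability step is genuinely different and cleaner. The paper's Lemma C.1 proves that $L[\tilde{\lambda}]$ is uniquely minimized at $\tilde{\lambda}=\lambda$ by a second-order functional expansion in an anti-symmetric perturbation $\phi$, solving the first-order condition by hand (using the cocycle identity $\tilde{\Lambda}_{mk}\tilde{\Lambda}_{kl}=\tilde{\Lambda}_{ml}$) and verifying positivity of the second-order term; your observation that $1+\sum_{l(\neq k)}e^{-\hat{\lambda}_{kl}}=\sum_{l\in[K]}e^{\hat{f}_l-\hat{f}_k}=1/\hat{q}_k$ (with, say, $\hat{f}_k:=\hat{\lambda}_{k1}$, which is legitimate precisely because of anti-symmetry and cycle-consistency) turns each summand into a softmax cross-entropy, after which Gibbs' inequality gives uniqueness in one stroke; assumption (a) enters exactly as in the paper, ensuring $p^*$ is strictly positive on the support of $p_+$ so that $\hat{q}=p^*$ pins down the full LLR matrix. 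Your Hessian formula $\nabla^2 L(\theta^*)=\frac{1}{2KT}\sum_{t}\sum_{k,l}\int \frac{p(X|k)\,p(X|l)}{p_+(X)}\nabla\hat{\lambda}_{kl}\nabla\hat{\lambda}_{kl}^{\top}\,dX$ is precisely the covariance form of the quadratic form $J$ that the paper's Lemma C.2 bounds below term by term, and your transfer of the full-rank property from the weight $p(X|k)$ in (c) to the weight $p(X|k)p(X|l)/p_+(X)$ is valid, since (a) makes these weights mutually absolutely continuous on the relevant support. On the concluding step, you run a textbook Wald argument with a uniform gap $\delta_\epsilon$ over the complement of $U_\epsilon$, whereas the paper uses a pointwise gap $\delta(\theta)$ and then bounds $P(\delta(\hat{\theta}_S) < 2\sup_{\theta}|L(\theta)-\hat{L}_S(\theta)|)$; both versions tacitly require the gap to be bounded away from zero outside any neighborhood of $\Theta^*$, which does not follow from (a)--(c) on the non-compact parameter space $\mathbb{R}^{d_\theta}$. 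You flag this explicitly, while the paper leaves it silent in its final inequality, so you are no worse off than the reference proof on this point; moreover, your neighborhood formulation $P(\hat{\theta}_S\notin U_\epsilon)\to 0$ is the defensible reading of the theorem's conclusion, since for a continuous $L$ no argument of this type can deliver literal membership $\hat{\theta}_S\in\Theta^*$ with probability tending to one.
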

Assumption (a) ensures that $\la(X^{(1,t)})$ exists and is finite. Assumption (b) can be satisfied under the standard assumptions of the uniform law of large numbers (compactness, continuity, measurability, and dominance) \cite{jennrich1969ULLN, newey1986ULLN}. Assumption (c) is a technical requirement, often assumed in the literature \cite{gutmann2012noise_NCE}. The complete proof is given in Appendix \ref{app:Proof of Consistency Theorem of LSEL}.

The critical hurdle of the MSPRT to practical applications (availability to the true LLR matrix) is now relaxed by virtue of the LSEL, which is provably consistent and enables a precise estimation of the LLR matrix. 
We emphasize that the MSPRT is the earliest and most accurate algorithm for early classification of time series, at least asymptotically (Theorem \ref{thm: Asymptotic optimality with first moment condition}, \ref{thm: Asymptotic optimality with second moment condition}, and \ref{thm: Asymptotic optimality of MSPRT (non-i.i.d.)}). 

\subsubsection{Hard Class Weighting Effect} \label{sec: Hard class weighting effect of the LSEL}
We further discuss the LSEL by focusing on a connection with hard negative mining \cite{song2016CVPR_lifted_structure_loss}. It is empirically known that designing a loss function to emphasize hard classes improves model performance \cite{lin2017ICCV_focal_loss}. The LSEL has this mechanism.

Let us consider a multiclass classification problem to obtain a high-performance discriminative model. To emphasize hard classes, let us minimize $\hat{L} := \f{1}{KT} \sum_{k \in [K]} \sum_{t \in [T]} \f{1}{M_k} \sum_{i \in I_k} \max_{l (\neq y_i)} \{ e^{ - \hat{\la}_{y_i l} (X^{(1,t)}_i; \bm{\th}) } \}$; however, mining the single hardest class with the max function induces a bias and causes the network to converge to a bad local minimum. Instead of $\hat{L}$, we can use the LSEL because it is not only provably consistent but is a smooth upper bound of $\hat{L}$: Because
$\max_{l (\neq y_i)} \{ e^{ - \hat{\la}_{y_i l} (X^{(1,t)}_i; \bm{\th} ) } \}
    < \sum_{l (\neq y_i)} e^{ - \hat{\la}_{y_i l} (X^{(1,t)}_i; \bm{\th} ) } $,
we obtain $\hat{L} < \hat{L}_{\rm LSEL}$ by summing up both sides with respect to $i \in I_k$ and then $k \in [K]$ and $t \in [T]$. Therefore, a small $\hat{L}_{\rm LSEL}$ indicates a small $\hat{L}$. In addition, 
the gradients of the LSEL are dominated by the hardest class $k^* \in {\rm argmax}_{k (\neq y)} \{ e^{ - \hat{\la}_{y k} (X^{(1,T)}; \bm{\th}) } \}$, because 
\begin{align}
    \lt| \f{\partial \hat{L}_{\rm LSEL}}{\partial \hat{\la}_{y k}} \rt|
        \propto \f{
             e^{ - \hat{\la}_{y k} }
        }{
            \sum_{l \in [K] }  e^{ - \hat{\la}_{y l} }
        }
    < \f{
             e^{ - \hat{\la}_{y k^*} }
        }{
            \sum_{l \in [K] }  e^{ - \hat{\la}_{y l} }
        } 
    \propto \lt| \f{\partial \hat{L}_{\rm LSEL}}{\partial \hat{\la}_{y k^*}} \rt| 
    \hspace{5pt} ( \forall k (\neq y, k^*) ) \, , \no
\end{align}
meaning that the LSEL assigns large gradients to the hardest class during training, which accelerates convergence.


Let us compare the hard class weighting effect of the LSEL with that of the logistic loss (a sum-log-exp-type loss extensively used in machine learning).
For notational convenience, let us define $\ell_{\rm LSEL} := \log (1 + \sum_{k (\neq y)} e^{a_k})$ and $\ell_{\rm logistic} := \sum_{k (\neq y)} \log (1 + e^{a_k})$, where $a_k := -\hat{\la}_{yk} ( X^{(1,t)} ; \bm{\th})$, and compare their gradient scales.
The gradients for $k \neq y$ are:
\begin{equation}
    \f{\partial \ell_{\rm logistic}}{\partial \hat{\la}_{y k}} 
    = - \f{
        e^{ - \hat{\la}_{y k} }
    }{
        1 +  e^{ - \hat{\la}_{y k} }
    } =: b_k 
    \hspace{20pt} \text{ and } \hspace{20pt}
    \f{\partial \ell_{\rm LSEL}}{\partial \hat{\la}_{y k}} 
    = - \f{
        e^{ - \hat{\la}_{y k} }
    }{
        \sum_{l \in [K] } e^{ - \hat{\la}_{y l} }
    } =: c_k \, . \no
\end{equation}
The relative gradient scales of the hardest class to the easiest class are:
\begin{align}
    R_{\rm logistic} := \f{
        \max_{k (\neq y)} \{ b_k \}
    }{
        \min_{k (\neq y)} \{ b_k \}
    } = \f{e^{a_{k^*}}}{e^{a_{k_*}}} \f{e^{1 + a_{k_*}}}{e^{1 + a_{k^*}}} \, , &
    \hspace{10pt} R_{\rm LSEL} := \f{
        \max_{k (\neq y)} \{ c_k \}
    }{
        \min_{k (\neq y)} \{ c_k \}
    } = \f{e^{a_{k^*}}}{e^{a_{k_*}}} \, , \no
\end{align}
where $k_* := {\rm argmin}_{k (\neq y)} \{ a_k \}$. Since $R_{\rm logistic} \leq R_{\rm LSEL}$, we conclude that the LSEL weighs hard classes more than the logistic loss. Note that our discussion above also explains why log-sum-exp-type losses (e.g., \cite{song2016CVPR_lifted_structure_loss, wang2019CVPR_multi-similarity_loss, sun2020CVPR_circle_loss}) perform better than sum-log-exp-type losses.
In addition, Figure \ref{fig: DRE Losses vs LSEL} (Left and Right) shows that the LSEL performs better than the logistic loss---a result that supports the discussion above. See Appendix \ref{app: Performance Comparison of LSELwith logistic} for more empirical results.

\subsubsection{Cost-Sensitive LSEL and Guess-Aversion} \label{app: Cost-Sensive LSEL and Guess-Aversion}
Furthermore, we prove that the cost-sensitive LSEL provides discriminative scores even on imbalanced datasets. 
Conventional research for cost-sensitive learning has been mainly focused on binary classification problems \cite{fan1999adacost_binaryCSL, elkan2001foundations_cost-sensitive_learning, viola2001NIPS_binaryCSL, masnadi2010cost_binaryCSL}.
However, in \textit{multiclass} cost-sensitive learning, \cite{beijbom2014guess-averse} proved that random score functions (a ``random guess'') can lead to even smaller values of the loss function.
Therefore, we should investigate whether our loss function is averse (robust) to such random guesses, i.e., \textit{guess-averse}.



\paragraph{Definitions}
Let $\bm{s}: \mcx \rightarrow \mbr^K$ be a score vector function; i.e., $s_k (X^{(1,t)})$ represents how likely it is that $X^{(1,t)}$ is sampled from class $k$. In the LSEL, we can regard $\log \hat{p}_{\bm{\th}} (X^{(1,t)} | k)$ as $s_k(X^{(1,t)})$.
A cost matrix $C$ is a matrix on $\mbr^{K \times K}$ such that 
$C_{kl} \geq 0$ ($\forall k,l \in [K]$),
$C_{kk} = 0$ ($\forall k \in [K]$),
$\sum_{l \in [K]} C_{kl} \neq 0$  ($\forall k \in [K]$).
$C_{kl}$ represents a misclassification cost, or a weight for the loss function, when the true label is $k$ and the prediction is $l$. 
The \textit{support set} of class $k$ is defined as $\mc{S}_k := \{ \bm{v} \in \mbr^K \, | \,\, \forall l (\neq k), \,\,  v_k > v_l \}$.
Ideally, discriminative score vectors should be in $\mc{S}_k$ when the label is $k$.
In contrast, the \textit{arbitrary guess set} is defined as $\mc{A} := \{ \bm{v} \in \mbr^K \, | \, v_1 = v_2 = ... = v_K \}$.
If $\bm{s}(X^{(1,t)}_i) \in \mc{A}$, we cannot gain any information from $X^{(1,t)}_i$; therefore, well-trained discriminative models should avoid such an \textit{arbitrary guess} of $\bm{s}$.
We consider a class of loss functions such that $\ell (\bm{s}(X^{(1,t)}), y ; C)$: It depends on $X^{(1,t)}$ through the score function $\bm{s}$.
The loss $\ell (\bm{s}(X^{(1,t)}), y ; C)$ is \textit{guess-averse}, if for any $k \in [K]$, any $\bm{s} \in \mc{S}_k$, any $\bm{s}^\prime \in \mc{A}$, and any cost matrix $C$, $\ell (\bm{s}, k ; C) < \ell (\bm{s}^\prime, k ; C)$; thus, the guess-averse loss can provide discriminative scores by minimizing it.
The empirical loss $\hat{L} = \f{1}{MT} \sum_{i=1}^M \sum_{t=1}^T \ell (\bm{s}(X^{(1,t)}_i), y_i; C )$ is said to be guess-averse, if $\ell$ is guess-averse. 
The guess-aversion trivially holds for most binary and multiclass loss functions but does not generally hold for cost-sensitive multiclass loss functions due to the complexity of multiclass decision boundaries \cite{beijbom2014guess-averse}.

\paragraph{Cost-sensitive LSEL is guess-averse.} 
We define a cost-sensitive LSEL:
\begin{align}
    &\hat{L}_{\operatorname{CLSEL}} (\bm{\th}, C; S)
    := \f{1}{MT} \sum_{i=1}^M \sum_{t=1}^T C_{y_i} \log ( 
            1 + \sum_{l (\neq y_i)} e^{ - \hat{\la}_{y_i l} (X^{(1,t)}_i; \bm{\th}) }
        ) \label{eq: CLSEL} \, ,
\end{align}
where $C_{kl} = C_k$ ($\forall k,l \in [K]$). Note that $\hat{\la}$ is no longer an unbiased estimator of the true LLR matrix; i.e., $\hat{\la}$ does not necessarily converge to $\la$ as $M \rightarrow \infty$, except when $C_{k} = M/M_k (K-1)$ ($\hat{L}_{\rm CLSEL}$ reduces to $\hat{L}_{\rm \text{LSEL}}$). Nonetheless, the following theorem shows that $\hat{L}_{\rm CLSEL}$ is guess-averse. The proof is given in Appendix \ref{app: Proof of Theorem thm: CLSEL is guess-averse}.
\begin{theorem} \label{thm: CLSEL is guess-averse}
    $\hat{L}_{\operatorname{CLSEL}}$ is guess-averse, provided that the log-likelihood vector 
    \begin{align}
        \Big( \log \hat{p}_{\bm{\th}} (X^{(1,t)} | y=1), \log \hat{p}_{\bm{\th}} (X^{(1,t)} | y=2),
        \,...,\, \log \hat{p}_{\bm{\th}} (X^{(1,t)} | y=K) ) \Big)^\top
        \in \mbr^{K} \no
    \end{align}
    is regarded as the score vector $\bm{s}(X^{(1,t)})$.
\end{theorem}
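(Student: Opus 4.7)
The plan is to reduce the statement to a per-sample inequality, because $\hat{L}_{\operatorname{CLSEL}}$ is just a non-negative-weighted average of terms of the form $C_{y_i}\log(1+\sum_{l(\neq y_i)}e^{-\hat{\la}_{y_i l}(X^{(1,t)}_i;\bm{\th})})$, and guess-aversion of the empirical loss follows immediately if the underlying per-sample loss $\ell(\bm{s},y;C)$ is guess-averse. So I only need to verify the strict inequality $\ell(\bm{s},k;C) < \ell(\bm{s}',k;C)$ for every $k\in[K]$, every $\bm{s}\in\mc{S}_k$, every $\bm{s}'\in\mc{A}$, and every admissible $C$.

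The first step is to rewrite $\ell$ purely in terms of the score vector. By the hypothesis of the theorem, $s_k(X^{(1,t)}) = \log\hat{p}_{\bm{\th}}(X^{(1,t)}|y=k)$, hence $\hat{\la}_{kl}(X^{(1,t)};\bm{\th}) = s_k(X^{(1,t)}) - s_l(X^{(1,t)})$. Substituting this into (\ref{eq: CLSEL}) gives the clean form
\begin{align}
    \ell(\bm{s},y;C) = C_y \log\Bigl(1 + \sum_{l(\neq y)} e^{\,s_l - s_y}\Bigr). \no
\end{align}
Note that since $C_{kl}=C_k$ for all $l$, $C_{kk}=0$, and $\sum_l C_{kl}\neq 0$, we necessarily have $(K-1)C_k = \sum_l C_{kl} > 0$, so $C_k>0$ for every $k$; this positivity will be what turns the non-strict inequality into a strict one.

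The second step is to evaluate the bracketed quantity on each set. For $\bm{s}\in\mc{S}_k$, the definition of the support set gives $s_k > s_l$ for all $l\neq k$, so $e^{s_l - s_k} < 1$, and therefore $1+\sum_{l(\neq k)} e^{s_l - s_k} < K$. For $\bm{s}'\in\mc{A}$, all components agree, so $e^{s'_l - s'_k}=1$ for every $l$, giving $1+\sum_{l(\neq k)} e^{s'_l - s'_k} = K$. Since $\log$ is strictly increasing and $C_k>0$, multiplying through yields
\begin{align}
    \ell(\bm{s},k;C) = C_k \log\Bigl(1+\sum_{l(\neq k)} e^{s_l - s_k}\Bigr) < C_k \log K = \ell(\bm{s}',k;C), \no
\end{align}
which is the required guess-aversion of $\ell$, and hence of $\hat{L}_{\operatorname{CLSEL}}$.

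There is no real obstacle here; the only subtle point that needs to be flagged carefully is that the cost-matrix conditions ($C_{kk}=0$, $\sum_l C_{kl}\neq 0$, and the assumption $C_{kl}=C_k$) force $C_k>0$. Without this, the inequality would collapse to $0<0$, so the verification of $C_k>0$ is what keeps the argument honest; everything else is a direct substitution and a monotonicity argument for $\log$.
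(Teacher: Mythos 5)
Your proof is correct and takes essentially the same route as the paper's: reduce to the per-sample loss, substitute $\hat{\la}_{kl} = s_k - s_l$, and compare $1+\sum_{l(\neq k)}e^{s_l-s_k} < K$ on $\mc{S}_k$ against equality on $\mc{A}$. Your explicit verification that the cost-matrix conditions force $C_k>0$ (via $(K-1)C_k=\sum_l C_{kl}>0$) is a small but worthwhile addition that the paper leaves implicit, since the strictness of the inequality genuinely depends on it.
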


Figure \ref{fig: LSEL vs NGA-LSEL} illustrates the risk of non-guess-averse losses.
We define the non-guess-averse LSEL (NGA-LSEL) as $\ell (\bm{s}, y; C) = \sum_{k (\neq y)} C_{y,l} \log (1 + \sum_{l (\neq k)} e^{s_l - s_k})$.
It is inspired by a variant of an exponential loss $\ell (\bm{s}, y; C) = \sum_{k,l \in [K]} C_{y,l} e^{s_l - s_k}$, which is proven to be classification calibrated but is not guess-averse \cite{beijbom2014guess-averse}.
The NGA-LSEL benefits from the log-sum-exp structure but is not guess-averse (Appendix \ref{app: NGA-LSEL Is Not Guess-Averse}), unlike the LSEL.

\begin{figure}[ht]
    \centering
    \begin{minipage}[b]{0.45\linewidth}
        \centering
        \includegraphics[width=\columnwidth, keepaspectratio]
            {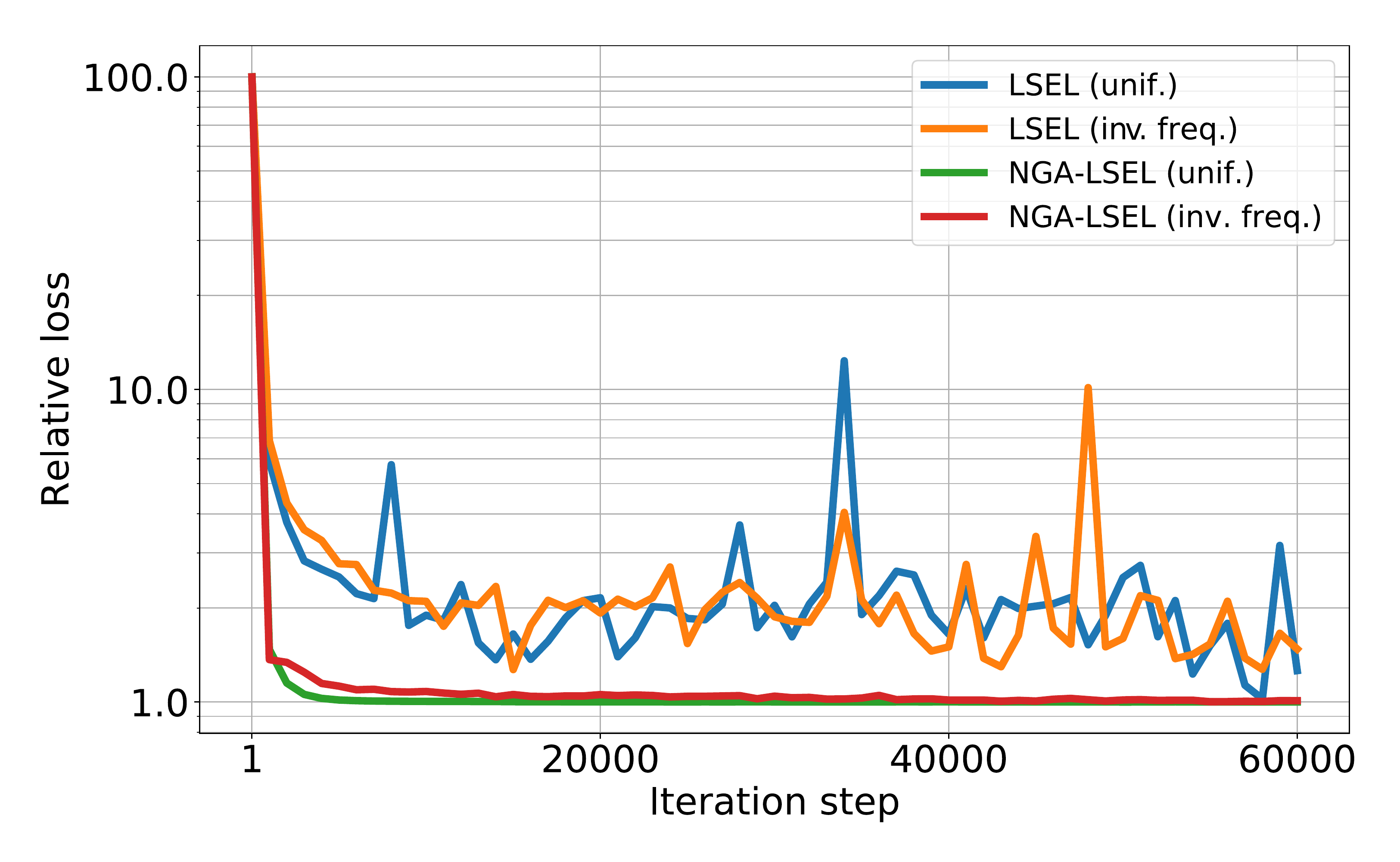} 
    \end{minipage}
    \begin{minipage}[b]{0.45\linewidth}
        \centering
        \includegraphics[width=\columnwidth, keepaspectratio]
        {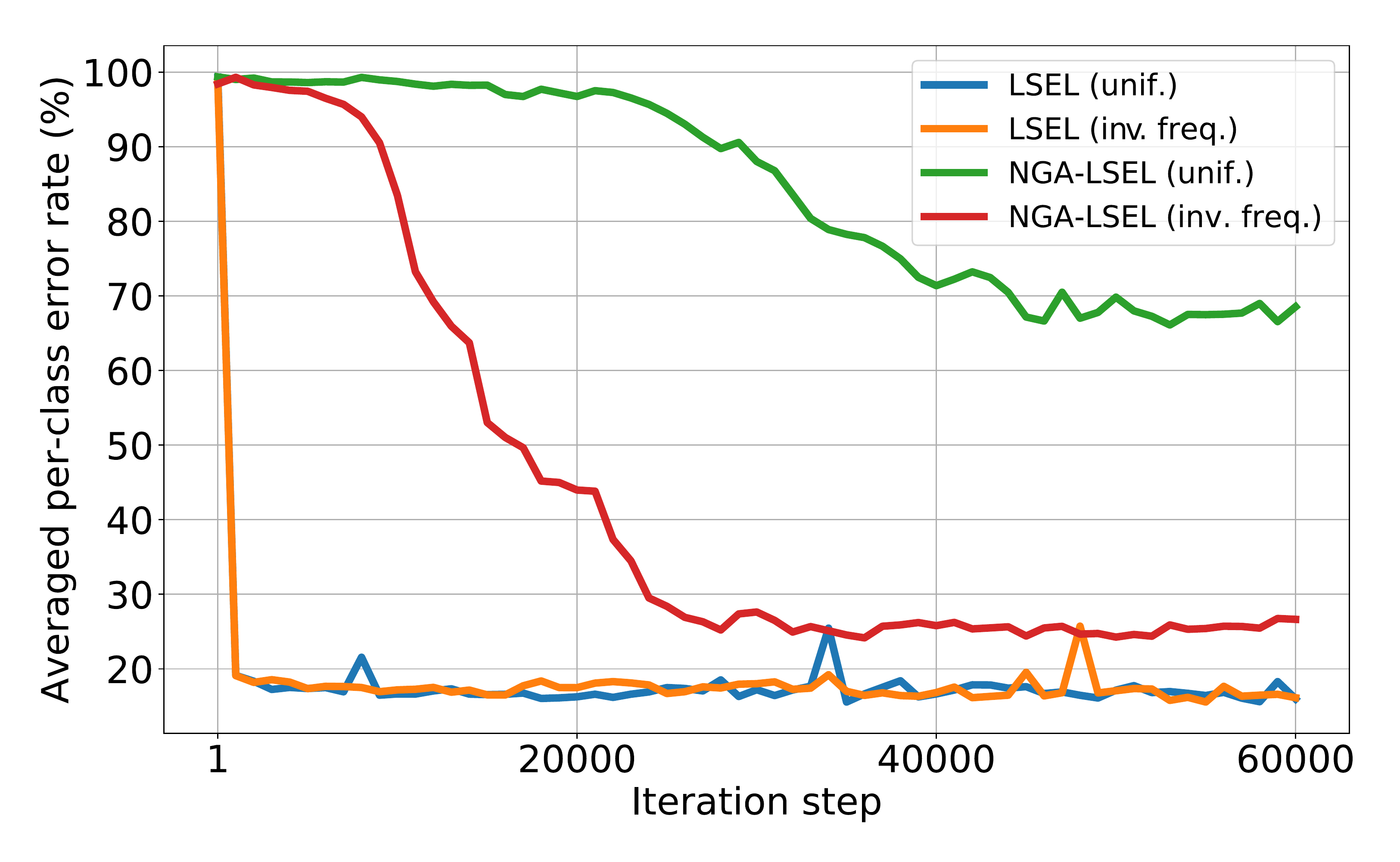} 
    \end{minipage}
    \caption{
        \textbf{Left: Relative Loss v.s. Training Iteration of LSEL and NGA-LSEL with Two Cost Matrices for Each. Right: Averaged Per-Class Error Rate of Last Frame v.s. Training Iteration.} Although all the loss curves decrease and converge (left), the error rates of the NGA-LSEL converge slowly and show a large gap depending on the cost matrix, while the error rates of the LSEL converge rapidly, and the gap is small (right). ``unif.'' means $C_{kl} = 1$ and ``inv. freq.'' means $C_{kl} = 1/M_{k}$. The dataset is UCF101 \cite{UCF101}.
        }
    \label{fig: LSEL vs NGA-LSEL}
\end{figure}

\subsection{MSPRT-TANDEM} \label{sec: Overall Architecture: MSPRT-TANDEM}
\begin{figure}[t]
    \begin{center}
    \centerline{\includegraphics[width=\columnwidth, keepaspectratio]
    {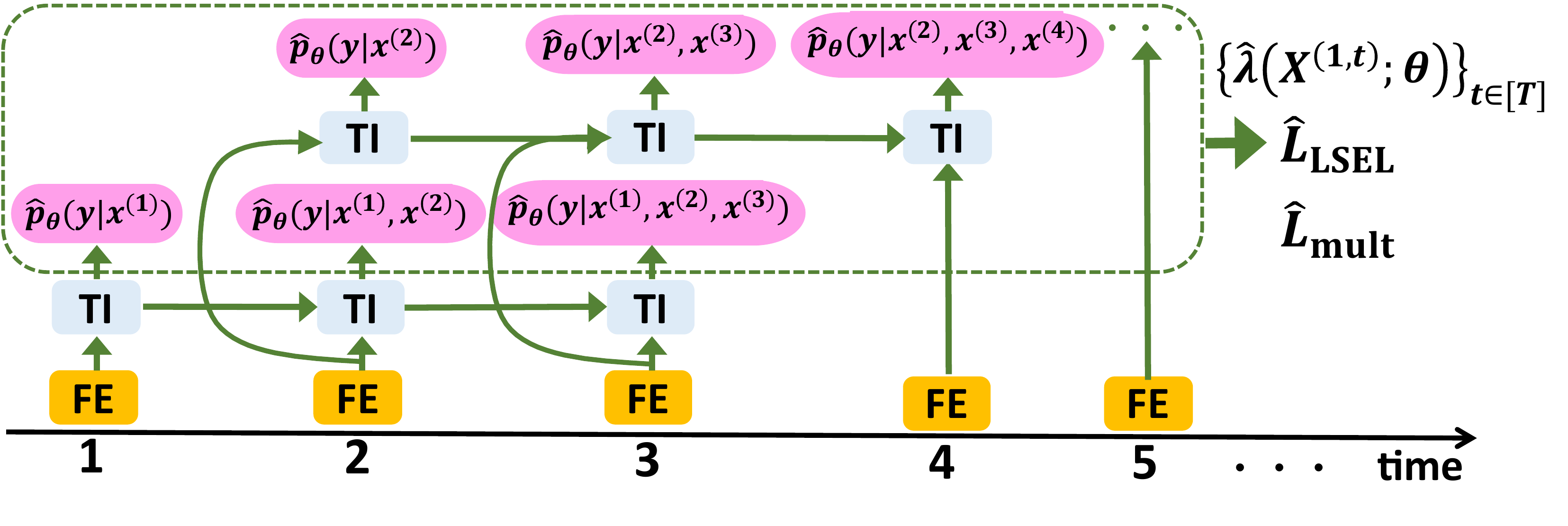}}
    \caption{
        \textbf{MSPRT-TANDEM ($\bm{N=2}$).} 
        $x^{(t)}$ is an input vector; e.g., a video frame. FE is a feature extractor. TI is a temporal integrator, which allows two inputs: the feature vector and a hidden state vector, which encodes the information of the past frames. We use ResNet and LSTM for FE and TI, respectively, but are not limited to them in general. The output posterior densities are highlighted with pink circles. By aggregating the posterior densities, the multiplet loss is calculated. Also, the estimated LLR matrix $\hat{\la}$ is constructed using the M-TANDEM or M-TANDEMwO formulae. Finally, $\hat{\la}$ is input to the LSEL. $\hat{L}_{\rm LSEL} + \hat{L}_{\rm mult}$ is optimized with gradient descent. In the test phase, $\hat{\la} (X^{(1,t)})$ is used to execute the MSPRT (Figure \ref{fig: MSPRT} and Definition \ref{def: MSPRT}). 
        } \label{fig: MSPRT-TANDEM}
    \end{center}
\end{figure}

Although the LSEL alone works well, 
we further combine the LSEL with a DRE-based model, SPRT-TANDEM, recently proposed in \cite{SPRT-TANDEM}. 
Specifically, we use the \textit{TANDEM formula} and \textit{multiplet loss} to accelerate the convergence. 
The TANDEM formula transforms the output of the network ($\hat{p}(y | X^{(1,t)})$) to the likelihood $\hat{p}(X^{(1,t)} | y)$ under the $N$-th order Markov approximation, which avoids the gradient vanishing of recurrent neural networks \cite{SPRT-TANDEM}:
\begin{align}
    \hspace{15pt} \hat{\la}_{kl} (X^{(1,t)}) 
    \fallingdotseq \sum_{s=N+1}^{t} \log \left( 
        \frac{
            \hat{p}_{\bm{\th}} (k | X^{(s-N, s)}) 
        }{
            \hat{p}_{\bm{\th}} (l | X^{(s-N, s)}) 
        }
    \right)
    - \sum_{s=N+2}^{t} \log \left(
            \frac{
                \hat{p}_{\bm{\th}} (k | X^{(s-N, s-1)}) 
            }{
                \hat{p}_{\bm{\th}} (l | X^{(s-N, s-1)}) 
            }
        \right)
    \label{eq: M-TANDEM formula} \, ,
\end{align}
where we do not use the prior ratio term $- \log ( \hat{p}(k) / \hat{p}(l) ) = - \log (M_k / M_l)$ in our experiments because it plays a similar role to the cost matrix \cite{menon2021ICLR_long-tail_logit_adjustment}. Note that (\ref{eq: M-TANDEM formula}) is a generalization of the original to DRME, and thus we call it the \textit{M-TANDEM formula}.

However, we find that the M-TANDEM formula contains contradictory gradient updates caused by the middle minus sign. Let us consider an example $z_i := (X^{(1,t)}_i, y_i)$. The posterior $\hat{p}_{\bm{\th}} (y=y_i | X_i^{(s-N, s-1)})$ (appears in (\ref{eq: M-TANDEM formula})) should take a \textit{large} value for $z_i$ because the posterior density represents the probability that the label is $y_i$. For the same reason, $\hat{\la}_{y_i l} (X^{(1,t)})$ should take a high value; thus $\hat{p}_{\bm{\th}} (y=y_i | x^{(s-N)}, ...,x^{(s-1)})$ should take a \textit{small} value in accordance with (\ref{eq: M-TANDEM formula}) --- an apparent contradiction. These contradictory updates may cause a conflict of gradients and slow the convergence of training, leading to performance deterioration. Therefore, in the experiments, we use either (\ref{eq: M-TANDEM formula}) or another approximation formula:
$\hat{\la}_{kl} (X^{(1,t)}; \bm{\th}) 
\fallingdotseq \log ( 
    \hat{p}_{\bm{\th}} (k | X^{(t-N, t)}) 
    /
    \hat{p}_{\bm{\th}} (l | X^{(t-N, t)}) 
)$, which we call the M-TANDEM with Oblivion (M-TANDEMwO) formula. Clearly, the gradient does not conflict. Note that both the M-TANDEM and M-TANDEMwO formulae are anti-symmetric and do not violate the constraint $\hat{\la}_{kl} + \hat{\la}_{lm} = \hat{\la}_{km}$ ($\forall k,l,m \in [K]$). See Appendix \ref{app: TANDEM vs TANDEMwO} for a more detailed empirical comparison. Finally, the multiplet loss is a cross-entropy loss combined with the $N$-th order approximation:
$\hat{L}_{\textrm{mult}} (\bm{\th} ; S) :=  
\f{1}{M} \sum_{i=1}^{M} \sum_{k=1}^{N+1} \sum_{ t = k }^{ T - ( N + 1 - k ) }
( - \log \hat{p}_{\bm{\th}} ( y_i | X_i^{(t - k + 1 , t )} )
)$. An ablation study of the multiplet loss and the LSEL is provided in Appendix \ref{app: Ablation Study: Multiplet Loss and LSEL}.

The overall architecture, \textit{MSPRT-TANDEM}, is illustrated in Figure \ref{fig: MSPRT-TANDEM}. 
MSPRT-TANDEM can be used for \textit{arbitrary} sequential data and thus has a wide variety of potential applications, such as computer vision, natural language processing, and signal processing. We focus on vision tasks in our experiments.

Note that in the training phase, MSPRT-TANDEM does not require a hyperparameter that controls the speed-accuracy tradeoff. A common strategy in early classification of time series is to construct a model that optimizes two cost functions: one for earliness and the other for accuracy \cite{dachraoui2015non-myopic_ECTS_MLP_Kmeans, Mori2015early_cost_minimization_point_of_view_NIPSw, tavenard2016cost-aware_ECTS, Mori2018, martinez2020EARLIEST-like_RL-based_ECTS}. This approach
typically requires a hyperparameter that controls earliness and accuracy \cite{achenchabe2020early_cost-based_optimization_criterion}. The tradeoff hyperparameter is determined by heuristics and cannot be changed after training. However, MSPRT-TANDEM does not require such a hyperparameter and enables us to control the speed-accuracy tradeoff \textit{after training} because we can change the threshold of MSPRT-TANDEM without retraining. This flexibility is an advantage for efficient deployment \cite{Cai2020ICLROnce-for-all_efficient_deploy}.

\section{Experiment} \label{sec:experiment}
\begin{figure*}[t]
    \begin{minipage}[b]{0.5\linewidth}
        \centering
        \includegraphics[width=\columnwidth, keepaspectratio]
        {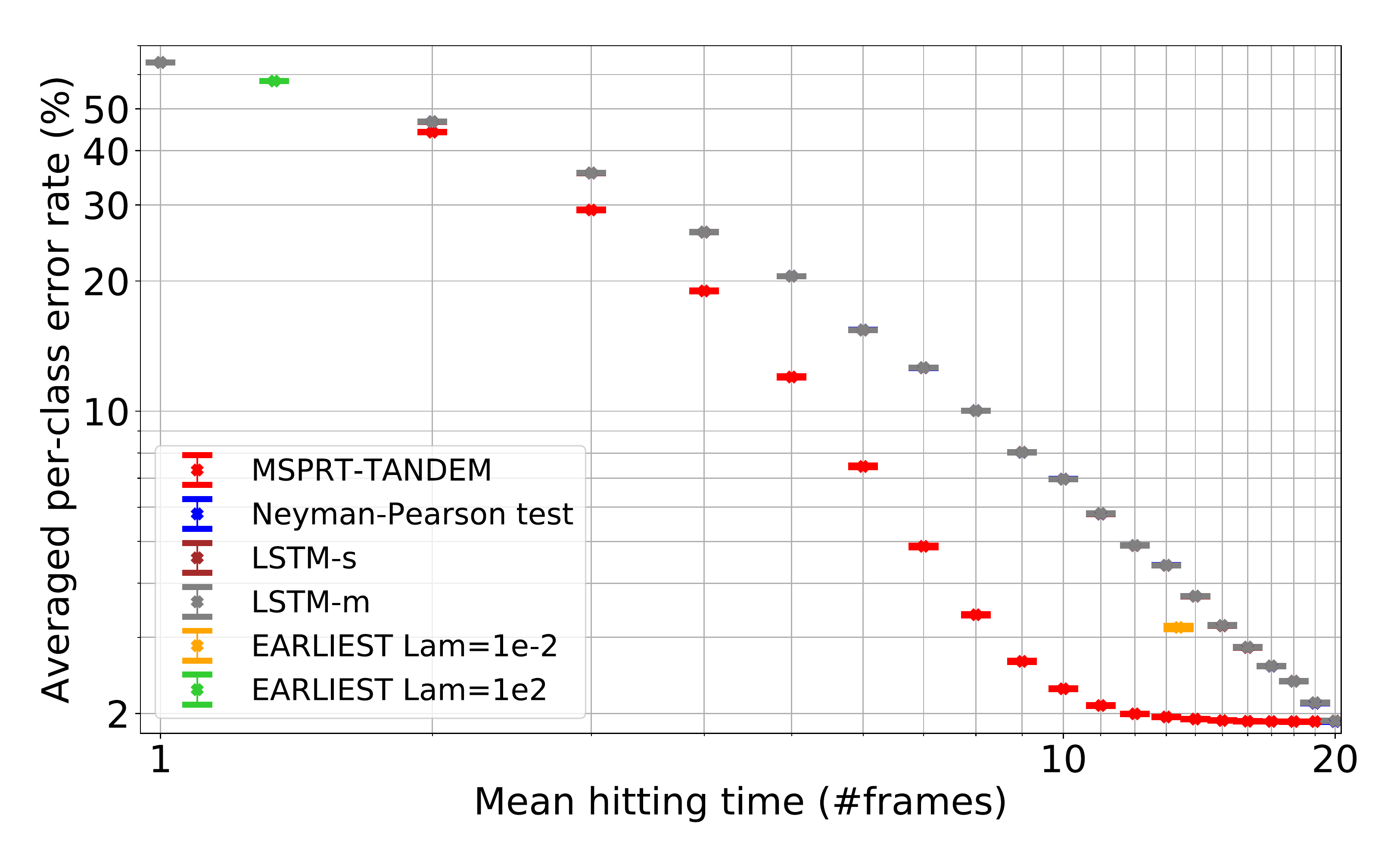} 
    \end{minipage}
    \begin{minipage}[b]{0.5\linewidth}
        \centering
        \includegraphics[width=\columnwidth, keepaspectratio]
        {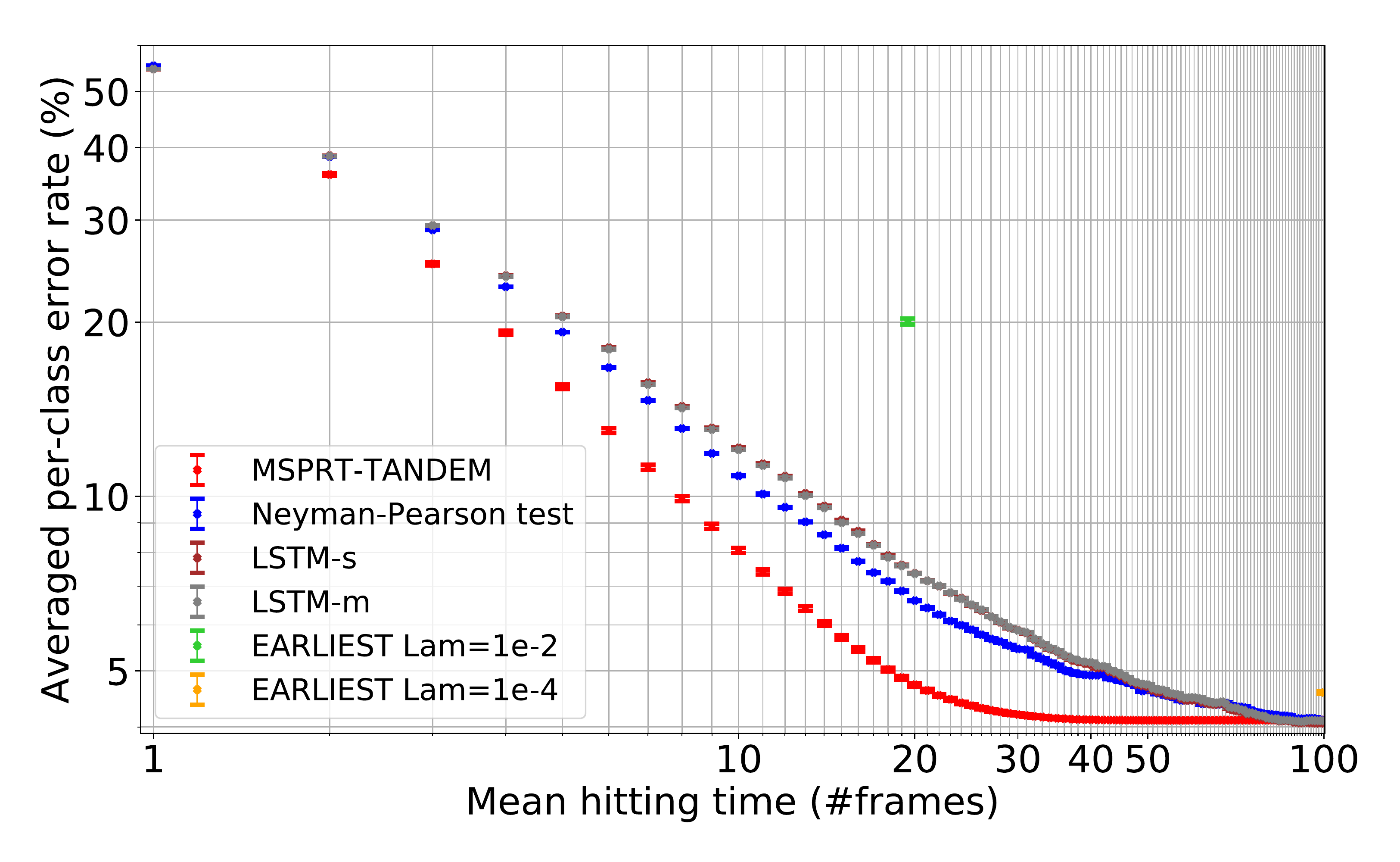} 
    \end{minipage} \\
    \begin{minipage}[b]{0.5\linewidth}
        \centering
        \includegraphics[width=\columnwidth, keepaspectratio]
        {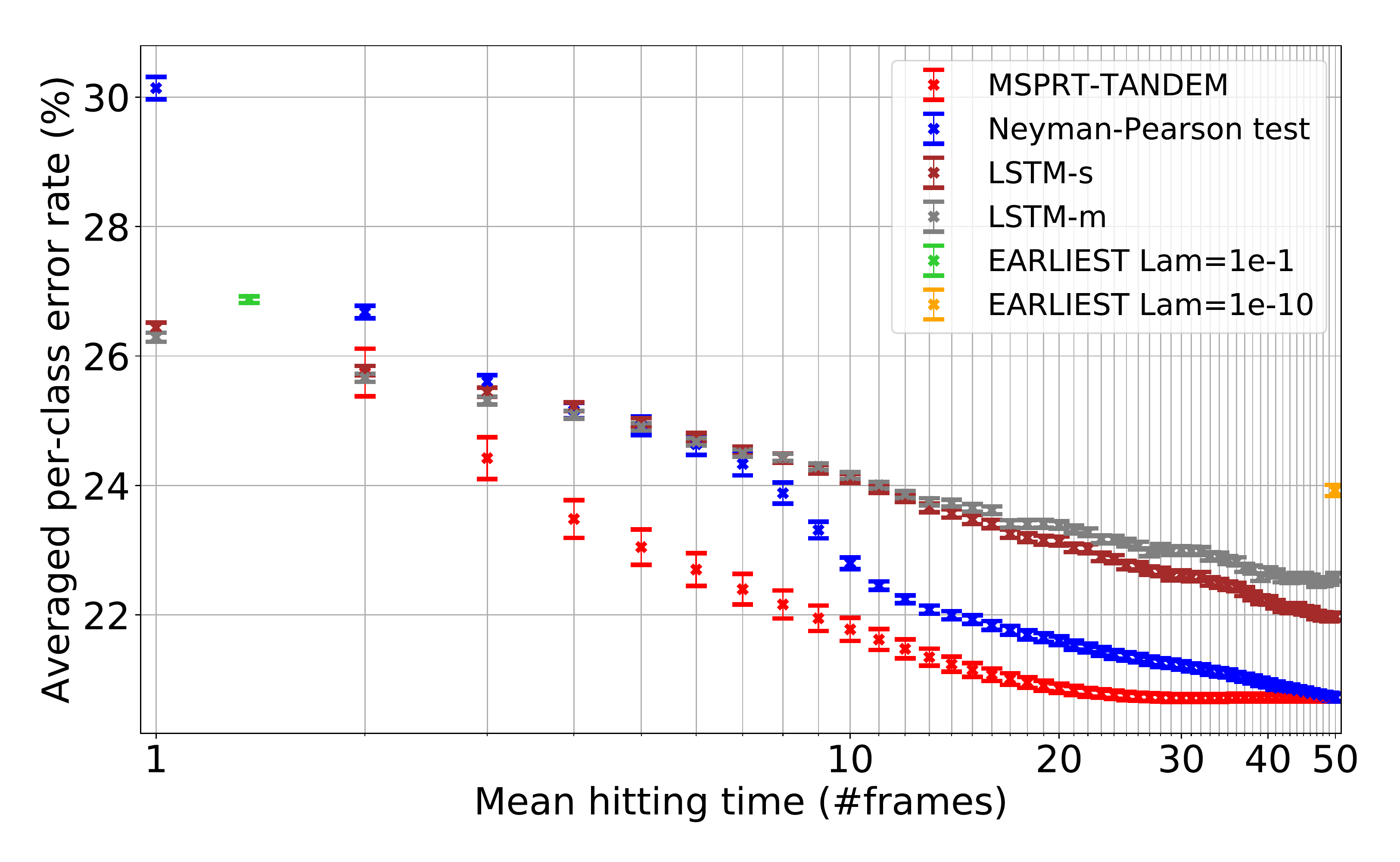} 
    \end{minipage}
    \begin{minipage}[b]{0.5\linewidth}
        \centering
        \includegraphics[width=\columnwidth, keepaspectratio]
        {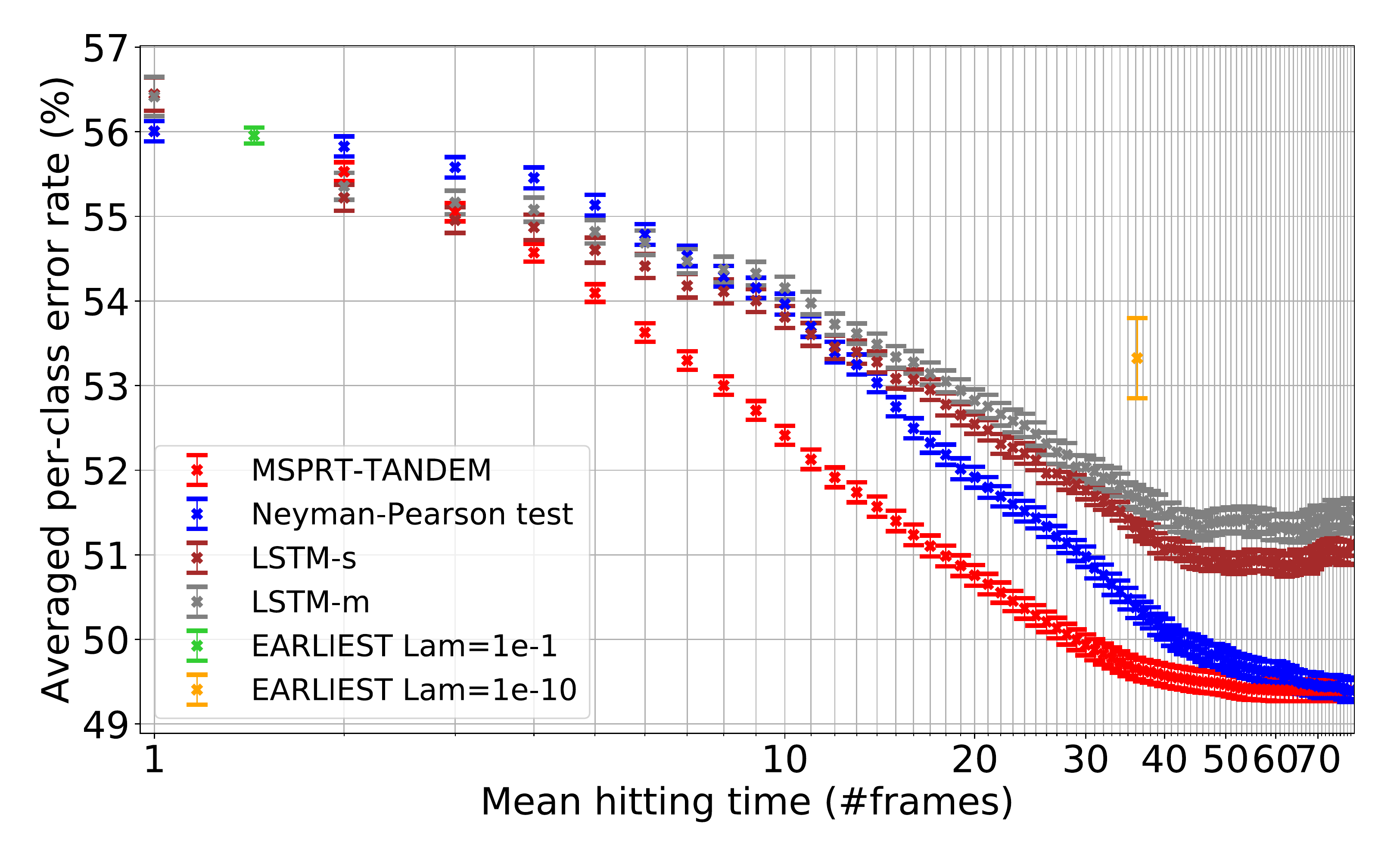} 
    \end{minipage}
    \caption{\textbf{Speed-Accuracy Tradeoff (SAT) Curves.} The vertical axis represents the averaged per-class error rate, and the horizontal axis represents the mean hitting time. Early and accurate models come in the lower-left area. The vertical error bars are the standard error of mean (SEM); however, some of the error bars are too small and are collapsed. 
    \textbf{Upper left: NMNIST-H.} The Neyman-Pearson test, LSTM-s, and LSTM-m almost completely overlap. \textbf{Upper right: NMNIST-100f.} LSTM-s and LSTM-m completely overlap. \textbf{Lower left: UCF101}. \textbf{Lower right: HMDB51}.} 
    \label{fig: SATCs}
\end{figure*}

To evaluate the performance of MSPRT-TANDEM, we use averaged per-class error rate and mean hitting time: Both measures are necessary because early classification of time series is a multi-objective optimization problem. The averaged per-class error rate, or balanced error, is defined as $1 - \f{1}{K} \sum_{k=1}^{K} \f{ | \{ i \in [M] | h_i = y_i = k \} | }{ | \{ i \in [M] | y_i = k \} | }$, where $h_i \in [K]$ is the prediction of the model for $i \in [M]$ in the dataset. 
The mean hitting time is defined as the arithmetic mean of the stopping times of all sequences. 

We use four datasets: two are new simulated datasets made from MNIST \cite{MNIST} (NMNIST-H and NMNIST-100f), and two real-world public datasets for multiclass action recognition (UCF101 \cite{UCF101} and HMDB51 \cite{HMDB51}).
A sequence in NMNIST-H consists of 20 frames of an MNIST image filled with dense random noise, which is gradually removed (10 pixels per frame), 
while a sequence in NMNIST-100f consists of 100 frames of an MNIST image filled with random noise that is so dense that humans cannot classify any video (Appendix \ref{app: NMNIST-H and NMNIST-100f}); only 15 of $28\times28$ pixels maintain the original image. The noise changes temporally and randomly and is not removed, unlike in NMNIST-H. 


\subsection{Models}
We compare the performance of MSPRT-TANDEM with four other models: LSTM-s, LSTM-m \cite{LSTM_ms}, EARLIEST \cite{EARLIEST}, and the Neyman-Pearson (NP) test \cite{neyman1933Neyman-Pearson_test_NPT}. LSTM-s and LSTM-m, proposed in a pioneering work in deep learning-based early detection of human action \cite{LSTM_ms}, use loss functions that enhance monotonicity of class probabilities (LSTM-s) and margins of class probabilities (LSTM-m). Note that LSTM-s/m support only the fixed-length test; i.e., the stopping time is fixed, unlike MSPRT-TANDEM. EARLIEST is a reinforcement learning algorithm based on recurrent neural networks (RNNs). The base RNN of EARLIEST calculates a current state vector from an incoming time series. The state vector is then used to generate a stopping probability in accordance with the binary action sampled: Halt or Continue. EARLIEST has two objective functions in the total loss: one for classification error and one for earliness. The balance between them cannot change after training. The NP test is known to be the most powerful, Bayes optimal, and minimax optimal test \cite{borovkov1998mathematical, lehmann2006TartarBook261}. The NP test uses the LLR to make a decision in a similar manner to the MSPRT, but the decision time is fixed. The decision rule is $d^{\textrm{NP}} (X^{(1,t)}) := \mathrm{argmax}_{k \in [K]} \min_{l \in [K]} \la_{kl} (X^{(1,t)})$ with a fixed $t \in [T]$.
In summary, LSTM-s/m have different loss functions from MSPRT-TANDEM, and the stopping time is fixed. EARLIEST is based on reinforcement learning, and its stopping rule is stochastic. The only difference between the NP test and MSPRT-TANDEM is whether the stopping time is fixed.

We first train the feature extractor (ResNet \cite{ResNetV1, ResNetV2}) by solving multiclass classification with the softmax loss and extract the bottleneck features, which are then used to train LSTM-s/m, EARLIEST, and the temporal integrator for MSPRT-TANDEM and NP test. Note that all models use the same feature vectors for the training. For a fair comparison, hyperparameter tuning is carried out with the default algorithm of Optuna \cite{Optuna} with an equal number of tuning trials for all models. 
Also, all models have the same order of trainable parameters.
After fixing the hyperparameters, we repeatedly train the models with different random seeds to consider statistical fluctuation due to random initialization and stochastic optimizers. Finally, we test the statistical significance of the models with the two-way ANOVA followed by the Tukey-Kramer multi-comparison test. 
More detailed settings are given in Appendix \ref{app: Details of Experiment and More Results}.

\subsection{Results}
The performances of all the models are summarized in Figure \ref{fig: SATCs} (The lower left area is preferable). 
We can see that MSPRT-TANDEM outperforms all the other models by a large margin, especially in the early stage of sequential observations. We confirm that the results have statistical significance; i.e., our results are reproducible (Appendix \ref{app: Statistical Tests}). The loss functions of LSTM-s/m force the prediction score to be monotonic, even when noisy data are temporally observed, leading to a suboptimal prediction. In addition, LSTM-s/m have to make a decision, even when the prediction score is too small to make a confident prediction. However, MSPRT-TANDEM can wait until a sufficient amount of evidence is accumulated. A potential weakness of EARLIEST is that reinforcement learning is generally unstable during training, as pointed out in  \cite{Nikishin2018, Kumar2020}. The NP test requires more observations to attain a comparable error rate to that of MSPRT-TANDEM, as expected from the theoretical perspective \cite{tartakovsky2014TartarBook}: In fact, the SPRT was originally developed to outperform the NP test in sequential testing \cite{wald1945TartarBook492, wald1947WaldBook_TartarBook494}.


\section{Conclusion}
We propose the LSEL for DRME, which has yet to be explored in the literature.
The LSEL relaxes the crucial assumption of the MSPRT and enables its real-world applications.
We prove that the LSEL has a theoretically strong background: consistency, hard class weighting, and guess-aversion.
We also propose MSPRT-TANDEM, the first DRE-based model for early multiclass classification in deep learning.
The experiment shows that the LSEL and MSPRT-TANDEM outperform other baseline models statistically significantly.

\section*{Acknowledgments}
The authors thank the anonymous reviewers for their careful reading to improve the manuscript. 
We would like to thank Jiro Abe, Genki Kusano, Yuta Hatakeyama, Kazuma Shimizu, and Natsuhiko Sato for helpful comments on the proof of the LSEL's consistency.
 
{\small
\bibliographystyle{abbrv}
\bibliography{_main}
}

\clearpage
\section*{Appendix}
\appendix
\section{Asymptotic Optimality of MSPRT} \label{app: MSPRT}
In this appendix, we review the asymptotic optimality of the matrix sequential probability ratio test (MSPRT). The whole statements here are primarily based on \citeApp{tartakovsky2014TartarBook} and references therein. We here provide theorems without proofs, which are given in \citeApp{tartakovsky2014TartarBook}. 

\subsection{Notation and Basic Assumptions}
First, we introduce mathematical notation and several basic assumptions. Let $X^{(0,T)} := \{ x^{(t)} \}_{0\leq t \leq T}$ be a stochastic process. We assume that $ K (\in \mbn)$ densities $p_k (X^{(0,T)})$ ($k \in [K] := \{ 1,2, ... ,K \}$) are distinct. Let $y (\in \mcy := [K])$ be a parameter of the densities. Our task is to test $K$ hypotheses $H_k: y = k$ ($k \in [K]$); 
i.e., to identify which of the $K$ densities $p_k$ is the true one through consecutive observations of $x{(t)}$.

Let $(\Om, \mc{F}, \{\mc{F}_t\}_{t \geq 0} , P)$ for $t \in \mbz_{\geq 0} := \{0,1,2,...\}$ or $t \in \mbr_{\geq0} := [0, \infty)$ be a filtered probability space. The sub-$\sigma$-algebra $\mc{F}_t$ of $\mc{F}$ is assumed to be generated by $X^{(0,t)}$. Our target hypotheses are $H_k : P=P_k$ ($k \in [K]$) where $P_k$ are probability measures that are assumed to be locally mutually absolutely continuous. Let $\mbe_k$ denote the expectation under $H_k$ ($k \in [K]$); e.g., $\mbe_k [f(X^{(0,t)}] = \int f(X^{(0,t)}) d P_k (X^{(0,t)})$ for a function $f$. We define the likelihood ratio matrix as
\begin{align}
    \La_{kl}(t) := \f{d P_k^t}{d P_l^t}(X^{(0,t)}) \hspace{10pt} (t \geq 0)\, ,
\end{align}
where $\La_{kl} (0) = 1$ $P_k$-a.s. and $P_k^t$ is the restriction of $P_k$ to $\mc{F}_t$. Therefore, the LLR matrix is defined as 
\begin{equation}
    \la_{kl} (t) := \log \La_{kl}(t) \hspace{10pt} (t \geq 0) \, ,
\end{equation}
where $\la_{kl} (0) = 0$ $P_k$-a.s. The LLR matrix plays a crucial role in the MSPRT, as seen in the main text and in the following.

We define a multihypothesis sequential test as $\de := (d, \ta)$. $d := d (X^{(0,t)})$ is an $\mc{F}_t$-measurable terminal decision function that takes values in $[K]$. $\ta$ is a stopping time with respect to $\{ \mc{F}_t \}_{t \geq 0}$ and takes values in $[0, \infty)$. Therefore, $\{\om \in \Om | d = k \} = \{\om \in \Om |  \ta < \infty , \de \,\,\text{accepts}\,\, H_k \}$. In the following, we solely consider statistical tests with $\mbe_k [\ta] < \infty$ ($k \in [K]$).

\paragraph{Error probabilities.}
We define three types of error probabilities:
\begin{align}
    &\al_{kl} (\de) := P_k(d=l) \hspace{10pt} (k \neq l, \,\,\,\,\, k,l \in [K]) \, , \nn
    &\al_{k} (\de) := P_k (d \neq k) = \sum_{l (\neq k)} \al_{kl} (\de) \hspace{10pt} (k \in [K]) \, ,\nn
    &\be_l (\de) := \sum_{k \in [K]} w_{kl} P_k (d = l) \hspace{10pt} \hspace{10pt} (l \in [K]) \, ,
\end{align}
where $w_{kl} > 0$ except for the zero diagonal entries. We further define $\al_{\rm max} := \max_{k,l} \al_{kl}$. Whenever $\al_{\rm max} \rightarrow 0$, we hereafter assume that for all $k,l,m,n \in [K]$ ($k \neq l$, $m \neq n$),
\begin{equation*}
    \lim_{\al_{\rm max} \rightarrow 0} \f{\log\al_{kl}}{\log\al_{mn}} = c_{klmn} \, ,
\end{equation*}
where $0< c_{klmn} < \infty$. This technical assumption means that $\al_{kl}$ does not go to zero at an exponentially faster or slower rate than the others.

\paragraph{Classes of tests.}
We define the corresponding sets of statistical tests with bounded error probabilities.
\begin{align}    
    &C(\{ \al \}) := \{ \, \de \, | \,\, \al_{kl} (\de) \leq \al_{kl}, \,\,\,\, k \neq l, \,\,\,\, k,l \in [K] \} \, ,\nn
    &C(\bm{\al}) := \{ \, \de \, | \,\, \al_{k} (\de) \leq \al_{k}, \,\,\,\, k \in [K] \} \, , \nn
    &C(\bm{\be}) := \{ \, \de \, | \,\, \be_{l} (\de) \leq \be_{l}, \,\,\,\, l \in [K] \} \, .
\end{align}

\paragraph{Convergence of random variables.}
We introduce the following two types of convergence for later convenience.
\begin{definition}[Almost sure convergence (convergence with probability one)]
    Let $\{ x^{(t)} \}_{t \geq 0}$ denote a stochastic process. We say that stochastic process $\{x^{(t)}\}_{t\geq 0}$ converges to a constant $c$ almost surely as $t\rightarrow \infty$ (symbolically, $x^{(t)} \xrightarrow[t\rightarrow\infty]{P-a.s.} c$), if
    \begin{equation*}
        P \lt( \underset{t\rightarrow\infty}{\lim} x^{(t)} = c \rt) = 1 \, .
    \end{equation*}
\end{definition}

\begin{definition}[$r$-quick convergence]
    Let $\{ x^{(t)} \}_{t\geq 0}$ be a stochastic process. Let $\mathcal{T}_\ep(\{ x^{(t)} \}_{t\geq 0})$ be the last entry time of $\{ x^{(t)} \}_{t\geq 0}$ into the region $(-\infty, -\ep) \cup (\ep, \infty)$; i.e.,
    \begin{equation}
        \mathcal{T}_\ep(\{ x^{(t)} \}_{t\geq 0}) = \underset{t\geq 0}{\rm sup} \big\{  t \,|\,\,\, |x^{(t)}|>\ep \, \big\}, \quad \mr{sup}\{\emptyset\} := 0 \,.
    \end{equation}
    Then, we say that stochastic process $\{ x^{(t)} \}_{t\geq 0}$ converges to zero r-quickly, or 
    \begin{equation}
        x^{(t)} \underset{t\rightarrow \infty}{\xrightarrow{r-{\rm quickly}}} 0 \, ,
    \end{equation}
    for some $r>0$, if
    \begin{equation}
        \mbe[(\mathcal{T}_\ep(\{ x^{(t)} \}_{t\geq 0}))^r] < \infty \quad \textrm{for all $\ep>0$}\,.
    \end{equation}
\end{definition}
$r$-quick convergence ensures that the last entry time into the large-deviation region ($\mathcal{T}_\ep(\{ x^{(t)} \}_{t \geq  0})$) is finite in expectation.

\subsection{MSPRT: Matrix Sequential Probability Ratio Test}
Formally, the MSPRT is defined as follows:
\begin{definition}[Matrix sequential probability ratio test]
    Define a threshold matrix $a_{kl} \in \mbr$ ($k,l \in [K]$), where the diagonal elements are immaterial and arbitrary, e.g., 0. The MSPRT $\de^*$ of multihypothesis $H_k: P = P_k$ ($k \in [K]$) is defined as
    \begin{align}
        &\de^* := (d^*, \ta^*) \nn
        &\ta^* := \min \{ \ta_k | k \in [K] \} \nn
        &d^* := k \hspace{10pt} \text{if} \hspace{10pt} \ta^* = \ta_k \nn
        &\ta_k := \inf \{ 
            t \geq 0 
            | \underset{\substack{l \in [K] \\ l (\neq k)} }{\min} \{
                \la_{kl}(t) - a_{lk}
            \} \geq 0
        \} \hspace{5pt} (k \in [K]) \no \, .    
    \end{align}
\end{definition}
In other words, the MSPRT stops at the smallest $t$ such that for a number of $k \in [K]$, $\la_{kl}(t) \geq a_{lk}$ for all $l (\neq k)$. Note that the uniqueness of such $k$ is ensured by the anti-symmetry of $\la_{kl}$. 
In our experiment, we use a single-valued threshold for simplicity. A general threshold matrix may improve performance, especially when the dataset is class-imbalanced \citeApp{longadge2013class, ali2015classification, hong2016dealing}. We occasionally use $A_{lk} := e^{a_{lk}}$ in the following.

The following lemma determines the relationship between the thresholds and error probabilities.
\begin{lemma}[General error bounds of the MSPRT \citeApp{tartakovsky1998TartarBook455}] \label{lem:General error bounds of MSPRT}
    The following inequalities hold:
    \begin{enumerate}
        \item $\al_{kl}^* \leq e^{-a_{kl}}$ \hspace{10pt} for $\,\,k,l\in[K]$\,\, ($k \neq l$),
        \item $\al_k^* \leq \sum_{l(\neq k)} e^{-a_{kl}}$ \hspace{10pt} for $\,\,k\in[K]$,
        \item $\be_l^* \leq \sum_{k(\neq l)} w_{kl} e^{-a_{kl}}$ \hspace{10pt} for $\,\,l\in[K]$.
    \end{enumerate}
\end{lemma}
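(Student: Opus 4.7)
The plan is to prove all three bounds from a single Wald-type change-of-measure identity applied at the MSPRT's stopping time. Concretely, for any event $A \in \mc{F}_{\ta^*}$ and any $k,l \in [K]$, the local mutual absolute continuity of $\{P_j\}$ together with $P_k(\ta^*<\infty)=1$ (ensured since $\mbe_k[\ta^*]<\infty$) yields
\[
    P_k(A) \;=\; \mbe_l\!\left[\La_{kl}(\ta^*)\,\mbm{1}_A\right] .
\]
I would first establish this identity (the only probability-theoretic input we need), then let the combinatorial structure of the stopping rule do the rest.

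To prove (1), I would take $A = \{d^* = l\}$ for $l \neq k$. On this event, $\ta^* = \ta_l$, so by the very definition of $\ta_l$, $\la_{lj}(\ta^*) \geq a_{jl}$ for all $j \neq l$. Specialising $j = k$ and using the anti-symmetry $\la_{kl} = -\la_{lk}$ gives $\La_{kl}(\ta^*) = e^{\la_{kl}(\ta^*)} \leq e^{-a_{kl}}$ pointwise on $\{d^*=l\}$. Substituting into the identity,
\[
    \al_{kl}^* \;=\; \mbe_l\!\left[\La_{kl}(\ta^*)\,\mbm{1}_{\{d^* = l\}}\right]
    \;\leq\; e^{-a_{kl}}\,P_l(d^*=l) \;\leq\; e^{-a_{kl}},
\]
which is exactly (1). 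Bounds (2) and (3) then follow immediately by summing: $\al_k^* = \sum_{l(\neq k)} \al_{kl}^* \leq \sum_{l(\neq k)} e^{-a_{kl}}$, and $\be_l^* = \sum_{k(\neq l)} w_{kl}\al_{kl}^* \leq \sum_{k(\neq l)} w_{kl} e^{-a_{kl}}$.

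The main technical hurdle is justifying the change-of-measure identity at the stopping time $\ta^*$ rather than at a deterministic horizon $t$. For a fixed $t$, the identity is just the definition of the Radon--Nikodym derivative $\La_{kl}(t) = dP_k^t/dP_l^t$ on $\mc{F}_t$. To promote it to $\ta^*$, one decomposes $\{d^* = l\} = \bigsqcup_{t} \{d^* = l,\ \ta^* = t\}$ (continuous-time case requires a standard sectioning argument) and applies the fixed-$t$ identity on each piece, using that $\{d^*=l,\ \ta^*\leq t\}\in\mc{F}_t$ since the stopping rule is an $\mc{F}_t$-measurable functional of $\{\la_{jk}(s)\}_{s\leq t}$. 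A minor bookkeeping point is to verify $P_l(\ta^*<\infty)=1$ so that no mass escapes to infinity in the sum; this follows from $\mbe_l[\ta^*]<\infty$, which is the standing assumption on admissible tests. Once this is in place, the rest of the proof is essentially one line per claim.
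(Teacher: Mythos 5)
Your proof is correct, but note that the paper itself does not prove this lemma: Appendix A states explicitly that the MSPRT results are quoted without proof from the cited literature, so there is no in-paper argument to compare against. Your route is the standard one used in that literature (Wald's likelihood ratio identity plus the threshold-crossing property at the stopping time), and every step checks out: on $\{d^* = l\}$ one has $\ta^* = \ta_l$, hence $\la_{lk}(\ta^*) \geq a_{kl}$ by the definition of $\ta_l$, so anti-symmetry gives $\La_{kl}(\ta^*) \leq e^{-a_{kl}}$ pointwise, and the change-of-measure identity turns this into $\al_{kl}^* = \mbe_l\bigl[\La_{kl}(\ta^*)\,\mbm{1}_{\{d^*=l\}}\bigr] \leq e^{-a_{kl}}$; claims (2) and (3) are then definitional sums, since the paper defines $\al_k := \sum_{l(\neq k)}\al_{kl}$ and $\be_l := \sum_k w_{kl}P_k(d=l)$ with zero diagonal weights. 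Two small refinements: first, you do not actually need $P_k(\ta^*<\infty)=1$ or $\mbe_k[\ta^*]<\infty$ anywhere, because the paper's definition of a test already gives $\{d^* = l\} \subseteq \{\ta^* < \infty\}$, so the restricted identity $P_k(A \cap \{\ta^*<\infty\}) = \mbe_l[\La_{kl}(\ta^*)\mbm{1}_{A\cap\{\ta^*<\infty\}}]$ applies as is and no mass can escape to infinity in your sum over $\{\ta^* = t\}$; the bounds therefore hold for arbitrary distributions and thresholds, which is exactly why the paper can invoke them unconditionally (e.g., in the footnotes to Theorems A.5 and A.7). Second, in the continuous-time case you should remark that right-continuity of the paths of $\la_{lk}$ is what guarantees the crossing inequality $\la_{lk}(\ta_l) \geq a_{kl}$ holds at (not merely arbitrarily close to) the stopping time; in discrete time this is immediate, and any overshoot only strengthens the bound.
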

Therefore,
\begin{align}
    & a_{lk} \geq \log(\f{1}{\al_{lk}}) \Longrightarrow \de^* \in C(\{\al\}) \, , \nn
    & a_{lk} \geq a_l = \log( \f{K - 1}{\al_{lk}} ) \Longrightarrow \de^* \in C(\bm{\al}) \, ,\nn
    & a_{lk} \geq a_k = \log( \f{ \sum_{m(\neq k)} w_{mk} }{\be_k} ) \Longrightarrow \de^* \in C(\bm{\be}) \, . \no
\end{align}

\subsection{Asymptotic Optimality of MSPRT in I.I.D. Cases}
\subsubsection{Under First Moment Condition}
Given the true distribution, one can derive a dynamic programming recursion; its solution is the optimal stopping time. However, that recursion formula is intractable in general due to its composite integrals to calculate expectation values\citeApp{tartakovsky2014TartarBook}. Thus, we cannot avoid several approximations unless the true distribution is extremely simple.

To avoid the complications above, we focus on the asymptotic behavior of the MSPRT, where the error probabilities go to zero. In this region, the stopping time and thresholds typically approach infinity because more evidence is needed to make such careful, perfect decisions.

First, we provide the lower bound of the stopping time. Let us define the first moment of the LLR: $I_{kl} := \mbe_k [\la_{kl}(1)]$. Note that $I_{kl}$ is the Kullback-Leibler divergence and hence $I_{kl} \geq 0$.
\begin{lemma}[Lower bound of the stopping time \citeApp{tartakovsky2014TartarBook}]
    Assume that $I_{lk}$ is positive and finite for all $k, l \in [K]$ ($k \neq l$). If $\sum_{k \in [K]}\al_{k} \leq 1$, then for all $k \in [K]$,
    \begin{equation}
        \underset{\de \in C(\{ \al \})}{\mr{inf}} \mbe_k [\ta] 
        \geq \max \lt[ \f{1}{I_{kl}} \sum_{m \in [K]} \al_{km} \log (\f{\al_{km}}{\al_{lm}}) \rt] \, . \no
    \end{equation}
\end{lemma}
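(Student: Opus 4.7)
The plan is to combine Wald's likelihood-ratio identity with a data-processing bound on the Kullback--Leibler divergence between $P_k$ and $P_l$ restricted to the stopping $\sigma$-algebra $\mc{F}_\ta$. I would first fix any test $\de=(d,\ta)\in C(\{\al\})$ with $\mbe_k[\ta]<\infty$ and any $l\neq k$. Since the observations are i.i.d., $\la_{kl}(t)$ is a random walk under $P_k$ with mean increment $I_{kl}\in(0,\infty)$, so Wald's identity yields
\begin{equation*}
\mbe_k[\la_{kl}(\ta)] \;=\; I_{kl}\,\mbe_k[\ta].
\end{equation*}

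Next I would recognize the left-hand side as the KL divergence $D(P_k^\ta\,\|\,P_l^\ta)$ (since $\la_{kl}$ is the log Radon--Nikodym derivative of $P_k^t$ with respect to $P_l^t$) and apply the data-processing inequality to the $\mc{F}_\ta$-measurable decision $d$, which takes values in the finite set $[K]$:
\begin{equation*}
D(P_k^\ta\,\|\,P_l^\ta) \;\geq\; D(P_k^d\,\|\,P_l^d) \;=\; \sum_{m\in[K]} \al_{km}(\de)\log\f{\al_{km}(\de)}{\al_{lm}(\de)},
\end{equation*}
with the natural convention $\al_{mm}(\de):=1-\al_m(\de)$, so that $(\al_{k1}(\de),\ldots,\al_{kK}(\de))$ is the distribution of $d$ under $P_k$. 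Combined with Wald's identity, this gives a lower bound on $\mbe_k[\ta]$ in terms of the actual error probabilities of $\de$.

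The remaining step is to upgrade this to a bound involving the prescribed upper bounds $\al_{km}$. Since $\al_{km}(\de)\leq\al_{km}$ for every $m\neq k$, the law of $d$ under $P_k$ puts at least as much mass on the correct class $k$ as the boundary vector $(\al_{k1},\ldots,\al_{kK})$, and symmetrically for $l$; the assumption $\sum_k\al_k\leq 1$ enters precisely to guarantee $\al_{kk}:=1-\al_k\geq 0$, so that these boundary vectors are genuine probability distributions. A monotonicity property of the KL functional under this simultaneous ``spreading apart'' of the two arguments then yields
\begin{equation*}
\sum_m\al_{km}(\de)\log\f{\al_{km}(\de)}{\al_{lm}(\de)} \;\geq\; \sum_m\al_{km}\log\f{\al_{km}}{\al_{lm}}.
\end{equation*}
Dividing by $I_{kl}>0$, taking the infimum over $\de\in C(\{\al\})$ on the left and the maximum over $l\neq k$ on the right then gives the claim.

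The hard part will be this final monotonicity step: the raw data-processing bound is expressed in terms of the test's true errors, not the imposed bounds, and the KL divergence is not monotone in its arguments in any naive componentwise sense. I would address it either by a componentwise perturbation argument exploiting the joint convexity of $(P,Q)\mapsto D(P\,\|\,Q)$, or by solving the constrained minimization $\min D(Q_k\,\|\,Q_l)$ over probability vectors $Q_k,Q_l$ on $[K]$ with $Q_k(m)\leq\al_{km}$ and $Q_l(m)\leq\al_{lm}$ for $m\neq k,l$, verifying via the KKT conditions that the minimizer sits at the prescribed boundary values, with $\sum_k\al_k\leq 1$ ensuring feasibility of that boundary point.
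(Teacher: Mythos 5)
Your opening moves---Wald's identity $\mbe_k[\la_{kl}(\ta)] = I_{kl}\,\mbe_k[\ta]$ followed by the data-processing/Jensen step giving $I_{kl}\,\mbe_k[\ta] \geq \sum_{m\in[K]}\al_{km}(\de)\log\big(\al_{km}(\de)/\al_{lm}(\de)\big)$ in terms of the test's \emph{actual} error probabilities---are exactly the route the paper indicates: the appendix states this lemma without proof, deferring to the cited book and remarking only that it ``follows from Jensen's inequality and Wald's identity.'' Up to that point your plan is sound. The genuine gap is the final step you yourself flag as hard, and it is worse than hard: the monotonicity you need is \emph{false} for $K\geq 3$, so neither the convexity-perturbation argument nor the KKT verification can certify that the constrained KL minimum sits at the prescribed boundary point. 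Concretely, take $K=3$, $k=1$, $l=2$, with $\al_{12}=0.001$, $\al_{13}=0.2$, $\al_{21}=0.3$, $\al_{23}=0.01$, $\al_{31}=\al_{32}=0.1$, so that $\sum_k\al_k = 0.711\leq 1$. The boundary vectors are $a=(0.799,\,0.001,\,0.2)$ and $b=(0.3,\,0.69,\,0.01)$, and $\sum_m a_m\log(a_m/b_m)\approx 1.375$ nats; yet the pair $p'=(0.989,\,0.001,\,0.01)$, $q'=b$ satisfies every constraint ($p'_m\leq\al_{1m}$ for $m\neq 1$, $q'_m\leq\al_{2m}$ for $m\neq 2$) and gives $\sum_m p'_m\log(p'_m/q'_m)\approx 1.173 < 1.375$. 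The mechanism is any off coordinate $m\notin\{k,l\}$ with $\al_{km}\gg\al_{lm}$: saturating it forces the large positive term $0.2\log 20\approx 0.60$ into the boundary value, whereas a feasible pair (and, correspondingly, a sensible test) simply need not commit errors of that type, moving that mass onto the correct class at smaller KL cost. So the inequality ``$\mathrm{KL}$ of actual errors $\geq$ the prescribed-boundary expression'' that your plan rests on does not hold in general.

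Two further points. First, your parenthetical misplaces the role of $\sum_k\al_k\leq 1$: nonnegativity of $\al_{kk}:=1-\al_k$ needs only $\al_k\leq 1$; the condition's real content is the ordering $1-\al_k \geq \al_l \geq \al_{lk}$, which is precisely what makes the componentwise monotonicity of $(p,q)\mapsto \sum_m p_m\log(p_m/q_m)$ valid on the feasible box in the \emph{binary} case $K=2$---there your argument does go through and recovers Wald's classical lower bound (consistent with the paper's footnote that for $K=2$ the SPRT attains the bound absent overshoots). Second, for $K\geq 3$ the exact non-asymptotic passage from actual to prescribed error probabilities cannot be made by this route at all; the forms that are actually used downstream are the \emph{asymptotic} lower bounds in the lemma that follows this one in the appendix, where prescribed bounds enter only with $(1+o(1))$ corrections as $\al_{\rm max}\rightarrow 0$, under the additional standing assumption that the ratios $\log\al_{kl}/\log\al_{mn}$ stay bounded. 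A defensible fix for your write-up is to stop at the bound in terms of $\al_{km}(\de)$, note that it implies the stated display when the boundary point happens to minimize the constrained KL (e.g., $K=2$), and otherwise invoke the asymptotic regime.
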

The proof follows from Jensen's inequality and Wald's identity $\mbe_k [\la_{kl}(\ta)] = I_{kl} \mbe_k [\ta]$. However, the lower bound is unattainable in general\footnote{
    We can show that when $K=2$, the SPRT can attain the lower bound if there are no overshoots of the LLR over the thresholds.
}. In the following, we show that the MSPRT asymptotically satisfies the lower bound.

\begin{lemma}[Asymptotic lower bounds (i.i.d. case) \citeApp{tartakovsky1998TartarBook455}] \label{lem:Asymptotic lower bounds (i.i.d.)}
    Assume the first moment condition $0 < I_{kl} < \infty$ for all $k, l \in [K]$ ($k \neq l$). The following inequalities hold for all $m > 0$ and $k \in [K]$,
    \begin{enumerate}
        \item As $\al_{\rm max} \rightarrow 0$,
            \begin{equation}
                \underset{\de \in C(\{\al\})}{\mr{inf}} \mbe_k [\ta]^m \geq \underset{l (\neq k)}{\max} \lt[ 
                    \f{|\log \al_{lk}|}{I_{kl}}
                \rt]^m (1 + o(1)) \, . \no
            \end{equation}
        \item As $\max_{k} \al_k \rightarrow 0$,
            \begin{equation}
                \underset{\de \in C(\bm{\al})}{\mr{inf}} \mbe_k [\ta]^m \geq \underset{l (\neq k)}{\max} \lt[ 
                    \f{|\log \al_{l}|}{I_{kl}}
                \rt]^m (1 + o(1)) \, . \no
            \end{equation}
        \item As $\max_{k} \be_k \rightarrow 0$,
            \begin{equation}
                \underset{\de \in C(\bm{\be})}{\mr{inf}} \mbe_k [\ta]^m \geq \underset{l (\neq k)}{\max} \lt[ \f{|\log \be_k|}{I_{kl}} \rt]^m (1 + o(1)) \, . \no
            \end{equation}
    \end{enumerate}
\end{lemma}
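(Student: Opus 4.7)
The plan is to prove the three asymptotic lower bounds via Wald's likelihood-ratio identity combined with the strong law of large numbers (SLLN) for the i.i.d. log-likelihood increments, and then to reduce parts 2 and 3 to part 1 using the monotonicity of $|\log \al_{lk}|$ in the size of the class constraints. Since $\mbe_k[\ta]^m = (\mbe_k[\ta])^m$ is monotone in $\mbe_k[\ta]$, it suffices to establish the bound for $m=1$ and then raise both sides to the $m$-th power.

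For part 1 with $m=1$, I would fix $\de \in C(\{\al\})$ and a pair $k \neq l$. The starting point is the change-of-measure identity $P_l(d = k) = \mbe_k[\mbm{1}_{\{d=k\}}\, e^{-\la_{kl}(\ta)}]$, which together with $P_l(d=k) \leq \al_{lk}$ yields
\begin{equation*}
\mbe_k\lt[\mbm{1}_{\{d=k\}}\, e^{-\la_{kl}(\ta)}\rt] \;\leq\; \al_{lk}.
\end{equation*}
For $c > 0$ and $\ep > 0$, restrict to the event $E_{c,\ep} := \{ \max_{t \leq c} \la_{kl}(t) \leq c\, I_{kl}(1+\ep) \}$, on which $\ta \leq c$ forces $e^{-\la_{kl}(\ta)} \geq e^{-c I_{kl}(1+\ep)}$; rearranging the previous inequality gives
\begin{equation*}
P_k(\ta \leq c,\; d = k) \;\leq\; \al_{lk}\, e^{c I_{kl}(1+\ep)} + P_k(E_{c,\ep}^c).
\end{equation*}
By the SLLN, $\la_{kl}(t)/t \to I_{kl}$ $P_k$-a.s., so $P_k(E_{c,\ep}^c) \to 0$ as $c \to \infty$. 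Choosing $c = (1-\ep)|\log \al_{lk}|/I_{kl}$ (which tends to infinity since $\al_{\mathrm{max}} \to 0$) and using $\al_k \to 0$ to get $P_k(d=k) \to 1$, the right-hand side vanishes and hence $P_k(\ta > c) \to 1$. Markov's inequality $\mbe_k[\ta] \geq c\, P_k(\ta > c)$ then yields $\mbe_k[\ta] \geq c(1+o(1))$; letting $\ep \downarrow 0$ and maximizing over $l \neq k$ completes part 1.

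Parts 2 and 3 follow by substituting the weaker error bounds into the same chain: for $\de \in C(\bm{\al})$ we have $\al_{lk}(\de) \leq \al_l$, so $|\log \al_{lk}(\de)| \geq |\log \al_l|$; for $\de \in C(\bm{\be})$ the bound $w_{lk}\, \al_{lk}(\de) \leq \be_k$ gives $|\log \al_{lk}(\de)| \geq |\log \be_k| - \log w_{lk}$, and the $O(1)$ constant is absorbed into the $(1+o(1))$ factor as $\max_k \be_k \to 0$. The main obstacle will be controlling the random stopping time $\ta$ tightly enough: one must ensure that the SLLN control on $\max_{t \leq c} \la_{kl}(t)$ is uniform in the sense that $P_k(E_{c,\ep}^c)$ vanishes fast enough not to overwhelm the error-probability factor. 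This is exactly where the $r$-quick convergence of $\la_{kl}(t)/t$ to $I_{kl}$, introduced earlier in the excerpt, is invoked to tame the tails of the last exit time without spoiling the $(1+o(1))$ factor; it is also the reason that a genuine $m$-th moment refinement $\mbe_k[\ta^m]$ in the literature requires $r$-quick convergence with $r \geq m$.
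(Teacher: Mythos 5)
The paper contains no proof of this lemma to compare against: the appendix explicitly states these results without proof, deferring to \citeApp{tartakovsky1998TartarBook455, tartakovsky2014TartarBook}. Your proposal is, in essence, the canonical argument from that literature, and its core is sound: Wald's change-of-measure identity $P_l(d=k)=\mbe_k[\mbm{1}_{\{d=k\}}e^{-\la_{kl}(\ta)}]\le\al_{lk}$, truncation on the event $E_{c,\ep}$, the choice $c=(1-\ep)|\log\al_{lk}|/I_{kl}$ making $\al_{lk}e^{cI_{kl}(1+\ep)}=\al_{lk}^{\ep^2}\to 0$, and $P_k(d=k)\to 1$ to conclude $P_k(\ta>c)\to 1$ uniformly over the class (all bounds depend only on the class constants, which is what makes the infimum over $\de$ legitimate). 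Your reductions of parts 2 and 3 via $P_l(d=k)\le\al_l$ and $w_{lk}P_l(d=k)\le\be_k$ are also correct, with the small additional check that $P_k(d\neq k)\to 0$ holds in each class (in $C(\bm{\be})$ it follows from $P_k(d=l)\le\be_l/w_{kl}$ and $w_{kl}>0$) and that $|\log(\be_k/w_{lk})|=|\log\be_k|(1+o(1))$ absorbs the constant, as you say.

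Two genuine flaws, however. First, the lemma concerns the moments $\mbe_k[\ta^m]$ for all $m>0$ (the paper's whole point is that the MSPRT asymptotically minimizes \emph{all} positive moments), so your opening reduction---prove $m=1$ and raise both sides to the $m$-th power---fails for $0<m<1$, where Jensen's inequality gives $\mbe_k[\ta^m]\le(\mbe_k[\ta])^m$, the wrong direction. The repair is already inside your own chain: from $P_k(\ta>c)\to 1$, apply Chebyshev to $\ta^m$ directly, $\mbe_k[\ta^m]\ge c^m P_k(\ta>c)$, which delivers every $m>0$ simultaneously with no reduction step. Second, your closing paragraph misattributes the role of $r$-quick convergence. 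In your key inequality the term $P_k(E_{c,\ep}^c)$ enters \emph{additively} next to $\al_{lk}e^{cI_{kl}(1+\ep)}$, so no decay rate whatsoever is needed; plain almost-sure convergence $\la_{kl}(t)/t\to I_{kl}$ (Kolmogorov's SLLN, guaranteed by the first-moment condition $0<I_{kl}<\infty$) suffices for the lower bound at every moment order---which is precisely why the lemma is stated under the first-moment condition alone. $r$-quick convergence is required only for the matching \emph{upper} bounds, i.e., to control the moments of the MSPRT's own stopping time in Theorems \ref{thm: Asymptotic optimality with first moment condition} and \ref{thm: Asymptotic optimality of MSPRT (non-i.i.d.)}; invoking it for the lower bound, and worrying that $P_k(E_{c,\ep}^c)$ might ``overwhelm the error-probability factor,'' conflates the two directions of the optimality argument.
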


\begin{theorem}[Asymptotic optimality of the MSPRT with the first moment condition (i.i.d. case) \citeApp{tartakovsky1998TartarBook455, dragalin1999TartarBook128_MSPRT-I, tartakovsky2014TartarBook}] \label{thm: Asymptotic optimality with first moment condition}
    Assume the first moment condition $0 < I_{kl} < \infty$ for all $k, l \in [K]$ ($k \neq l$).
    \begin{enumerate}
        \item 
            If $a_{lk} = \log(\f{1}{\al_{lk}})$ for $k.l \in [K]$ ($k \neq l$), then $\de^* \in C (\{\al\})$, and for all $m > 0$ and $k \in [K]$,
            \begin{equation}
                \underset{\de \in C(\{ \al \})}{\mr{inf}} \mbe_k [\ta]^m 
                \sim \mbe_k [\ta^*]^m
                \sim \underset{\substack{l \in [K] \\ l (\neq k)}}{\max} \lt[
                    \f{|\log \al_{lk}|}{I_{kl}} 
                \rt]^m  
            \end{equation}
            as $\al_{\rm max} \rightarrow 0$. If $a_{lk} \neq \log (\f{1}{\al_{lk}})$, the above inequality holds when $a_{lk} \sim \log (\f{1}{\al_{lk}})$ and $\al_{kl} (\de^*) \leq \al_{kl}$.\footnote{
                Recall that $\al_{kl} (\de^*) \leq \al_{kl}$ is automatically satisfied whenever $a_{lk} \geq \log(1/\al_{lk})$ for general distribution because of Lemma \ref{lem:General error bounds of MSPRT}. Similar arguments follow for 2. and 3. in Theorem \ref{thm: Asymptotic optimality with first moment condition}.
            }
        \item 
            If $a_{lk} = \log(\f{K-1}{\al_{l}})$ for $k.l \in [K]$ ($k \neq l$), then $\de^* \in C (\bm{\al})$, and for all $m > 0$ and $k \in [K]$,
            \begin{equation}
                \underset{\de \in C(\bm{\al})}{\mr{inf}} \mbe_k [\ta]^m 
                \sim \mbe_k [\ta^*]^m
                \sim \underset{\substack{l \in [K] \\ l (\neq k)}}{\max} \lt[
                    \f{|\log \al_{l}|}{I_{kl}} 
                \rt]^m 
            \end{equation}
            as $\max_k \al_{k} \rightarrow 0$. If $a_{l} \neq \log(\f{K-1}{\al_{l}})$, the above inequality holds when $a_{lk} \sim \log(\f{K-1}{\al_{l}})$ and $\al_{l} (\de^*) \leq \al_{l}$.
        \item
            If $a_{lk} = \log(\f{\sum_{n (\neq k)} w_{nk} }{\be_k})$ for $k.l \in [K]$ ($k \neq l$), then $\de^* \in C (\bm{\be})$, and for all $m > 0$ and $k \in [K]$,
            \begin{equation}
                \underset{\de \in C(\bm{\be})}{\mr{inf}} \mbe_k [\ta]^m 
                \sim \mbe_k [\ta^*]^m
                \sim \underset{\substack{l \in [K] \\ l (\neq k)}}{\max} \lt[
                    \f{|\log \be_{k}|}{I_{kl}} 
                \rt]^m 
            \end{equation}
            as $\max_k \be_{k} \rightarrow 0$. If $a_{lk} \neq \log(\f{1}{\be_{k}})$, the above inequality holds when $a_{lk} \sim \log(\f{1}{\be_{k}})$ and $\be_{k} (\de^*) \leq \be_{k}$.
    \end{enumerate}
\end{theorem}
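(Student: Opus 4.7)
The plan is to sandwich $\mbe_k[\ta^*]^m$ between matching asymptotic lower and upper bounds, and to verify separately that $\de^*$ belongs to the relevant class of tests. Because the three parts of the theorem differ only in the choice of thresholds and the class for the membership check, I would prove part~1 in detail and then repeat the argument with small modifications for parts~2 and~3.

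The lower bound is handed to us directly by Lemma~\ref{lem:Asymptotic lower bounds (i.i.d.)}(1): for any $m > 0$,
\begin{equation*}
    \underset{\de \in C(\{\al\})}{\mr{inf}} \mbe_k [\ta]^m \geq \underset{l (\neq k)}{\max} \lt[ \f{|\log \al_{lk}|}{I_{kl}} \rt]^m (1 + o(1))
\end{equation*}
as $\al_{\rm max} \to 0$. Membership in $C(\{\al\})$ is equally immediate: substituting $a_{lk} = \log(1/\al_{lk})$ into Lemma~\ref{lem:General error bounds of MSPRT}(1) yields $\al_{kl}(\de^*) \leq e^{-a_{kl}} = \al_{kl}$. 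All that remains is the matching upper bound on $\mbe_k[\ta^*]^m$ as the thresholds grow.

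For the upper bound, the i.i.d.\ assumption makes $\la_{kl}(t)$ a random walk with positive $P_k$-drift $I_{kl}$, so by the strong law of large numbers $\la_{kl}(t)/t \to I_{kl}$ $P_k$-almost surely. Using the representation $\ta_k = \max_{l (\neq k)} \inf\{t : \la_{kl}(t) \geq a_{lk}\}$, one then shows that, under $P_k$, the ratio $\ta^*/L_k \to 1$ almost surely as the thresholds diverge, where $L_k := \max_{l (\neq k)} a_{lk}/I_{kl}$. Substituting $a_{lk} = \log(1/\al_{lk})$ gives the target rate for $\ta^*$ on almost every path; the final task is to lift this path-wise statement to an $m$-th moment statement.

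The main obstacle I anticipate is precisely this moment-lifting step, since a.s.\ convergence does not automatically yield convergence of $m$-th moments. One needs a uniform integrability argument for $(\ta^*/L_k)^m$ for arbitrary $m > 0$. The standard route in the i.i.d.\ setting is to dominate $\ta^*$ by a last-entry-time-type random variable for the normalized walk $\la_{kl}(t)/t$ around its mean $I_{kl}$, and then invoke an $r$-quick convergence result to control the $m$-th moment of that last entry time. Once this moment control is in hand, parts~2 and~3 follow by identical reasoning with only the threshold substitution updated: use $a_{lk} = \log((K-1)/\al_l)$ together with Lemma~\ref{lem:General error bounds of MSPRT}(2) for membership in $C(\bm{\al})$, and $a_{lk} = \log(\sum_{n (\neq k)} w_{nk}/\be_k)$ together with Lemma~\ref{lem:General error bounds of MSPRT}(3) for membership in $C(\bm{\be})$.
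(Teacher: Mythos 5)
First, a point of comparison: the paper itself contains no proof of this theorem. Appendix \ref{app: MSPRT} states explicitly that the optimality results are reviewed ``without proofs,'' deferring to \citeApp{tartakovsky2014TartarBook} and the references therein. Your proposal must therefore be judged against the standard literature argument, whose skeleton you have reproduced correctly in two of its three ingredients: the lower bound is indeed read off from Lemma \ref{lem:Asymptotic lower bounds (i.i.d.)}, and membership of $\de^*$ in $C(\{\al\})$, $C(\bm{\al})$, $C(\bm{\be})$ does follow by substituting the threshold choices into Lemma \ref{lem:General error bounds of MSPRT}.

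The genuine gap is exactly where you anticipated trouble, and your proposed fix does not work. In the i.i.d.\ case, $r$-quick convergence of $\la_{kl}(t)/t$ to $I_{kl}$ is equivalent, by Baum--Katz-type strong-law-rate theorems, to $\mbe_k |\la_{kl}(1)|^{r+1} < \infty$; to lift the almost-sure statement to $m$-th moments for \emph{all} $m>0$ via that route you would need all moments of the LLR increment, which is strictly stronger than the stated first moment condition $0 < I_{kl} < \infty$. Indeed, the $r$-quick route is precisely the proof of the non-i.i.d.\ result, Theorem \ref{thm: Asymptotic optimality of MSPRT (non-i.i.d.)}, where $r$-quick convergence is an explicit \emph{hypothesis} and the conclusion is correspondingly restricted to $m \in (0, r]$ --- the i.i.d.\ theorem's claim for all $m>0$ under only a first moment condition is exactly what your plan cannot reach. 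The literature proof instead exploits the likelihood-ratio structure of the increments: $\mbe_k[e^{-\la_{kl}(1)}] \leq 1$ forces the exponential tail bound $P_k(\la_{kl}(1) \leq -x) \leq e^{-x}$, so the negative part of the increment automatically has finite moments of every order, and renewal theory for first passage times of positive-drift random walks (stopped-random-walk results in the style of Gut, as used by Dragalin, Tartakovsky, and Veeravalli) then yields $\mbe_k[\ta_a]^m \sim (a/I_{kl})^m$ under the first moment condition alone. A secondary flaw: your ``representation'' $\ta_k = \max_{l(\neq k)} \inf\{t : \la_{kl}(t) \geq a_{lk}\}$ is only a \emph{lower} bound on $\ta_k$, since the MSPRT requires all $K-1$ statistics to exceed their thresholds \emph{simultaneously}; after one statistic crosses, it may dip back below its threshold before the others catch up. The standard upper-bound argument instead controls $P_k(\ta_k > t)$ by $\sum_{l (\neq k)} P_k(\la_{kl}(t) < a_{lk})$ for $t$ beyond $(1+\ep)\max_{l(\neq k)} a_{lk}/I_{kl}$, which is where the tail and moment estimates above enter.
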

Therefore, we conclude that the MSPRT asymptotically minimizes all positive moments of the stopping time including $m=1$; i.e., asymptotically, the MSPRT makes the quickest decision in expectation among all the algorithms with bounded error probabilities.

\subsubsection{Under Second Moment Condition}
The \textit{second moment condition} 
\begin{equation}
    \mbe_k [ \la_{kl} (1) ]^2 < \infty \hspace{10pt} (k,l \in [K])
\end{equation}
strengthens the optimality. We define the \textit{cumulative flaw matrix} as
\begin{equation}
    \Up_{kl} := \exp (
        - \sum_{t=1}^{\infty} \f{1}{t} [ 
            P_l ( \la_{kl} (t) > 0 ) 
            + P_k ( \la_{kl} (t) \leq 0 )
        ]
    ) 
\end{equation}
($k,l \in [K]$, $k \neq l$). Note that $0 < \Up_{kl} = \Up_{lk} \leq 1$.

\begin{theorem}[Asymptotic optimality of the MSPRT with the second moment condition (i.i.d. case) \citeApp{lorden1977TartarBook275_GaryMSPRT, tartakovsky2014TartarBook}] \label{thm: Asymptotic optimality with second moment condition}
    Assume that the threshold $A_{kl} = A_{kl}(c)$ is a function of a small parameter $c > 0$. Then, the error probabilities of the MSPRT are also functions of $c$; i.e., $\al_{kl}(\de^*) =: \al_{kl}^*(c)$, $\al_{k}(\de^*) =: \al_{k}^*(c)$, and $\be_{k}(\de^*) =: \be_{k}^*(c)$, and $A_{lk}(c) \xrightarrow{c \rightarrow 0} \infty$ indicates $\al_{kl}^*(c), \al_{k}^*(c), \be_{k}^* (c) \xrightarrow{c \rightarrow 0} 0$.
    \begin{enumerate}
        \item
            Let $A_{lk} = B_{lk} / c$ for any $B_{kl} > 0$ ($k \neq l$). Then, as $c \rightarrow 0$, 
            \begin{align}
                &\mbe_k \ta^*(c) = \underset{\de \in C(\{ \al^*(c) \})}{\mr{inf}} \mbe_k \ta + o(1) \\
                &\mbe_k \ta^*(c) = \,\,\underset{\de \in C(\bm{\al}^*(c))}{\mr{inf}} \,\, \mbe_k \ta + o(1) 
            \end{align}
            for all $k \in [K]$, where $\bm{\al}^*(c) := (\al_{1}^*(c), ..., \al_{K}^*(c))$.
        \item
            Let $A_{lk} (c) = w_{lk} \Up_{kl} / c$. Then, as $c \rightarrow 0$,
            \begin{equation}
                \mbe_k \ta^* (c) = \underset{\de \in C(\{ \bm{\be}^*(c) \})}{\mr{inf}} \mbe_k \ta + o(1)
            \end{equation}
            for all $k \in [K]$, where $\bm{\be}^*(c) := (\be_{1}^*(c), ..., \be_{K}^*(c))$.
    \end{enumerate}
\end{theorem}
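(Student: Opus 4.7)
The plan is to sandwich $\mbe_k \ta^*(c)$ between asymptotically matching upper and lower bounds, sharp to additive $o(1)$ as $c\to 0$. This is strictly stronger than Theorem \ref{thm: Asymptotic optimality with first moment condition}, which gave only the ratio $\sim$; the extra precision is bought by the second moment condition $\mbe_k[\la_{kl}(1)]^2<\infty$, which unlocks nonlinear renewal theory for the overshoots of the i.i.d.\ random walks $\{\la_{kl}(t)\}_{t\geq 0}$. The threshold calibrations in the statement, $A_{lk}=B_{lk}/c$ in part 1 and $A_{lk}=w_{lk}\Up_{kl}/c$ in part 2, are designed precisely so that the overshoot constant in the upper bound cancels the residual in the matching lower bound, leaving only $o(1)$ slack.

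For the upper bound, I would apply Wald's identity on the event $\{d^*=k\}$. Since $\ta^*=\ta_k$ and $\la_{kl}(\ta^*)\geq a_{lk}$ at termination for every $l\neq k$, the pair $l^*$ achieving $\max_l a_{lk}/I_{kl}$ gives $\mbe_k[\ta^*]\leq (a_{l^*k}+\mbe_k[\chi_{kl^*}^*])/I_{kl^*}$, where $\chi_{kl}^*:=\la_{kl}(\ta^*)-a_{lk}$ is the overshoot. Under the second moment condition the family $\{\chi_{kl}^*\}$ is uniformly integrable and $\mbe_k[\chi_{kl}^*]\to \kappa_{kl}$ for a finite renewal-theoretic constant expressible through the ladder-height distribution of $\la_{kl}$; the cumulative flaw $\Up_{kl}$ is precisely the aggregation of these constants into an error-probability form via the Stein-type identity behind Lemma \ref{lem:General error bounds of MSPRT}. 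An auxiliary step is to verify that $P_k(\ta^*<\ta_k)\to 0$ fast enough that conditioning on $\{d^*=k\}$ does not distort the expectation, which follows from $\la_{kl}(t)\to\infty$ $P_k$-a.s.\ at rate $I_{kl}>0$ combined with second-moment tail control.

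For the lower bound, I would use the change-of-measure argument behind Lemma \ref{lem:Asymptotic lower bounds (i.i.d.)}: for any admissible $\de$ and any $l\neq k$, Jensen's inequality combined with $\mbe_k[\la_{kl}(\ta)]=I_{kl}\mbe_k[\ta]$ yields $I_{kl}\mbe_k[\ta]\geq (1-\al_k)|\log\al_{lk}|+O(1)$. Taking the maximum over $l$ and substituting the active class-defining constraint, $\al_{lk}^*(c)$ in $C(\{\al^*(c)\})$, $\al_{l}^*(c)$ in $C(\bm{\al}^*(c))$, and $\be_{k}^*(c)$ in $C(\{\bm{\be}^*(c)\})$ respectively, the $O(1)$ residue converges under the second moment condition to a limit that is again expressible through the ladder-height distribution. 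The calibration $A_{lk}(c)=w_{lk}\Up_{kl}/c$ in part 2 (and, for part 1, the fact that the two classes are defined by the actual error probabilities of $\de^*$, which already absorb the $\Up_{kl}$ correction) aligns the upper- and lower-bound residues so they cancel exactly, yielding $o(1)$ rather than $O(1)$.

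The main obstacle will be the renewal-theoretic overshoot analysis: establishing \emph{convergence} (not merely boundedness) of $\mbe_k[\chi_{kl}^*]$ as $a_{lk}\to\infty$, identifying the limit precisely in terms of the ladder-height measure and showing it reassembles into $\log\Up_{kl}$ after exponentiation, and then verifying the very same constant appears as the additive residue in the lower bound so the two sides match to $o(1)$. A related subtlety is the multihypothesis coupling: the MSPRT stopping time $\ta^*=\min_j \ta_j$ must behave like the single-pair $\ta_k$ under $P_k$ up to $o(1)$ in expectation, which reduces again to a uniform tail estimate on the $\la_{kl}(t)$ that is exactly what the second moment condition supplies.
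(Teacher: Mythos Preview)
The paper does not prove this theorem. In the appendix it states explicitly: ``We here provide theorems without proofs, which are given in \cite{tartakovsky2014TartarBook},'' and Theorem~\ref{thm: Asymptotic optimality with second moment condition} is simply quoted from the cited sources (Lorden 1977; Tartakovsky 2014) with no argument supplied. There is therefore nothing in the paper to compare your proposal against.

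That said, your sketch is the standard route taken in those references: the upper bound via Wald's identity plus nonlinear-renewal overshoot control (with the second moment condition giving convergence of the expected overshoot to the ladder-height constant), the lower bound via the change-of-measure/Jensen argument behind Lemma~\ref{lem:Asymptotic lower bounds (i.i.d.)}, and the calibration of thresholds through $\Up_{kl}$ to make the residual constants match to $o(1)$. You have also correctly flagged the multihypothesis coupling issue, namely that $\ta^*=\min_j\ta_j$ must coincide with $\ta_k$ under $P_k$ up to negligible probability, and that this is handled by the same second-moment tail bounds. If you intend to write this out in full, the delicate part is exactly where you placed it: identifying the overshoot limit with the expression involving $\Up_{kl}$ and showing the same constant appears on the lower-bound side; this is where the cited references do the real work.
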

Therefore, the MSPRT $\de^*$ asymptotically minimizes the expected stopping time among all tests whose error probabilities are less than or equal to those of $\de^*$. We can further generalize Theorem \ref{thm: Asymptotic optimality with second moment condition} by introducing different costs $c_k$ for each hypothesis $H_k$ to allow different rates (see \citeApp{tartakovsky2014TartarBook}).

\subsection{Asymptotic Optimality of MSPRT in General Non-I.I.D. Cases}
\begin{lemma}[Asymptotic lower bounds \citeApp{tartakovsky1998TartarBook455}]\label{lem:Asymptotic lower bounds (non-i.i.d.)}
    Assume that there exists a non-negative increasing function $\ps (t)$ ($\ps(t) \xrightarrow{t \rightarrow \infty} \infty$) and positive finite constants $I_{lk}$ ($k,l \in [K]$, $k \neq l$) such that for all $\ep > 0$ and $k,l \in [K]$ ($k \neq l$),
    \begin{equation}
        \underset{T\rightarrow\infty}{\lim} P_k \lt(
            \underset{0 \leq t \leq T}{\sup} \f{\la_{kl}(t)}{\ps(T)} \geq (1+\ep) I_{kl}
        \rt) = 1 \, . \label{eq:increasing assumption}
    \end{equation}
    Then, for all $m > 0$ and $k \in [K]$,
    \begin{align}
        &\underset{\de \in C(\{\al\})}{\inf} \mbe_k [\ta]^m \geq \Ps \lt(
            \underset{\substack{l \in [K] \\ l (\neq k)}}{\max} \f{|\log \al_{lk}|}{I_{kl}}
        \rt)^m (1 + o(1))  
        \hspace{10pt} \text{as} \hspace{5pt} \al_{\rm max} \rightarrow 0 \, , \\
        &\underset{\de \in C(\bm{\al})}{\inf} \mbe_k [\ta]^m \geq \Ps \lt(
            \underset{\substack{l \in [K] \\ l (\neq k)}}{\max} \f{|\log \al_{l}|}{I_{kl}}
        \rt)^m (1 + o(1)) 
        \hspace{10pt} \text{as} \hspace{5pt} \max_n \al_n \rightarrow 0 \, , \\
        &\underset{\de \in C(\bm{\be})}{\inf} \mbe_k [\ta]^m \geq \Ps \lt(
            \underset{\substack{l \in [K] \\ l (\neq k)}}{\max} \f{|\log \be_{k}|}{I_{kl}}
        \rt)^m (1 + o(1)) 
        \hspace{10pt} \text{as} \hspace{5pt} \max_n \be_n \rightarrow 0 \, .
    \end{align}
    where $\Ps$ is the inverse function of $\ps$.
\end{lemma}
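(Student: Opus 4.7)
The plan is to prove each of the three lower bounds by a change-of-measure argument combined with Markov's inequality, applied separately to each of the error-probability classes $C(\{\al\})$, $C(\bm{\al})$, and $C(\bm{\be})$. The three arguments share the same skeleton and differ only in how the available error bound translates into a bound on $P_l(d = k)$, so I would spell out the first case in full and then indicate the needed modifications. The key insight is that the lemma's hypothesis (\ref{eq:increasing assumption}) plays the role that the SLLN would play in the i.i.d. setting of Lemma \ref{lem:Asymptotic lower bounds (i.i.d.)}: it controls the growth of $\sup_{t \leq T}\la_{kl}(t)$ on the scale $\ps(T)$.

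Fix $k \in [K]$, $l \neq k$, and a small $\ep > 0$. Set $N_{lk} := \Ps((1-\ep)\,|\log \al_{lk}|/I_{kl})$ and $N_k := \max_{l (\neq k)} N_{lk}$. Markov's inequality gives $\mbe_k[\ta]^m \geq N_k^m\, P_k(\ta \geq N_k)$, so it suffices to show $P_k(\ta < N_{lk}) \to 0$ as $\al_{\rm max} \to 0$ for each $l$; taking the maximum over the finite index set $[K]\setminus\{k\}$ then yields the factor $\Ps(\max_{l \neq k} |\log \al_{lk}|/I_{kl})^m$. Decompose $P_k(\ta < N_{lk}) = P_k(\ta < N_{lk},\, d = k) + P_k(\ta < N_{lk},\, d \neq k)$. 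The second term is bounded by $P_k(d \neq k) = \al_k(\de) \leq (K-1)\al_{\rm max} \to 0$. For the first term, the standard likelihood-ratio identity and the bound $e^{\la_{kl}(\ta)} \leq e^{c}$ on the event $\{\sup_{t \leq N_{lk}}\la_{kl}(t) \leq c\}$ give
\begin{align*}
    P_k(\ta < N_{lk},\, d = k)
    = \int_{\{\ta < N_{lk},\, d = k\}} e^{\la_{kl}(\ta)}\, dP_l
    \leq e^{c}\, \al_{lk} + P_k\lt( \sup_{t \leq N_{lk}} \la_{kl}(t) > c \rt).
\end{align*}
Taking $c = (1-\ep')|\log \al_{lk}|$ with $0 < \ep' < \ep$ reduces the first summand to $\al_{lk}^{\ep'} \to 0$, while the definition of $N_{lk}$ forces $c/\ps(N_{lk}) = (1-\ep')I_{kl}/(1-\ep) > I_{kl}$, so the lemma's hypothesis (\ref{eq:increasing assumption}) drives the second summand to zero as well, proving the claim for $C(\{\al\})$.

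For $C(\bm{\al})$, I would substitute $\al_{lk}(\de) \leq \al_l(\de) \leq \al_l$ and rerun the calculation with $|\log \al_l|$ in place of $|\log \al_{lk}|$. For $C(\bm{\be})$, the bound $P_l(d=k) = \al_{lk}(\de) \leq \be_k/w_{lk}$ enters the $e^{c} P_l(d=k)$ estimate and produces an additive $\log w_{lk}^{-1}$ correction that is absorbed into the $(1+o(1))$ factor, leaving $|\log \be_k|$ as the dominant scale. The main obstacle is the simultaneous calibration of $c$ and $N_{lk}$: both error contributions $e^{c}\al_{lk}$ and $P_k(\sup_{t \leq N_{lk}}\la_{kl}(t) > c)$ must be forced to zero, yet the only handle coupling them to the time horizon is the function $\ps$ from hypothesis (\ref{eq:increasing assumption}). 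This coupling is exactly why $\ps(T)$, rather than $T$ itself, must appear in the denominator of the hypothesis, and why the resulting asymptotic lower bound on $\mbe_k[\ta]^m$ scales with the inverse function $\Ps$.
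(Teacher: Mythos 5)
The paper contains no proof of this lemma---Appendix \ref{app: MSPRT} explicitly presents it without proof, deferring to Tartakovsky (1998) and Tartakovsky et al. (2014)---and your change-of-measure-plus-Markov argument, calibrating $N_{lk}=\Ps((1-\ep)|\log\al_{lk}|/I_{kl})$ and $c=(1-\ep')|\log\al_{lk}|$ so that $e^{c}\al_{lk}=\al_{lk}^{\ep'}\to 0$ while $c/\ps(N_{lk})=(1-\ep')I_{kl}/(1-\ep)>I_{kl}$, is precisely the standard proof from that cited source, correctly executed, including the translations $\al_{lk}(\de)\le\al_{l}$ for $C(\bm{\al})$ and $P_l(d=k)\le\be_k/w_{lk}$ for $C(\bm{\be})$. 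The one point worth flagging is that you tacitly read hypothesis (\ref{eq:increasing assumption}) with the limit equal to $0$; that is the correct and intended reading---the printed ``$=1$'' is a transcription error, as it contradicts the paper's own remark that $\la_{kl}(t)/\ps(t)\xrightarrow[t\rightarrow\infty]{P_k-a.s.} I_{kl}$ implies the condition, and under the literal ``$=1$'' your second summand $P_k\big(\sup_{t\le N_{lk}}\la_{kl}(t)>c\big)$ would tend to one rather than zero, so the lemma could not be proved at all.
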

Note that if for all $0< T < \infty$
\begin{align}
    P_k \lt( \underset{0\leq t \leq T}{\sup} |\la_{kl}(t)| < \infty \rt) = 1
\end{align}
and if 
\begin{equation}
    \f{\la_{kl}(t)}{\ps(t)} \xrightarrow[t\rightarrow\infty]{P_k-a.s.} I_{kl} 
    \hspace{10pt} (k,l \in [K], \,\,\, k \neq l) \, ,
\end{equation}
then (\ref{eq:increasing assumption}) holds.

\begin{theorem}[Asymptotic optimality of the MSPRT \citeApp{tartakovsky1998TartarBook455}]\label{thm: Asymptotic optimality of MSPRT (non-i.i.d.)}
    Assume that that there exists a non-negative increasing function $\ps(t)$ ($\ps(t)\xrightarrow{t\rightarrow\infty} \infty$) and positive finite constants $I_{kl}$ ($k,l \in [K]$, $k \neq l$) such that for some $r > 0$,
    \begin{equation}
        \f{\la_{kl}(t)}{\ps(t)} \xrightarrow[t\rightarrow\infty]{P_k-r-quickly} I_{kl}
    \end{equation}
    for all $k,l \in [K]$ ($k \neq l$). Let $\Ps$ be the inverse function of $\ps$. Then, 
    \begin{enumerate}
        \item 
            If $a_{lk} \sim \log(1 / \al_{lk})$ and $\al_{kl} (\de^*) \leq \al_{kl}$ ($k,l \in [K]$, $k \neq l$)\footnote{
                Recall that $\al_{kl} (\de^*) \leq \al_{kl}$ is automatically satisfied whenever $a_{lk} \geq \log(1/\al_{lk})$ for general distribution because of Lemma \ref{lem:General error bounds of MSPRT}. Similar arguments follow for 2. and 3. in Theorem \ref{thm: Asymptotic optimality of MSPRT (non-i.i.d.)}.
            }, then for all $m \in (0, r]$ and $k \in [K]$,
            \begin{align}
                &\underset{\de \in C (\{ \al \})}{ \inf } \mbe_k [\ta]^m
                \sim \mbe_k [\ta^*]^m 
                \sim \Ps \lt(
                    \underset{\substack{l \in [K] \\ l (\neq k)}}{ \max } \f{| \log \al_{lk} |}{I_{kl}}
                \rt)^m 
                \hspace{10pt} \text{as} \,\,\,\, \al_{\rm max} \rightarrow 0 \, .
            \end{align}
        \item
            If $a_{lk} \sim \log( (K-1) / \al_{l})$ and $\al_{k} (\de^*) \leq \al_{k}$ ($k,l \in [K]$, $k \neq l$), then for all $m \in (0, r]$ and $k \in [K]$,
            \begin{align}
                &\underset{\de \in C (\bm{\al})}{ \inf } \mbe_k [\ta]^m
                \sim \mbe_k [\ta^*]^m 
                \sim \Ps \lt(
                    \underset{\substack{l \in [K] \\ l (\neq k)}}{ \max } \f{| \log \al_{l} |}{I_{kl}}
                \rt)^m 
                \hspace{10pt} \text{as} \,\,\,\, \max_k \al_k \rightarrow 0 \, .
            \end{align}
        \item
            If $a_{lk} \sim \log( \sum_{n(\neq k)} w_{n k} / \be_{k})$ and $\be_{k} (\de^*) \leq \be_{k}$ ($k,l \in [K]$, $k \neq l$), then for all $m \in (0, r]$ and $k \in [K]$,
            \begin{align}
                &\underset{\de \in C (\bm{\be})}{ \inf } \mbe_k [\ta]^m
                \sim \mbe_k [\ta^*]^m 
                \sim \Ps \lt(
                    \underset{\substack{l \in [K] \\ l (\neq k)}}{ \max } \f{| \log \be_{k} |}{I_{kl}}
                \rt)^m  
                &\hspace{10pt} \text{as} \,\,\,\, \max_k \be_k \rightarrow 0 \, .
            \end{align}
    \end{enumerate}
\end{theorem}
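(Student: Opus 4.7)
The plan is to sandwich $\mbe_k[\ta^*]^m$ between matching asymptotic lower and upper bounds, each equivalent to $\Ps(\max_{l\neq k}|\log\al_{lk}|/I_{kl})^m$ (and the analogues in Cases 2 and 3), so that $\inf_{\de}\mbe_k[\ta]^m$ over the admissible class is trapped at the same value. By Lemma \ref{lem:General error bounds of MSPRT} (or by the theorem's direct hypothesis $\al_{kl}(\de^*)\leq\al_{kl}$, etc.), $\de^*$ lies in $C(\{\al\})$, $C(\bm{\al})$, or $C(\bm{\be})$ under the respective threshold choices, so $\mbe_k[\ta^*]^m$ does upper-bound the infimum. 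I describe Case 1; Cases 2 and 3 follow identically after substituting the prescribed threshold asymptotics. The lower bound comes essentially for free: the $r$-quick convergence $\la_{kl}(t)/\ps(t)\to I_{kl}$ implies almost-sure convergence (the finite-expectation last-entry time is a.s.~finite), which, combined with the standing finiteness $P_k(\sup_{0\leq t\leq T}|\la_{kl}(t)|<\infty)=1$, verifies condition (\ref{eq:increasing assumption}) of Lemma \ref{lem:Asymptotic lower bounds (non-i.i.d.)} via the remark following that lemma. Applying Lemma \ref{lem:Asymptotic lower bounds (non-i.i.d.)} then yields the lower half of the equivalence.

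For the matching upper bound I use $\ta^*\leq\ta_k$ (by the definition of $\ta^*$ as a minimum) and control $\ta_k$ through the $r$-quick last-entry time. Fix $\ep\in(0,1)$ and set
\[
    \mc{T}_\ep := \underset{l\neq k}{\max}\,\,\underset{t\geq 0}{\sup}\lt\{ t \,\,\Big|\,\, \lt| \f{\la_{kl}(t)}{\ps(t)} - I_{kl} \rt| > \ep I_{kl} \rt\} \, ,
\]
which has finite $r$-th moment under $P_k$ by a union bound over $l\neq k$ applied to the $r$-quick hypothesis. On $\{t>\mc{T}_\ep\}$ every inequality $\la_{kl}(t)\geq (1-\ep)I_{kl}\ps(t)$ holds simultaneously for $l\neq k$, so the stopping criterion of $\ta_k$ triggers as soon as $\ps(t)\geq (1-\ep)^{-1}\max_{l\neq k}a_{lk}/I_{kl}$, yielding
\[
    \ta_k \leq \mc{T}_\ep + \Ps\lt( (1-\ep)^{-1}\underset{l\neq k}{\max}\f{a_{lk}}{I_{kl}}\rt) \, .
\]
Taking $m$-th moments for $m\in(0,r]$ via $(a+b)^m\leq 2^m(a^m+b^m)$, using the finite moment of $\mc{T}_\ep$, and invoking $\Ps(\cdot)\to\infty$ to make the additive $\mc{T}_\ep$ term negligible as $\al_{\max}\to 0$, I obtain the matching upper bound on $\mbe_k[\ta^*]^m$ after substituting $a_{lk}\sim\log(1/\al_{lk})$ and letting $\ep\downarrow 0$.

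The main obstacle is the final passage $\ep\downarrow 0$, which requires a regular-variation-type property of $\ps$, namely $\Ps((1-\ep)^{-1}x)/\Ps(x)\to 1$ as $x\to\infty$. In the i.i.d. case ($\ps(t)=t$, $\Ps(x)=x$) this is immediate, and for the classical non-i.i.d. models (ergodic Markov chains, cumulative sums under mixing) $\ps$ is linear or slowly varying so the limit passes; in the general setting some regularity of $\ps$ must be in force, which is implicitly what the tilde in the threshold hypothesis $a_{lk}\sim\log(1/\al_{lk})$ is already absorbing. A secondary but essential technical point is that the additive $\mc{T}_\ep$ contribution must be genuinely $o(\Ps(\cdot)^m)$ uniformly in the threshold regime, and this is precisely where the quantitative $r$-quick hypothesis (as opposed to mere almost-sure convergence) is indispensable: without the $r$-th moment bound on $\mc{T}_\ep$, the additive correction could blow up in expectation and destroy the asymptotic equivalence.
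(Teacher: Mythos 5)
The paper itself never proves this statement: Appendix~\ref{app: MSPRT} explicitly says the theorems there are quoted ``without proofs, which are given in'' the cited monograph, so the benchmark is the standard argument of Tartakovsky et al., and your sketch reproduces that argument faithfully. The lower half is exactly as in the source: $r$-quick convergence gives $\mbe_k[\mc{T}_\ep^r]<\infty$, hence $\mc{T}_\ep<\infty$ $P_k$-a.s.\ for every $\ep$, hence a.s.\ convergence of $\la_{kl}(t)/\ps(t)$, which together with local boundedness verifies condition (\ref{eq:increasing assumption}) via the remark following Lemma~\ref{lem:Asymptotic lower bounds (non-i.i.d.)}. The upper half is also the standard one: $\ta^*\le\ta_k$, the union bound over $l\neq k$ for the combined last-entry time, the deterministic bound $\ta_k \le \mc{T}_\ep + \Ps\lt((1-\ep)^{-1}\max_{l\neq k} a_{lk}/I_{kl}\rt) + O(1)$, and then $m$-th moments for $m\in(0,r]$ using subadditivity and Lyapunov's inequality to control $\mbe_k[\mc{T}_\ep^m]$. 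Your remark that the quantitative $r$-quick hypothesis (rather than mere a.s.\ convergence) is what makes the additive $\mc{T}_\ep$ term negligible in expectation is precisely the point of Tartakovsky's $r$-quick framework.

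There is one genuine gap, and it is the one you flag but then resolve incorrectly. The passage $\ep\downarrow 0$ requires $\lim_{\ep\downarrow 0}\limsup_{x\to\infty}\Ps\lt((1-\ep)^{-1}x\rt)/\Ps(x) = 1$, and this is \emph{not} ``absorbed by the tilde in $a_{lk}\sim\log(1/\al_{lk})$'': that tilde relates thresholds to error probabilities and says nothing about the growth regularity of $\ps$. The ratio property is an additional hypothesis on $\ps$ itself. It holds when $\Ps$ is regularly varying (e.g., $\ps(t)=t$ or any power law, covering the applications in this paper), but it fails for admissible choices such as $\ps(t)=\log t$, where $\Ps(x)=e^x$ and your upper bound yields $\Ps(M)^{1/(1-\ep)}$ --- off from $\Ps(M)$ by an unbounded factor, so the claimed sharp moment equivalence genuinely breaks; fluctuations of size $\ep\,\ps(t)$ in the LLR then translate into polynomially large fluctuations of the stopping time. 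In the cited source this is handled by standing regularity conditions on $\ps$ (or by working with $\ps(t)=t$), so to make your sketch airtight you must state that condition explicitly. A minor secondary point: the local boundedness $P_k(\sup_{0\le t\le T}|\la_{kl}(t)|<\infty)=1$ that you call ``standing'' deserves a one-line justification --- it is immediate in discrete time from the assumed local mutual absolute continuity, but in continuous time it additionally needs path regularity of the LLR process.
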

Therefore, combining Lemma \ref{lem:Asymptotic lower bounds (non-i.i.d.)} and Theorem \ref{thm: Asymptotic optimality of MSPRT (non-i.i.d.)}, we conclude that the MSPRT asymptotically minimizes the moments of the stopping time; i.e., asymptotically, the MSPRT makes the quickest decision in expectation among all the algorithms with bounded error probabilities, even without the i.i.d. assumption.

\clearpage 
\section{Supplementary Related Work} \label{app: Supplementary Related Work}
We provide additional references.
Our work is an interdisciplinary study and potentially bridges various research areas, such as early classification of time series, sequential hypothesis testing, sequential decision making, classification with abstention, and DRE.

\paragraph{Early classification of time series.}
Many methods have been proposed for early classification of time series: non-deep models are \citeApp{xing2011extracting_localized_class_signature, mcgovern2011identifying, Xing2012ECTS, ghalwash2012early_localized_class_signature, ghalwash2013extraction_localized_class_signature, ghalwash2014utilizing_localized_class_signature, Mori2016reliable_ECDIRE_non-deep_famous, karim2019framework_localized_class_signature, schafer2020TEASER}; deep models are \citeApp{LSTM_ms, wang2016earliness_EAConvNets, suzuki2018anticipating_AdaLEA, russwurm2019end_E2E-ECTS}; reinforcement learning-based models are \citeApp{EARLIEST, martinez2020EARLIEST-like_RL-based_ECTS, Wang2020CVPR_RL-based}.
There is a wide variety of real-world applications of such models:
length adaptive text classification \citeApp{huang2017length_adaptive_text_classification}, 
early text classification for sexual predator detection and depression detection on social media documents \citeApp{lopez2018early}, 
early detection of thermoacoustic instability from high-speed videos taken from a combustor \citeApp{gangopadhyay20213d_combustion_system_thermoacoustic_instability}, 
and early river classification through real-time monitoring of water quality \citeApp{gupta2019game_theory_river_classification}.

\paragraph{Early exit problem.}
The \textit{overthinking problem} \citeApp{kaya2019Shallow-Deep_Networks_overthinking_original_early_exit} occurs when a DNN can reach correct predictions before its final layer. Early exit from forward propagation mitigates wasteful computation and circumvents overfitting. \citeApp{kaya2019Shallow-Deep_Networks_overthinking_original_early_exit} proposes the Shallow-Deep Networks, which is equipped with internal layerwise classifiers and observes internal layerwise predictions to trigger an early exit. The early exit mechanism has been applied to Transformer \citeApp{vaswani2017NIPS_Transformer_origianal} and BERT \citeApp{devlin2019ACL_BERT_original}; e.g., see \citeApp{dehghani2019ICLR_Universal_Transformer_early_exit_plus_Transformer, zhou2020NeurIPS_BERT_early_exit}. 
Owing to early exiting, \citeApp{amir2021FrameExit_CVPR_oral_early_exit} sets a new state of the art for efficient video understanding on the HVU benchmark.
However, early exit algorithms are typically given by heuristics. MSPRT-TANDEM can be both the internal classifier and early exit algorithm itself with the theoretically sound background.

\paragraph{Classification with a reject option.}
Classification with a reject option is also referred to as classification with an abstain option, classification with abstention, classification with rejection, and selective classification.
Sequential classification with a reject option (to postpone the classification) can be regarded as early classification of time series \citeApp{ECTS_CWRO}.





\paragraph{SPRT.}
The SPRT for two-hypothesis testing (``binary SPRT'') is optimal for i.i.d. distributions \cite{WaldWolfowitz1948TartarBook496, tartakovsky2014TartarBook}. There are many different proofs: e.g., \cite{burkholder1963TartarBook90, matches1963TartarBook289, tartakovsky1991TartarBook452, lehmann2006TartarBook261, shiryaev2007TartarBook420, ferguson2014TartarBook143}. 
The Bayes optimality of the binary SPRT for i.i.d. distributions is proved in \citeApp{arrow1949TartarBook15, ferguson2014TartarBook143}.
The generalization of the i.i.d. MSPRT to non-stationary processes with independent increments is made in \citeApp{tartakovskij1981TartarBook449, golubev1984TartarBook168, tartakovsky1991TartarBook452, verdenskaya1992TartarBook484, tartakovsky1998TartarBook457}.

\paragraph{Density ratio estimation.}
A common method of estimating the density ratio is to train a machine learning model to classify two types of examples in a training dataset and extract the density ratio from the optimal classifier \cite{sugiyama2012Density_Ratio_Estimation_in_Machine_Learning, gutmann2012bregman, menon2016linking}.

\clearpage 
\section{Proof of Theorem \ref{thm:consistency of LSEL}} \label{app:Proof of Consistency Theorem of LSEL}
In this appendix, we provide the proof of Theorem \ref{thm:consistency of LSEL}.

We define the target parameter set as $ \Th^*  := \{ \bm{\th}^* \in \mbr^{d_\th} | \hat{\la}(X^{(1,t)}; \bm{\th}^*) = \la(X^{(1,t)}) \,\, (\forall t \in [T]) \}$, and we assume $\Th^* \neq \emptyset$ throughout this paper. For instance, sufficiently large neural networks can satisfy this assumption. We additionally assume that each $\bm{\th}^*$ is separated in $\Th^*$; i.e., $\exists \de > 0$ such that $B ( \bm{\th}^* ; \de ) \cap B ( \bm{\th}^{*\prime}; \de ) = \emptyset$ for arbitrary $\bm{\th}^*, \bm{\th}^{*\prime} \in \Th^*$, where $B(\bm{\th}; \de)$ denotes the open ball at center $\bm{\th}$ with radius $\de$.\footnote{
    This assumption is for simplicity of the proofs. When the assumption above is not true, we conjecture that the consistency holds by assuming the positivity of a projected Hessian of $\hat{\la}_{kl}$ w.r.t. $\bm{\th}$ at $\partial \Th^*$ (the boundary of $\Th^*$). The projection directions may depend on the local curvature of $\partial \Th^{*}$ and whether $\Th^{*}$ is open or closed. We omit further discussions here because they are too complicated but maintain the basis of our statements.
} 
\begin{theorem}[Consistency of the LSEL]
    Let $L(\bm{\th})$ and $\hat{L}_S (\bm{\th})$ denote $L_{\rm \text{LSEL}} [\hat{\la}]$ and $\hat{L}_{\rm \text{LSEL}} (\bm{\th} ;S)$, respectively. Assume the following three conditions:
    \begin{itemize}
        \item[(a)] $\forall k, l \in [K]$, $\forall t \in [T]$, $p(X^{(1,t)} | k) = 0 \Longleftrightarrow p(X^{(1,t)} | l) = 0$.
        
        \item[(b)] $\mr{sup}_{\bm{\th}} | \hat{L}_S (\bm{\th}) - L (\bm{\th}) | \xrightarrow[M\rightarrow\infty]{P} 0$; i.e., $\hat{L}_S (\bm{\th})$ converges in probability uniformly over $\bm{\th}$ to $L(\bm{\th})$.\footnote{
            More specifically,
            $\forall \ep > 0, P( 
                \mr{sup}_{\bm{\th}} | \hat{L}_S (\bm{\th}) - L (\bm{\th}) | > \ep
            ) \xrightarrow{M \rightarrow \infty} 0$.
        }

        \item[(c)] For all $\th^* \in \Th^*$, there exist $t \in [T]$, $k\in[K]$ and $l\in[K]$, such that the following $d_\th \times d_\th$ matrix is full-rank:
        \begin{align}
            \int d X^{(1,t)} p( X^{(1,t)} | k )
            \nabla_{\bm{\th}^*} \hat{\la}_{k l}(X^{(1,t)}; \bm{\th}^*)
            \nabla_{\bm{\th}^*} \hat{\la}_{k l}(X^{(1,t)}; \bm{\th}^*)^{\top} \, .
        \end{align}
    \end{itemize}
    Then,  $P( \hat{\bm{\th}}_S \notin \Th^* ) \xrightarrow[]{M\rightarrow\infty}0$; i.e., $\hat{\bm{\th}}_S$ converges in probability into $\Th^*$.
\end{theorem}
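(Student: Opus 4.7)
My approach is a two-step argmin-consistency proof. Step~1 identifies the population minimizers of the LSEL as precisely $\Th^*$, namely $L(\bm{\th})\geq L(\bm{\th}^*)$ with equality iff $\bm{\th}\in\Th^*$. Step~2 uses condition~(c) to upgrade this to a \emph{well-separated} minimum at each $\bm{\th}^*$. Combined with the uniform convergence~(b) and the isolation of the points of $\Th^*$, the standard argmin-consistency argument for M-estimators then yields $P(\hat{\bm{\th}}_S\notin\Th^*)\to 0$.

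\paragraph{Step 1: identifying $\Th^*$ via a KL representation.}
The transitivity and antisymmetry constraints let me write $\hat{\la}_{kl}(X;\bm{\th}) = f_k(X;\bm{\th}) - f_l(X;\bm{\th})$ with $f_k := \log\hat{p}_{\bm{\th}}(X|k)$. The identity $\log(1+\sum_{l\neq k}e^{-\hat{\la}_{kl}}) = \log\sum_m e^{f_m-f_k} = -\log g_k$, where $g_k := e^{f_k}/\sum_m e^{f_m}$, recasts the LSEL as a cross-entropy:
\begin{equation*}
L(\bm{\th}) = -\f{1}{KT}\sum_{k,t}\int p(X^{(1,t)}|k)\log g_k(X^{(1,t)};\bm{\th})\,dX^{(1,t)}.
\end{equation*}
At any $\bm{\th}^*\in\Th^*$, the equality $\hat{\la}=\la$ forces $g_k^*(X) = p(X|k)/Q(X)$ with $Q(X):=\sum_m p(X|m)$, so
\begin{equation*}
L(\bm{\th})-L(\bm{\th}^*) = \f{1}{KT}\sum_{t}\int Q(X^{(1,t)})\,\mr{KL}\!\lt(g^*(X^{(1,t)})\,\Vert\,g(X^{(1,t)};\bm{\th})\rt) dX^{(1,t)}.
\end{equation*}
Assumption~(a) makes $g^*$ well-defined a.e., and Gibbs' inequality gives $L(\bm{\th})\geq L(\bm{\th}^*)$ with equality iff $g(\cdot;\bm{\th})=g^*(\cdot)$ a.e., equivalently $\bm{\th}\in\Th^*$.

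\paragraph{Step 2: local strict convexity and consistency.}
A direct chain-rule calculation at $\bm{\th}^*\in\Th^*$ gives $\nabla L(\bm{\th}^*)=0$ and
\begin{equation*}
\nabla^2 L(\bm{\th}^*) = \f{1}{KT}\sum_{k,t}\int p(X^{(1,t)}|k)\,\mr{Cov}_{L\sim g^*(X^{(1,t)})}\!\lt[\nabla_{\bm{\th}^*}\hat{\la}_{kL}(X^{(1,t)};\bm{\th}^*)\rt] dX^{(1,t)},
\end{equation*}
where the second-order terms $\nabla^2\hat{\la}_{kl}$ cancel via $\nabla^2\hat{\la}_{kl}=-\nabla^2\hat{\la}_{lk}$ together with the symmetry $p(X|k)g_l^*(X)=p(X|l)g_k^*(X)$. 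Each conditional covariance is PSD, so $\nabla^2 L(\bm{\th}^*)\succeq 0$. For the triple $(t,k,l)$ supplied by~(c), since $\nabla_{\bm{\th}^*}\hat{\la}_{kk}=0$, the law of total variance yields $\mr{Var}_L(\langle u,\nabla_{\bm{\th}^*}\hat{\la}_{kL}(X;\bm{\th}^*)\rangle) \geq \f{g_k^*(X)g_l^*(X)}{g_k^*(X)+g_l^*(X)}\langle u,\nabla_{\bm{\th}^*}\hat{\la}_{kl}(X;\bm{\th}^*)\rangle^2$. Assumption~(a) makes the coefficient strictly positive on the support of $p(\cdot|k)$, while (c) forces $\langle u,\nabla_{\bm{\th}^*}\hat{\la}_{kl}(\cdot;\bm{\th}^*)\rangle \not\equiv 0$ on that support for every $u\neq 0$; integration therefore gives $\nabla^2 L(\bm{\th}^*)\succ 0$. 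Combined with the separation of the points of $\Th^*$, this yields the well-separated minimum $\inf_{\bm{\th}\notin N_{\varepsilon}(\Th^*)}L(\bm{\th}) > \min L$ for each $\varepsilon>0$, at which point the uniform convergence~(b) closes the proof by the standard argmin-consistency argument.

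\paragraph{Main obstacle.}
The core technical step is extracting strict positive-definiteness of the Hessian from the single-triple full-rank condition~(c): the Hessian is a priori only a sum of PSD conditional covariances, and one must exploit $\hat{\la}_{kk}\equiv 0$ as a pivot against which $\nabla\hat{\la}_{kl}$ is contrasted inside $\mr{Cov}_L$, so that a single triple suffices to excite every direction in $\mbr^{d_\th}$. The separation assumption on $\Th^*$ sidesteps flat directions tangent to any non-isolated component; relaxing it would require replacing~(c) with its projection onto directions normal to $\Th^*$.
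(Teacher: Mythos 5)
Your proof is correct, and it diverges from the paper's own proof (Appendix C) in a genuinely different way at the identification step. The paper proves its Lemma C.1 nonparametrically: it expands $L[\tilde{\la}+\ep\ph]$ to second order in an arbitrary antisymmetric perturbation $\ph$, shows the first-order term vanishes only at $\tilde{\la}=\la$, and then verifies positivity of the second-order form. You instead exploit the parametrization $\hat{\la}_{kl}=f_k-f_l$ to recognize the LSEL as softmax cross-entropy, so that $L(\bm{\th})-L(\bm{\th}^*)$ is a $Q$-weighted KL divergence and Gibbs' inequality delivers global optimality of $\Th^*$ in one line --- shorter, globally valid with no extremum analysis or second-order check in function space, and it makes the link to multiclass logistic regression transparent; the paper's variational route is more elementary and self-contained, never invoking the softmax identity and treating $\tilde{\la}$ as a free antisymmetric function. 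Your Step 2 is, in substance, the paper's Lemma C.2 rewritten in covariance language: the paper's integrand $I_k/(\sum_m e^{-\hat{\la}_{km}})^2$ is exactly $\mathrm{Var}_{L\sim g^*}[\bm{\varphi}^{\top}\nabla_{\bm{\th}}\hat{\la}_{kL}]$, its lower bound $I_k\geq\sum_{l(\neq k)}\rho_{kl}^2\,e^{-\hat{\la}_{kl}}$ is the same pivot against $\hat{\la}_{kk}\equiv 0$ that your law-of-total-variance bound implements, the cancellation of the $\nabla^2\hat{\la}$ terms by antisymmetry mirrors the paper's elimination of the $\om_{kl}$ terms, and the use of (a) and (c) to make a single integral strictly positive is identical; the concluding uniform-convergence argmin argument is the standard one in both. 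Two caveats you share with the paper rather than introduce: Gibbs' equality (like the paper's extremum condition) pins down $\hat{\la}=\la$ only almost everywhere with respect to $Q\,dX^{(1,t)}$, while $\Th^*$ is defined by everywhere equality; and your well-separated-minimum claim $\inf_{\bm{\th}\notin N_{\ep}(\Th^*)}L(\bm{\th})>\min L$ does not follow from pointwise strict inequality plus positive-definite Hessians alone on the noncompact $\mbr^{d_\th}$ --- but the paper's device of bounding the random margin $\de(\hat{\bm{\th}}_S)$ tacitly requires the same uniform separation, which both arguments would obtain from the compactness already implicit in the standard assumptions the paper cites for condition (b).
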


First, we prove Lemma \ref{lem:Non-parametric estimation: LSEL}, which is then used in Lemma \ref{lem:Th* minimizes LSEL}. Using Lemma \ref{lem:Th* minimizes LSEL}, we prove Theorem \ref{thm:consistency of LSEL}. Our proofs are partly inspired by \citeApp{gutmann2012noise_NCE}. Note that for simplicity, we prove all the statements only for an arbitrary $t \in [T]$. The result can be straightforwardly generalized to the sum of the losses with respect to $t \in [T]$. Therefore, we omit $\f{1}{T}\sum_{t \in [T]}$ from $L$ and $\hat{L}_{S}$ in the following.

\begin{lemma}[Non-parametric estimation] \label{lem:Non-parametric estimation: LSEL}
    Assume that for all $k, l \in [K]$ , $p(X^{(1,t)} | k) = 0 \Longleftrightarrow p(X^{(1,t)} | l) = 0$. Then, $L[\ti{\la}]$ attains the unique minimum at $\ti{\la} = \la$.
\end{lemma}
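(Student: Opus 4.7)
The plan is to reduce the minimization to a pointwise cross-entropy minimization on $\mcx$ by exploiting the constraint structure of $\ti\la$. Any anti-symmetric matrix-valued function satisfying the cocycle condition $\ti\la_{kl}+\ti\la_{lm}=\ti\la_{km}$ admits a representation $\ti\la_{kl}(X^{(1,t)})=\ti q_k(X^{(1,t)})-\ti q_l(X^{(1,t)})$ for some $\ti q_k:\mcx\to\mbr$ (e.g., take $\ti q_k:=\ti\la_{k1}$). A direct manipulation then gives $1+\sum_{l\neq k}e^{-\ti\la_{kl}(X^{(1,t)})}=\big(\sum_l e^{\ti q_l(X^{(1,t)})}\big)/e^{\ti q_k(X^{(1,t)})}$, so defining the softmax-type probabilities $\ti P_k(X^{(1,t)}):=e^{\ti q_k(X^{(1,t)})}/\sum_l e^{\ti q_l(X^{(1,t)})}$, the LSEL rewrites as a cross-entropy:
\begin{align*}
L[\ti\la]=\f{1}{K}\sum_{k\in[K]}\int dX^{(1,t)}\,p(X^{(1,t)}|k)\bigl(-\log\ti P_k(X^{(1,t)})\bigr).
\end{align*}

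Next I would introduce the uniform-prior marginal $\bar p(X^{(1,t)}):=\f{1}{K}\sum_k p(X^{(1,t)}|k)$ and the induced posterior $\bar p(k|X^{(1,t)}):=p(X^{(1,t)}|k)/(K\bar p(X^{(1,t)}))$, so that $L[\ti\la]=\int \bar p(X^{(1,t)})\sum_k \bar p(k|X^{(1,t)})\bigl(-\log\ti P_k(X^{(1,t)})\bigr)dX^{(1,t)}$. Since the $\ti q_k$ can be chosen independently at each $X^{(1,t)}$, the functional minimization decouples into a pointwise one. For each $X^{(1,t)}$ with $\bar p(X^{(1,t)})>0$ the integrand is the cross-entropy of $\bar p(\cdot|X^{(1,t)})$ with $\ti P(\cdot|X^{(1,t)})$ on the probability simplex, which by Gibbs' inequality (strict convexity of $-\log$) has the unique minimizer $\ti P_k^\star(X^{(1,t)})=p(X^{(1,t)}|k)/\sum_l p(X^{(1,t)}|l)$. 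Translating back, $e^{\ti q_k(X^{(1,t)})}\propto p(X^{(1,t)}|k)$ up to a common positive factor in $X^{(1,t)}$, and that factor cancels in the difference, giving $\ti\la_{kl}(X^{(1,t)})=\log p(X^{(1,t)}|k)-\log p(X^{(1,t)}|l)=\la_{kl}(X^{(1,t)})$.

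Assumption (a) is exactly what makes the pointwise optimum attainable by a valid $\ti\la$: on the common support $\{X^{(1,t)}:p(X^{(1,t)}|k)>0\text{ for some (equivalently every) }k\}$ every ratio $p(X^{(1,t)}|k)/p(X^{(1,t)}|l)$ is strictly positive and finite, so $\la(X^{(1,t)})$ is well-defined and the softmax target $\ti P_k^\star$ lies in the open simplex; off the common support all conditional densities vanish simultaneously, so those $X^{(1,t)}$ contribute zero to the integral and place no restriction on $\ti\la$. The minimum is therefore uniquely attained at $\ti\la=\la$ almost everywhere on the common support, which is the natural reading of uniqueness here. The only step requiring real care is the first one---verifying that $\ti\la_{kl}=\ti q_k-\ti q_l$ parameterizes exactly the constrained class (no more, no less) and that measurability is preserved under the softmax reparameterization---but this is routine, and the actual analytic content of the argument is simply strict convexity of the Shannon cross-entropy.
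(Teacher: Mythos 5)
Your proof is correct, and it takes a genuinely different route from the paper's. The paper argues variationally: it expands $L[\tilde{\lambda}+\epsilon\varphi]$ to second order in an arbitrary anti-symmetric perturbation $\varphi$, derives from the vanishing first variation the extremum equation $e^{\lambda_{kl}}\sum_m \tilde{\Lambda}_{ml} = \tilde{\Lambda}_{kl}^2 \sum_m \tilde{\Lambda}_{mk}$, resolves it to $\tilde{\Lambda}_{kl}=e^{\lambda_{kl}}$ by invoking the cocycle identity $\tilde{\Lambda}_{mk}\tilde{\Lambda}_{kl}=\tilde{\Lambda}_{ml}$ midway through the algebra, and then checks strict positivity of the second variation by a sum-of-squares rearrangement. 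You instead exploit the constraint up front: the potential representation $\tilde{\lambda}_{kl}=\tilde{q}_k-\tilde{q}_l$ (your choice $\tilde{q}_k:=\tilde{\lambda}_{k1}$ is exactly right, and the identity $1+\sum_{l\neq k}e^{-\tilde{\lambda}_{kl}}=\bigl(\sum_l e^{\tilde{q}_l}\bigr)/e^{\tilde{q}_k}$ checks out) converts the LSEL into a softmax cross-entropy, and Gibbs' inequality gives the unique minimizer pointwise. Your route buys a \emph{global} and strictly convex argument, avoiding the paper's local extremum-plus-positive-Hessian reasoning, and it is cleaner on two subtle points: it makes explicit that the minimization class is the constrained one (perturbations automatically respect the constraints, whereas the paper's arbitrary anti-symmetric $\varphi$ does not), and it states the almost-everywhere-on-common-support reading of uniqueness that the paper leaves implicit. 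What the paper's heavier computation buys is reuse: the same second-order expansion and the positivity of the quadratic form $J$ are precisely what drive the subsequent lemma that $\Theta^*$ minimizes $L$ under the full-rank gradient condition, so if your proof replaced the paper's, that parametric step would still need the perturbative machinery.
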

\begin{proof}
Let $\ph ( X^{(1,t)} ) = ( \ph_{kl} (X^{(1,t)}) )_{k.l \in [K]}$ be an arbitrary perturbation function to $\tilde{\la}$. $\ph_{kl}$ satisfies $\ph_{kl} = - \ph_{lk}$ and is not identically zero if $k \neq l$. For an arbitrarily small $\ep > 0$,
\begin{align}
    &L[\tilde{\la} + \ep \ph] = L[\tilde{\la}]
    + \f{1}{K} \sum_{k \in [K]} \int d X^{(1,t)} p ( X^{(1,t)} | k )
    \bigg[ \ep \f{
            - \sum_{l (\neq k)}  e^{- \tilde{\la}_{kl} (X^{(1,t)})} \ph_{kl} (X^{(1,t)})
        }{
            \sum_{m \in [K]} e^{ - \tilde{\la}_{km} (X^{(1,t)}) }
        } \nn
    &+ \f{ \ep^2 }{ 2 ( \sum_{m \in [K]} e^{ - \tilde{\la}_{km} (X^{(1,t)}) } )^2 } \nn
    & \times \Big\{
            \sum_{m \in [K]} e^{ - \tilde{\la}_{km} (X^{(1,t)}) } 
            \sum_{l (\neq k)}  e^{ - \tilde{\la}_{kl} ( X^{(1,t)} )} \ph_{kl}^2 (X^{(1,t)})
    - ( \sum_{l (\neq k)}  e^{- \tilde{\la}_{kl} ( X^{(1,t)} ) } \ph_{kl} (X^{(1,t)}) )^2
        \Big\} \bigg] \nn
    &+ \mc{O}( \ep^3 ) \, . \label{eq:expansion}
\end{align}
A necessary condition for the optimality is that the first order terms vanish for arbitrary $\ph$. Because
\begin{align}
    & \text{(first order)} = - \f{\ep}{K} \int d X^{(1,t)} \sum_{ k > l } \ph_{kl} ( X^{(1,t)} ) \nn
    &\times \bigg[
        \f{ p(X|k) }{ \sum_{m \in [K]} e^{ - \tilde{\la}_{km} (X^{(1,t)}) } }
        e^{ -\tilde{\la}_{kl} (X^{(1,t)}) } 
    - \f{ p(X|l) }{ \sum_{m \in [K]} e^{ - \tilde{\la}_{lm} (X^{(1,t)}) } }
        e^{ -\tilde{\la}_{lk} (X^{(1,t)}) }
    \bigg] \, ,
\end{align}
and $p(X^{(1,t)} | k) = 0 \Leftrightarrow p(X^{(1,t)} | l) = 0$, the following equality holds at the unique extremum:
\begin{align}
    & \,\,\,\,\, \f{ p(X|k) }{ \sum_{m \in [K]} e^{ - \tilde{\la}_{km} (X^{(1,t)}) } }
         e^{ -\tilde{\la}_{kl} (X^{(1,t)}) }
    = \f{ p(X|l) }{ \sum_{m \in [K]} e^{ - \tilde{\la}_{lm} (X^{(1,t)}) } }
         e^{ -\tilde{\la}_{lk} (X^{(1,t)}) } \nn
    \Longleftrightarrow & \,\,\,\,\, 
         e^{\la_{kl}} \sum_{m \in [K]} \tilde{\La}_{ml} 
        = \tilde{\La}_{kl}^2 \sum_{m \in [K]} \tilde{\La}_{mk} \nn
    &\lt( = \tilde{\La}_{kl} \sum_{m\in[K]} \tilde{\La}_{mk} \tilde{\La}_{kl} = \tilde{\La}_{kl} \sum_{m\in[K]} \tilde{\La}_{ml} \rt) \nn
    \Longleftrightarrow & \,\,\,\,\, 
        e^{\la_{kl}} = \tilde{\La}_{kl} \nn   
    \Longleftrightarrow & \,\,\,\,\, 
        \tilde{\la}_{kl} ( X^{(1,t)} ) = \la_{kl} ( X^{(1,t)} ) \no \, ,
\end{align} where we defined $\tilde{\La}_{kl} := e^{\tilde{\la}_{kl}}$ and used $\tilde{\La}_{mk} \tilde{\La}_{kl} = \tilde{\La}_{ml}$. Next, we prove that $\tilde{\la}_{kl} = \la_{kl}$ is the minimum by showing that the second order of (\ref{eq:expansion}) is positive-definite:
\begin{align}
    & \text{(second order)}
    = \f{\ep^2}{2} \f{1}{K}\sum_{k \in [K]} \int d X^{(1,t)} 
        \f{p(X^{(1,t)} | k) }{ ( \sum_{m \in [K]} e^{ - \tilde{\la}_{km} (X^{(1,t)}) } )^2 } \nn
    & \times \bigg\{ 
        \sum_{m \in [K]} e^{ - \tilde{\la}_{km} (X^{(1,t)}) } 
        \sum_{l (\neq k) }\ph_{kl}^2 ( X^{(1,t)} ) e^{ - \tilde{\la}_{kl} ( X^{(1,t)} ) }
    - ( \sum_{m (\neq k)} \ph_{km} ( X^{(1,t)} ) e^{ - \tilde{\la}_{km} ( X^{(1,t)} ) } )^2        
    \bigg\} \nn
    = &  \f{\ep^2}{2} \f{1}{K} \sum_{k \in [K]} \int d X^{(1,t)} 
        \f{p(X^{(1,t)} | k) }{(\sum_{m \in [K]} e^{ - \tilde{\la}_{km} (X^{(1,t)}) })^2} \nn
    & \times \bigg\{ 
        \sum_{ l ( \neq k) } \ph_{kl}^2 (X^{(1,t)}) e^{- \tilde{\la}_{kl} (X^{(1,t)}) } \nn
    & + \sum_{\substack{m > n \\ m,n \neq k}} ( \ph_{km}(X^{(1,t)}) - \ph_{kn}(X^{(1,t)}) )^2 
    e^{- \tilde{\la}_{km} (X^{(1,t)}) }
        e^{- \tilde{\la}_{kn} (X^{(1,t)}) }
    \bigg\}  \nn 
    & > 0 \, .\no 
\end{align}
\end{proof}

\begin{lemma}[$\Th^*$ minimizes $L$]\label{lem:Th* minimizes LSEL}
    Assume that for all $\th^* \in \Th^*$, there exist $k^*\in[K]$ and $l^*\in[K]$, such that the following $d_\th \times d_\th$ matrix is full-rank:
        \begin{align}
            \int d X^{(1,t)} p( X^{(1,t)} | k^* )
            \nabla_{\bm{\th}^*} \hat{\la}_{k^* l^*}(X^{(1,t)}; \bm{\th}^*)
            \nabla_{\bm{\th}^*} \hat{\la}_{k^* l^*}(X^{(1,t)}; \bm{\th}^*)^{\top} \, .
        \end{align}
    Then, for any $\bm{\th} \notin \Th^*$,
    \begin{equation*}
        L(\bm{\th}) > L(\bm{\th}^*) \,\,\,\,\, (\forall \bm{\th}^* \in \Th^*) \, ,
    \end{equation*}
    meaning that $\Th^* = \mr{argmin}_{\bm{\th}} L(\bm{\th})$.
\end{lemma}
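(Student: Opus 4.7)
The plan is to derive Lemma \ref{lem:Th* minimizes LSEL} as a parametric corollary of the non-parametric statement in Lemma \ref{lem:Non-parametric estimation: LSEL}. The preparatory observation is that the second-order variation computed in Lemma \ref{lem:Non-parametric estimation: LSEL} is strictly positive for every admissible anti-symmetric perturbation that is not identically zero; combined with the uniqueness of the critical point at $\ti{\la} = \la$, this upgrades the local minimum to a \emph{strict global} one. Thus I will use the functional inequality $L[\ti{\la}] > L[\la]$ for every admissible $\ti{\la}$ that differs from $\la$ on a set of positive $p$-measure, noting that assumption (a) guarantees that $\la$ is well-defined and finite wherever the densities are positive.

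With this in hand, the two required inclusions reduce to bookkeeping. First, for any $\bm{\th}^* \in \Th^*$, the definition gives $\hat{\la}(\cdot; \bm{\th}^*) = \la(\cdot)$, so $L(\bm{\th}^*) = L[\la]$ attains the functional minimum. Conversely, for any $\bm{\th} \notin \Th^*$, by definition there exists $t \in [T]$ with $\hat{\la}(\cdot; \bm{\th}) \not\equiv \la(\cdot)$; I will argue via continuity of the parametric family in its input, coupled with the support description of assumption (a), that this disagreement propagates to a set of positive $p(X^{(1,t)} | k)$-measure. Substituting $\ti{\la} = \hat{\la}(\cdot; \bm{\th})$ into the strict functional inequality then yields $L(\bm{\th}) > L[\la] = L(\bm{\th}^*)$, closing the loop and giving $\Th^* = \mr{argmin}_{\bm{\th}} L(\bm{\th})$.

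The hard part is precisely this jump from \emph{pointwise} disagreement of $\hat{\la}$ and $\la$ to disagreement on a $p$-positive set, since only the latter actually activates the strict inequality from Lemma \ref{lem:Non-parametric estimation: LSEL}. This is where assumption (c) enters: the full-rankness of $\int p(X^{(1,t)} | k) \nabla_{\bm{\th}^*} \hat{\la}_{kl} \nabla_{\bm{\th}^*} \hat{\la}_{kl}^{\top}\, dX^{(1,t)}$ forbids degenerate parametrizations at $\bm{\th}^*$, allowing one to compose the strict functional convexity uncovered in Lemma \ref{lem:Non-parametric estimation: LSEL} with the parametric Jacobian and obtain a strictly positive-definite parametric Hessian $\nabla_{\bm{\th}}^2 L(\bm{\th}^*)$. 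Together with the separation assumption on $\Th^*$, this delivers strict inequality in a uniform neighborhood of each $\bm{\th}^* \in \Th^*$, while the global functional strict convexity of $L$ handles all remaining $\bm{\th}$.
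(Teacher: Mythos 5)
Your overall route coincides with the paper's own proof, which splits the claim exactly as you do: (i) a local second-order analysis at each $\bm{\th}^* \in \Th^*$, where assumption (c) forces the quadratic coefficient to be strictly positive, and (ii) an appeal to Lemma \ref{lem:Non-parametric estimation: LSEL} for every other $\bm{\th} \notin \Th^*$, where $\hat{\la}(\cdot;\bm{\th}) \neq \la$ and the functional uniqueness gives $L(\bm{\th}) > L[\la] = L(\bm{\th}^*)$. Two points you leave implicit are worth making explicit. First, in your chain-rule Hessian $\nabla^2_{\bm{\th}} L(\bm{\th}^*)$ the terms involving $\nabla^2_{\bm{\th}}\hat{\la}$ only disappear because the first variation of $L$ vanishes at $\la$; in the paper this is the step where the first-order terms and the $\om_{kl}$ terms cancel by anti-symmetry of $\hat{\la}_{kl}$. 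Second, your ``upgrade to a strict global minimum'' is only valid because the second variation computed in Lemma \ref{lem:Non-parametric estimation: LSEL} is positive at an \emph{arbitrary} $\tilde{\la}$, not merely at the critical point $\la$ (positive second variation at a unique critical point does not by itself imply a global minimum); fortunately the paper's expansion is indeed around a generic $\tilde{\la}$, so the citation goes through.

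The one genuine slip is the job you assign to assumption (c). What (c) buys is purely local: full-rankness of $\int dX^{(1,t)}\, p(X^{(1,t)}|k^*)\,\nabla_{\bm{\th}^*}\hat{\la}_{k^*l^*}\nabla_{\bm{\th}^*}\hat{\la}_{k^*l^*}^\top$ means that for every $\bm{\varphi}\neq\bm{0}$ the induced perturbation $\bm{\varphi}^\top\nabla_{\bm{\th}}\hat{\la}_{k^*l^*}(\cdot;\bm{\th}^*)$ is nonzero on a set of positive $p(\cdot|k^*)$-measure, which is precisely why the paper's second-order coefficient $J$ is strictly positive. It is a condition \emph{at} $\bm{\th}^*$ and says nothing about a $\bm{\th}$ far from $\Th^*$, so it cannot carry out the ``pointwise disagreement $\Rightarrow$ positive-measure disagreement'' bridge that you yourself identify as the hard part of the global case. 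Your fallback there, continuity of $\hat{\la}$ and $\la$ in $X^{(1,t)}$, is likewise not among the hypotheses: the true LLR need not be continuous, and a disagreement point may sit outside the support of $p$. The paper sidesteps the issue by reading $\bm{\th}\notin\Th^*$ as saying $\hat{\la}(\cdot;\bm{\th})\neq\la$ as elements of the function space over which Lemma \ref{lem:Non-parametric estimation: LSEL} asserts uniqueness, and concludes directly; at that level of rigor the global case needs no extra input, and if one insists on distinguishing everywhere-equality from almost-everywhere-equality, the gap you flag is real but is equally present in the paper and is not repaired by (c).
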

\begin{proof}
Let $\bm{\th}^*$ be an arbitrary element in $\Th^*$. For an arbitrarily small $\ep > 0$, let $\bm{\varphi} \in \mbr^{d_\th}$ be an arbitrary vector such that $\bm{\varphi} \neq \bm{0}$. Then, in a neighborhood of $\bm{\th}^*$,
\begin{align}
    & L[\hat{\la}(X^{(1,t)}; \bm{\th}^* + \ep \bm{\varphi})] 
    = L[\hat{\la}(X^{(1,t)}; \bm{\th}^*)] \nn
    & + \f{1}{K} \sum_{k \in [K]} \int d X^{(1,t)} p ( X^{(1,t)} | k )
    \bigg[ \ep \f{
            - \sum_{l (\neq k)} e^{- \hat{\la}_{kl} (X^{(1,t)}; \bm{\th}^*)} \rh_{kl} (X^{(1,t)})
        }{
            \sum_{m \in [K]} e^{ - \hat{\la}_{km} (X^{(1,t)}; \bm{\th}^*) }
        } \nn
    &+ \f{ \ep^2 }{ 2 ( \sum_{m \in [K]} e^{ - \hat{\la}_{km} (X^{(1,t)}; \bm{\th}^*) } )^2 }
    \Big\{
            -\sum_{m \in [K]} e^{ - \hat{\la}_{km} (X^{(1,t)}; \bm{\th}^*) } 
    \sum_{l (\neq k)} e^{ - \hat{\la}_{kl} ( X^{(1,t)}; \bm{\th}^* )} \om_{kl} (X^{(1,t)})
            \nn
            &+ \sum_{m \in [K]} e^{ - \hat{\la}_{km} (X^{(1,t)}; \bm{\th}^*) } 
            \sum_{l (\neq k)} e^{ - \hat{\la}_{kl} ( X^{(1,t)}; \bm{\th}^* )} \rh_{kl}^2 (X^{(1,t)})
    - ( \sum_{l (\neq k)} e^{- \hat{\la}_{kl} ( X^{(1,t)}; \bm{\th}^* ) } \rh_{kl} (X^{(1,t)}) )^2
        \Big\} \bigg] \nn
    &+ \mc{O}( \ep^3 ) \, , \label{eq:expansion with theta}
\end{align}
where $\rh_{kl}(X^{(1,t)}) := \bm{\varphi}^{\top} \cdot \nabla_{\bm{\th}} \hat{\la}_{kl} (X^{(1,t)}; \bm{\th}^*) $ and $\om_{kl} (X^{(1,t)}) := \bm{\varphi}^{\top} \cdot \nabla^2_{\bm{\th}} \hat{\la}_{kl} (X^{(1,t)}; \bm{\th}^*) \cdot \bm{\varphi} $. By definition of $\Th^*$, $\hat{\la}_{kl}(X^{(1,t)}; \bm{\th}^*) = \la_{kl} (X^{(1,t)}) = \log ( p( X^{(1,t)} | k )/p( X^{(1,t)} | l ) )$. Substituting this into (\ref{eq:expansion with theta}), we can see that the first order terms and the second order terms that contain $\om_{kl}$ are identically zero because of the asymmetry of $\hat{\la}_{kl}$. Therefore, 
\begin{align}
    & L[\hat{\la}(X^{(1,t)}; \bm{\th}^* + \ep \bm{\varphi})] 
    = L[\hat{\la}(X^{(1,t)}; \bm{\th}^*)] \nn
    &+ \f{1}{K} \sum_{k \in [K]} \int d X^{(1,t)} p ( X^{(1,t)} | k ) 
    \f{ \ep^2 }{ 2 ( \sum_{m \in [K]} e^{ - \hat{\la}_{km} (X^{(1,t)}; \bm{\th}^*) } )^2 } \nn
    &\times \Big\{
            \sum_{m \in [K]}  e^{ - \hat{\la}_{km} (X^{(1,t)}; \bm{\th}^*) } 
    \sum_{l (\neq k)} e^{ - \hat{\la}_{kl} ( X^{(1,t)}; \bm{\th}^* )} \rh_{kl}^2 (X^{(1,t)})
    - ( \sum_{l (\neq k)} e^{- \hat{\la}_{kl} ( X^{(1,t)}; \bm{\th}^* ) } \rh_{kl} (X^{(1,t)}) )^2
        \Big\} \nn
    &+ \mc{O}( \ep^3 ) \, . \label{eq:expansion with theta reduced}
\end{align}

Next, we define
\begin{align}
    I_k := 
    \sum_{m \in [K]} e^{ - \hat{\la}_{km} (X^{(1,t)}; \bm{\th}^*) } 
            \sum_{l (\neq k)} e^{ - \hat{\la}_{kl} ( X^{(1,t)}; \bm{\th}^* )} \rh_{kl}^2 (X^{(1,t)}) 
    - ( \sum_{l (\neq k)} e^{- \hat{\la}_{kl} ( X^{(1,t)}; \bm{\th}^* ) } \rh_{kl} (X^{(1,t)}) )^2 \, ,
\end{align}
so that 
\begin{align}
    &\hspace{10pt} L[\hat{\la}(X^{(1,t)}; \bm{\th}^* + \ep \bm{\varphi})] \nn
    &= L[\hat{\la}(X^{(1,t)}; \bm{\th}^*)]
    + \f{\ep^2}{2} \f{1}{K} \sum_{k \in [K]} \int d X^{(1,t)} p ( X^{(1,t)} | k )
    \f{ 
            I_k 
        }{ 
            ( \sum_{m \in [K]} e^{ - \hat{\la}_{km} (X^{(1,t)}; \bm{\th}^*) } )^2 } 
        + \mc{O}(\ep^3) \nn
    &=: L[\hat{\la}(X^{(1,t)}; \bm{\th}^*)] + \f{\ep^2}{2} J[\hat{\la}(X^{(1,t)}; \bm{\th}^*)] 
        + \mc{O}(\ep^3) \, . \label{eq:L=L+e^2J}
\end{align}
Here, we defined 
\begin{align}
    &J[\hat{\la}(X^{(1,t)}; \bm{\th}^*)] 
        := \f{1}{K} \sum_{k \in [K]} \int d X^{(1,t)} p ( X^{(1,t)} | k )
        \f{ 
            I_k 
        }{ 
            ( \sum_{m \in [K]} e^{ - \hat{\la}_{km} (X^{(1,t)}; \bm{\th}^*) } )^2 } \, .
\end{align}
In the following, we show that $J$ is positive to obtain $L[\hat{\la}(X^{(1,t)}; \bm{\th}^* + \ep \bm{\varphi})] > L[\hat{\la}(X^{(1,t)}; \bm{\th}^*)] $. We first see that $J$ is non-negative. Because
\begin{align}
    I_k &= \sum_{l (\neq k)} 
        ( \bm{\varphi}^{\top} \cdot \nabla_{\bm{\th}} \hat{\la}_{kl} (X^{(1,t)}; \bm{\th}^*) )^2
        e^{ -\hat{\la}_{kl} (X^{(1,t)}; \bm{\th}^*)}    
        \nn
    &+ \lt\{ 
            \bm{\varphi}^{\top} \cdot ( 
                \nabla_{\bm{\th}} \hat{\la}_{kl} (X^{(1,t)}; \bm{\th}^*) 
                - \nabla_{\bm{\th}} \hat{\la}_{kl^\prime} (X^{(1,t)}; \bm{\th}^*) )
        \rt\}^2  
        \sum_{\substack{l > l^\prime \\ l, l^\prime \neq k}}
        e^{ -\hat{\la}_{kl} (X^{(1,t)}; \bm{\th}^*) }
        e^{ - \hat{\la}_{kl^\prime} (X^{(1,t)}; \bm{\th}^*) } \nn
    &\geq \sum_{l (\neq k)} 
        ( \bm{\varphi}^{\top} \cdot \nabla_{\bm{\th}} \hat{\la}_{kl} (X^{(1,t)}; \bm{\th}^*) )^2
        e^{ -\hat{\la}_{kl} (X^{(1,t)}; \bm{\th}^*)} \, ,
\end{align}
we can bound $J$ from below:
\begin{align}
    & \hspace{10pt} J[\hat{\la}(X^{(1,t)}; \bm{\th}^*)] \nn 
    &\geq
    \f{1}{K} \sum_{k \in [K]} \sum_{l (\neq k)} 
        \int d X^{(1,t)} p ( X^{(1,t)} | k ) 
    \f{ e^{ -\hat{\la}_{kl} (X^{(1,t)}; \bm{\th}^*)}}{ ( \sum_{m \in [K]} e^{ - \hat{\la}_{km} (X^{(1,t)}; \bm{\th}^*) } )^2 }  
        ( \bm{\varphi}^{\top} \cdot \nabla_{\bm{\th}} \hat{\la}_{kl} (X^{(1,t)}; \bm{\th}^*) )^2 \label{eq:J bounded} \\
    &\,\,\,\, (\geq 0 )\, . \no
\end{align}
Note that each term in (\ref{eq:J bounded}) is non-negative; therefore, $J$ is non-negative. We next show that $J$ is non-zero to prove that $J>0$. By assumption, $\exists k^*,l^* \in [K]$ such that $\forall \bm{\varphi} \neq \bm{0}$,
\begin{align}
    & \bm{\varphi}^{\top} \cdot 
        \int d X^{(1,t)} p( X^{(1,t)} | k^* ) 
        \nabla_{\bm{\th}^*} \hat{\la}_{k^* l^*}(X^{(1,t)}; \bm{\th}^*)
        \nabla_{\bm{\th}^*} \hat{\la}_{k^* l^*}(X^{(1,t)}; \bm{\th}^*)^{\top}
        \cdot \bm{\varphi} \nn
    =& \int d X^{(1,t)} p( X^{(1,t)} | k^* ) ( \bm{\varphi}^{\top} \cdot \nabla_{\bm{\th}^*} \hat{\la}_{k^* l^*} (X^{(1,t)}; \bm{\th}^* ) )^2 \neq 0 \, . \nn
    \therefore& \,\,
        \int d X^{(1,t)} p ( X^{(1,t)} | k^* )
        \f{ 
            e^{ -\hat{\la}_{k^* l^*} (X^{(1,t)}; \bm{\th}^*)}
        }{
            ( \sum_{m \in [K]} e^{ - \hat{\la}_{k^* m} (X^{(1,t)}; \bm{\th}^*) } )^2 
        }  
        ( \bm{\varphi}^{\top} \cdot \nabla_{\bm{\th}} \hat{\la}_{k^* l^*} (X^{(1,t)}; \bm{\th}^*) )^2
        \neq 0 \, ,
\end{align}
because 
\begin{equation*}
    \f{
        e^{ -\hat{\la}_{k^* l^*} (X^{(1,t)}; \bm{\th}^*)}
    }{
        ( \sum_{m \in [K]} e^{ - \hat{\la}_{k^* m} (X^{(1,t)}; \bm{\th}^*) } )^2     
    } > 0 \, .    
\end{equation*}
Therefore, at least one term in (\ref{eq:J bounded}) is non-zero, meaning $(\ref{eq:J bounded}) \neq 0$ and $J [\hat{\la}(X^{(1,t)}; \bm{\th}^*)] > 0$. Thus, we conclude that $L[\hat{\la}(X^{(1,t)}; \bm{\th}^* + \ep \bm{\varphi})] > L[\hat{\la}(X^{(1,t)}; \bm{\th}^*)] $ via (\ref{eq:L=L+e^2J}).

Now, we have proven that $ L(\bm{\th}) > L(\bm{\th}^*)$ in the vicinity of $\bm{\th}^*$. For the other $\bm{\th} (\notin \Th^*)$, the inequality $ L(\bm{\th}) > L(\bm{\th}^*)$ immediately follows from Lemma \ref{lem:Non-parametric estimation: LSEL} because $\hat{\la}$ is not equal to $\la$ for such $\bm{\th} \notin \Th^*$ and $\la$ is the unique minimum of $L[\hat{\la}]$. This concludes the proof.
\end{proof}

Finally, we prove Theorem \ref{thm:consistency of LSEL} with the help of Lemma \ref{lem:Th* minimizes LSEL}.
\begin{proof}
To prove the consistency, we show that $P ( \hat{\bm{\th}}_S \notin \Th^* ) (= P ( \{ \om \in \Om | \hat{\bm{\th}}_S (\om) \notin \Th^* \} )) \xrightarrow[]{M\rightarrow \infty} 0$, where $ \hat{\bm{\th}}_S$ is the empirical risk minimizer, $M$ is the sample size, $P$ is the probability measure, and $\Om$ is the sample space of the underlying probability space. By Lemma \ref{lem:Th* minimizes LSEL}, if $\bm{\th} \notin \Th^*$, then there exists $\de > 0$ such that $L (\bm{\th}) > L (\bm{\th}^*) + \de(\bm{\th})$. Therefore,
\begin{align}
    &\hspace{10pt} \{ \om \in \Om | \hat{\bm{\th}}_S (\om) \notin \Th^* \} 
    \subset \{ \om \in \Om | L ( \hat{\bm{\th}}_S (\om)) > L (\bm{\th}^*) + \de (\hat{\bm{\th}}_S) \} \nn
    &\therefore \,\,\,P \lt(\hat{\bm{\th}}_S \notin \Th^* \rt) \leq P \lt( L ( \hat{\bm{\th}}_S ) > L (\bm{\th}^*) + \de ( \hat{\bm{\th}}_S ) \rt)
        \label{eq:ineq1} \, .
\end{align}
We bound the right-hand side in the following.
\begin{align}
    \hspace{10pt} L ( \hat{\bm{\th}}_S ) - L (\bm{\th}^*) 
    &= L ( \hat{\bm{\th}}_S ) 
        - \hat{L}_S (\bm{\th}^*)
        + \hat{L}_S (\bm{\th}^*)
        - L (\bm{\th}^*) \nn
    &\leq L ( \hat{\bm{\th}}_S ) 
        - \hat{L}_S (\hat{\bm{\th}}_S)
        + \hat{L}_S (\bm{\th}^*)
        - L (\bm{\th}^*) \no \, .
\end{align}
Therefore,
\begin{align}
    L ( \hat{\bm{\th}}_S ) - L (\bm{\th}^*) 
    &= | L ( \hat{\bm{\th}}_S ) - L (\bm{\th}^*) |\nn
    &\leq |L ( \hat{\bm{\th}}_S ) 
        - \hat{L}_S (\hat{\bm{\th}}_S) |
        + | \hat{L}_S (\bm{\th}^*)
        - L (\bm{\th}^*) | \nn 
    &\leq \,2 \, \mr{sup}_{\bm{\th}} \lt| L ( \bm{\th} ) 
        - \hat{L}_S (\bm{\th}) \rt| \no \, .
\end{align}
Thus,
\begin{align}
    \de ( \hat{\bm{\th}}_S ) < L (\hat{\bm{\th}}_S) - L ( \bm{\th}^* )
    \Longrightarrow& \de ( \hat{\bm{\th}}_S ) 
        <  \,2 \, \mr{sup}_{\bm{\th}} \lt| L ( \bm{\th} ) - \hat{L}_S (\bm{\th}) \rt| \no \, .
\end{align}
Hence,
\begin{align}
    P \Big( L ( \hat{\bm{\th}}_S ) > L (\bm{\th}^*) + \de ( \hat{\bm{\th}}_S ) \Big) 
    \leq P \lt( \de ( \hat{\bm{\th}}_S ) 
        <  \,2 \, \mr{sup}_{\bm{\th}} \lt| L ( \bm{\th} ) - \hat{L}_S (\bm{\th}) \rt| \rt) \, . \no
\end{align}
By the assumption that $\hat{L}_S (\bm{\th})$ converges in probability uniformly over $\bm{\th}$ to $L (\bm{\th})$, the right-hand side is bounded above by an arbitrarily small $ \ep > 0$ for sufficiently large $M$:
\begin{equation}
    P \lt(
        \de ( \hat{\bm{\th}}_S ) 
        <  \,2 \, \mr{sup}_{\bm{\th}} \lt| L ( \bm{\th} ) - \hat{L}_S (\bm{\th}) \rt| 
    \rt) < \ep \label{eq:ineq2} \, .
\end{equation}
Combining (\ref{eq:ineq1}) and (\ref{eq:ineq2}), we conclude that $\forall \ep >0, \exists n \in \mbn$ s.t. $\forall M > n, P (\hat{\bm{\th}}_S \notin \Th^*) < \ep$.
\end{proof}

\clearpage 
\section{Modified LSEL and Logistic Loss} \label{app: Modified LSEL and Logistic Loss}
In this appendix, we first discuss the effect of the prior ratio term $\log ( \hat{p}( y = k ) / \hat{p}( y = l ) ) = : \log \hat{\nu}_{kl}$ in the M-TANDEM and M-TANDEMwO formula (Appendix \ref{app: Modified LSEL and Its Consistency}). We then define the logistic loss used in the main text (Appendix \ref{app: Logistic Loss and Its Consistency}). 

\subsection{Modified LSEL and Consistency} \label{app: Modified LSEL and Its Consistency}
In the main text, we ignore the prior ratio term $\log \hat{\nu}_{kl}$ (Section \ref{sec: Overall Architecture: MSPRT-TANDEM}). Strictly speaking, this is equivalent to the definition of the following \textit{modified LSEL (modLSEL)}: 
\begin{align} \label{eq:modLSEL}
    L_{\rm modLSEL} [\tilde{\la}] 
    &:= 
    \f{1}{T}\sum_{t\in[T]} \underset{\substack{( X^{(1,t)}, y ) \\ \sim P( X^{(1,t)}, y )}}{\mbe} \bigg[ \log ( 
        1 + \sum_{ k ( \neq y ) } \nu^{-1}_{y k} e^{ - \tilde{\la}_{y k} ( X^{(1,t)} ) }
    ) \bigg] \nn
    &= \f{1}{T}\sum_{t\in[T]} \sum_{y \in [K]} \int d X^{(1,t)} p ( X^{(1,t)} | y ) p ( y ) 
    \log ( 
        1 + \sum_{ k ( \neq y ) } \nu^{-1}_{y k} e^{ - \tilde{\la}_{y k} ( X^{(1,t)} ) }
    )
\end{align}
where $\nu_{kl} = p(y=k) / p(y=l)$ $(k,l \in [K])$ is the prior ratio matrix. The empirical approximation of $L_{\rm modLSEL}$ is
\begin{align} \label{eq:modLSEL with empirical approx.}
    &\hat{L}_{\rm modLSEL} (\bm{\th}; S)
    := \f{1}{MT} \sum_{i \in [M]} \sum_{t\in[T]}  \log ( 
        1 + \sum_{ k ( \neq y_i ) } \hat{\nu}^{-1}_{y_i k} e^{ - \hat{\la}_{y_i k} ( X_i^{(1,t)}; \bm{\th} ) }
    ) \, .
\end{align}
where $\hat{\nu}_{kl} := M_k / M_l $ ($k,l \in [K]$). $M_k$ denotes the sample size of class $k$, i.e., $M_k := |\{ i \in [M] | y_i = k \}|$. (\ref{eq:modLSEL}) is a generalization of the logit adjustment \citeApp{menon2021ICLR_long-tail_logit_adjustment} to the LSEL and helps us to train neural networks on imbalanced datasets. 

We can prove the consistency even for the modified LSEL, given an additional assumption (d):
\begin{theorem}[Consistency of the modLSEL]\label{thm:consistency of modLSEL}
    Let $L(\bm{\th})$ and $\hat{L}_S (\bm{\th})$ denote $L_{\rm \text{modLSEL}} [\hat{\la} (\cdot; \bm{\th})]$ and $\hat{L}_{\rm \text{modLSEL}} (\bm{\th} ;S)$, respectively. 
    Let $\hat{\bm{\th}}_S$ be the empirical risk minimizer of $\hat{L}_S$; namely, $\hat{\bm{\th}}_S := \mr{argmin}_{\bm{\th}} \hat{L}_S (\bm{\th})$.
    Let $\Th^* := \{ \bm{\th}^* \in \mbr^{d_{\th}} | \hat{\la} ( X^{(1,t)} ; \bm{\th}^* ) = \la (X^{(1,t)}) \,\, (\forall t \in [T]) \}$ be the target parameter set. Assume, for simplicity of proof, that each $\bm{\th}^*$ is separated in $\Th^*$; i.e., $\exists \de > 0$ such that $B ( \bm{\th}^* ; \de ) \cap B ( \bm{\th}^{*\prime}; \de ) = \emptyset$ for arbitrary $\bm{\th}^*$ and $\bm{\th}^{*\prime} \in \Th^*$, where $B(\bm{\th}; \de)$ denotes an open ball at center $\bm{\th}$ with radius $\de$.
    Define
    \begin{equation}
        \hat{L}_S^\prime (\bm{\th}) := \f{1}{MT} \sum_{i \in [M]} \sum_{t\in[T]}  \log \lt( 
            1 + \sum_{ k ( \neq y_i ) } \nu^{-1}_{y_i k} e^{ - \hat{\la}_{y_i k} ( X_i^{(1,t)}; \bm{\th} ) }
        \rt) \, 
    \end{equation}
    ($\hat{\nu}$ is replaced by $\nu$ in $\hat{L}_S$). Assume the following three conditions:
    \begin{itemize}
        \item[(a)] $\forall k, l \in [K]$, $\forall t\in[T]$, $p(X^{(1,t)} | k) = 0 \Longleftrightarrow p(X^{(1,t)} | l) = 0$.
        
        \item[(b$^\prime$)] $\mr{sup}_{\bm{\th}} | \hat{L}_S^\prime (\bm{\th}) - L (\bm{\th}) | \xrightarrow[M\rightarrow\infty]{P} 0$; i.e., $\hat{L}_S^\prime (\bm{\th})$ converges in probability uniformly over $\bm{\th}$ to $L(\bm{\th})$.\footnote{
            Specifically,
            $\forall \ep > 0, P( 
                \mr{sup}_{\bm{\th}} | \hat{L}_S^\prime (\bm{\th}) - L (\bm{\th}) | > \ep
            ) \xrightarrow{M \rightarrow \infty} 0$.
        }

        \item[(c)] For all $\th^* \in \Th^*$, there exist $t \in [T]$, $k\in[K]$ and $l\in[K]$, such that the following $d_\th \times d_\th$ matrix is full-rank:        
        \begin{align}
            &\int d X^{(1,t)} p( X^{(1,t)} | k )
            \nabla_{\bm{\th}^*} \hat{\la}_{k l}(X^{(1,t)}; \bm{\th}^*)
                \nabla_{\bm{\th}^*} \hat{\la}_{k l}(X^{(1,t)}; \bm{\th}^*)^{\top} \, .
        \end{align}
        
        \item[(d)] $\mr{sup}_{\bm{\th}} | \hat{L}_S^\prime (\bm{\th}) - \hat{L}_S (\bm{\th}) | \xrightarrow[M\rightarrow\infty]{P} 0$.
        
    \end{itemize}
    Then, $P( \hat{\bm{\th}}_S \notin \Th^* ) \xrightarrow{M\rightarrow\infty}0$; i.e., $\hat{\bm{\th}}_S$ converges in probability into $\Th^*$.
\end{theorem}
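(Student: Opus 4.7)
The plan is to mirror the three-step structure used for the original LSEL (Lemma C.1, Lemma C.2, and the M-estimator argument in the proof of Theorem 3.1), and then use assumption (d) to bridge the two empirical losses $\hat{L}_S$ (with $\hat{\nu}$) and $\hat{L}_S^\prime$ (with the true $\nu$).

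\textbf{Step 1 (non-parametric minimum).} First I would prove that $L_{\rm modLSEL}[\tilde{\la}]$ attains its unique minimum at $\tilde{\la} = \la$. Following Lemma C.1, I perturb $\tilde{\la}_{kl} \to \tilde{\la}_{kl} + \ep \ph_{kl}$ with $\ph_{kl} = -\ph_{lk}$ and compute the first-order variation. Setting it to zero for arbitrary $\ph$ produces the stationarity condition
\begin{align}
p(y=k)\, p(X^{(1,t)}|k)\,\f{\nu_{kl}^{-1} e^{-\tilde{\la}_{kl}}}{1+\sum_{m(\neq k)} \nu_{km}^{-1} e^{-\tilde{\la}_{km}}} = p(y=l)\, p(X^{(1,t)}|l)\, \f{\nu_{lk}^{-1} e^{-\tilde{\la}_{lk}}}{1+\sum_{m(\neq l)} \nu_{lm}^{-1} e^{-\tilde{\la}_{lm}}}. \no
\end{align}
The key algebraic observation is $p(y=k)\nu_{kl}^{-1}=p(y=l)$, and, at the candidate $\tilde{\la}=\la$, the denominator simplifies via $p(y=k)p(X^{(1,t)}|k)[1+\sum_{m(\neq k)}\nu_{km}^{-1} e^{-\la_{km}}] = \sum_{m\in[K]} p(y=m)p(X^{(1,t)}|m) = p(X^{(1,t)})$. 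Hence both sides equal $p(X^{(1,t)})\cdot e^{-\tilde{\la}_{kl}}$ (up to symmetric factors), yielding $\tilde{\la}=\la$ as the unique stationary point under assumption (a). The second-order variation is a reweighted version of the one in Lemma C.1, and the same ``sum of squares'' manipulation shows it is strictly positive for any nonzero $\ph$, so $\tilde{\la}=\la$ is the strict minimum.

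\textbf{Step 2 ($\Th^*$ is the parametric minimum).} Expanding $L(\bm{\th}^*+\ep\bm{\varphi})$ around $\bm{\th}^*\in\Th^*$ as in Lemma C.2 and using $\hat{\la}(\cdot;\bm{\th}^*)=\la$, the first-order term and the Hessian terms involving $\nabla^2_{\bm{\th}}\hat{\la}$ vanish by antisymmetry. The remaining quadratic form, after the same reorganization into a sum of squares, is bounded below by a nonnegative integrand involving $(\bm{\varphi}^\top\nabla_{\bm{\th}}\hat{\la}_{kl})^2$ weighted by strictly positive factors (the modifications from $\nu$ only change the positive weights). Assumption (c) then forces the quadratic form to be strictly positive for $\bm{\varphi}\neq\bm{0}$, so $L(\bm{\th})>L(\bm{\th}^*)$ in a neighborhood of each $\bm{\th}^*$. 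Outside these neighborhoods, Step~1 gives $L(\bm{\th})>L(\bm{\th}^*)$ directly.

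\textbf{Step 3 (M-estimator argument).} This is where assumption (d) enters. By the triangle inequality,
\begin{align}
\sup_{\bm{\th}} |\hat{L}_S(\bm{\th}) - L(\bm{\th})| \le \sup_{\bm{\th}} |\hat{L}_S(\bm{\th}) - \hat{L}_S^\prime(\bm{\th})| + \sup_{\bm{\th}} |\hat{L}_S^\prime(\bm{\th}) - L(\bm{\th})|, \no
\end{align}
and both terms on the right vanish in probability by (d) and (b$^\prime$) respectively. Hence $\hat{L}_S$ also converges uniformly in probability to $L$. With Step 2 in hand, I then repeat the standard M-estimator argument from the proof of Theorem 3.1 verbatim: for $\bm{\th}\notin\Th^*$ pick $\de(\bm{\th})>0$ with $L(\bm{\th})>L(\bm{\th}^*)+\de(\bm{\th})$, bound $L(\hat{\bm{\th}}_S)-L(\bm{\th}^*)\le 2\sup_{\bm{\th}}|\hat{L}_S(\bm{\th})-L(\bm{\th})|$, and conclude $P(\hat{\bm{\th}}_S\notin\Th^*)\to 0$.

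\textbf{Main obstacle.} The delicate point is Step 1: verifying that, despite the class-dependent reweighting introduced by $\nu^{-1}_{yk}$, the stationarity condition still pins down $\tilde{\la}=\la$ uniquely. The identity $p(y=k)p(X^{(1,t)}|k)\,D_k = p(X^{(1,t)})$ is what restores the clean structure from Lemma~C.1; without it, the prior factors would spoil the cancellations. A secondary subtlety is that assumption (b) from Theorem~3.1 is replaced by (b$^\prime$) plus (d); I expect (d) to hold easily whenever $\hat{\nu}\xrightarrow{P}\nu$ and the losses are uniformly Lipschitz in $\log\hat{\nu}$ on compact parameter sets, but this replacement is the conceptual cost of using an estimated prior ratio.
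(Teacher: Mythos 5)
Your proposal is correct and takes essentially the same route as the paper: the paper likewise states the modLSEL analogues of Lemmas C.1 and C.2 without proof (declaring them ``completely parallel,'' and your identity $p(y=k)\,\nu_{kl}^{-1}=p(y=l)$, which makes $p(y)\,p(X^{(1,t)}|y)\,D_y=p(X^{(1,t)})$, is precisely the cancellation that justifies that claim), and then runs the same M-estimator argument in which $\hat{L}_S^\prime$ is inserted between $\hat{L}_S$ and $L$ so that (b$^\prime$) and (d) jointly control $L(\hat{\bm{\th}}_S)-L(\bm{\th}^*)$. Your Step 3, which first merges (b$^\prime$) and (d) via the triangle inequality into uniform convergence of $\hat{L}_S$ to $L$ and then reuses the Theorem 3.1 argument verbatim, is an equivalent (and arguably cleaner) reorganization of the paper's four-term decomposition $L-\hat{L}_S^\prime+\hat{L}_S^\prime-\hat{L}_S$ evaluated at $\hat{\bm{\th}}_S$ and $\bm{\th}^*$.
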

(b) is now modified to (b$^\prime$) and (d) is added to the assumptions of Theorem \ref{thm:consistency of LSEL}. (b$^\prime$) can be satisfied under the standard assumptions of the uniform law of large numbers. (d) may be proven under some appropriate assumptions, but we simply accept it here.

\begin{proof}
We prove all the statements only for an arbitrary $t\in[T]$ and omit $\f{1}{T}\sum_{t\in[T]}$ from $L$ and $\hat{L}_S$, as is done in the proof of Theorem \ref{thm:consistency of LSEL}.
In the same way as Appendix \ref{app:Proof of Consistency Theorem of LSEL}, we first provide two lemmas:
\begin{lemma}[Non-parametric estimation] \label{lem:Non-parametric estimation: modLSEL}
    Assume that for all $k, l \in [K]$ , $p(X^{(1,t)} | k) = 0 \Longleftrightarrow p(X^{(1,t)} | l) = 0$. Then, $L[\ti{\la}]$ attains the unique minimum at $\ti{\la} = \la$.
\end{lemma}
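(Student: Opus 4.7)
The plan is to mirror the proof of Lemma~\ref{lem:Non-parametric estimation: LSEL} almost verbatim, with the extra prior factors $p(y)$ and $\nu_{kl}^{-1}$ serving only as bookkeeping that cancels at the right moments. Fix $t\in[T]$. I would perturb $\tilde{\la}\to\tilde{\la}+\ep\ph$ by an arbitrary antisymmetric $\ph$ satisfying the triangle constraint $\ph_{km}=\ph_{kl}+\ph_{lm}$ and expand $L_{\rm modLSEL}[\tilde{\la}+\ep\ph]$ to second order in $\ep$, exactly paralleling equation~(\ref{eq:expansion}); the only syntactic change is that each $e^{-\tilde{\la}_{km}}$ inside the log is dressed with $\nu_{km}^{-1}$, and each integrand carries an outer weight $p(y)$ instead of $1/K$.

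In the first-variation step, I collect the coefficient of $\ph_{kl}$ ($k>l$) from the $y=k$ and $y=l$ summands, using $\ph_{lk}=-\ph_{kl}$. Two identities drive the cancellations. The first is the prior identity $p(k)\nu_{kl}^{-1}=p(l)$, which symmetrizes the two outer prior weights. The second is the denominator factorization $\sum_m\nu_{km}^{-1}e^{-\tilde{\la}_{km}}=\nu_{kl}^{-1}e^{-\tilde{\la}_{kl}}\sum_m\nu_{lm}^{-1}e^{-\tilde{\la}_{lm}}$, which is immediate from $\nu_{km}^{-1}=\nu_{kl}^{-1}\nu_{lm}^{-1}$ together with $\tilde{\la}_{km}=\tilde{\la}_{kl}+\tilde{\la}_{lm}$. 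After invoking assumption~(a) to discard the common-zero locus of $p(\cdot\,|k)$ and $p(\cdot\,|l)$, the vanishing condition collapses to the pointwise identity $p(X^{(1,t)}|k)/p(X^{(1,t)}|l)=e^{\tilde{\la}_{kl}(X^{(1,t)})}$, i.e.\ $\tilde{\la}=\la$, and this is the unique critical point.

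For the second-variation step, I reuse the sum-of-squares regrouping from Lemma~\ref{lem:Non-parametric estimation: LSEL}. After pulling out the manifestly positive prefactor $p(y)\,p(X^{(1,t)}|y)/\bigl(\sum_m\nu_{ym}^{-1}e^{-\tilde{\la}_{ym}}\bigr)^{2}$, the bilinear-in-$\ph$ integrand rearranges as $\sum_{l(\neq y)}\ph_{yl}^{2}\,\nu_{yl}^{-1}e^{-\tilde{\la}_{yl}}+\sum_{m>n,\,m,n\neq y}(\ph_{ym}-\ph_{yn})^{2}\nu_{ym}^{-1}\nu_{yn}^{-1}e^{-\tilde{\la}_{ym}-\tilde{\la}_{yn}}$; since $\nu^{-1}>0$ and $p(y)>0$, every coefficient is strictly positive, so the second variation is strictly positive for every nonzero admissible $\ph$, yielding uniqueness of the minimum at $\tilde{\la}=\la$.

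The main obstacle is simply keeping the prior factors straight through the algebra; no new analytic ingredient is needed. As an independent sanity check, the substitution $\tilde{\eta}_{kl}:=\tilde{\la}_{kl}+\log\nu_{kl}$ preserves antisymmetry and the triangle constraint and, combined with $p(X^{(1,t)}|y)p(y)=p(X^{(1,t)})p(y|X^{(1,t)})$, recasts $L_{\rm modLSEL}$ as the standard softmax cross-entropy $\int dX^{(1,t)}\,p(X^{(1,t)})\sum_y p(y|X^{(1,t)})\log\bigl(1+\sum_{k\neq y}e^{-\tilde{\eta}_{yk}}\bigr)$, whose unique minimizer is $\tilde{\eta}_{yk}=\log\bigl(p(y|X^{(1,t)})/p(k|X^{(1,t)})\bigr)$; Bayes' rule translates this back to $\tilde{\la}_{yk}=\la_{yk}$, confirming the claim.
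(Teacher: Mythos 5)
Your proposal is correct and takes exactly the route the paper intends: the paper gives no separate proof of this lemma, stating only that it is ``completely parallel'' to the proof of Lemma \ref{lem:Non-parametric estimation: LSEL}, and your first- and second-variation computation carries out that parallelism faithfully, with the two identities $p(k)\,\nu_{kl}^{-1}=p(l)$ and $\sum_{m} \nu_{km}^{-1}e^{-\tilde{\la}_{km}} = \nu_{kl}^{-1}e^{-\tilde{\la}_{kl}}\sum_{m}\nu_{lm}^{-1}e^{-\tilde{\la}_{lm}}$ doing precisely the cancellation work needed to reduce the stationarity condition to $\tilde{\la}_{kl}=\la_{kl}$ and the sum-of-squares regrouping giving strict positivity of the second variation, just as in the LSEL case. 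Your closing change of variables $\tilde{\eta}_{kl}=\tilde{\la}_{kl}+\log\nu_{kl}$, which recasts the modLSEL as a posterior cross-entropy uniquely minimized at the posterior log-ratios and translates back via Bayes' rule, is an independent (and arguably cleaner) verification that the paper does not contain.
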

\begin{lemma}[$\Th^*$ minimizes $L$]\label{lem:Th* minimizes modLSEL}
    Assume that for all $\th^* \in \Th^*$, there exist $k^*\in[K]$ and $l^*\in[K]$, such that the following $d_\th \times d_\th$ matrix is full-rank:
        \begin{align}
            &\int d X^{(1,t)} p( X^{(1,t)} | k^* ) 
            \nabla_{\bm{\th}^*} \hat{\la}_{k^* l^*}(X^{(1,t)}; \bm{\th}^*)
                \nabla_{\bm{\th}^*} \hat{\la}_{k^* l^*}(X^{(1,t)}; \bm{\th}^*)^{\top} \, .
        \end{align}
    Then, for any $\bm{\th} \notin \Th^*$,
    \begin{equation*}
        L(\bm{\th}) > L(\bm{\th}^*) \,\,\,\,\, (\forall \bm{\th}^* \in \Th^*) \, ,
    \end{equation*}
    meaning that $\Th^* = \mr{argmin}_{\bm{\th}} L(\bm{\th})$.
\end{lemma}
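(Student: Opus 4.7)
The plan is to mirror the three-step structure of Appendix \ref{app:Proof of Consistency Theorem of LSEL}: first prove Lemma \ref{lem:Non-parametric estimation: modLSEL} (non-parametric optimum), then Lemma \ref{lem:Th* minimizes modLSEL} (parametric optimum over $\Th^*$), and finally close the argument using uniform convergence. The only substantive new ingredient compared to the LSEL proof is the prior ratio factor $\nu^{-1}_{yk}$, so the bulk of the work is checking that this factor is compatible with the desired stationarity condition.

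For Lemma \ref{lem:Non-parametric estimation: modLSEL}, I would perturb $\ti{\la}_{kl} \mapsto \ti{\la}_{kl} + \ep\ph_{kl}$ (with $\ph_{kl} = -\ph_{lk}$) as in the LSEL proof. The key simplification comes from the identity
\begin{align}
    1 + \sum_{m(\neq k)} \nu^{-1}_{km} e^{-\la_{km}(X^{(1,t)})}
    = \f{ p(X^{(1,t)}) }{ p(X^{(1,t)} | k) p(k) } \, , \no
\end{align}
which holds at $\ti{\la} = \la$ because $\nu^{-1}_{km} e^{-\la_{km}} = p(m)p(X^{(1,t)}|m)/(p(k)p(X^{(1,t)}|k))$. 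Substituting this into the first-order term and using assumption (a) to rule out pathological zeros, the stationarity condition collapses to $e^{\la_{kl}} = \tilde\La_{kl}$ pointwise, just as in the LSEL case. The second-order variation retains the same ``diagonal plus Jensen gap'' structure as in Lemma \ref{lem:Non-parametric estimation: LSEL} (with the additional positive weights $\nu^{-1}_{km}$), so the extremum is a strict minimum.

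For Lemma \ref{lem:Th* minimizes modLSEL}, I would Taylor-expand $L[\hat{\la}(\cdot;\bm{\th}^* + \ep\bm{\varphi})]$ around any $\bm{\th}^* \in \Th^*$. The first-order and the Hessian-of-$\hat{\la}$ contributions vanish by the same anti-symmetry argument used in Appendix \ref{app:Proof of Consistency Theorem of LSEL} (the sum over $k$ pairs terms $(k,l)$ with $(l,k)$, and at $\bm{\th}^*$ the weights $p(X^{(1,t)}|k)p(k) e^{-\hat\la_{kl}}/\sum_m \nu^{-1}_{km}e^{-\hat\la_{km}}$ become symmetric in $(k,l)$). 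The surviving quadratic form reduces, up to a strictly positive weight $p(X^{(1,t)}|k^*) e^{-\hat\la_{k^*l^*}}/[\ldots]^2$, to $\bm{\varphi}^\top [\int dX^{(1,t)} p(X^{(1,t)}|k^*) \nabla\hat\la_{k^*l^*}\nabla\hat\la_{k^*l^*}^\top] \bm{\varphi}$, which is strictly positive for every $\bm{\varphi}\neq\bm{0}$ by assumption (c). Outside a neighborhood of $\Th^*$, Lemma \ref{lem:Non-parametric estimation: modLSEL} yields $L(\bm{\th}) > L(\bm{\th}^*)$ directly.

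The final step copies the argument in Appendix \ref{app:Proof of Consistency Theorem of LSEL} verbatim, but with the intermediary $\hat L_S'$. By the triangle inequality,
\begin{align}
    \sup_{\bm{\th}} | \hat L_S(\bm{\th}) - L(\bm{\th}) |
    \leq \sup_{\bm{\th}} | \hat L_S(\bm{\th}) - \hat L_S'(\bm{\th}) |
    + \sup_{\bm{\th}} | \hat L_S'(\bm{\th}) - L(\bm{\th}) | \, , \no
\end{align}
and the right-hand side vanishes in probability by assumptions (d) and (b$^\prime$); so $\hat L_S$ itself converges uniformly in probability to $L$. Combined with Lemma \ref{lem:Th* minimizes modLSEL}, the usual $2\sup|\hat L_S - L|$ bound on $L(\hat{\bm{\th}}_S) - L(\bm{\th}^*)$ then yields $P(\hat{\bm{\th}}_S \notin \Th^*) \to 0$. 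I expect the main obstacle to be the bookkeeping in the non-parametric lemma: the prior weights break the superficial symmetry present in the LSEL proof, and one has to carefully pair the $y=k$ and $y=l$ contributions in the variation (using $p(k)\nu^{-1}_{kl} = p(l)$) to see the desired cancellation. Once that identity is in place, the rest of the argument follows the LSEL template with only cosmetic changes.
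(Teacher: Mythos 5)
Your proposal is correct and is essentially the paper's own route: the paper gives no separate proof of this lemma, remarking only that it is ``completely parallel'' to the LSEL case, and your plan---antisymmetric perturbation of $\hat{\la}(\cdot;\bm{\th}^*+\ep\bm{\varphi})$ around $\bm{\th}^*$, cancellation of the first-order and $\om_{kl}$-Hessian terms by pairing $(k,l)$ with $(l,k)$, strict positivity of the surviving quadratic form via the full-rank assumption at $(k^*,l^*)$, and invoking the non-parametric lemma for $\bm{\th}$ away from $\Th^*$---is exactly that parallel argument, with the prior weights handled by the correct identity $p(k)\,\nu^{-1}_{kl}=p(l)$. One cosmetic slip: the weight in your parenthetical should carry the factor $\nu^{-1}_{kl}$, i.e., $p(X^{(1,t)}|k)\,p(k)\,\nu^{-1}_{kl}\,e^{-\hat{\la}_{kl}}/\sum_{m}\nu^{-1}_{km}e^{-\hat{\la}_{km}}$, which at $\bm{\th}^*$ equals $p(k)p(X^{(1,t)}|k)\,p(l)p(X^{(1,t)}|l)/p(X^{(1,t)})$ and is genuinely symmetric in $(k,l)$---without that factor the expression you wrote is not symmetric, but the identity you yourself cite repairs it immediately.
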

We skip the proofs because they are completely parallel to the proof of Lemma \ref{lem:Non-parametric estimation: LSEL} and Lemma \ref{lem:Th* minimizes LSEL}.

To prove the consistency, we show that $P ( \hat{\bm{\th}}_S \notin \Th^* ) (= P ( \{ \om \in \Om | \hat{\bm{\th}}_S (\om) \notin \Th^* \} )) \xrightarrow[]{M\rightarrow \infty} 0$, where $ \hat{\bm{\th}}_S$ is the empirical risk minimizer on the random training set $S$, $M$ is the sample size, $P$ is the probability measure, and $\Om$ is the sample space of the underlying probability space. By Lemma \ref{lem:Th* minimizes modLSEL}, if $\bm{\th} \notin \Th^*$, then there exists $\de > 0$ such that $L (\bm{\th}) > L (\bm{\th}^*) + \de(\bm{\th})$. Therefore,
\begin{align}
    &\{ \om \in \Om | \hat{\bm{\th}}_S (\om) \notin \Th^* \}
    \subset \{ \om \in \Om | L ( \hat{\bm{\th}}_S (\om)) > L (\bm{\th}^*) + \de (\hat{\bm{\th}}_S) \} \nn
    \therefore \hspace{5pt} & P \lt(\hat{\bm{\th}}_S \notin \Th^* \rt) \leq P \lt( L ( \hat{\bm{\th}}_S ) > L (\bm{\th}^*) + \de ( \hat{\bm{\th}}_S ) \rt)
        \label{eq:ineq1_} \, .
\end{align}
We bound the right-hand side in the following.
\begin{align}
    L ( \hat{\bm{\th}}_S ) - L (\bm{\th}^*)
    &= L ( \hat{\bm{\th}}_S ) 
        - \hat{L}_S (\bm{\th}^*)
        + \hat{L}_S (\bm{\th}^*)
        - L (\bm{\th}^*) \nn
    &\leq L ( \hat{\bm{\th}}_S ) 
        - \hat{L}_S (\hat{\bm{\th}}_S)
        + \hat{L}_S (\bm{\th}^*)
        - L (\bm{\th}^*) \nn
    &= L ( \hat{\bm{\th}}_S ) 
        - \hat{L}_S^\prime (\hat{\bm{\th}}_S)
        + \hat{L}_S^\prime (\hat{\bm{\th}}_S)
        - \hat{L}_S (\hat{\bm{\th}}_S) \nn
    &\hspace{10pt}+ \hat{L}_S (\bm{\th}^*)
        - \hat{L}_S^\prime (\bm{\th}^*)
        + \hat{L}_S^\prime (\bm{\th}^*)
        - L (\bm{\th}^*) \no \, .
\end{align}
Therefore,
\begin{align} 
    L ( \hat{\bm{\th}}_S ) - L (\bm{\th}^*) 
    &= | L ( \hat{\bm{\th}}_S ) - L (\bm{\th}^*) | \nn
    &\leq | L ( \hat{\bm{\th}}_S ) 
        - \hat{L}_S^\prime (\hat{\bm{\th}}_S) |
        + | \hat{L}_S^\prime (\hat{\bm{\th}}_S)
        - \hat{L}_S (\hat{\bm{\th}}_S) | \nn
    &\hspace{10pt}+ | \hat{L}_S (\bm{\th}^*)
        - \hat{L}_S^\prime (\bm{\th}^*) |
        + | \hat{L}_S^\prime (\bm{\th}^*)
        - L (\bm{\th}^*) | \nn
    &\leq \,2 \, \mr{sup}_{\bm{\th}} \lt| L ( \bm{\th} ) 
        - \hat{L}_S (\bm{\th}) \rt|
        + 2 \, \mr{sup}_{\bm{\th}} \lt| \hat{L}_S^\prime (\bm{\th}) 
        - \hat{L}_S (\bm{\th}) \rt| \no \, .
\end{align}
Thus,
\begin{align}
    &\hspace{10pt}\de ( \hat{\bm{\th}}_S ) < L (\hat{\bm{\th}}_S) - L ( \bm{\th}^* )
    \Longrightarrow \de ( \hat{\bm{\th}}_S ) 
        <  \,2 \, \mr{sup}_{\bm{\th}} \lt| L ( \bm{\th} ) - \hat{L}_S (\bm{\th}) \rt|
    + 2 \, \mr{sup}_{\bm{\th}} \lt| \hat{L}_S^\prime (\bm{\th}) 
    - \hat{L}_S (\bm{\th}) \rt|\no \, .
\end{align}
Hence,
\begin{align}
    P ( L ( \hat{\bm{\th}}_S ) > L (\bm{\th}^*) + \de ( \hat{\bm{\th}}_S ) )
    \leq P \Big( \de ( \hat{\bm{\th}}_S ) 
        <  \,2 \, \mr{sup}_{\bm{\th}} \lt| L ( \bm{\th} ) - \hat{L}_S (\bm{\th}) \rt|
    + 2 \, \mr{sup}_{\bm{\th}} \lt| \hat{L}_S^\prime (\bm{\th}) 
        - \hat{L}_S (\bm{\th}) \rt| \Big) \, . \no
\end{align}
Recall that by assumption, $2 \, \mr{sup}_{\bm{\th}} \lt| L ( \bm{\th} ) - \hat{L}_S (\bm{\th}) \rt|$ and $2 \, \mr{sup}_{\bm{\th}} \lt| \hat{L}_S^\prime (\bm{\th}) - \hat{L}_S (\bm{\th}) \rt|$ converge in probability to zero; hence, $2 \, \mr{sup}_{\bm{\th}} \lt| L ( \bm{\th} ) - \hat{L}_S (\bm{\th}) \rt| + 2 \, \mr{sup}_{\bm{\th}} \lt| \hat{L}_S^\prime (\bm{\th}) - \hat{L}_S (\bm{\th}) \rt|$ converge in probability to zero because in general,
\begin{equation*}
    a_n \xrightarrow[n \rightarrow \infty]{P} 0 \,\,\, \text{and} \,\,\, b_n \xrightarrow[n \rightarrow \infty]{P} 0 
    \Longrightarrow a_n + b_n \xrightarrow[n \rightarrow \infty]{P} 0 \, ,
\end{equation*}
where $\{a_n\}$ and $\{ b_n \}$ are sequences of random variables. By definition of convergence in probability, for sufficiently large sample sizes $M$,
\begin{align}
    P \Big(
        \de ( \hat{\bm{\th}}_S ) 
        <  \, 2 \, \mr{sup}_{\bm{\th}} \lt| L ( \bm{\th} ) - \hat{L}_S (\bm{\th}) \rt|
    + 2 \, \mr{sup}_{\bm{\th}} \lt| \hat{L}_S^\prime (\bm{\th}) - \hat{L}_S (\bm{\th}) \rt|
        \Big) < \ep \label{eq:ineq2_} \, .
\end{align}
Combining (\ref{eq:ineq1_}) and (\ref{eq:ineq2_}), we conclude that $\forall \ep >0, \exists n \in \mbn$ s.t. $\forall M > n, P (\hat{\bm{\th}}_S \notin \Th^*) < \ep$.
\end{proof}

\subsection{Logistic Loss and Consistency} \label{app: Logistic Loss and Its Consistency}
We use the following logistic loss for DRME in the main text:
\begin{align}
    &\hat{L}_{\rm logistic} ( \bm{\th} ; S ) 
    := \f{1}{KT} \sum_{k \in [K]} \sum_{t \in [T]} \f{1}{M_k} \sum_{i \in I_k} 
        \f{1}{K-1} \sum_{l (\neq k)} 
        \log (
            1 + e^{- \hat{\la}_{kl} (X^{(1,t)}_i; \bm{\th})}
        )  \label{eq: logistic loss with empirical approximation} \, .
\end{align}
Note that $\hat{L}_{\operatorname{logistic}}$ resembles the LSEL but is defined as the sum of the logarithm of the exponential (sum-log-exp), not log-sum-exp. We can prove the consistency and the proof is completely parallel to, and even simpler than, that of Theorem \ref{thm:consistency of LSEL}; therefore, we omit the proof to avoid redundancy. $\hat{L}_{\rm logistic}$ approaches
\begin{align}
    L_{\rm logistic} [ \la ] 
    = \f{1}{KT} \sum_{k \in [K]} \sum_{t \in [T]} 
    \underset{\substack{X^{(1,t)} \\ \sim p (X^{(1,t)}| y=k) }}{\mbe}
        \lt[
            \f{1}{K-1} \sum_{l (\neq k)} \log ( 1 + e^{- \la_{k l} (X^{(1,t)})} )
        \rt] 
\end{align}
as $M \rightarrow \infty$. 

Additionally, we can define the modified logistic loss as
\begin{align}
    \hat{L}_{\rm modlogistic} ( \bm{\th} ; S ) = 
    \f{1}{MT} \sum_{i \in [M]} \sum_{t \in [T]}    
        \f{1}{K-1} \sum_{l (\neq y_i)} 
        \log ( 
            1 + \hat{\nu}_{y_i l}^{-1} e^{ - \hat{\la}_{y_i l} (X^{(1,t)}_i ; \bm{\th}) } 
        )  \label{eq: modlogistic loss with empirical approximation}
\end{align}
We can prove the consistency in a similar way to Theorem \ref{thm:consistency of modLSEL}. $\hat{L}_{\rm modlogistic}$ approaches
\begin{align}
    L_{\rm modlogistic} [ \la ] = \f{1}{T} \sum_{t \in [T]}
    \underset{\substack{( X^{(1,t)}, y ) \\ \sim p ( X^{(1,t)}, y )}}{\mbe} 
        \lt[
            \f{1}{K-1} \sum_{l (\neq y)} \log ( 1 + \nu_{yl}^{-1} e^{-\la_{yl} (X^{(1,t)}) } )
        \rt] \, .
\end{align}
as $M \rightarrow \infty$. 

Our empirical study shows that the LSEL is better than the logistic loss (Figure \ref{fig: DRE Losses vs LSEL} and Appendix \ref{app: Performance Comparison of LSELwith logistic}), potentially because the LSEL weighs hard classes more than the logistic loss (Section \ref{sec: Hard class weighting effect of the LSEL}).

\clearpage
\section{Performance Comparison of LSEL and Logistic Loss} \label{app: Performance Comparison of LSELwith logistic}
Figure \ref{fig: LSEL vs logistic} provides the performance comparison of the LSEL (\ref{eq:LLLR-E with empirical approx.}) and the logistic loss (\ref{eq: logistic loss with empirical approximation}). The LSEL is consistently better than or comparable with the logistic loss, which is potentially because of the hard class weighting effect.
\begin{figure}[H]
    \vskip 0.2in
    \begin{minipage}[b]{0.5\linewidth}
        \centering
        \includegraphics[width=\columnwidth, keepaspectratio]
        {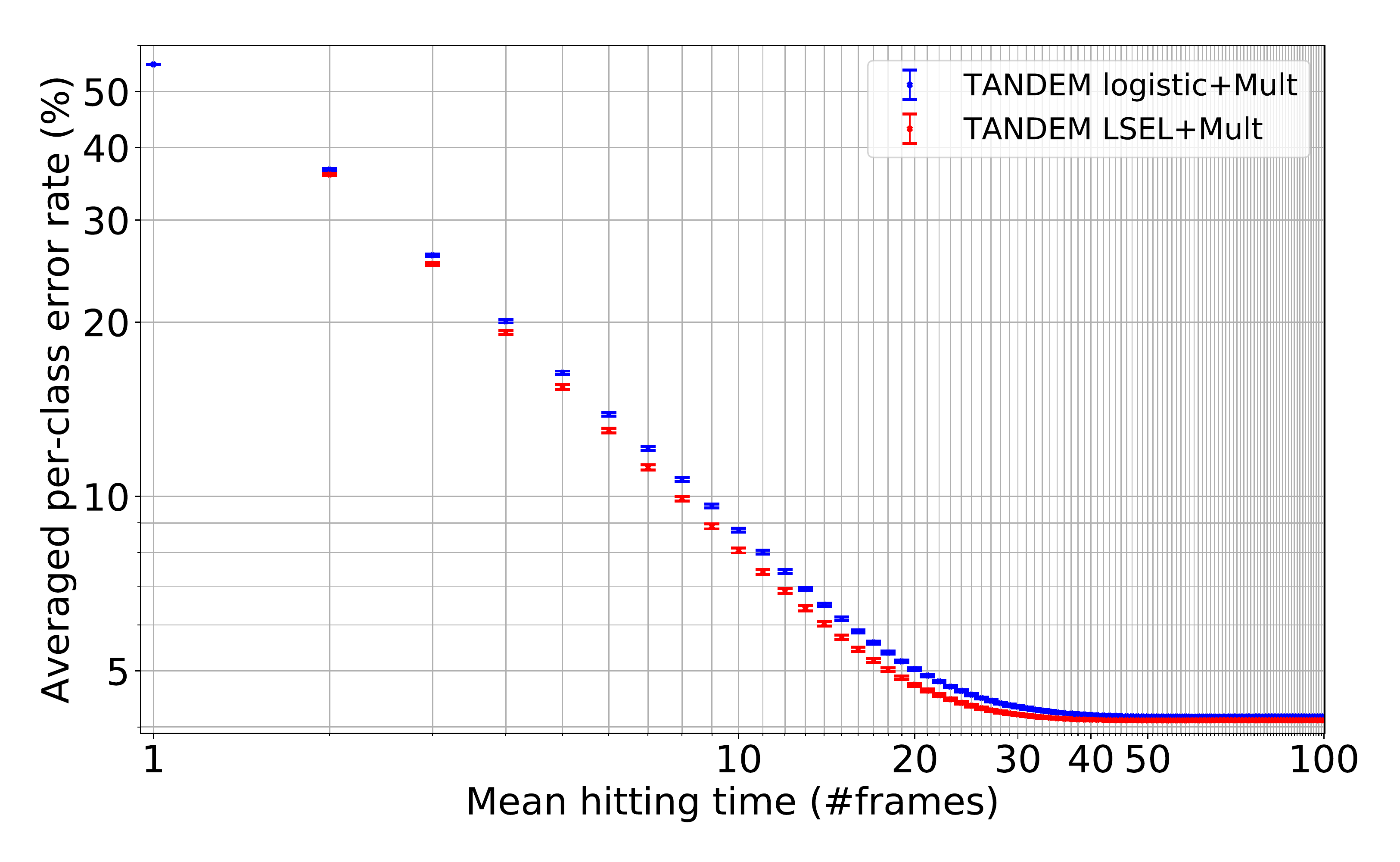} 
    \end{minipage}
    \begin{minipage}[b]{0.5\linewidth}
        \centering
        \includegraphics[width=\columnwidth, keepaspectratio]
        {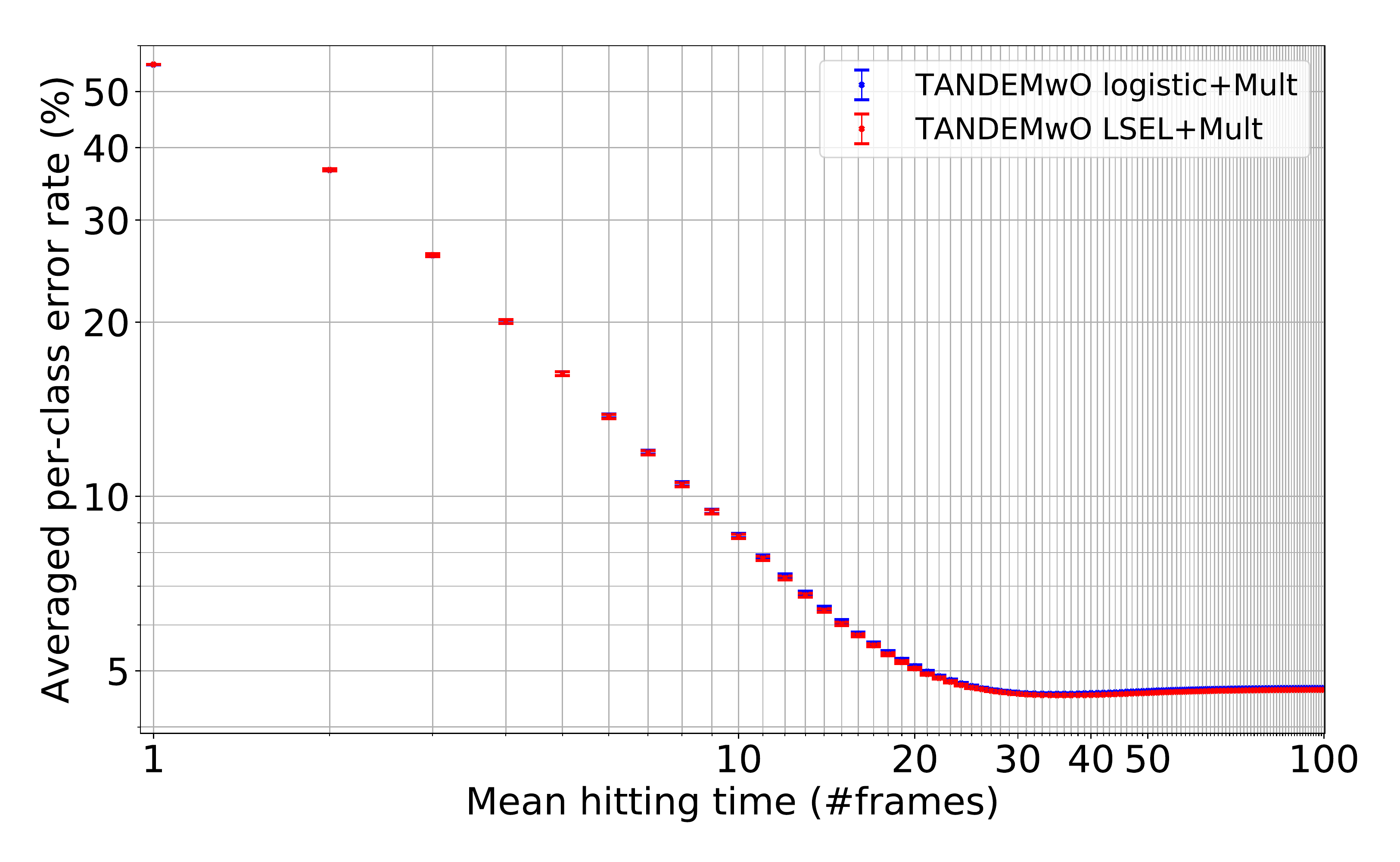} 
    \end{minipage}
    \begin{minipage}[b]{0.5\linewidth}
        \centering
        \includegraphics[width=\columnwidth, keepaspectratio]
        {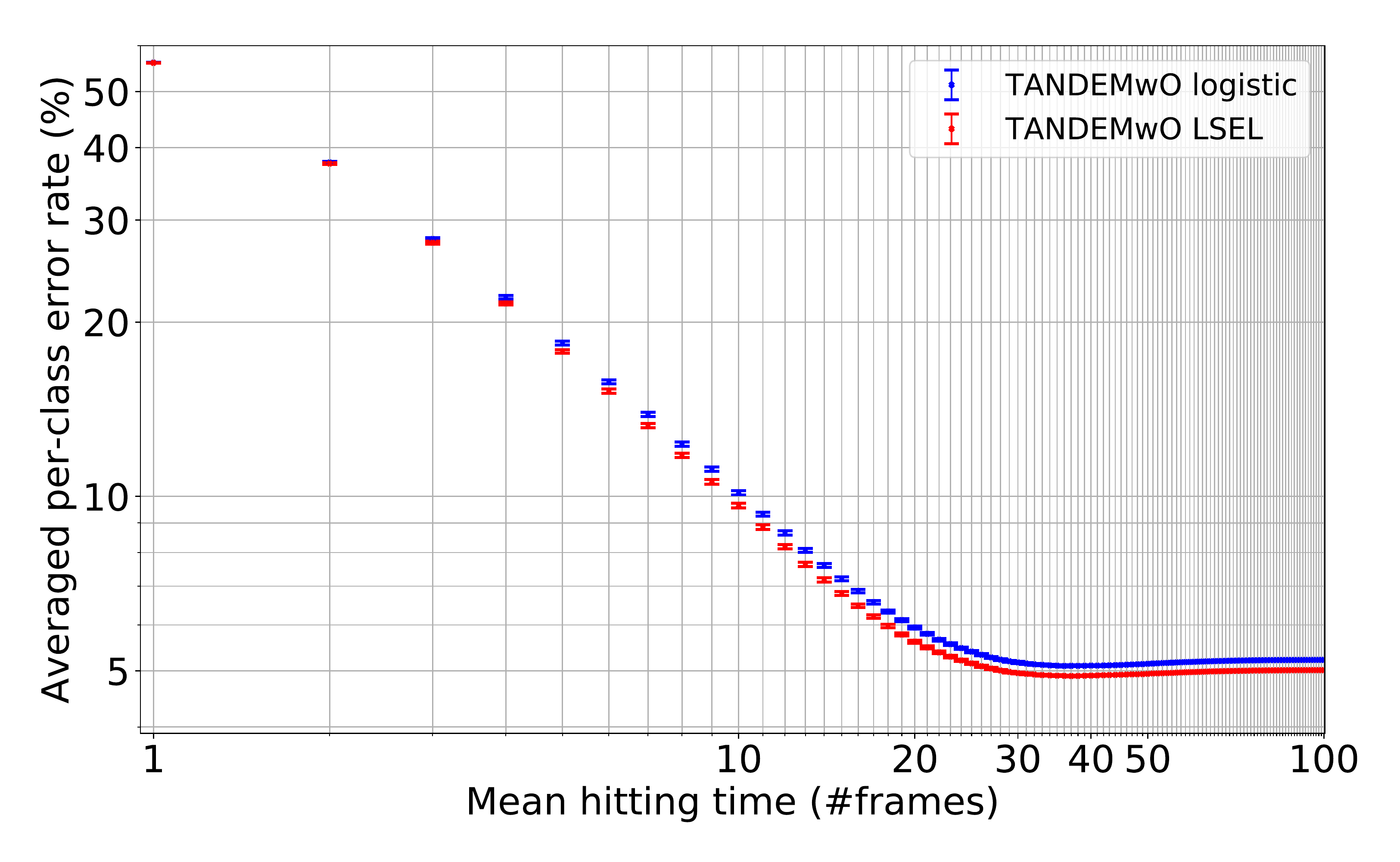} 
    \end{minipage}
    \begin{minipage}[b]{0.5\linewidth}
        \centering
        \includegraphics[width=\columnwidth, keepaspectratio]
        {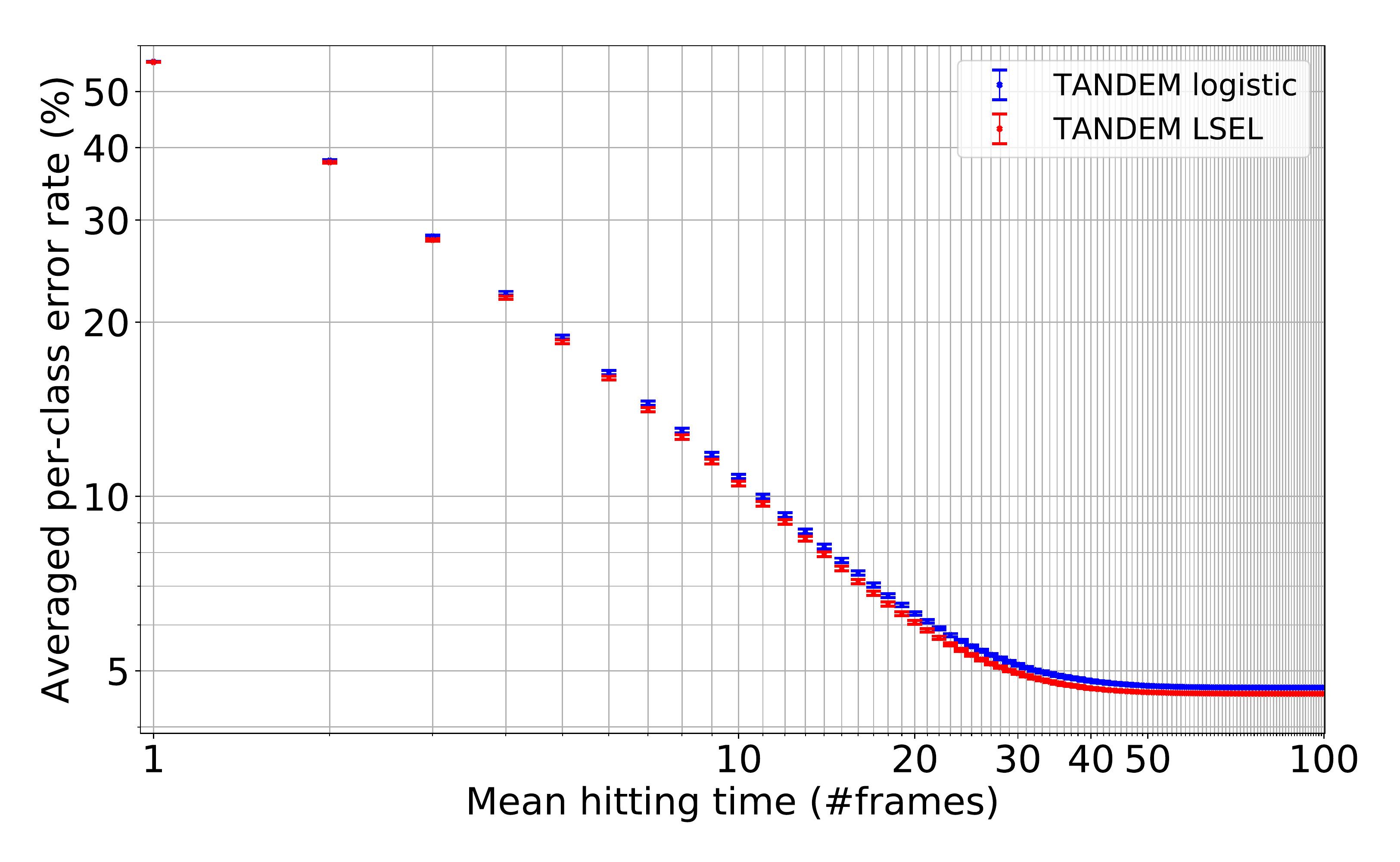} 
    \end{minipage}
    \caption{
        \textbf{LSEL v.s. Logistic Loss.} The LSEL is consistently better than or at least comparable with the logistic loss. The dataset is NMNIST-100f. The error bar is the SEM. ``TANDEM'' means that the model is trained with the M-TANDEM formula, ``TANDEMwO'' means that the model is trained with the M-TANDEMwO formula, and ``Mult'' means that the multiplet loss is simultaneously used.
    }\label{fig: LSEL vs logistic}
    \vskip -0.2in
\end{figure}

\clearpage 
\section{M-TANDEM vs. M-TANDEMwO Formulae} \label{app: TANDEM vs TANDEMwO}
The M-TANDEM and M-TANDEMwO formulae enable to efficiently train RNNs on long sequences, which often cause vanishing gradients \cite{Hochreiter_GradientVanishing}. In addition, if a class signature is localized within a short temporal interval, not all frames can be informative \cite{xing2011extracting_localized_class_signature, mcgovern2011identifying, ghalwash2012early_localized_class_signature, ghalwash2013extraction_localized_class_signature, ghalwash2014utilizing_localized_class_signature, karim2019framework_localized_class_signature}. The M-TANDEM and M-TANDEMwO formulae alleviate these problems.

Figure \ref{fig: TANDEM and TANDEMwO illustrate} highlights the differences between the M-TANDEM and M-TANDEMwO formulae. The M-TANDEM formula covers all the timestamps, while the M-TANDEMwO formula only covers the last $N+1$ timestamps. In two-hypothesis testing, the M-TANDEM formula is the canonical generalization of Wald's i.i.d. SPRT, because for $N=0$ (i.i.d.), the M-TANDEM formula reduces to $\hat{\la}_{1,2} (X^{(1,T)}) = \sum_{t=1}^{T} \log ( p(x^{(t)} | 1 ) / p(x^{(t)} | 2 ) )$, which is used in the classical SPRT \citeApp{wald1945TartarBook492}, while the M-TANDEMwO formula reduces to a sum of frame-by-frame scores when $N=0$.

Figure \ref{fig: TANDEM vs TANDEMwO} compares the performance of the M-TANDEM and M-TANDEMwO formulae on three datasets: NMNIST, NMNIST-H, and NMNIST-100f. NMNIST \citeApp{SPRT-TANDEM} is similar to NMNIST-H but has much weaker noise. On relatively short sequences (NMNIST and NMNIST-H), the M-TANDEMwO formula is slightly better than or much the same as the M-TANDEM formula. On longer sequences (NMNIST-100f), the M-TANDEM formula outperforms the M-TANDEMwO formula; the latter slightly and gradually increases the error rate in the latter half of the sequences. 
In summary, the performance of the M-TANDEM and M-TANDEMwO formulae depends on the sequence length of the training datasets, and we recommend using the M-TANDEM formula as the first choice for long sequences ($\gtrsim 100$ frames) and the M-TANDEM formula for short sequences ($\sim 10$ frames).

\begin{figure*}[ht]
    \begin{center}
    \centerline{\includegraphics[width=400pt]
    {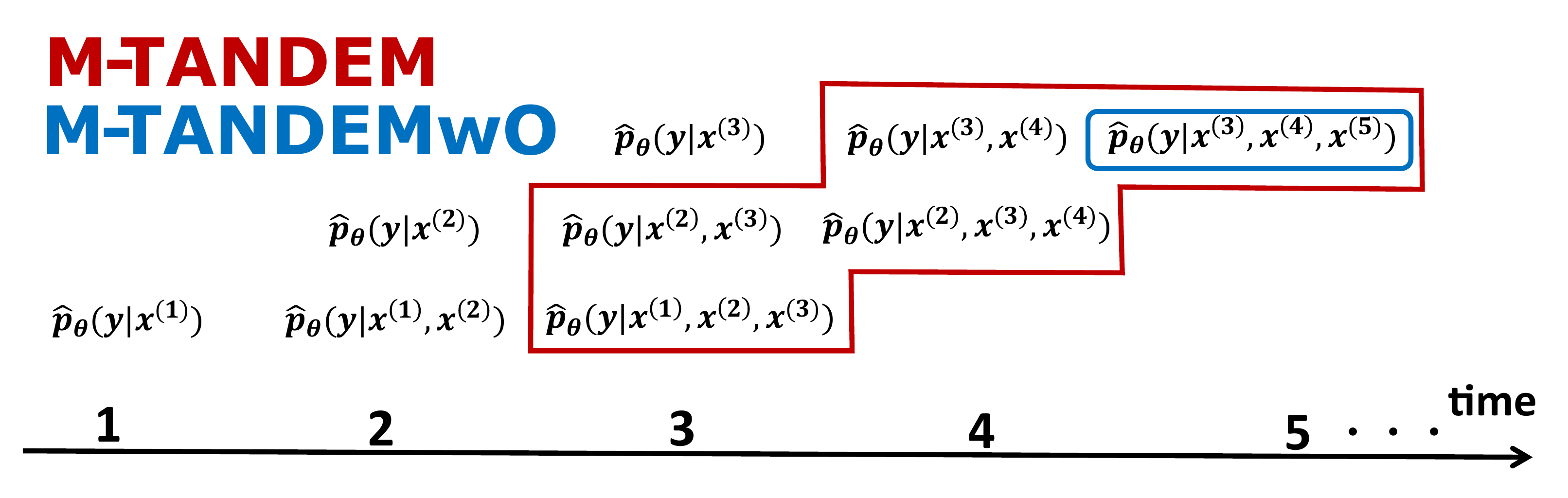}}
    \caption{
        \textbf{M-TANDEM v.s. M-TANDEMwO with $\bm{N=2}$.} 
        The posterior densities encircled in red and blue are used in the M-TANDEM and M-TANDEMwO formulae, respectively. We can see that the M-TANDEM formula covers all the frames, while the M-TANDEMwO formula covers only the last $N+1$ frames.
        } \label{fig: TANDEM and TANDEMwO illustrate}
    \end{center}
\end{figure*}

\begin{figure}[ht]
    \centering
    \begin{minipage}[b]{0.8\linewidth}
        \centering
        \includegraphics[width=\columnwidth, keepaspectratio]
        {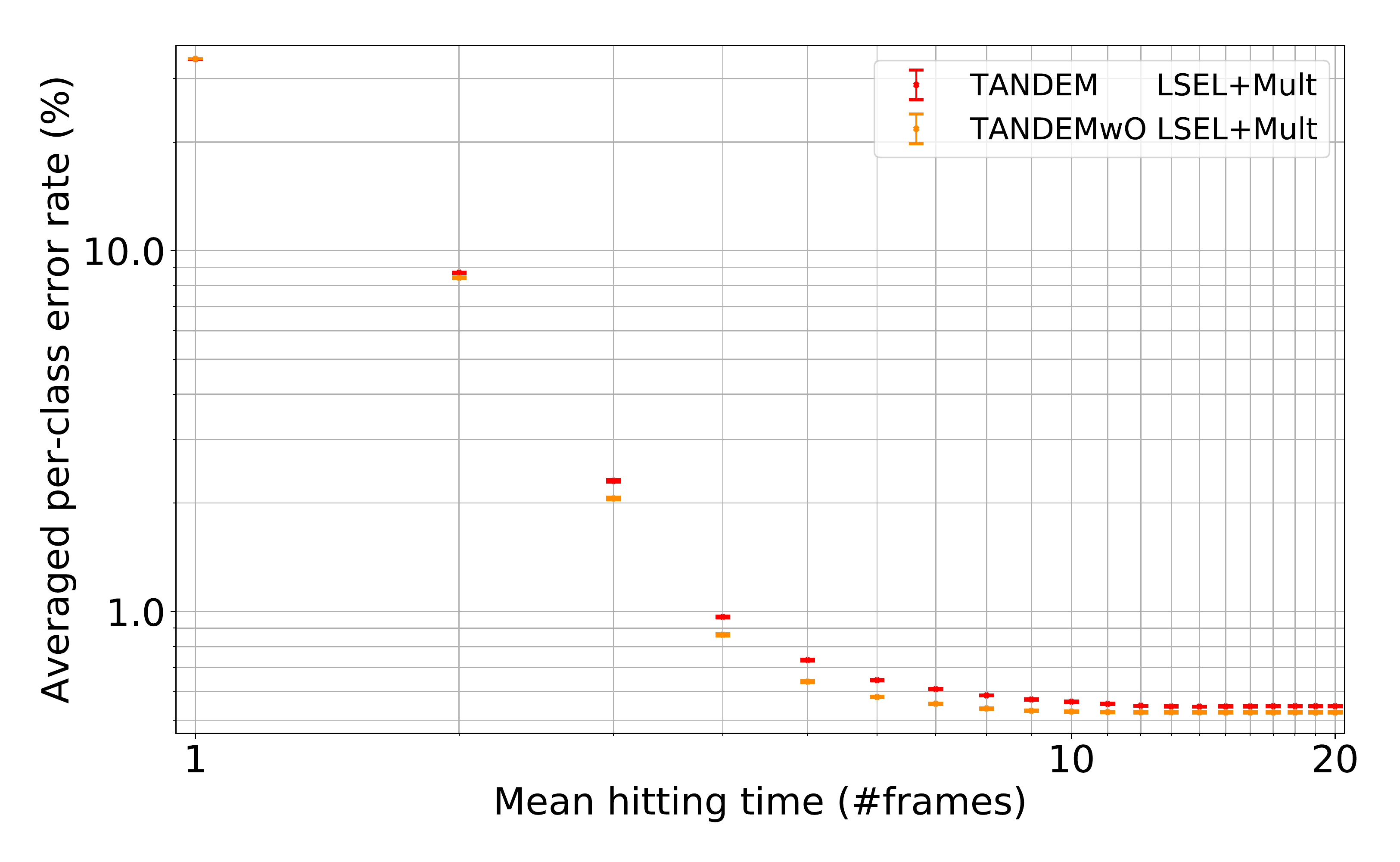} 
    \end{minipage}
    \begin{minipage}[b]{0.8\linewidth}
        \centering
        \includegraphics[width=\columnwidth, keepaspectratio]
        {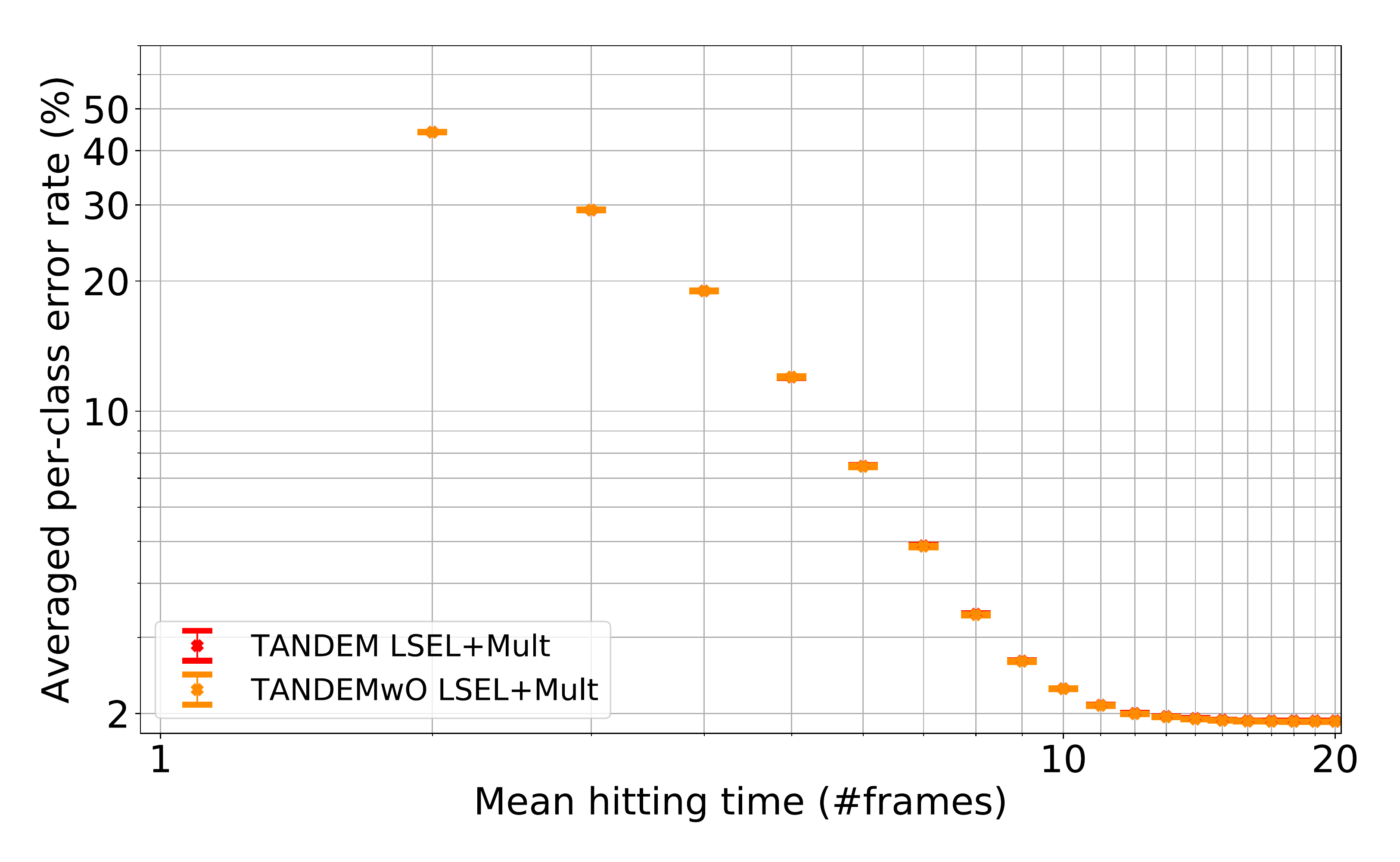} 
    \end{minipage}
    \begin{minipage}[b]{0.8\linewidth}
        \centering
        \includegraphics[width=\columnwidth, keepaspectratio]
        {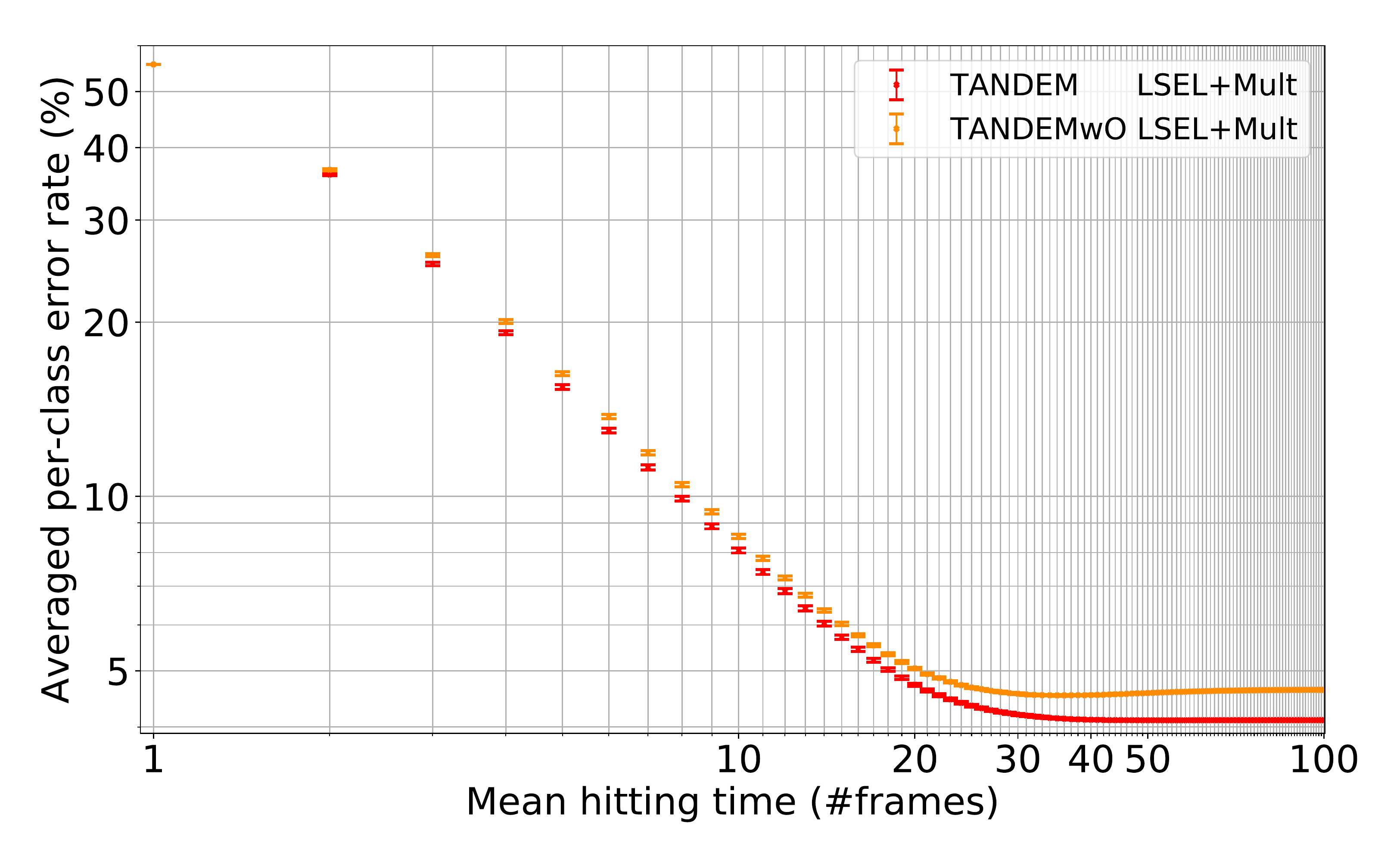} 
    \end{minipage}
    \caption{\textbf{M-TANDEM vs. M-TANDEMwO. Top: NMNIST. Middle: NMNIST-H. Bottom: NMNIST-100f.} TANDEM and TANDEMwO means that the model is trained with the M-TANDEM and M-TANDEMwO formulae, respectively. Mult means that the multiplet loss is simultaneously used.}
    \label{fig: TANDEM vs TANDEMwO}
\end{figure}

\clearpage 
\section{Proofs Related to Guess-Aversion} \label{app: Proofs of Guess-Aversion}

\subsection{Proof of Theorem \ref{thm: CLSEL is guess-averse}} \label{app: Proof of Theorem thm: CLSEL is guess-averse}
\begin{proof}
    For any $k,l \in [K]$ ($k \neq l$) and any $\bm{s} \in \mc{S}_k$, $e^{- (s_k - s_l)}$ is less than $1$ by definition of $\mc{S}_k$. Therefore, for any $k \in [K]$, any $\bm{s} \in \mc{S}_k$, any $\bm{s}^\prime \in \mc{A}$, and any cost matrix $C$,
    \begin{align}
        \ell (\bm{s}, k; C) 
            := C_{k} \log (1 + \sum_{l (\neq k)} e^{- (s_k - s_l)})
        < C_k \log (1 + \sum_{l (\neq k)} 1) 
            = \ell(\bm{s}^\prime, k; C) \, . \no
    \end{align}
\end{proof}

\subsection{NGA-LSEL Is Not Guess-Averse} \label{app: NGA-LSEL Is Not Guess-Averse}
The NGA-LSEL is $\ell (\bm{s}, y; C) = \sum_{k (\neq y)} C_{y,l} \log (1 + \sum_{l (\neq k)} e^{s_l - s_k})$ (Section \ref{app: Cost-Sensive LSEL and Guess-Aversion}). We prove that the NGA-LSEL is not guess-averse by providing a counter example.
\begin{proof}
    Assume that $K=3$, $C_{kl} = 1$ ($k \neq l$), $\bm{s}(X_i^{(1,t)}) = (3, 2, -100)^\top$, and $y_i=1$. Then,
    \begin{align}
        &\ell (\bm{s}(X_i^{(1,t)}), y_i; C) 
        = \log (1 + e^{ s_1 - s_2 } + e^{ s_3 - s_2 } ) + \log( 1 + e^{ s_1 - s_3 } + e^{ s_2 - s_3 } ) \nn
        =& \log (1 + e^{ 1 } + e^{ -102 } ) + \log( 1 + e^{ 103 } + e^{ 102 } ) \nn
        > & \log (3) + \log (3) =  \ell (\bm{0}, y_i; C) \, .\no
    \end{align}
\end{proof}

\subsection{Another cost-sensitive LSEL}
Alternatively to $\hat{L}_{\rm CLSEL}$, we can define
\begin{align}
    \hat{L}_{\rm LSCEL} (\bm{\th}, C ; S)
    :=\f{1}{MT} \sum_{i=1}^M \sum_{t=1}^T \log ( 
            1 + \sum_{l (\neq y_i)} C_{y_i l} e^{ - \hat{\la}_{y_i l} (X^{(1,t)}_i; \bm{\th}) }
        ) \label{eq: LSCEL} \, .
\end{align}
$\hat{L}_{\rm LSCEL}$ reduces to $\hat{L}_{\rm \text{modLSEL}}$ when $C_{kl} = \hat{\nu}_{kl}^{-1}$. The following theorem shows that $\hat{L}_{\rm \text{LSCEL}}$ is guess-averse:
\begin{theorem} \label{thm: LSCEL is guess-averse}
    $\hat{L}_{\operatorname{LSCEL}}$ is guess-averse, provided that the log-likelihood vector 
    \begin{align}
        \Big( \log \hat{p}_{\bm{\th}} (X^{(1,t)} | y=1), \log \hat{p}_{\bm{\th}} (X^{(1,t)} | y=2),
        \, ..., \, \log \hat{p}_{\bm{\th}} (X^{(1,t)} | y=K) ) \Big)^\top
            \in \mbr^{K} \no
    \end{align}
    is regarded as the score vector $\bm{s}(X^{(1,t)})$.
\end{theorem}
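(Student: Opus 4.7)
The plan is to mimic the proof of Theorem \ref{thm: CLSEL is guess-averse} almost verbatim, with the cost factors now sitting inside the sum rather than outside the logarithm. Concretely, I identify the per-example loss as $\ell(\bm{s}, k; C) = \log\bigl(1 + \sum_{l(\neq k)} C_{kl}\, e^{-(s_k - s_l)}\bigr)$ under the stated identification of the score vector, and then directly compare its values on $\bm{s}\in\mc{S}_k$ versus $\bm{s}'\in\mc{A}$.

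First I would note that the admissible cost matrix satisfies $C_{kl}\geq 0$, $C_{kk}=0$, and $\sum_{l\in[K]} C_{kl}\neq 0$; in particular, for each $k$ there exists at least one $l\neq k$ with $C_{kl}>0$, so $\sum_{l(\neq k)} C_{kl} > 0$. Next, for any $\bm{s}\in\mc{S}_k$ the definition gives $s_k > s_l$ for all $l\neq k$, hence $e^{-(s_k-s_l)} < 1$. Multiplying by the non-negative weights $C_{kl}$ and summing, and using that at least one such weight is strictly positive, yields the strict bound
\begin{equation}
\sum_{l(\neq k)} C_{kl}\, e^{-(s_k - s_l)} < \sum_{l(\neq k)} C_{kl} \cdot 1 = \sum_{l(\neq k)} C_{kl}. \no
\end{equation}
On the other hand, for $\bm{s}'\in\mc{A}$ all components coincide, so $e^{-(s'_k - s'_l)}=1$ and $\sum_{l(\neq k)} C_{kl}\, e^{-(s'_k - s'_l)} = \sum_{l(\neq k)} C_{kl}$. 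Applying the strictly increasing function $\log(1+\cdot)$ gives $\ell(\bm{s}, k; C) < \ell(\bm{s}', k; C)$, which is precisely guess-aversion.

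There is essentially no obstacle: the only subtlety is making sure the inequality is strict, which requires that at least one $C_{kl}$ with $l\neq k$ be positive. That is guaranteed by the definition of a cost matrix ($\sum_l C_{kl}\neq 0$ combined with $C_{kk}=0$). The conclusion for the empirical loss $\hat{L}_{\operatorname{LSCEL}}$ follows by summing the pointwise inequality over $i\in[M]$ and $t\in[T]$ and dividing by $MT$. Thus the theorem reduces to a one-line monotonicity argument, parallel in structure to Theorem \ref{thm: CLSEL is guess-averse}, the only difference being that here the cost weights appear as coefficients of the exponentials inside the log rather than as a single multiplicative factor outside.
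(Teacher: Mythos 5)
Your proposal is correct and is essentially the paper's argument: the paper proves this theorem by plugging $\ga(v) = \log(1+v)$ and $\ph(v) = e^{-v}$ into Lemma 1 of \cite{beijbom2014guess-averse}, and your direct computation simply unfolds that lemma, with the same key observations — $e^{-(s_k - s_l)} < 1$ on $\mc{S}_k$, equality to $1$ on $\mc{A}$, and strictness secured by the cost-matrix conditions $C_{kl} \geq 0$, $C_{kk} = 0$, $\sum_{l} C_{kl} \neq 0$. Your handling of strictness (at least one $C_{kl} > 0$ with $l \neq k$) is exactly the point the cited lemma relies on, so nothing is missing.
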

\begin{proof}
    We use Lemma 1 in \cite{beijbom2014guess-averse}:
    \begin{lemma}[Lemma 1 in \cite{beijbom2014guess-averse}] \label{lem: LSCEL is guess-averse}
        Let $\ell (\bm{s}, y; C) = \ga ( \sum_{k \in [K]} C_{yk} \ph ( s_y - s_k ) )$, where $\ga: \mbr \rightarrow \mbr$ is a monotonically increasing function and $\ph: \mbr \rightarrow \mbr$ is a function such that for any $v > 0$, $\ph(v) < \ph (0)$. Then, $\ell$ is guess-averse.
    \end{lemma}
    The statement of Theorem \ref{thm: LSCEL is guess-averse} immediately follows by substituting $\ga (v) = \log (1 + v)$ and $\ph(v) = e^{-v}$ into Lemma \ref{lem: LSCEL is guess-averse}.
\end{proof}

\subsection{Cost-Sensitive Logistic Losses Are Guess-Averse}
We additionally prove that the cost-sensitive logistic losses defined below are also guess-averse, which may be of independent interest.
We define a cost-sensitive logistic loss as
\begin{align}
    &\hat{L}_{\operatorname{C-logistic}} (\bm{\th}, C; S) := 
    \f{1}{MT} \sum_{i=1}^M \sum_{t=1}^T \f{1}{K-1} \sum_{l (\neq y_i)} C_{y_i l} \log \lt(
            1 + e^{ - \hat{\la}_{y_i l} (X^{(1,t)}_i; \bm{\th}) }
        \rt) \label{eq: C-logistic}
\end{align}
$\hat{L}_{\operatorname{C-logistic}}$ reduces to $\hat{L}_{\rm logistic}$ (defined in Appendix \ref{app: Logistic Loss and Its Consistency}) if $C_{kl} = C_{k} = M / K M_k$. $\hat{L}_{\operatorname{C-logistic}}$ is guess-averse:
\begin{theorem} \label{thm: cost-sensitive LLLR-D ver 2 is guess-averse}
    $\hat{L}_{\rm \text{C-logistic}}$ is guess-averse, provided that the log-likelihood vector 
    \begin{align}
        &\Big( \log \hat{p}(X^{(1,t)} | y=1), \log \hat{p}(X^{(1,t)} | y=2), 
        \, ..., \, \log \hat{p}(X^{(1,t)} | y=K) ) \Big)^\top
            \in \mbr^{K}
    \end{align}
    is regarded as the score vector $\bm{s}(X^{(1,t)})$.
\end{theorem}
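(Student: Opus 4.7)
The plan is to recognize that $\hat{L}_{\operatorname{C-logistic}}$ has exactly the structural form covered by Lemma \ref{lem: LSCEL is guess-averse} (Lemma 1 of \cite{beijbom2014guess-averse}), which was the device used to prove guess-aversion of $\hat{L}_{\operatorname{LSCEL}}$. Writing the per-example loss as
\begin{equation}
    \ell(\bm{s}, y ; C) = \f{1}{K-1} \sum_{l \in [K]} C_{yl} \, \varphi(s_y - s_l) \, , \no
\end{equation}
where $\varphi(v) := \log(1 + e^{-v})$ and we have used $C_{yy} = 0$ to include the $l = y$ term for free, I would identify $\gamma(u) := u/(K-1)$ and this $\varphi$ as the two building blocks in Lemma \ref{lem: LSCEL is guess-averse}. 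The key identification that makes this work is $\hat{\la}_{yl}(X^{(1,t)}; \bm{\th}) = s_y - s_l$ under the stated choice of score vector.

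Next, I would verify the two hypotheses of Lemma \ref{lem: LSCEL is guess-averse}. The function $\gamma(u) = u/(K-1)$ is strictly monotonically increasing since $K \geq 2$. For $\varphi$, I would note that $\varphi'(v) = -e^{-v}/(1+e^{-v}) < 0$, so $\varphi$ is strictly decreasing; in particular $\varphi(v) < \varphi(0) = \log 2$ for every $v > 0$. Both hypotheses being satisfied, Lemma \ref{lem: LSCEL is guess-averse} delivers guess-aversion of $\ell$, and thus of $\hat{L}_{\operatorname{C-logistic}}$ since the empirical loss is an average of $\ell$ evaluations.

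For completeness (and mirroring the direct style used in the proof of Theorem \ref{thm: CLSEL is guess-averse}), I would also give a self-contained one-line argument: for any $k \in [K]$, any $\bm{s} \in \mc{S}_k$, and any $l \neq k$, $s_k - s_l > 0$ gives $\varphi(s_k - s_l) < \log 2 = \varphi(0) = \varphi(s'_k - s'_l)$ for any $\bm{s}' \in \mc{A}$. Summing these strict inequalities against the nonnegative weights $C_{kl}$ yields $\ell(\bm{s}, k; C) < \ell(\bm{s}', k; C)$, provided at least one weight is strictly positive.

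The only subtlety, and the one place where the argument could fail, is preserving the strict inequality after the weighted sum. This is exactly where the cost-matrix axioms from Section \ref{app: Cost-Sensive LSEL and Guess-Aversion} are needed: the conditions $C_{kl} \geq 0$, $C_{kk} = 0$, and $\sum_{l \in [K]} C_{kl} \neq 0$ together force $\sum_{l \neq k} C_{kl} > 0$, so at least one term in the sum contributes a genuine strict inequality. Without this assumption, a pathological $C$ with $C_{kl} = 0$ for all $l \neq k$ would collapse $\ell(\bm{s}, k; C)$ to zero for every $\bm{s}$ and break guess-aversion, so I would flag this as the one step that must be handled explicitly rather than inherited from Lemma \ref{lem: LSCEL is guess-averse}.
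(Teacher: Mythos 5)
Your proposal is correct, and it in fact contains the paper's own proof as its second half: the paper proves this theorem by exactly the direct argument you give ``for completeness,'' namely that $\bm{s} \in \mc{S}_k$ forces $e^{-(s_k - s_l)} < 1$ for every $l \neq k$, so $C_{kl}\log(1 + e^{-(s_k-s_l)}) \le C_{kl}\log 2$ with the sum strictly smaller, matching $\ell(\bm{s}^\prime, k; C)$ for $\bm{s}^\prime \in \mc{A}$. Your primary route---absorbing the $\frac{1}{K-1}$ prefactor into $\gamma(u) = u/(K-1)$ and invoking Lemma \ref{lem: LSCEL is guess-averse} with $\varphi(v) = \log(1+e^{-v})$, using $C_{yy}=0$ to extend the sum over all of $[K]$---is a valid alternative the paper does not use here; the paper reserves the lemma for $\hat{L}_{\operatorname{LSCEL}}$ (Theorem \ref{thm: LSCEL is guess-averse}), where the log sits \emph{outside} the cost-weighted sum and a nonlinear $\gamma$ is genuinely needed, whereas for $\hat{L}_{\text{C-logistic}}$ the linearity of the outer operation makes the direct computation just as short. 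One point where your write-up is actually more careful than the paper's: the paper writes the comparison as a termwise strict inequality, which is not literally strict for indices with $C_{kl}=0$; your observation that strictness of the summed inequality rests on the cost-matrix axiom $\sum_{l} C_{kl} \neq 0$ (hence $C_{kl} > 0$ for at least one $l \neq k$) is exactly the right repair, and the same caveat implicitly underlies the paper's version of the argument.
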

\begin{proof}
    The proof is parallel to that of Theorem \ref{thm: CLSEL is guess-averse}. For any $k, l \in [K]$ ($k \neq l$) and any $\bm{s} \in \mc{S}_k$, $e^{- (s_k - s_l)}$ is less than $1$ by definition of $\mc{S}_k$. Therefore, for any $k,l \in [K]$, any $\bm{s} \in \mc{S}_k$, any $\bm{s}^\prime \in \mc{A}$, and any cost matrix $C$,
    \begin{align}
        \ell (\bm{s}, k; C) := \f{1}{K-1} \sum_{l (\neq k)} \log ( 1 + e^{- (s_k - s_l)} )^{C_{kl}}
        < \f{1}{K-1} \sum_{l (\neq k)} \log (1 + 1)^{C_{kl}} = \ell (\bm{s}^\prime, k; C) \, . \no
    \end{align}
\end{proof}

We also define
\begin{align}
    \hat{L}_{\operatorname{logistic-C}} (\bm{\th}, C; S) :=
    \f{1}{MT} \sum_{i=1}^M \sum_{t=1}^T \f{1}{K-1} \sum_{l (\neq y_i)} \log \lt( 
            1 + C_{y_i l} e^{ - \hat{\la}_{y_i l} (X^{(1,t)}_i; \bm{\th}) }
        \rt) \, . \label{eq: logistic-C}
\end{align}
$\hat{L}_{\operatorname{logistic-C}}$ reduces to $\hat{L}_{\rm modlogistic}$ if $C_{k l} = \hat{\nu}_{kl}^{-1}$. $\hat{L}_{\operatorname{logistic-C}}$ is guess-averse:
\begin{theorem} \label{thm: cost-sensitive LLLR-D ver 1 is guess-averse}
    $\hat{L}_{\rm \text{logistic-C}}$ is guess-averse, provided that the log-likelihood vector 
    \begin{align}
        \Big( \log \hat{p}(X^{(1,t)} | y=1), \log \hat{p}(X^{(1,t)} | y=2), 
        \, ..., \, \log \hat{p}(X^{(1,t)} | y=K) ) \Big)\top
            \in \mbr^{K}
    \end{align}
    is regarded as the score vector $\bm{s}(X^{(1,t)})$.
\end{theorem}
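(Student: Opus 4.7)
The plan is to mimic the structure of the proof of Theorem~\ref{thm: cost-sensitive LLLR-D ver 2 is guess-averse} (and the companion Theorem~\ref{thm: CLSEL is guess-averse}), exploiting the strict monotonicity of $u \mapsto \log(1 + C_{kl} u)$ in $u$ for fixed positive $C_{kl}$. The guess-aversion inequality is a pointwise claim about the per-example loss $\ell(\bm{s}, k; C)$, so it suffices to establish it summand-by-summand inside the $\sum_{l(\neq k)}$ and then sum up.

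First I would unpack $\hat{\la}_{kl}$ under the score identification. With $s_k = \log \hat{p}_{\bm{\th}}(X^{(1,t)} \mid y = k)$, the definition of the estimated LLR matrix gives $\hat{\la}_{kl}(X^{(1,t)}; \bm{\th}) = s_k - s_l$, hence the per-example loss reads $\ell(\bm{s}, k; C) = \f{1}{K-1} \sum_{l (\neq k)} \log\!\lt( 1 + C_{kl} e^{-(s_k - s_l)} \rt)$. This is the only place where the definition of the score as a log-likelihood is used.

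Next I would compare the two regimes term-by-term. For $\bm{s} \in \mc{S}_k$, the support-set definition yields $s_k > s_l$ for every $l \neq k$, so $e^{-(s_k - s_l)} < 1$; for $\bm{s}^\prime \in \mc{A}$, the arbitrary-guess definition yields $s^\prime_k = s^\prime_l$, so $e^{-(s^\prime_k - s^\prime_l)} = 1$. Since the cost matrix satisfies $C_{kl} \geq 0$ with $\sum_{l} C_{kl} \neq 0$, and $u \mapsto \log(1 + C_{kl} u)$ is strictly increasing on $[0, \infty)$ whenever $C_{kl} > 0$ (and constant when $C_{kl} = 0$), each summand obeys $\log\!\lt(1 + C_{kl} e^{-(s_k - s_l)}\rt) \leq \log(1 + C_{kl})$, with strict inequality for at least one $l \neq k$ because $\sum_{l(\neq k)} C_{kl} \neq 0$ forces some $C_{kl} > 0$.

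Finally I would sum the (at least one strict, rest weak) inequalities over $l \neq k$ and divide by $K - 1$ to conclude $\ell(\bm{s}, k; C) < \ell(\bm{s}^\prime, k; C)$, which is the definition of guess-aversion. No obstacle is anticipated: the proof is structurally identical to that of Theorem~\ref{thm: cost-sensitive LLLR-D ver 2 is guess-averse}, the only substantive difference being that in $\hat{L}_{\operatorname{logistic-C}}$ the cost $C_{kl}$ multiplies the exponential \emph{inside} the logarithm rather than appearing as a multiplicative prefactor outside, and the monotonicity argument handles exactly that placement. The slight care needed is to invoke the support-set condition $\sum_{l} C_{kl} \neq 0$ to guarantee that at least one summand is strictly decreased, so that the averaged inequality is strict.
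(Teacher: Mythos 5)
Your proof is correct and is essentially the paper's argument: the paper merely packages the same summand-by-summand comparison into a small general lemma, writing the loss as $\ga\big(\prod_{l\in[K]}(1+C_{kl}\,\ph(s_k-s_l))\big)$ with monotone $\ga=\log$ and $\ph(v)=e^{-v}$, and then invokes $\ph(s_k-s_l)<\ph(0)$ together with $C_{kl}\geq 0$ and $C_{kl}>0$ for at least one $l\neq k$ --- exactly the monotonicity-plus-strictness step you carry out inline. Your explicit handling of the $C_{kl}=0$ summands and of where the strict inequality comes from matches (and slightly sharpens the exposition of) the paper's proof of Lemma~\ref{lem: cost-sensitive LLLR-D ver 1 is guess-averse}.
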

To prove Theorem \ref{thm: cost-sensitive LLLR-D ver 1 is guess-averse}, we first show the following lemma:
\begin{lemma} \label{lem: cost-sensitive LLLR-D ver 1 is guess-averse}
    Let 
    \begin{align}
        \ell (\bm{s}, k, ; C) = \ga \Big( \prod_{l \in [K]} \lt( 1 + C_{k l} \ph ( s_k - s_l ) \rt) \Big) \, , \no
    \end{align}
    where $\ga: \mbr \rightarrow \mbr$ is a monotonically increasing function and $\ph: \mbr \rightarrow \mbr$ is a function such that for any $v > 0$, $\ph(v) < \ph(0)$. Then, $\ell$ is guess-averse.
\end{lemma}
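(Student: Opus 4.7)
The statement has the same flavor as Theorem \ref{thm: CLSEL is guess-averse} and Theorem \ref{thm: cost-sensitive LLLR-D ver 2 is guess-averse}: one only needs to compare the arguments of the outer monotone $\ga$ at a separating score $\bm{s}\in\mc{S}_k$ and at an arbitrary-guess score $\bm{s}'\in\mc{A}$. My plan is to reduce the lemma to a pointwise comparison of the factors inside the product, then lift it to a product inequality using positivity, and finally apply monotonicity of $\ga$.

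\textbf{Step 1 (unpack the two sides).} For $\bm{s}\in\mc{S}_k$ and every $l\neq k$ one has $s_k-s_l>0$, whereas for $\bm{s}'\in\mc{A}$ one has $s'_k-s'_l=0$ for all $l$. Because $C_{kk}=0$ (by the definition of a cost matrix), the $l=k$ factor equals $1$ on both sides, so the products reduce to $\prod_{l\neq k}(1+C_{kl}\ph(s_k-s_l))$ and $\prod_{l\neq k}(1+C_{kl}\ph(0))$, respectively.

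\textbf{Step 2 (pointwise inequality).} Invoking the hypothesis on $\ph$, for every $l\neq k$ with $s_k-s_l>0$ we have $\ph(s_k-s_l)<\ph(0)$, hence (using $C_{kl}\geq 0$)
\begin{equation}
1+C_{kl}\ph(s_k-s_l)\;\leq\;1+C_{kl}\ph(0),
\end{equation}
with strict inequality whenever $C_{kl}>0$. The defining property $\sum_{l}C_{kl}\neq 0$ guarantees the existence of at least one index $l^{*}\neq k$ with $C_{kl^{*}}>0$, so strict inequality occurs in at least one factor.

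\textbf{Step 3 (lift to the product and apply $\ga$).} Assuming each factor $1+C_{kl}\ph(\cdot)$ is positive (which is automatic in the intended application $\ph(v)=e^{-v}$, the case used to derive Theorem \ref{thm: cost-sensitive LLLR-D ver 1 is guess-averse}), multiplying the per-$l$ inequalities preserves the strict inequality, giving $\prod_{l}(1+C_{kl}\ph(s_k-s_l))<\prod_{l}(1+C_{kl}\ph(0))$. Since $\ga$ is monotonically increasing, applying it to both sides yields $\ell(\bm{s},k;C)<\ell(\bm{s}',k;C)$, which is exactly the guess-aversion of $\ell$.

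\textbf{Main obstacle.} The only subtlety is the positivity needed in Step 3: a product inequality from per-factor inequalities requires all factors to be nonnegative. This is trivially true for $\ph(v)=e^{-v}$, so the lemma suffices for the proof of Theorem \ref{thm: cost-sensitive LLLR-D ver 1 is guess-averse}; in a fully general statement one would either impose $1+C_{kl}\ph(v)\geq 0$ as an additional hypothesis or restrict $\ph$ to be nonnegative. No other step is delicate.
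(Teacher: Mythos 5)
Your proof is correct and follows essentially the same route as the paper's: the pointwise inequality $\ph(s_k - s_l) < \ph(0)$ for $l \neq k$, lifted to the product via $C_{kl} \geq 0$ and the existence of some $l$ with $C_{kl} > 0$, followed by monotonicity of $\ga$. Your Step-3 caveat is a valid refinement rather than a deviation: the paper multiplies the per-factor inequalities without comment, which tacitly requires the factors $1 + C_{kl}\ph(\cdot)$ to be nonnegative (automatic in the intended application $\ph(v) = e^{-v}$), so your explicit positivity remark patches a hypothesis the lemma's statement and the paper's proof both leave implicit.
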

\begin{proof}
    For any $\bm{s} \in \mc{S}_k$ and $l \in [K]$, 
    \begin{equation}
        \ph(s_k - s_l) < \ph(0) \, , \no
    \end{equation}
    because $\ph(v) < \ph(0)$ and $s_k > s_l$ for all $v > 0$ and $l \in [K]$ ($l \neq k$). Therefore,
    \begin{equation}
        \prod_{l \in [K]} ( 1 + C_{k l} \ph (s_k - s_l) ) 
            < \prod_{l \in [K]} ( 1 + C_{k l} \ph(0) ) \, , \no
    \end{equation}
    because $C_{k l} \geq 0$ for all $k, l \in [K]$ and $C_{k l} \neq 0$ for at least one $l (\neq k)$. Hence, for any $k \in [K]$, any $\bm{s} \in \mc{S}_k$, any $\bm{s}^\prime \in \mc{A}$, and any cost matrix $C$, the monotonicity of $\ga$ shows that 
    \begin{align}
        \ell ( \bm{s}, k; C ) 
            = \ga \lt(\prod_{k \in [K]} ( 1 + C_{y k} \ph (s_y - s_k) ) \rt)
        < \ga \lt(\prod_{k \in [K]} ( 1 + C_{y k} \ph(0) ) \rt)
            = \ell (\bm{s}^\prime, k; C) \, . \no
    \end{align}
\end{proof}
\begin{proof}[Proof of Theorem \ref{thm: cost-sensitive LLLR-D ver 1 is guess-averse}]
    The statement immediately follows from Lemma \ref{lem: cost-sensitive LLLR-D ver 1 is guess-averse} by substituting $\ga(v) = \log(v)$ and $\ph(v) = e^{-v}$.
\end{proof}
\clearpage
\section{Ablation Study of Multiplet Loss and LSEL} \label{app: Ablation Study: Multiplet Loss and LSEL}

Figure \ref{fig: Ablation study Mult vs LSEL vs Combo} shows the ablation study comparing the LSEL and the multiplet loss. The combination of the LSEL and the multiplet loss is statistically significantly better than either of the two losses (Appendix \ref{app: Statistical Tests}).
The multiplet loss also performs better than the LSEL. However, the independent use of the multiplet loss has drawbacks: The multiplet loss optimizes \textit{all} the posterior densities output from the temporal integrator (magenta circles in Figure \ref{fig: MSPRT-TANDEM}), while the LSEL uses the \textit{minimum} posterior densities required to calculate the LLR matrix via the M-TANDEM or M-TANDEMwO formula. Therefore, the multiplet loss can lead to a suboptimal minimum for estimating the LLR matrix. In addition, the multiplet loss tends to suffer from the overconfidence problem \citeApp{guo2017ICML_on_calibration}, causing extreme values of LLRs.


\begin{figure}[ht]
    \begin{center}
    \centerline{\includegraphics[width=200pt]
    {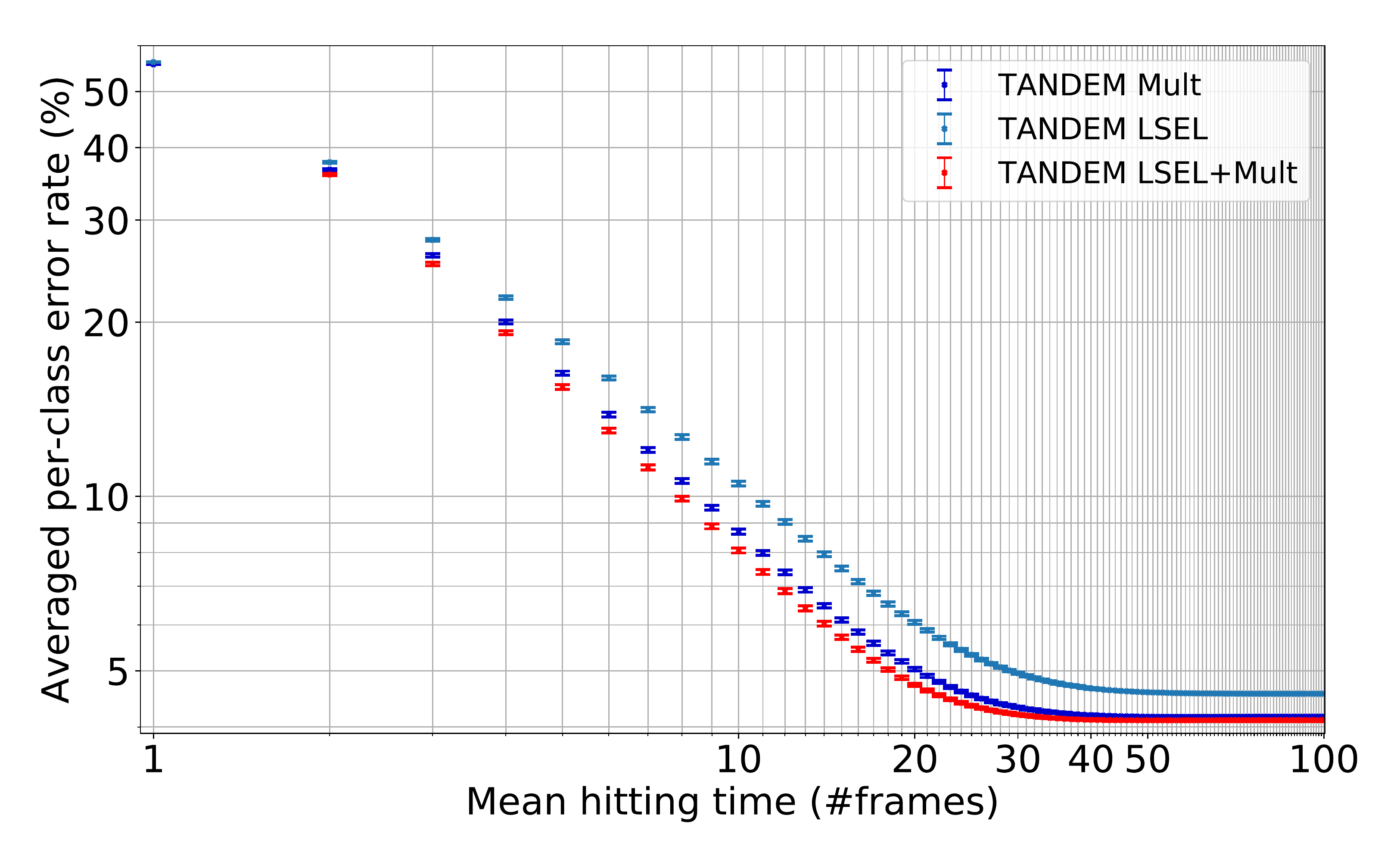}}
    \caption{
        \textbf{Ablation Study of LSEL and Multiplet Loss on NMNIST-100f.} The error bars show SEM. The combination of the LSEL with the multiplet loss 
        gives the best result. The other two curves represent the models trained with the LSEL only and the multiplet loss only. Details of the experiment and the statistical tests are given in Appendices \ref{app: Details of Experiment and More Results} and \ref{app: Statistical Tests}.
        } \label{fig: Ablation study Mult vs LSEL vs Combo}
    \end{center}
\end{figure}

\clearpage
\section{Details of Experiment and More Results}\label{app: Details of Experiment and More Results}
Our computational infrastructure is DGX-1.
The fundamental libraries used in the experiment are Numpy \citeApp{numpy}, Scipy \citeApp{scipy}, Tensorflow 2.0.0 \citeApp{tensorflow} and PyTorch 1.2 \citeApp{pytorch}.

The train/test splitting of NMNIST-H and NMNIST-100f follows the standard one of MNIST \citeApp{MNIST}. The validation set is separated from the last 10,000 examples in the training set. 
The train/test splitting of UCF101 and HMDB51 follows the official splitting \#1. The validation set is separated from the training set, keeping the class frequency. All the videos in UCF101 and HMDB51 are clipped or repeated to make their temporal length equal (50 and 79, respectively). See also our official code.
All the pixel values are divided by 127.5 and then subtracted by 1 before training the feature extractor. 

Hyperparameter tuning is performed with the TPE algorithm \citeApp{TPE}, the default algorithm of Optuna \citeApp{Optuna}. For optimizers, we use Adam, Momentum, \citeApp{loshchilov2018AdamW_SGDW} or RMSprop \citeApp{RMSprop}. Note that Adam and Momentum are not the originals (\citeApp{Adam} and \citeApp{rumelhart1986Momentum_original}), but AdamW and SGDW \citeApp{loshchilov2018AdamW_SGDW}, which have a decoupled weight decay from the learning rate.


To obtain arbitrary points of the SAT curve, we compute the thresholds of MSPRT-TANDEM as follows. 
First, we compute all the LLR trajectories of the test examples. Second, we compute the maximum and minimum values of $|\hat{\la} (X_i^{(1,t)})|$ with respect to $i \in [M]$ and $t \in [T]$.
Third, we generate the thresholds between the maximum and minimum. The thresholds are linearly uniformly separated.
Forth, we run the MSPRT and obtain a two-dimensional point for each threshold ($x = $ mean hitting time, $y = $ averaged per-class error rate).
Finally, we plot them on the speed-accuracy plane and linearly interpolate between two points with neighboring mean hitting times. 
If all the frames in a sequence are observed, the threshold of MSPRT-TANDEM is immediately collapsed to zero to force a decision.

\subsection{Common Feature Extractor} \label{app: Common Feature Extractor}
\paragraph{NMNIST-H and NMNIST-100f}
We first train the feature extractor to extract the bottleneck features, which are then used to train the temporal integrator, LSTM-s/m, and EARLIEST. Therefore, all the models in Figure \ref{fig: SATCs} (MSPRT-TANDEM, NP test, LSTM-s/m, and EARLIEST) share exactly the same feature extractor, ResNet-110 \citeApp{ResNetV1, ResNetV2} with the bottleneck feature dimensions set to 128. The total number of trainable parameters is 6,904,608. 

Tables \ref{tab: Hparams of feature extractor on NMNIST-H} and \ref{tab: Hparams of feature extractor on NMNIST-100f} show the search spaces of hyperparameters. The batch sizes are 64 and 50 for NMNIST-H and NMNIST-100f, respectively. The numbers of training iterations are 6,200 and 40,000 for NMNIST-H and NMNIST-100f, respectively. For each \textit{tuning trial}, we train ResNet and evaluate its averaged per-class accuracy on the validation set per 200 training steps, and after all training iterations, we save the best averaged per-class accuracy in that tuning trial. After all tuning trials, we choose the best hyperparameter combination, which is shown in cyan letters in Tables \ref{tab: Hparams of feature extractor on NMNIST-H} and \ref{tab: Hparams of feature extractor on NMNIST-100f}. We train ResNet with those hyperparameters to extract 128-dimensional bottleneck features, which are then used for the temporal integrator, LSTM-s/m, and EARLIEST. The averaged per-class accuracies of the feature extractors trained on NMNIST-H and on NMNIST-100f are $\sim 84 \%$ and $43 \%$, respectively. The approximated runtime of a single tuning trial is 4.5 and 35 hours for NMNIST-H and NMNIST-100f, respectively, and the GPU consumption is 31 GBs for both datasets.

\paragraph{UCF101 and HMDB51}
We use a pre-trained model without fine-tuning (Microsoft Vision ResNet-50 version 1.0.5 \citeApp{Pretrained_ResNet50_microsoftvision}). The final feature dimensions are 2048.

\begin{table}[htb]
    \caption{
        \textbf{Hyperparameter Search Space of Feature Extractor in Figure \ref{fig: SATCs}: NMNIST-H.} The best hyperparameter combination is highlighted in cyan.
    } \label{tab: Hparams of feature extractor on NMNIST-H}
    \begin{center}
    \begin{small}
    \begin{sc}
    \begin{tabular}{ll}
        \toprule
        Learning rate & \{ $10^{-2}$, 5*$10^{-3}$, $10^{-3}$, \textcolor{cyan}{5*$10^{-4}$} , $10^{-4}$ \} \\
        Weight decay & \{ $10^{-3}$, $10^{-4}$, \textcolor{cyan}{$10^{-5}$} \} \\
        Optimizer & \{ \textcolor{cyan}{Adam}, Momentum, RMSprop \} \\
        \midrule
        \# tuning trials & 96 \\
        \bottomrule
    \end{tabular}
    \end{sc}
    \end{small}
    \end{center}
\end{table}

\begin{table}[htb]
    \caption{
        \textbf{Hyperparameter Search Space of Feature Extractor in Figure \ref{fig: SATCs}: NMNIST-100f.} The best hyperparameter combination is highlighted in cyan.
    } \label{tab: Hparams of feature extractor on NMNIST-100f}
    \begin{center}
    \begin{small}
    \begin{sc}
    \begin{tabular}{ll}
        \toprule
        Learning rate & \{ $10^{-2}$, \textcolor{cyan}{5*$10^{-3}$}, $10^{-3}$, 5*$10^{-4}$, $10^{-4}$, $10^{-5}$ \} \\
        Weight decay & \{ $10^{-3}$, $10^{-4}$, \textcolor{cyan}{$10^{-5}$} \} \\
        Optimizer & \{ \textcolor{cyan}{Adam}, Momentum, RMSprop \} \\
        \midrule
        \# tuning trials & 7 \\
        \bottomrule
    \end{tabular}
    \end{sc}
    \end{small}
    \end{center}
\end{table}


\subsection{Figure \ref{fig: SATCs}: NMNIST-H} \label{app: Figure fig: SATCs: NMNIST-H}
\paragraph{MSPRT-TANDEM and NP test}
The approximation formula is the M-TANDEMwO formula. The loss function consists of the LSEL and the multiplet loss.
The temporal integrator is Peephole LSTM \citeApp{PeepholeLSTM} with the hidden state dimensions set to 128 followed by a fully-connected layer to output logits for classification. The temporal integrator has 133,760 trainable parameters.
The batch size is fixed to 500. The number of training iterations is 5,000.

Table \ref{tab: Hparams of MSPRT on NMNIST-H in fig SATC} shows the search space of hyperparameters. For each tuning trial, we train the temporal integrator and evaluate its mean averaged per-class accuracy\footnote{
    1. Compute LLRs for all frames; 2. Run MSPRT with threshold = 0 and compute framewise averaged per-class accuracy; 3. Compute the arithmetic mean of the framewise averaged per-class accuracy.
} per every 50 training iterations. After all training iterations, we save the best mean averaged per-class accuracy. After all tuning trials, we select the best combination of the hyperparameters, which is shown in Table \ref{tab: Hparams of MSPRT on NMNIST-H in fig SATC} in cyan letters.

After fixing the hyperparameters, we then train LSTM arbitrary times. During each statistics trial, we train LSTM with the best fixed hyperparameters and evaluate its mean averaged per-class accuracy at every 50 training iterations. After all training iterations, we save the best weight parameters in terms of the mean averaged per-class accuracy. After all statistics trials, we can plot the SAT ``points'' with integer hitting times. The approximated runtime of one statistic trial is 3.5 hours, and the GPU consumption is 1.0 GB.

\begin{table}[htb]
    \caption{
        \textbf{Hyperparameter Search Space of MSPRT-TANDEM in Figure \ref{fig: SATCs}: NMNIST-H.} The best hyperparameter combination is highlighted in cyan. $\ga$ is defined as $L_{\mr{total}} = L_{\mathrm{mult}} + \ga L_{\mathrm{LSEL}}$.
    } \label{tab: Hparams of MSPRT on NMNIST-H in fig SATC}
    \begin{center}
    \begin{small}
    \begin{sc}
    \begin{tabular}{ll}
        \toprule
        Order  & \{0,1,5,10, \textcolor{cyan}{$15$},19 \} \\
        Learning rate & \{ $10^{-2}$, $10^{-3}$, \textcolor{cyan}{$10^{-4}$} \} \\
        Weight decay & \{ $10^{-3}$, \textcolor{cyan}{$10^{-4}$}, $10^{-5}$ \} \\
        $\ga$ & \{ $10^{-1}$, $1$, \textcolor{cyan}{$10$}, $10^{2}$, $10^{3}$\} \\
        Optimizer & \{ Adam, \textcolor{cyan}{RMSprop} \} \\
        \midrule
        \# tuning trials & 500 \\
        \bottomrule
    \end{tabular}
    \end{sc}
    \end{small}
    \end{center}
\end{table}

\paragraph{LSTM-s/m}
The backbone model is Peephole LSTM with the hidden state dimensions set to 128 followed by a fully-connected layer to output logits for classification. LSTM has 133,760 trainable parameters. The batch size is fixed to 500. The number of training iterations is 5,000.

Tables \ref{tab: Hparams of LSTM-s on NMNIST-H in fig SATC} and \ref{tab: Hparams of LSTM-m on NMNIST-H in fig SATC} show the search spaces of hyperparameters. For each tuning trial, we train LSTM and evaluate its mean averaged per-class accuracy per every 50 training iterations. After all training iterations, we save the best mean averaged per-class accuracy. After all tuning trials, we select the best combination of the hyperparameters, which is shown in Tables \ref{tab: Hparams of LSTM-s on NMNIST-H in fig SATC} and \ref{tab: Hparams of LSTM-m on NMNIST-H in fig SATC} in cyan letters.

After fixing the hyperparameters, we then train LSTM arbitrary times. During each statistics trial, we train LSTM with the best fixed hyperparameters and evaluate its mean averaged per-class accuracy per every 50 training iterations. After all training iterations, we save the best weight parameters in terms of the mean averaged per-class accuracy. After all statistics trials, we can plot the SAT ``points'' with integer hitting times. The approximated runtime of one statistics trial is 3.5 hours, and the GPU consumption is 1.0 GB.

\begin{table}[htb]
    \caption{
        \textbf{Hyperparameter Search Space of LSTM-s in Figure \ref{fig: SATCs}: NMNIST-H.} The best hyperparameter combination is highlighted in cyan. $\ga$ controls the strength of monotonicity and is defined as $L_{\mr{total}} = L_{\mathrm{cross\mbox{-}entropy}} + \ga L_{\mathrm{ranking}}$ \protect\citeApp{LSTM_ms}.
    } \label{tab: Hparams of LSTM-s on NMNIST-H in fig SATC}
    \begin{center}
    \begin{small}
    \begin{sc}
    \begin{tabular}{ll}
        \toprule
        Learning rate & \{ $10^{-2}$, $10^{-3}$, \textcolor{cyan}{$10^{-4}$}, $10^{-5}$ \} \\
        Weight decay & \{ $10^{-3}$, \textcolor{cyan}{$10^{-4}$}, $10^{-5}$ \} \\
        $\ga$ & \{ \textcolor{cyan}{$10^{-2}$}, $10^{-1}$, $1$, $10$, $10^{2}$ \} \\
        Optimizer & \{ Adam, \textcolor{cyan}{RMSprop} \} \\
        \midrule
        \# tuning trials & 500 \\
        \bottomrule
    \end{tabular}
    \end{sc}
    \end{small}
    \end{center}
\end{table}

\begin{table}[htb]
    \caption{
        \textbf{Hyperparameter Search Space of LSTM-m in Figure \ref{fig: SATCs}: NMNIST-H.} The best hyperparameter combination is highlighted in cyan. $\ga$ controls the strength of monotonicity and is defined as $L_{\mr{total}} = L_{\mathrm{cross\mbox{-}entropy}} + \ga L_{\mathrm{ranking}}$ \protect\citeApp{LSTM_ms}.
    } \label{tab: Hparams of LSTM-m on NMNIST-H in fig SATC}
    \begin{center}
    \begin{small}
    \begin{sc}
    \begin{tabular}{ll}
        \toprule
        Learning rate & \{ $10^{-2}$, $10^{-3}$, \textcolor{cyan}{$10^{-4}$}, $10^{-5}$ \} \\
        Weight decay & \{ $10^{-3}$, $10^{-4}$, \textcolor{cyan}{$10^{-5}$} \} \\
        $\ga$ & \{ \textcolor{cyan}{$10^{-2}$}, $10^{-1}$, $1$, $10$, $10^{2}$ \} \\
        Optimizer & \{ Adam, \textcolor{cyan}{RMSprop} \} \\
        \midrule
        \# tuning trials & 500 \\
        \bottomrule
    \end{tabular}
    \end{sc}
    \end{small}
    \end{center}
\end{table}

\paragraph{EARLIEST}
The main backbone is LSTM \citeApp{LSTM} with the hidden state dimensions set to 128. The whole architecture has 133,646 trainable parameters. The batch size is 1000. The number of training iterations is 20,000. EARLIEST has a parameter $\la$ (Not to be confused with the LLR matrix) that controls the speed-accuracy tradeoff. A larger $\la$ gives faster and less accurate decisions, and a smaller $\la$ gives slower and more accurate decisions.
We train EARLIEST with two different $\la$'s: $10^{-2}$ and $10^2$.

Tables \ref{tab: Hparams of EARLIEST lam-2 on NMNIST-H in fig SATC} and \ref{tab: Hparams of EARLIEST lam2 on NMNIST-H in fig SATC} show the search spaces of hyperparameters. For each tuning trial, we train EARLIEST and evaluate its averaged per-class accuracy per every 500 training iterations. After all training iterations, we save the best averaged per-class accuracy. After all tuning trials, we select the best combination of the hyperparameters, which is shown in Tables \ref{tab: Hparams of EARLIEST lam-2 on NMNIST-H in fig SATC} and \ref{tab: Hparams of EARLIEST lam2 on NMNIST-H in fig SATC} in cyan letters.

After fixing the hyperparameters, we then train EARLIEST arbitrary times. During each statistics trial, we train EARLIEST with the best fixed hyperparameters and evaluate its mean averaged per-class accuracy per every 500 training iterations. After all training iterations, we save the best weight parameters in terms of the mean averaged per-class accuracy. After all statistics trials, we can plot the SAT ``points.'' Note that EARLIEST cannot change the decision policy after training, and thus one statistics trial gives only one point on the SAT graph; therefore, several statistics trials give only one point with an error bar. The approximated runtime of one statistics trial is 12 hours, and the GPU consumption is 1.4 GBs. 

\begin{table}[htb]
    \caption{
        \textbf{Hyperparameter Search Space of EARLIEST with $\bm{\la=10^{-2}}$ in Figure \ref{fig: SATCs}: NMNIST-H.} The best hyperparameter combination is highlighted in cyan. 
    } \label{tab: Hparams of EARLIEST lam-2 on NMNIST-H in fig SATC}
    \begin{center}
    \begin{small}
    \begin{sc}
    \begin{tabular}{ll}
        \toprule
        Learning rate & \{ $10^{-1}$, $10^{-2}$, \textcolor{cyan}{$10^{-3}$}, $10^{-4}$, $10^{-5}$ \} \\
        Weight decay & \{ $10^{-3}$, \textcolor{cyan}{$10^{-4}$}, $10^{-5}$ \} \\
        Optimizer & \{ Adam, \textcolor{cyan}{RMSprop} \} \\
        \midrule
        \# tuning trials & 500 \\
        \bottomrule
    \end{tabular}
    \end{sc}
    \end{small}
    \end{center}
\end{table}

\begin{table}[htb]
    \caption{
        \textbf{Hyperparameter Search Space of EARLIEST with $\bm{\la=10^2}$ in Figure \ref{fig: SATCs}: NMNIST-H.} The best hyperparameter combination is highlighted in cyan.
    } \label{tab: Hparams of EARLIEST lam2 on NMNIST-H in fig SATC}
    \begin{center}
    \begin{small}
    \begin{sc}
    \begin{tabular}{ll}
        \toprule
        Learning rate & \{ $10^{-1}$, $10^{-2}$, \textcolor{cyan}{$10^{-3}$}, $10^{-4}$, $10^{-5}$ \} \\
        Weight decay & \{ \textcolor{cyan}{$10^{-3}$}, $10^{-4}$, $10^{-5}$ \} \\
        Optimizer & \{ Adam, \textcolor{cyan}{RMSprop} \} \\
        \midrule
        \# tuning trials & 500 \\
        \bottomrule
    \end{tabular}
    \end{sc}
    \end{small}
    \end{center}
\end{table}

\subsection{Figure \ref{fig: SATCs}: NMNIST-100f} \label{app: Figure fig: SATCs NMNIST-100f}
\paragraph{MSPRT-TANDEM and NP test}
The approximation formula is the M-TANDEM formula. The loss function consists of the LSEL and the multiplet loss. The temporal integrator is Peephole LSTM \citeApp{PeepholeLSTM} with the hidden state dimensions set to 128 followed by a fully-connected layer to output logits for classification. The temporal integrator has 133,760 trainable parameters. The batch size is fixed to 100. The number of training iterations is 5,000.

Table \ref{tab: Hparams of MSPRT on NMNIST-100f in fig SATC} shows the search space of hyperparameters. For each tuning trial, we train the temporal integrator and evaluate its mean averaged per-class accuracy per every 200 training iterations. After all training iterations, we save the best mean averaged per-class accuracy. After all tuning trials, we select the best combination of the hyperparameters, which is shown in Table \ref{tab: Hparams of MSPRT on NMNIST-100f in fig SATC} in cyan letters. The approximated runtime of one statistics trial is 1 hour, and the GPU consumption is 8.7 GBs. 

\begin{table}[htb]
    \caption{
        \textbf{Hyperparameter Search Space of MSPRT-TANDEM in Figure \ref{fig: SATCs}: NMNIST-100f.} The best hyperparameter combination is highlighted in cyan. $\ga$ is defined as $L_{\mr{total}} = L_{\mathrm{mult}} + \ga L_{\mathrm{LSEL}}$.
    } \label{tab: Hparams of MSPRT on NMNIST-100f in fig SATC}
    \begin{center}
    \begin{small}
    \begin{sc}
    \begin{tabular}{ll}
        \toprule
        Order  & \{ 0, \textcolor{cyan}{25}, 50, 75, 99 \} \\
        Learning rate & \{ \textcolor{cyan}{$10^{-2}$}, $10^{-3}$, $10^{-4}$ \} \\
        Weight decay & \{ $10^{-3}$, $10^{-4}$, \textcolor{cyan}{$10^{-5}$} \} \\
        $\ga$ & \{ \textcolor{cyan}{$10^{-1}$}, $1$, $10$, $10^{2}$, $10^{3}$\} \\
        Optimizer & \{ \textcolor{cyan}{Adam}, RMSprop \} \\
        \midrule
        \# tuning trials & 200 \\
        \bottomrule
    \end{tabular}
    \end{sc}
    \end{small}
    \end{center}
\end{table}

\paragraph{LSTM-s/m}
The backbone model is Peephole LSTM with the hidden state dimensions set to 128 followed by a fully-connected layer to output logits for classification. LSTM has 133,760 trainable parameters. The batch size is fixed to 500. The number of training iterations is 5,000.

Tables \ref{tab: Hparams of LSTM-s on NMNIST-100f in fig SATC} and \ref{tab: Hparams of LSTM-m on NMNIST-100f in fig SATC} show the search spaces of hyperparameters. For each tuning trial, we train LSTM and evaluate its mean averaged per-class accuracy per every 100 training iterations. After all training iterations, we save the best mean averaged per-class accuracy. After all tuning trials, we select the best combination of the hyperparameters, which is shown in Tables \ref{tab: Hparams of LSTM-s on NMNIST-100f in fig SATC} and \ref{tab: Hparams of LSTM-m on NMNIST-100f in fig SATC} in cyan letters.

The approximated runtime of one statistics trial is 5 hours, and the GPU consumption is 2.6 GBs. 

\begin{table}[htb]
    \caption{
        \textbf{Hyperparameter Search Space of LSTM-s in Figure \ref{fig: SATCs}: NMNIST-100f.} The best hyperparameter combination is highlighted in cyan. $\ga$ controls the strength of monotonicity and is defined as $L_{\mr{total}} = L_{\mathrm{cross\mbox{-}entropy}} + \ga L_{\mathrm{ranking}}$ \protect\citeApp{LSTM_ms}.
    } \label{tab: Hparams of LSTM-s on NMNIST-100f in fig SATC}
    \begin{center}
    \begin{small}
    \begin{sc}
    \begin{tabular}{ll}
        \toprule
        Learning rate & \{ $10^{-2}$, \textcolor{cyan}{$10^{-3}$}, $10^{-4}$, $10^{-5}$ \} \\
        Weight decay & \{ $10^{-3}$, $10^{-4}$, \textcolor{cyan}{$10^{-5}$} \} \\
        $\ga$ & \{ $10^{-2}$, \textcolor{cyan}{$10^{-1}$}, $1$, $10$, $10^{2}$ \} \\
        Optimizer & \{ \textcolor{cyan}{Adam}, RMSprop \} \\
        \midrule
        \# tuning trials & 200 \\
        \bottomrule
    \end{tabular}
    \end{sc}
    \end{small}
    \end{center}
\end{table}

\begin{table}[htb]
    \caption{
        \textbf{Hyperparameter Search Space of LSTM-m in Figure \ref{fig: SATCs}: NMNIST-100f.} The best hyperparameter combination is highlighted in cyan. $\ga$ controls the strength of monotonicity and is defined as $L_{\mr{total}} = L_{\mathrm{cross\mbox{-}entropy}} + \ga L_{\mathrm{ranking}}$ \protect\citeApp{LSTM_ms}.
    } \label{tab: Hparams of LSTM-m on NMNIST-100f in fig SATC}
    \begin{center}
    \begin{small}
    \begin{sc}
    \begin{tabular}{ll}
        \toprule
        Learning rate & \{ $10^{-2}$, \textcolor{cyan}{$10^{-3}$}, $10^{-4}$, $10^{-5}$ \} \\
        Weight decay & \{ $10^{-3}$, $10^{-4}$, \textcolor{cyan}{$10^{-5}$} \} \\
        $\ga$ & \{ \textcolor{cyan}{$10^{-2}$}, $10^{-1}$, $1$, $10$, $10^{2}$ \} \\
        Optimizer & \{ Adam, \textcolor{cyan}{RMSprop} \} \\
        \midrule
        \# tuning trials & 200 \\
        \bottomrule
    \end{tabular}
    \end{sc}
    \end{small}
    \end{center}
\end{table}

\paragraph{EARLIEST}
The main backbone is LSTM \citeApp{LSTM} with the hidden state dimensions set to 128. The whole architecture has 133,646 trainable parameters. The batch size is 1000. The number of training iterations is 20,000. We train EARLIEST with two different $\la$'s: $10^{-2}$ and $10^{-4}$.

Tables \ref{tab: Hparams of EARLIEST lam-2 on NMNIST-100f in fig SATC} and \ref{tab: Hparams of EARLIEST lam-4 on NMNIST-100f in fig SATC} show the search spaces of hyperparameters. For each tuning trial, we train EARLIEST and evaluate its averaged per-class accuracy per every 500 training iterations. After all training iterations, we save the best averaged per-class accuracy. After all tuning trials, we select the best combination of the hyperparameters, which is shown in Tables \ref{tab: Hparams of EARLIEST lam-2 on NMNIST-100f in fig SATC} and \ref{tab: Hparams of EARLIEST lam-4 on NMNIST-100f in fig SATC} in cyan letters. The approximated runtime of a single tuning trial is 14 hours, and the GPU consumption is 2.0 GBs.

\begin{table}[htb]
    \caption{
        \textbf{Hyperparameter Search Space of EARLIEST with $\la=10^{-2}$ in Figure \ref{fig: SATCs}: NMNIST-100f.} The best hyperparameter combination is highlighted in cyan. 
    } \label{tab: Hparams of EARLIEST lam-2 on NMNIST-100f in fig SATC}
    \begin{center}
    \begin{small}
    \begin{sc}
    \begin{tabular}{ll}
        \toprule
        Learning rate & \{ $10^{-1}$, $10^{-2}$, $10^{-3}$, $10^{-4}$, \textcolor{cyan}{$10^{-5}$} \} \\
        Weight decay & \{ \textcolor{cyan}{$10^{-3}$}, $10^{-4}$, $10^{-5}$ \} \\
        Optimizer & \{ \textcolor{cyan}{Adam}, RMSprop \} \\
        \midrule
        \# tuning trials & 200 \\
        \bottomrule
    \end{tabular}
    \end{sc}
    \end{small}
    \end{center}
\end{table}

\begin{table}[htb]
    \caption{
        \textbf{Hyperparameter Search Space of EARLIEST with $\la=10^{-4}$ in Figure \ref{fig: SATCs}: NMNIST-100f.} The best hyperparameter combination is highlighted in cyan.
    } \label{tab: Hparams of EARLIEST lam-4 on NMNIST-100f in fig SATC}
    \begin{center}
    \begin{small}
    \begin{sc}
    \begin{tabular}{ll}
        \toprule
        Learning rate & \{ $10^{-1}$, $10^{-2}$, \textcolor{cyan}{$10^{-3}$}, $10^{-4}$, $10^{-5}$ \} \\
        Weight decay & \{ $10^{-3}$, \textcolor{cyan}{$10^{-4}$}, $10^{-5}$ \} \\
        Optimizer & \{ Adam, \textcolor{cyan}{RMSprop} \} \\
        \midrule
        \# tuning trials & 200 \\
        \bottomrule
    \end{tabular}
    \end{sc}
    \end{small}
    \end{center}
\end{table}

\subsection{Figure \ref{fig: SATCs}: UCF101}
\paragraph{MSPRT-TANDEM and NP test}
The approximation formula is the M-TANDEM formula. The loss function consists of the LSEL and the multiplet loss. The temporal integrator is Peephole LSTM \citeApp{PeepholeLSTM} with the hidden state dimensions set to 256 followed by a fully-connected layer to output logits for classification. The temporal integrator has 2,387,456 trainable parameters. The batch size is fixed to 256. The number of training iterations is 10,000.
We use the effective number \citeApp{cui2019CVPR_class-balanced_loss_effective_number} as the cost matrix of the LSEL, instead of $1/M_k$, to avoid over-emphasizing the minority class and to simplify the parameter tuning (only one extra parameter $\beta$ is introduced). 

Table \ref{tab: Hparams of MSPRT on UCF101 in fig SATC} shows the search space of hyperparameters. For each tuning trial, we train the temporal integrator and evaluate its mean averaged per-class accuracy per every 200 training iterations. After all training iterations, we save the best mean averaged per-class accuracy. After all tuning trials, we select the best combination of the hyperparameters, which is shown in Table \ref{tab: Hparams of MSPRT on UCF101 in fig SATC} in cyan letters. The approximated runtime of one statistics trial is 8 hours, and the GPU consumption is 16--32 GBs. 

\begin{table}[htb]
    \caption{
        \textbf{Hyperparameter Search Space of MSPRT-TANDEM in Figure \ref{fig: SATCs}: UCF101.} The best hyperparameter combination is highlighted in cyan. $\ga$ is defined as $L_{\mr{total}} = L_{\mathrm{mult}} + \ga L_{\mathrm{LSEL}}$. $\be$ controls the cost matrix \protect\citeApp{cui2019CVPR_class-balanced_loss_effective_number}.
    } \label{tab: Hparams of MSPRT on UCF101 in fig SATC}
    \begin{center}
    \begin{small}
    \begin{sc}
    \begin{tabular}{ll}
        \toprule
        Order  & \{ 0, \textcolor{cyan}{10}, 25, 40, 49 \} \\
        Learning rate & \{ $10^{-3}$, \textcolor{cyan}{$10^{-4}$}, $10^{-5}$ \} \\
        Weight decay & \{ \textcolor{cyan}{$10^{-3}$}, $10^{-4}$, $10^{-5}$ \} \\
        $\ga$ & \{ $10^{-1}$, \textcolor{cyan}{$1$}, $10$, $10^{2}$ \} \\
        Optimizer & \{ \textcolor{cyan}{Adam}, RMSprop \} \\
        $\be$ & \{ 0.99, 0.999, 0.9999, \textcolor{cyan}{0.99999}, 1. \} \\
        \midrule
        \# tuning trials & 100 \\
        \bottomrule
    \end{tabular}
    \end{sc}
    \end{small}
    \end{center}
\end{table}

\paragraph{LSTM-s/m}
The backbone model is Peephole LSTM with the hidden state dimensions set to 256 followed by a fully-connected layer to output logits for classification. LSTM has 2,387,456 trainable parameters. The batch size is fixed to 256. The number of training iterations is 5,000.

Tables \ref{tab: Hparams of LSTM-s on UCF101 in fig SATC} and \ref{tab: Hparams of LSTM-m on UCF101 in fig SATC} show the search spaces of hyperparameters. For each tuning trial, we train LSTM and evaluate its mean averaged per-class accuracy per every 200 training iterations. After all training iterations, we save the best mean averaged per-class accuracy. After all tuning trials, we select the best combination of the hyperparameters, which is shown in Tables \ref{tab: Hparams of LSTM-s on UCF101 in fig SATC} and \ref{tab: Hparams of LSTM-m on UCF101 in fig SATC} in cyan letters.
The approximated runtime of one statistics trial is 3 hours, and the GPU consumption is 2.6 GBs.

\begin{table}[htb]
    \caption{
        \textbf{Hyperparameter Search Space of LSTM-s in Figure \ref{fig: SATCs}: UCF101.} The best hyperparameter combination is highlighted in cyan. $\ga$ controls the strength of monotonicity and is defined as $L_{\mr{total}} = L_{\mathrm{cross\mbox{-}entropy}} + \ga L_{\mathrm{ranking}}$ \protect\citeApp{LSTM_ms}.
    } \label{tab: Hparams of LSTM-s on UCF101 in fig SATC}
    \begin{center}
    \begin{small}
    \begin{sc}
    \begin{tabular}{ll}
        \toprule
        Learning rate & \{ $10^{-2}$, \textcolor{cyan}{$10^{-3}$}, $10^{-4}$, $10^{-5}$ \} \\
        Weight decay & \{ $10^{-3}$, \textcolor{cyan}{$10^{-4}$}, {$10^{-5}$} \} \\
        $\ga$ & \{ $10^{-2}$, {$10^{-1}$}, $1$, \textcolor{cyan}{$10$}, $10^{2}$ \} \\
        Optimizer & \{ \textcolor{cyan}{Adam}, RMSprop \} \\
        \midrule
        \# tuning trials & 100 \\
        \bottomrule
    \end{tabular}
    \end{sc}
    \end{small}
    \end{center}
\end{table}

\begin{table}[htb]
    \caption{
        \textbf{Hyperparameter Search Space of LSTM-m in Figure \ref{fig: SATCs}: UCF101.} The best hyperparameter combination is highlighted in cyan. $\ga$ controls the strength of monotonicity and is defined as $L_{\mr{total}} = L_{\mathrm{cross\mbox{-}entropy}} + \ga L_{\mathrm{ranking}}$ \protect\citeApp{LSTM_ms}.
    } \label{tab: Hparams of LSTM-m on UCF101 in fig SATC}
    \begin{center}
    \begin{small}
    \begin{sc}
    \begin{tabular}{ll}
        \toprule
        Learning rate & \{ $10^{-2}$, \textcolor{cyan}{$10^{-3}$}, $10^{-4}$, $10^{-5}$ \} \\
        Weight decay & \{ $10^{-3}$, \textcolor{cyan}{$10^{-4}$}, {$10^{-5}$} \} \\
        $\ga$ & \{ {$10^{-2}$}, $10^{-1}$, \textcolor{cyan}{$1$}, $10$, $10^{2}$ \} \\
        Optimizer & \{ \textcolor{cyan}{Adam}, {RMSprop} \} \\
        \midrule
        \# tuning trials & 100 \\
        \bottomrule
    \end{tabular}
    \end{sc}
    \end{small}
    \end{center}
\end{table}

\paragraph{EARLIEST}
The main backbone is LSTM \citeApp{LSTM} with the hidden state dimensions set to 256. The whole architecture has 2,387,817 trainable parameters. The batch size is 256. The number of training iterations is 5,000. We train EARLIEST with two different $\la$'s: $10^{-1}$ and $10^{-10}$.

Tables \ref{tab: Hparams of EARLIEST lam-1 on UCF101 in fig SATC} and \ref{tab: Hparams of EARLIEST lam-10 on UCF101 in fig SATC} show the search spaces of hyperparameters. For each tuning trial, we train EARLIEST and evaluate its averaged per-class accuracy per every 500 training iterations. After all training iterations, we save the best averaged per-class accuracy. After all tuning trials, we select the best combination of the hyperparameters, which is shown in Tables \ref{tab: Hparams of EARLIEST lam-1 on UCF101 in fig SATC} and \ref{tab: Hparams of EARLIEST lam-10 on UCF101 in fig SATC} in cyan letters. The approximated runtime of a single tuning trial is 0.5 hours, and the GPU consumption is 2.0 GBs.

\begin{table}[htb]
    \caption{
        \textbf{Hyperparameter Search Space of EARLIEST with $\la=10^{-1}$ in Figure \ref{fig: SATCs}: UCF101.} The best hyperparameter combination is highlighted in cyan. 
    } \label{tab: Hparams of EARLIEST lam-1 on UCF101 in fig SATC}
    \begin{center}
    \begin{small}
    \begin{sc}
    \begin{tabular}{ll}
        \toprule
        Learning rate & \{ $10^{-1}$, $10^{-2}$, $10^{-3}$, \textcolor{cyan}{$10^{-4}$}, {$10^{-5}$} \} \\
        Weight decay & \{ {$10^{-3}$}, \textcolor{cyan}{$10^{-4}$}, $10^{-5}$ \} \\
        Optimizer & \{ {Adam}, \textcolor{cyan}{RMSprop} \} \\
        \midrule
        \# tuning trials & 100 \\
        \bottomrule
    \end{tabular}
    \end{sc}
    \end{small}
    \end{center}
\end{table}

\begin{table}[htb]
    \caption{
        \textbf{Hyperparameter Search Space of EARLIEST with $\la=10^{-10}$ in Figure \ref{fig: SATCs}: UCF101.} The best hyperparameter combination is highlighted in cyan.
    } \label{tab: Hparams of EARLIEST lam-10 on UCF101 in fig SATC}
    \begin{center}
    \begin{small}
    \begin{sc}
    \begin{tabular}{ll}
        \toprule
        Learning rate & \{ $10^{-1}$, $10^{-2}$, \textcolor{cyan}{$10^{-3}$}, $10^{-4}$, $10^{-5}$ \} \\
        Weight decay & \{ $10^{-3}$, \textcolor{cyan}{$10^{-4}$}, $10^{-5}$ \} \\
        Optimizer & \{ Adam, \textcolor{cyan}{RMSprop} \} \\
        \midrule
        \# tuning trials & 100 \\
        \bottomrule
    \end{tabular}
    \end{sc}
    \end{small}
    \end{center}
\end{table}

\subsection{Figure \ref{fig: SATCs}: HMDB51}
\paragraph{MSPRT-TANDEM and NP test}
The approximation formula is the M-TANDEM formula. The loss function consists of the LSEL and the multiplet loss. The temporal integrator is Peephole LSTM \citeApp{PeepholeLSTM} with the hidden state dimensions set to 256 followed by a fully-connected layer to output logits for classification. The temporal integrator has 2,374,656 trainable parameters. The batch size is fixed to 128. The number of training iterations is 10,000.
We use the effective number \citeApp{cui2019CVPR_class-balanced_loss_effective_number} as the cost matrix of the LSEL, instead of $1/M_k$, to avoid over-emphasising the minority class and to simplify the parameter tuning (only one extra parameter $\beta$ is introduced). 

Table \ref{tab: Hparams of MSPRT on HMDB51 in fig SATC} shows the search space of hyperparameters. For each tuning trial, we train the temporal integrator and evaluate its mean averaged per-class accuracy per every 200 training iterations. After all training iterations, we save the best mean averaged per-class accuracy. After all tuning trials, we select the best combination of the hyperparameters, which is shown in Table \ref{tab: Hparams of MSPRT on HMDB51 in fig SATC} in cyan letters. The approximated runtime of one statistics trial is 8 hours, and the GPU consumption is 16--32 GBs. 

\begin{table}[htb]
    \caption{
        \textbf{Hyperparameter Search Space of MSPRT-TANDEM in Figure \ref{fig: SATCs}: HMDB51.} The best hyperparameter combination is highlighted in cyan. $\ga$ is defined as $L_{\mr{total}} = L_{\mathrm{mult}} + \ga L_{\mathrm{LSEL}}$. $\be$ controls the cost matrix \protect\citeApp{cui2019CVPR_class-balanced_loss_effective_number}.
    } \label{tab: Hparams of MSPRT on HMDB51 in fig SATC}
    \begin{center}
    \begin{small}
    \begin{sc}
    \begin{tabular}{ll}
        \toprule
        Order  & \{ \textcolor{cyan}{0}, 10, 40, 60, 78 \} \\
        Learning rate & \{ $10^{-3}$, \textcolor{cyan}{$10^{-4}$}, $10^{-5}$ \} \\
        Weight decay & \{ {$10^{-3}$}, \textcolor{cyan}{$10^{-4}$}, $10^{-5}$ \} \\
        $\ga$ & \{ $10^{-1}$, {$1$}, $10$, \textcolor{cyan}{$10^{2}$} \} \\
        Optimizer & \{ \textcolor{cyan}{Adam}, RMSprop \} \\
        $\be$ & \{ 0.99, 0.999, \textcolor{cyan}{0.9999}, {0.99999}, 1. \} \\
        \midrule
        \# tuning trials & 100 \\
        \bottomrule
    \end{tabular}
    \end{sc}
    \end{small}
    \end{center}
\end{table}

\paragraph{LSTM-s/m}
The backbone model is Peephole LSTM with the hidden state dimensions set to 256 followed by a fully-connected layer to output logits for classification. LSTM has 2,374,656 trainable parameters. The batch size is fixed to 128. The number of training iterations is 10,000.

Tables \ref{tab: Hparams of LSTM-s on HMDB51 in fig SATC} and \ref{tab: Hparams of LSTM-m on HMDB51 in fig SATC} show the search spaces of hyperparameters. For each tuning trial, we train LSTM and evaluate its mean averaged per-class accuracy per every 200 training iterations. After all training iterations, we save the best mean averaged per-class accuracy. After all tuning trials, we select the best combination of the hyperparameters, which is shown in Tables \ref{tab: Hparams of LSTM-s on HMDB51 in fig SATC} and \ref{tab: Hparams of LSTM-m on HMDB51 in fig SATC} in cyan letters.
The approximated runtime of one statistics trial is 3 hours, and the GPU consumption is 2.6 GBs. 

\begin{table}[htb]
    \caption{
        \textbf{Hyperparameter Search Space of LSTM-s in Figure \ref{fig: SATCs}: HMDB51.} The best hyperparameter combination is highlighted in cyan. $\ga$ controls the strength of monotonicity and is defined as $L_{\mr{total}} = L_{\mathrm{cross\mbox{-}entropy}} + \ga L_{\mathrm{ranking}}$ \protect\citeApp{LSTM_ms}.
    } \label{tab: Hparams of LSTM-s on HMDB51 in fig SATC}
    \begin{center}
    \begin{small}
    \begin{sc}
    \begin{tabular}{ll}
        \toprule
        Learning rate & \{ $10^{-2}$, {$10^{-3}$}, \textcolor{cyan}{$10^{-4}$}, $10^{-5}$ \} \\
        Weight decay & \{ \textcolor{cyan}{$10^{-3}$}, {$10^{-4}$}, {$10^{-5}$} \} \\
        $\ga$ & \{ $10^{-2}$, {$10^{-1}$}, $1$, \textcolor{cyan}{$10$}, $10^{2}$ \} \\
        Optimizer & \{ \textcolor{cyan}{RMSprop} \} \\
        \midrule
        \# tuning trials & 100 \\
        \bottomrule
    \end{tabular}
    \end{sc}
    \end{small}
    \end{center}
\end{table}

\begin{table}[htb]
    \caption{
        \textbf{Hyperparameter Search Space of LSTM-m in Figure \ref{fig: SATCs}: HMDB51.} The best hyperparameter combination is highlighted in cyan. $\ga$ controls the strength of monotonicity and is defined as $L_{\mr{total}} = L_{\mathrm{cross\mbox{-}entropy}} + \ga L_{\mathrm{ranking}}$ \protect\citeApp{LSTM_ms}.
    } \label{tab: Hparams of LSTM-m on HMDB51 in fig SATC}
    \begin{center}
    \begin{small}
    \begin{sc}
    \begin{tabular}{ll}
        \toprule
        Learning rate & \{ $10^{-2}$, {$10^{-3}$}, \textcolor{cyan}{$10^{-4}$}, $10^{-5}$ \} \\
        Weight decay & \{ \textcolor{cyan}{$10^{-3}$}, {$10^{-4}$}, {$10^{-5}$} \} \\
        $\ga$ & \{ {$10^{-2}$}, $10^{-1}$, {$1$}, \textcolor{cyan}{$10$}, $10^{2}$ \} \\
        Optimizer & \{ {Adam}, \textcolor{cyan}{RMSprop} \} \\
        \midrule
        \# tuning trials & 100 \\
        \bottomrule
    \end{tabular}
    \end{sc}
    \end{small}
    \end{center}
\end{table}

\paragraph{EARLIEST}
The main backbone is LSTM \citeApp{LSTM} with the hidden state dimensions set to 256. The whole architecture has 2,374,967 trainable parameters. The batch size is 256. The number of training iterations is 5,000. We train EARLIEST with two different $\la$'s: $10^{-1}$ and $10^{-10}$.

Tables \ref{tab: Hparams of EARLIEST lam-1 on HMDB51 in fig SATC} and \ref{tab: Hparams of EARLIEST lam-10 on HMDB51 in fig SATC} show the search spaces of hyperparameters. For each tuning trial, we train EARLIEST and evaluate its averaged per-class accuracy per every 500 training iterations. After all training iterations, we save the best averaged per-class accuracy. After all tuning trials, we select the best combination of the hyperparameters, which is shown in Tables \ref{tab: Hparams of EARLIEST lam-1 on HMDB51 in fig SATC} and \ref{tab: Hparams of EARLIEST lam-10 on HMDB51 in fig SATC} in cyan letters. The approximated runtime of a single tuning trial is 0.5 hours, and the GPU consumption is 2.0 GBs.

\begin{table}[htb]
    \caption{
        \textbf{Hyperparameter Search Space of EARLIEST with $\la=10^{-1}$ in Figure \ref{fig: SATCs}: HMDB51.} The best hyperparameter combination is highlighted in cyan.
    } \label{tab: Hparams of EARLIEST lam-1 on HMDB51 in fig SATC}
    \begin{center}
    \begin{small}
    \begin{sc}
    \begin{tabular}{ll}
        \toprule
        Learning rate & \{ $10^{-1}$, $10^{-2}$, $10^{-3}$, \textcolor{cyan}{$10^{-4}$}, {$10^{-5}$} \} \\
        Weight decay & \{ \textcolor{cyan}{$10^{-3}$}, {$10^{-4}$}, $10^{-5}$ \} \\
        Optimizer & \{ \textcolor{cyan}{Adam}, {RMSprop} \} \\
        \midrule
        \# tuning trials & 100 \\
        \bottomrule
    \end{tabular}
    \end{sc}
    \end{small}
    \end{center}
\end{table}

\begin{table}[htb]
    \caption{
        \textbf{Hyperparameter Search Space of EARLIEST with $\la=10^{-10}$ in Figure \ref{fig: SATCs}: HMDB51.} The best hyperparameter combination is highlighted in cyan.
    } \label{tab: Hparams of EARLIEST lam-10 on HMDB51 in fig SATC}
    \begin{center}
    \begin{small}
    \begin{sc}
    \begin{tabular}{ll}
        \toprule
        Learning rate & \{ $10^{-1}$, $10^{-2}$, \textcolor{cyan}{$10^{-3}$}, $10^{-4}$, $10^{-5}$ \} \\
        Weight decay & \{ \textcolor{cyan}{$10^{-3}$}, {$10^{-4}$}, $10^{-5}$ \} \\
        Optimizer & \{ Adam, \textcolor{cyan}{RMSprop} \} \\
        \midrule
        \# tuning trials & 100 \\
        \bottomrule
    \end{tabular}
    \end{sc}
    \end{small}
    \end{center}
\end{table}

\subsection{Figure \ref{fig: MSPRT}: LLR Trajectories} \label{app: Figure fig: LLR trajectories of NMNIST-100f}
We plot ten different $i$'s randomly selected from the validation set of NMNIST-100f. 
The base temporal integrator is selected from the models used for NMNIST-100f in Figure \ref{fig: SATCs}. 
More example trajectories are shown in Figure \ref{fig: LLR trajectories of all} in Appendix \ref{app: LLR trajectories}.

\subsection{Figure \ref{fig: Ablation study Mult vs LSEL vs Combo}: Ablation Study of LSEL and Multiplet Loss}
The training procedure is totally similar to that of Figure \ref{fig: SATCs}. Tables \ref{tab: Hparams of TANDEM Mult in fig ablation} and \ref{tab: Hparams of TANDEM LSEL in fig ablation} show the search spaces of hyperparameters. ``TANDEM LSEL+Mult'' is the same model as ``MSPRT-TANDEM'' in Figure \ref{fig: SATCs} (NMNIST-100f).

\begin{table}[htb]
    \caption{
        \textbf{Hyperparameter Search Space of ``TANDEM Mult'' in Figure \ref{fig: Ablation study Mult vs LSEL vs Combo}.} The best hyperparameter combination is highlighted in cyan.
    } \label{tab: Hparams of TANDEM Mult in fig ablation}
    \begin{center}
    \begin{small}
    \begin{sc}
    \begin{tabular}{ll}
        \toprule
        Order  & \{ 0, 25, \textcolor{cyan}{50}, 75, 99 \} \\
        Learning rate & \{ \textcolor{cyan}{$10^{-2}$}, $10^{-3}$, $10^{-4}$ \} \\
        Weight decay & \{ $10^{-3}$, $10^{-4}$, \textcolor{cyan}{$10^{-5}$} \} \\
        $\ga$ & N/A \\
        Optimizer & \{ \textcolor{cyan}{Adam}, RMSprop \} \\
        \midrule
        \# tuning trials & 200 \\
        \bottomrule
    \end{tabular}
    \end{sc}
    \end{small}
    \end{center}
\end{table}

\begin{table}[htb]
    \caption{
        \textbf{Hyperparameter Search Space of ``TANDEM LSEL'' in Figure \ref{fig: Ablation study Mult vs LSEL vs Combo}.} The best hyperparameter combination is highlighted in cyan.
    } \label{tab: Hparams of TANDEM LSEL in fig ablation}
    \begin{center}
    \begin{small}
    \begin{sc}
    \begin{tabular}{ll}
        \toprule
        Order  & \{ 0, 25, 50, 75, \textcolor{cyan}{99} \} \\
        Learning rate & \{ \textcolor{cyan}{$10^{-2}$}, $10^{-3}$, $10^{-4}$ \} \\
        Weight decay & \{ $10^{-3}$, \textcolor{cyan}{$10^{-4}$}, $10^{-5}$ \} \\
        $\ga$ & \{ \textcolor{cyan}{$10^{-1}$}, $1$, $10$, $10^{2}$, $10^{3}$\} \\
        Optimizer & \{ \textcolor{cyan}{Adam}, RMSprop \} \\
        \midrule
        \# tuning trials & 200 \\
        \bottomrule
    \end{tabular}
    \end{sc}
    \end{small}
    \end{center}
\end{table}

\clearpage
\subsection{Figure \ref{fig: DRE Losses vs LSEL} Left: LSEL v.s. Conventional DRE Losses} \label{app: Figure fig: DRE Losses vs LSEL}
We define the loss functions used in Figure \ref{fig: DRE Losses vs LSEL}.
Conventional DRE losses are often biased (LLLR), unbounded (LSIF and DSKL), or numerically unstable (LSIF, LSIFwC, and BARR), especially when applied to multiclass classification, leading to suboptimal performances (Figure \ref{fig: DRE Losses vs LSEL}).)
Because conventional DRE losses are restricted to binary DRE, we modify two LSIF-based and three KLIEP-based loss functions for DRME. The logistic loss we use is introduced in Appendix \ref{app: Modified LSEL and Logistic Loss}.

The original LSIF \citeApp{kanamori2009least_LSIF_original} is based on a kernel method and estimates density ratios via minimizing the mean squared error between $p$ and $\hat{r} q\,$ ( $\hat{r} := \hat{p}/\hat{q}$ ). We define a variant of LSIF for DRME as
\begin{align}
    \hat{L}_{\rm LSIF} := \sum_{t \in [T]} \sum_{\substack{k,l \in [K] \\ (k \neq l)}} \lt[ 
        \f{1}{M_l} \sum_{i \in I_l} \hat{\La}^{2}_{kl} (X_i^{(1,t)})
        - \f{1}{M_k} \sum_{i \in I_k} \hat{\La}_{kl} (X_i^{(1,t)})
    \rt] \, ,
\end{align}
where the likelihood ratio is denoted by $\hat{\La} (X) := e^{\hat{\la} (X)} = \hat{p}(X|k) / \hat{p} (X|l)$. 
Because of the $k,l$-summation, $\hat{L}_{\rm LSIF}$ is symmetric with respect to the denominator and numerator, unlike the original LSIF. Therefore, we can expect that $\hat{L}_{\rm LSIF}$ is more stable than the original one. However, $\hat{L}_{\rm LSIF}$ inherits the problems of the original LSIF; it is unbounded and numerically unstable. The latter is due to dealing with $\hat{\La}$ directly, which easily explodes when the denominator is small. This instability is not negligible, especially when LSIF is used with DNNs. The following LSIF with Constraint (LSIFwC)\footnote{Do not confuse this with cLSIF \citeApp{kanamori2009least_LSIF_original}.} avoids the explosion by adding a constraint:
\begin{align}
    \hat{L}_{\rm LSIFwC} := \sum_{t \in [T]} \sum_{\substack{k,l \in [K] \\ (k \neq l)}} \bigg[& 
        \f{1}{M_l} \sum_{i \in I_l} \hat{\La}^{2}_{kl} (X_i^{(1,t)})
        - \f{1}{M_k} \sum_{i \in I_k} \hat{\La}_{kl} (X_i^{(1,t)}) \nn
        &+ \ga \bigg| 
           \f{1}{M_l} \sum_{i \in I_l} \hat{\La}_{kl} (X_i^{(1,t)}) - 1 
        \bigg|
    \bigg] \, ,
\end{align}
where $\ga > 0$ is a hyperparameter. $\hat{L}_{LSIFwC}$ is symmetric and bounded for sufficiently large $\ga$. However, it is still numerically unstable due to $\hat{\La}$. Note that the constraint term is equivalent to the probability normalization $\int  dx \, p(x|l) \, ( \hat{p}(x|k) / \hat{p}(x|l) )  = 1$.

DSKL \citeApp{khan2019deep_DSKL_BARR_original}, BARR \citeApp{khan2019deep_DSKL_BARR_original}, and LLLR \citeApp{SPRT-TANDEM} are based on KLIEP \citeApp{sugiyama2008direct_KLIEP_original}, which estimates density ratios via minimizing the Kullback-Leibler divergence \citeApp{KLD} between $p$ and $\hat{r} q\,$ ( $\hat{r} := \hat{p}/\hat{q}$ ). We define a variant of DSKL for DRME as
\begin{align}
    \hat{L}_{\rm DSKL} := \sum_{t \in [T]} \sum_{\substack{k,l \in [K] \\ (k \neq l)}} \lt[
        \f{1}{M_l} \sum_{i \in I_l} \hat{\la}_{kl} (X_i^{(1,t)})
        - \f{1}{M_k} \sum_{i \in I_k} \hat{\la}_{kl} (X_i^{(1,t)})
    \rt]
\end{align}
The original DSKL is symmetric, and the same is true for $\hat{L}_{\rm BARR}$, while the original KLIEP is not. $\hat{L}_{\rm BARR}$ is relatively stable compared with $\hat{L}_{\rm LSIF}$ and $\hat{L}_{\rm LSIFwC}$ because it does not include $\hat{\La}$ but $\hat{\la}$; still, $\hat{L}_{\rm DSKL}$ is unbounded and can diverge. BARR for DRME is 
\begin{align}
    \hat{L}_{\rm BARR} := \sum_{t \in [T]} \sum_{\substack{k,l \in [K] \\ (k \neq l)}} \lt[ 
        - \f{1}{M_k} \sum_{i \in I_k} \hat{\la}_{kl} (X_i^{(1,t)}) + \ga \lt|
            \f{1}{M_l} \sum_{i \in I_l} \hat{\La}_{kl} (X_i^{(1,t)}) - 1
        \rt| 
    \rt] \, ,
\end{align}
where $\ga > 0$ is a hyperparameter. $\hat{L}_{\rm BARR}$ is symmetric and bounded because of the second term but is numerically unstable because of $\hat{\La}$. LLLR for DRME is
\begin{align}
    \hat{L}_{\rm LLLR} := \sum_{t\in [T]} \sum_{\substack{k,l \in [K] \\ (k \neq l)}} 
    \f{1}{M_k + M_l} \lt[
        \sum_{i \in I_l} \si ( \hat{\la}_{kl} (X_i^{(1,t)}) ) 
        + \sum_{i \in I_k} ( 1 - \si ( \hat{\la}_{kl} (X_i^{(1,t)}) ) ) 
    \rt] \, ,
\end{align}
where $\si$ is the sigmoid function. LLLR is symmetric, bounded, and numerically stable but is biased in the sense that it does not necessarily converge to the optimal solution $\la$; i.e., the probability normalization constraint, which is explicitly included in the original KLIEP, cannot be exactly satisfied. Finally, the logistic loss is defined as (\ref{eq: logistic loss with empirical approximation}).

All the models share the same feature extractor (Appendix \ref{app: Common Feature Extractor}) and temporal integrator (Appendix \ref{app: Figure fig: SATCs: NMNIST-H}) without the M-TANDEM(wO) approximation or multiplet loss. The search spaces of hyperparameters are given in Tables \ref{tab: Hparams of LSIF in fig DREvsLSEL}--\ref{tab: Hparams of the LSEL in fig DREvsLSEL}. The other setting follows Appendix \ref{app: Figure fig: SATCs: NMNIST-H}.

\begin{table}[htb]
    \caption{
        \textbf{Hyperparameter Search Space of LSIF in Figure \ref{fig: DRE Losses vs LSEL} Left.} The best hyperparameter combination is highlighted in cyan.
    } \label{tab: Hparams of LSIF in fig DREvsLSEL}
    \begin{center}
    \begin{small}
    \begin{sc}
    \begin{tabular}{ll}
        \toprule
        Learning rate & \{ $10^{-3}$, \textcolor{cyan}{$10^{-4}$}, $10^{-5}$, $10^{-6}$ \} \\
        Weight decay & \{ \textcolor{cyan}{$10^{-2}$}, {$10^{-3}$}, {$10^{-4}$}, $10^{-5}$ \} \\
        Optimizer & \{ \textcolor{cyan}{Adam}, RMSprop, Momentum \} \\
        \midrule
        \# tuning trials & 150 \\
        \bottomrule
    \end{tabular}
    \end{sc}
    \end{small}
    \end{center}
\end{table}

\begin{table}[htb]
    \caption{
        \textbf{Hyperparameter Search Space of LSIFwC in Figure \ref{fig: DRE Losses vs LSEL} Left.} The best hyperparameter combination is highlighted in cyan.
    } \label{tab: Hparams of LSIFwC in fig DREvsLSEL}
    \begin{center}
    \begin{small}
    \begin{sc}
    \begin{tabular}{ll}
        \toprule
        Learning rate & \{ $10^{-3}$, \textcolor{cyan}{$10^{-4}$}, $10^{-5}$, $10^{-6}$ \} \\
        Weight decay & \{ \textcolor{cyan}{$10^{-2}$}, {$10^{-3}$}, {$10^{-4}$}, $10^{-5}$ \} \\
        $\ga$ & \{ \textcolor{cyan}{$10^{-4}$}, $10^{-3}$, $10^{-2}$,  {$1$}, $10$ \} \\
        Optimizer & \{ \textcolor{cyan}{Adam}, RMSprop, Momentum \} \\
        \midrule
        \# tuning trials & 150 \\
        \bottomrule
    \end{tabular}
    \end{sc}
    \end{small}
    \end{center}
\end{table}

\begin{table}[htb]
    \caption{
        \textbf{Hyperparameter Search Space of DSKL in Figure \ref{fig: DRE Losses vs LSEL} Left.} The best hyperparameter combination is highlighted in cyan.
    } \label{tab: Hparams of DSKL in fig DREvsLSEL}
    \begin{center}
    \begin{small}
    \begin{sc}
    \begin{tabular}{ll}
        \toprule
        Learning rate & \{ \textcolor{cyan}{$10^{-3}$}, {$10^{-4}$}, $10^{-5}$, $10^{-6}$ \} \\
        Weight decay & \{ {$10^{-2}$}, {$10^{-3}$}, \textcolor{cyan}{$10^{-4}$}, $10^{-5}$ \} \\
        Optimizer & \{ {Adam}, \textcolor{cyan}{RMSprop}, Momentum \} \\
        \midrule
        \# tuning trials & 150 \\
        \bottomrule
    \end{tabular}
    \end{sc}
    \end{small}
    \end{center}
\end{table}

\begin{table}[htb]
    \caption{
        \textbf{Hyperparameter Search Space of BARR in Figure \ref{fig: DRE Losses vs LSEL} Left.} The best hyperparameter combination is highlighted in cyan.
    } \label{tab: Hparams of BARR in fig DREvsLSEL}
    \begin{center}
    \begin{small}
    \begin{sc}
    \begin{tabular}{ll}
        \toprule
        Learning rate & \{ $10^{-3}$, \textcolor{cyan}{$10^{-4}$}, $10^{-5}$, $10^{-6}$ \} \\
        Weight decay & \{ {$10^{-2}$}, {$10^{-3}$}, \textcolor{cyan}{$10^{-4}$}, $10^{-5}$ \} \\
        $\ga$ & \{ $10^{-4}$, \textcolor{cyan}{$10^{-3}$}, $10^{-2}$,  {$1$}, $10$ \} \\
        Optimizer & \{ \textcolor{cyan}{Adam}, RMSprop, Momentum \} \\
        \midrule
        \# tuning trials & 150 \\
        \bottomrule
    \end{tabular}
    \end{sc}
    \end{small}
    \end{center}
\end{table}

\begin{table}[htb]
    \caption{
        \textbf{Hyperparameter Search Space of LLLR in Figure \ref{fig: DRE Losses vs LSEL} Left.} The best hyperparameter combination is highlighted in cyan.
    } \label{tab: Hparams of LLLR in fig DREvsLSEL}
    \begin{center}
    \begin{small}
    \begin{sc}
    \begin{tabular}{ll}
        \toprule
        Learning rate & \{ \textcolor{cyan}{$10^{-3}$}, {$10^{-4}$}, $10^{-5}$, $10^{-6}$ \} \\
        Weight decay & \{ {$10^{-2}$}, {$10^{-3}$}, \textcolor{cyan}{$10^{-4}$}, $10^{-5}$ \} \\
        Optimizer & \{ \textcolor{cyan}{Adam}, RMSprop, Momentum \} \\
        \midrule
        \# tuning trials & 150 \\
        \bottomrule
    \end{tabular}
    \end{sc}
    \end{small}
    \end{center}
\end{table}

\begin{table}[htb]
    \caption{
        \textbf{Hyperparameter Search Space of Logistic Loss in Figure \ref{fig: DRE Losses vs LSEL} Left.} The best hyperparameter combination is highlighted in cyan.
    } \label{tab: Hparams of the logistic loss in fig DREvsLSEL}
    \begin{center}
    \begin{small}
    \begin{sc}
    \begin{tabular}{ll}
        \toprule
        Learning rate & \{ $10^{-3}$, \textcolor{cyan}{$10^{-4}$}, $10^{-5}$, $10^{-6}$ \} \\
        Weight decay & \{ {$10^{-2}$}, {$10^{-3}$}, {$10^{-4}$}, \textcolor{cyan}{$10^{-5}$} \} \\
        Optimizer & \{ {Adam}, \textcolor{cyan}{RMSprop}, Momentum \} \\
        \midrule
        \# tuning trials & 150 \\
        \bottomrule
    \end{tabular}
    \end{sc}
    \end{small}
    \end{center}
\end{table}

\begin{table}[htb]
    \caption{
        \textbf{Hyperparameter Search Space of LSEL in Figure \ref{fig: DRE Losses vs LSEL} Left.} The best hyperparameter combination is highlighted in cyan.
    } \label{tab: Hparams of the LSEL in fig DREvsLSEL}
    \begin{center}
    \begin{small}
    \begin{sc}
    \begin{tabular}{ll}
        \toprule
        Learning rate & \{ $10^{-3}$, \textcolor{cyan}{$10^{-4}$}, $10^{-5}$, $10^{-6}$ \} \\
        Weight decay & \{ {$10^{-2}$}, {$10^{-3}$}, \textcolor{cyan}{$10^{-4}$}, $10^{-5}$ \} \\
        Optimizer & \{ {Adam}, \textcolor{cyan}{RMSprop}, Momentum \} \\
        \midrule
        \# tuning trials & 150 \\
        \bottomrule
    \end{tabular}
    \end{sc}
    \end{small}
    \end{center}
\end{table}

\subsection{Figure \ref{fig: DRE Losses vs LSEL} Right: LSEL v.s. Logistic Loss}
The experimental condition follows that of Appendix \ref{app: Figure fig: SATCs NMNIST-100f}. The logistic loss is defined as (\ref{eq: logistic loss with empirical approximation}). The hyperparameters are given in Tables \ref{tab: Hparams of Logistic in fig DRE Losses vs LSEL} and \ref{tab: Hparams of LSEL in fig DRE Losses vs LSEL}.

\begin{table}[htb]
    \caption{
        \textbf{Hyperparameter Search Space of Logistic Loss in Figure \ref{fig: DRE Losses vs LSEL} Right.} The best hyperparameter combination is highlighted in cyan. 
    } \label{tab: Hparams of Logistic in fig DRE Losses vs LSEL}
    \begin{center}
    \begin{small}
    \begin{sc}
    \begin{tabular}{ll}
        \toprule
        Order  & \{ 0, {25}, 50, 75, \textcolor{cyan}{99} \} \\
        Learning rate & \{ \textcolor{cyan}{$10^{-2}$}, $10^{-3}$, $10^{-4}$ \} \\
        Weight decay & \{ $10^{-3}$, $10^{-4}$, \textcolor{cyan}{$10^{-5}$} \} \\
        $\ga$ & \{ {$10^{-1}$}, $1$, $10$, \textcolor{cyan}{$10^{2}$}, $10^{3}$\} \\
        Optimizer & \{ \textcolor{cyan}{Adam}, RMSprop \} \\
        \midrule
        \# tuning trials & 300 \\
        \bottomrule
    \end{tabular}
    \end{sc}
    \end{small}
    \end{center}
\end{table}

\begin{table}[htb]
    \caption{
        \textbf{Hyperparameter Search Space of LSEL in Figure \ref{fig: DRE Losses vs LSEL} Right.} The best hyperparameter combination is highlighted in cyan. 
    } \label{tab: Hparams of LSEL in fig DRE Losses vs LSEL}
    \begin{center}
    \begin{small}
    \begin{sc}
    \begin{tabular}{ll}
        \toprule
        Order  & \{ 0, {25}, 50, 75, \textcolor{cyan}{99} \} \\
        Learning rate & \{ \textcolor{cyan}{$10^{-2}$}, $10^{-3}$, $10^{-4}$ \} \\
        Weight decay & \{ $10^{-3}$, \textcolor{cyan}{$10^{-4}$}, {$10^{-5}$} \} \\
        $\ga$ & \{ \textcolor{cyan}{$10^{-1}$}, $1$, $10$, $10^{2}$, $10^{3}$\} \\
        Optimizer & \{ \textcolor{cyan}{Adam}, RMSprop \} \\
        \midrule
        \# tuning trials & 300 \\
        \bottomrule
    \end{tabular}
    \end{sc}
    \end{small}
    \end{center}
\end{table}

\clearpage
\subsection{Exact Error Rates in Figures \ref{fig: SATCs} and \ref{fig: Ablation study Mult vs LSEL vs Combo}}
Tables \ref{tab: DRE Losses vs LSEL (Left: NMNIST-H)}--\ref{tab: Values of ablation on NMNSIT-100f 51--100} show the averaged per-class error rates plotted in the figures in Section \ref{sec:experiment}.

\begin{table*}[h]
    \caption{
        \textbf{Averaged Per-Class Error Rates (\%) and SEM of Figure \ref{fig: DRE Losses vs LSEL} (Left: NMNIST-H).} Blanks mean N/A.
    } \label{tab: DRE Losses vs LSEL (Left: NMNIST-H)}
    \begin{minipage}[b]{1.0\linewidth}
        \begin{center}
        \begin{small}
        \begin{sc}
        \begin{tabular}{lllll}
            \\
            \toprule
            Time & LSIF & LSIFwC & DSKL & BARR \\
            \midrule
            1.00 & 83.822 $\pm$ 2.142 & 81.585 $\pm$ 2.908 &  & 66.579 $\pm$ 0.108  \\
            2.00 & 73.330 $\pm$ 2.142 & 67.749 $\pm$ 2.908 & 45.145 $\pm$ 0.170 & 45.902 $\pm$ 0.108  \\
            3.00 & 70.492 $\pm$ 2.142 & 63.237 $\pm$ 2.908 & 30.719 $\pm$ 0.170 & 30.698 $\pm$ 0.108  \\
            4.00 & 68.258 $\pm$ 2.142 & 60.061 $\pm$ 2.908 & 20.409 $\pm$ 0.170 & 20.172 $\pm$ 0.108  \\
            5.00 & 66.485 $\pm$ 2.142 & 57.860 $\pm$ 2.908 & 18.102 $\pm$ 0.170 & 12.990 $\pm$ 0.108  \\
            6.00 & 64.975 $\pm$ 2.142 & 55.939 $\pm$ 2.908 & 17.899 $\pm$ 0.170 & 8.374 $\pm$ 0.108  \\
            7.00 & 63.549 $\pm$ 2.142 & 54.317 $\pm$ 2.908 & 14.442 $\pm$ 0.170 & 5.529 $\pm$ 0.108  \\
            8.00 & 62.137 $\pm$ 2.142 & 52.635 $\pm$ 2.908 & 12.942 $\pm$ 0.170 & 3.993 $\pm$ 0.108  \\
            9.00 & 60.551 $\pm$ 2.142 & 50.886 $\pm$ 2.908 & 11.681 $\pm$ 0.170 & 3.202 $\pm$ 0.108  \\
            10.00 & 58.657 $\pm$ 2.142 & 48.947 $\pm$ 2.908 & 10.368 $\pm$ 0.170 & 2.787 $\pm$ 0.108  \\
            11.00 & 56.716 $\pm$ 2.142 & 46.501 $\pm$ 2.908 & 9.858 $\pm$ 0.170 & 2.563 $\pm$ 0.108  \\
            12.00 & 54.976 $\pm$ 2.142 & 44.114 $\pm$ 2.908 & 8.493 $\pm$ 0.170 & 2.457 $\pm$ 0.108  \\
            13.00 & 52.800 $\pm$ 2.142 & 41.901 $\pm$ 2.908 & 7.909 $\pm$ 0.170 & 2.404 $\pm$ 0.108  \\
            14.00 & 49.940 $\pm$ 2.142 & 40.073 $\pm$ 2.908 & 7.151 $\pm$ 0.170 & 2.376 $\pm$ 0.108  \\
            15.00 & 47.293 $\pm$ 2.142 & 38.990 $\pm$ 2.908 & 6.216 $\pm$ 0.170 & 2.359 $\pm$ 0.108  \\
            16.00 & 45.203 $\pm$ 2.142 & 38.013 $\pm$ 2.908 & 5.821 $\pm$ 0.170 & 2.352 $\pm$ 0.108  \\
            17.00 & 42.664 $\pm$ 2.142 & 36.799 $\pm$ 2.908 & 4.632 $\pm$ 0.170 & 2.348 $\pm$ 0.108  \\
            18.00 & 38.406 $\pm$ 2.142 & 34.540 $\pm$ 2.908 & 3.971 $\pm$ 0.170 & 2.345 $\pm$ 0.108  \\
            19.00 & 32.666 $\pm$ 2.142 & 30.737 $\pm$ 2.908 & 3.484 $\pm$ 0.170 & 2.343 $\pm$ 0.108  \\
            20.00 & 30.511 $\pm$ 2.142 & 29.880 $\pm$ 2.908 & 2.088 $\pm$ 0.170 & 2.343 $\pm$ 0.108  \\
            \midrule
            Trials & 40 & 40 & 60 & 40 \\
            \bottomrule
            \\
        \end{tabular}
        \end{sc}
        \end{small}
        \end{center}
    \end{minipage}
        \begin{minipage}[b]{1.0\linewidth}
        \begin{center}
        \begin{small}
        \begin{sc}
        \begin{tabular}{llll}
            \toprule
            Time & LLLR & Logistic & LSEL \\
            \midrule
            1.00 & 63.930 $\pm$ 0.026 &  & 63.956 $\pm$ 0.020 \\
            2.00 & 44.250 $\pm$ 0.026 & 43.873 $\pm$ 0.097 & 43.918 $\pm$ 0.020 \\
            3.00 & 30.693 $\pm$ 0.026 & 29.058 $\pm$ 0.097 & 28.957 $\pm$ 0.020 \\
            4.00 & 21.370 $\pm$ 0.026 & 18.973 $\pm$ 0.097 & 18.865 $\pm$ 0.020 \\
            5.00 & 14.834 $\pm$ 0.026 & 11.992 $\pm$ 0.097 & 11.910 $\pm$ 0.020 \\
            6.00 & 10.386 $\pm$ 0.026 & 7.522 $\pm$ 0.097 & 7.444 $\pm$ 0.020 \\
            7.00 & 7.516 $\pm$ 0.026 & 4.970 $\pm$ 0.097 & 4.877 $\pm$ 0.020 \\
            8.00 & 5.693 $\pm$ 0.026 & 3.485 $\pm$ 0.097 & 3.403 $\pm$ 0.020 \\
            9.00 & 4.563 $\pm$ 0.026 & 2.702 $\pm$ 0.097 & 2.647 $\pm$ 0.020 \\
            10.00 & 3.818 $\pm$ 0.026 & 2.326 $\pm$ 0.097 & 2.288 $\pm$ 0.020 \\
            11.00 & 3.316 $\pm$ 0.026 & 2.133 $\pm$ 0.097 & 2.103 $\pm$ 0.020 \\
            12.00 & 2.961 $\pm$ 0.026 & 2.030 $\pm$ 0.097 & 2.010 $\pm$ 0.020 \\
            13.00 & 2.705 $\pm$ 0.026 & 1.985 $\pm$ 0.097 & 1.977 $\pm$ 0.020 \\
            14.00 & 2.516 $\pm$ 0.026 & 1.964 $\pm$ 0.097 & 1.956 $\pm$ 0.020 \\
            15.00 & 2.365 $\pm$ 0.026 & 1.951 $\pm$ 0.097 & 1.941 $\pm$ 0.020 \\
            16.00 & 2.254 $\pm$ 0.026 & 1.943 $\pm$ 0.097 & 1.934 $\pm$ 0.020 \\
            17.00 & 2.163 $\pm$ 0.026 & 1.938 $\pm$ 0.097 & 1.932 $\pm$ 0.020 \\
            18.00 & 2.094 $\pm$ 0.026 & 1.937 $\pm$ 0.097 & 1.932 $\pm$ 0.020 \\
            19.00 & 2.036 $\pm$ 0.026 & 1.937 $\pm$ 0.097 & 1.932 $\pm$ 0.020 \\
            20.00 & 1.968 $\pm$ 0.026 & 1.937 $\pm$ 0.097 & 1.932 $\pm$ 0.020 \\
            \midrule
            Trials & 80 & 80 & 80 \\
            \bottomrule
        \end{tabular}
        \end{sc}
        \end{small}
        \end{center}
    \end{minipage}
\end{table*}

\begin{table*}[h]
    \caption{
        \textbf{Averaged Per-Class Error Rates (\%) and SEM of Figure \ref{fig: DRE Losses vs LSEL} (Right: NMNIST-100f). Frames 1--50.} Blanks mean N/A.
    } \label{tab: DRE Losses vs LSEL (Right: NMNIST-100f, 1--50)}
    \begin{center}
    \begin{small}
    \begin{sc}
    \begin{tabular}{lll}
        \toprule
        Time & Logistic (with M-TANDEM) & LSEL (with M-TANDEM) \\
        \midrule
        1.00 & 56.273 $\pm$ 0.045 & 56.196 $\pm$ 0.044 \\
        2.00 & 37.772 $\pm$ 0.045 & 37.492 $\pm$ 0.044 \\
        3.00 & 27.922 $\pm$ 0.045 & 27.520 $\pm$ 0.044 \\
        4.00 & 22.282 $\pm$ 0.045 & 21.889 $\pm$ 0.044 \\
        5.00 & 18.725 $\pm$ 0.045 & 18.369 $\pm$ 0.044 \\
        6.00 & 16.251 $\pm$ 0.045 & 15.901 $\pm$ 0.044 \\
        7.00 & 14.383 $\pm$ 0.045 & 14.035 $\pm$ 0.044 \\
        8.00 & 12.898 $\pm$ 0.045 & 12.590 $\pm$ 0.044 \\
        9.00 & 11.730 $\pm$ 0.045 & 11.421 $\pm$ 0.044 \\
        10.00 & 10.758 $\pm$ 0.045 & 10.468 $\pm$ 0.044 \\
        11.00 & 9.943 $\pm$ 0.045 & 9.659 $\pm$ 0.044 \\
        12.00 & 9.245 $\pm$ 0.045 & 8.992 $\pm$ 0.044 \\
        13.00 & 8.663 $\pm$ 0.045 & 8.410 $\pm$ 0.044 \\
        14.00 & 8.158 $\pm$ 0.045 & 7.907 $\pm$ 0.044 \\
        15.00 & 7.721 $\pm$ 0.045 & 7.472 $\pm$ 0.044 \\
        16.00 & 7.341 $\pm$ 0.045 & 7.096 $\pm$ 0.044 \\
        17.00 & 7.006 $\pm$ 0.045 & 6.776 $\pm$ 0.044 \\
        18.00 & 6.717 $\pm$ 0.045 & 6.498 $\pm$ 0.044 \\
        19.00 & 6.476 $\pm$ 0.045 & 6.257 $\pm$ 0.044 \\
        20.00 & 6.258 $\pm$ 0.045 & 6.041 $\pm$ 0.044 \\
        21.00 & 6.069 $\pm$ 0.045 & 5.851 $\pm$ 0.044 \\
        22.00 & 5.900 $\pm$ 0.045 & 5.686 $\pm$ 0.044 \\
        23.00 & 5.748 $\pm$ 0.045 & 5.545 $\pm$ 0.044 \\
        24.00 & 5.615 $\pm$ 0.045 & 5.423 $\pm$ 0.044 \\
        25.00 & 5.502 $\pm$ 0.045 & 5.317 $\pm$ 0.044 \\
        26.00 & 5.406 $\pm$ 0.045 & 5.221 $\pm$ 0.044 \\
        27.00 & 5.318 $\pm$ 0.045 & 5.136 $\pm$ 0.044 \\
        28.00 & 5.246 $\pm$ 0.045 & 5.064 $\pm$ 0.044 \\
        29.00 & 5.175 $\pm$ 0.045 & 5.001 $\pm$ 0.044 \\
        30.00 & 5.114 $\pm$ 0.045 & 4.947 $\pm$ 0.044 \\
        31.00 & 5.057 $\pm$ 0.045 & 4.898 $\pm$ 0.044 \\
        32.00 & 5.011 $\pm$ 0.045 & 4.854 $\pm$ 0.044 \\
        33.00 & 4.974 $\pm$ 0.045 & 4.816 $\pm$ 0.044 \\
        34.00 & 4.939 $\pm$ 0.045 & 4.784 $\pm$ 0.044 \\
        35.00 & 4.908 $\pm$ 0.045 & 4.757 $\pm$ 0.044 \\
        36.00 & 4.882 $\pm$ 0.045 & 4.733 $\pm$ 0.044 \\
        37.00 & 4.858 $\pm$ 0.045 & 4.713 $\pm$ 0.044 \\
        38.00 & 4.837 $\pm$ 0.045 & 4.694 $\pm$ 0.044 \\
        39.00 & 4.818 $\pm$ 0.045 & 4.676 $\pm$ 0.044 \\
        40.00 & 4.801 $\pm$ 0.045 & 4.661 $\pm$ 0.044 \\
        41.00 & 4.786 $\pm$ 0.045 & 4.649 $\pm$ 0.044 \\
        42.00 & 4.773 $\pm$ 0.045 & 4.639 $\pm$ 0.044 \\
        43.00 & 4.761 $\pm$ 0.045 & 4.630 $\pm$ 0.044 \\
        44.00 & 4.751 $\pm$ 0.045 & 4.623 $\pm$ 0.044 \\
        45.00 & 4.742 $\pm$ 0.045 & 4.615 $\pm$ 0.044 \\
        46.00 & 4.735 $\pm$ 0.045 & 4.608 $\pm$ 0.044 \\
        47.00 & 4.727 $\pm$ 0.045 & 4.604 $\pm$ 0.044 \\
        48.00 & 4.722 $\pm$ 0.045 & 4.598 $\pm$ 0.044 \\
        49.00 & 4.716 $\pm$ 0.045 & 4.594 $\pm$ 0.044 \\
        50.00 & 4.711 $\pm$ 0.045 & 4.590 $\pm$ 0.044 \\
        \midrule
        Trials & 150 & 150  \\
        \bottomrule
    \end{tabular}
    \end{sc}
    \end{small}
    \end{center}
\end{table*}

\begin{table*}[h]
    \caption{
        \textbf{Averaged Per-Class Error Rates (\%) and SEM of Figure \ref{fig: DRE Losses vs LSEL} (Right: NMNIST-100f). Frames 51--100.} Blanks mean N/A.
    } \label{tab: DRE Losses vs LSEL (Right: NMNIST-100f, 51--100)}
    \begin{center}
    \begin{small}
    \begin{sc}
    \begin{tabular}{lll}
        \toprule
        Time & Logistic (with M-TANDEM) & LSEL (with M-TANDEM) \\
        \midrule
        51.00 & 4.706 $\pm$ 0.045 & 4.587 $\pm$ 0.044 \\
        52.00 & 4.703 $\pm$ 0.045 & 4.586 $\pm$ 0.044 \\
        53.00 & 4.700 $\pm$ 0.045 & 4.584 $\pm$ 0.044 \\
        54.00 & 4.697 $\pm$ 0.045 & 4.580 $\pm$ 0.044 \\
        55.00 & 4.694 $\pm$ 0.045 & 4.577 $\pm$ 0.044 \\
        56.00 & 4.692 $\pm$ 0.045 & 4.576 $\pm$ 0.044 \\
        57.00 & 4.690 $\pm$ 0.045 & 4.575 $\pm$ 0.044 \\
        58.00 & 4.688 $\pm$ 0.045 & 4.573 $\pm$ 0.044 \\
        59.00 & 4.687 $\pm$ 0.045 & 4.572 $\pm$ 0.044 \\
        60.00 & 4.686 $\pm$ 0.045 & 4.572 $\pm$ 0.044 \\
        61.00 & 4.685 $\pm$ 0.045 & 4.571 $\pm$ 0.044 \\
        62.00 & 4.684 $\pm$ 0.045 & 4.570 $\pm$ 0.044 \\
        63.00 & 4.683 $\pm$ 0.045 & 4.569 $\pm$ 0.044 \\
        64.00 & 4.683 $\pm$ 0.045 & 4.569 $\pm$ 0.044 \\
        65.00 & 4.682 $\pm$ 0.045 & 4.569 $\pm$ 0.044 \\
        66.00 & 4.681 $\pm$ 0.045 & 4.568 $\pm$ 0.044 \\
        67.00 & 4.681 $\pm$ 0.045 & 4.568 $\pm$ 0.044 \\
        68.00 & 4.680 $\pm$ 0.045 & 4.567 $\pm$ 0.044 \\
        69.00 & 4.679 $\pm$ 0.045 & 4.567 $\pm$ 0.044 \\
        70.00 & 4.678 $\pm$ 0.045 & 4.567 $\pm$ 0.044 \\
        71.00 & 4.678 $\pm$ 0.045 & 4.566 $\pm$ 0.044 \\
        72.00 & 4.678 $\pm$ 0.045 & 4.566 $\pm$ 0.044 \\
        73.00 & 4.678 $\pm$ 0.045 & 4.565 $\pm$ 0.044 \\
        74.00 & 4.678 $\pm$ 0.045 & 4.565 $\pm$ 0.044 \\
        75.00 & 4.678 $\pm$ 0.045 & 4.565 $\pm$ 0.044 \\
        76.00 & 4.678 $\pm$ 0.045 & 4.565 $\pm$ 0.044 \\
        77.00 & 4.678 $\pm$ 0.045 & 4.565 $\pm$ 0.044 \\
        78.00 & 4.677 $\pm$ 0.045 & 4.565 $\pm$ 0.044 \\
        79.00 & 4.677 $\pm$ 0.045 & 4.564 $\pm$ 0.044 \\
        80.00 & 4.676 $\pm$ 0.045 & 4.565 $\pm$ 0.044 \\
        81.00 & 4.676 $\pm$ 0.045 & 4.564 $\pm$ 0.044 \\
        82.00 & 4.676 $\pm$ 0.045 & 4.564 $\pm$ 0.044 \\
        83.00 & 4.676 $\pm$ 0.045 & 4.564 $\pm$ 0.044 \\
        84.00 & 4.676 $\pm$ 0.045 & 4.564 $\pm$ 0.044 \\
        85.00 & 4.676 $\pm$ 0.045 & 4.564 $\pm$ 0.044 \\
        86.00 & 4.676 $\pm$ 0.045 & 4.564 $\pm$ 0.044 \\
        87.00 & 4.676 $\pm$ 0.045 & 4.564 $\pm$ 0.044 \\
        88.00 & 4.676 $\pm$ 0.045 & 4.564 $\pm$ 0.044 \\
        89.00 & 4.676 $\pm$ 0.045 & 4.564 $\pm$ 0.044 \\
        90.00 & 4.676 $\pm$ 0.045 & 4.564 $\pm$ 0.044 \\
        91.00 & 4.676 $\pm$ 0.045 & 4.564 $\pm$ 0.044 \\
        92.00 & 4.676 $\pm$ 0.045 & 4.564 $\pm$ 0.044 \\
        93.00 & 4.676 $\pm$ 0.045 & 4.564 $\pm$ 0.044 \\
        94.00 & 4.676 $\pm$ 0.045 & 4.564 $\pm$ 0.044 \\
        95.00 & 4.676 $\pm$ 0.045 & 4.564 $\pm$ 0.044 \\
        96.00 & 4.676 $\pm$ 0.045 & 4.564 $\pm$ 0.044 \\
        97.00 & 4.676 $\pm$ 0.045 & 4.564 $\pm$ 0.044 \\
        98.00 & 4.676 $\pm$ 0.045 & 4.564 $\pm$ 0.044 \\
        99.00 & 4.676 $\pm$ 0.045 & 4.564 $\pm$ 0.044 \\
        100.00 & 4.676 $\pm$ 0.045 & 4.564 $\pm$ 0.044 \\
        \midrule
        Trials & 150 & 150 \\
        \bottomrule
    \end{tabular}
    \end{sc}
    \end{small}
    \end{center}
\end{table*}

\begin{table*}[h]
    \caption{
        \textbf{Averaged Per-Class Error Rates (\%) and SEM of Figure \ref{fig: SATCs} (NMNIST-H).} Blanks mean N/A.
    } \label{tab: Values of fig SATCs: NMNIST-H}
    \begin{center}
    \begin{small}
    \begin{sc}
    \begin{tabular}{llllll}
        \toprule
        Time & MSPRT-TANDEM & LSTM-s & LSTM-m & EARLIEST $10^{-2}$ & EARLIEST $10^{2}$ \\
        \midrule
        1.00 & & 64.003 $\pm$ 0.010 & 64.009 $\pm$ 0.011 &  &  \\
        1.34 & & & & & 57.973 $\pm$ 0.066 \\
        2.00 & 44.172 $\pm$ 0.046 & 46.676 $\pm$ 0.008 & 46.697 $\pm$ 0.008 & & \\
        3.00 & 29.190 $\pm$ 0.046 & 35.517 $\pm$ 0.007 & 35.536 $\pm$ 0.008 & & \\
        4.00 & 18.964 $\pm$ 0.046 & 25.936 $\pm$ 0.007 & 25.940 $\pm$ 0.007 & & \\
        5.00 & 11.994 $\pm$ 0.046 & 20.510 $\pm$ 0.007 & 20.512 $\pm$ 0.006 & & \\
        6.00 & 7.449 $\pm$ 0.046 & 15.393 $\pm$ 0.007 & 15.404 $\pm$ 0.006 & & \\
        7.00 & 4.870 $\pm$ 0.046 & 12.603 $\pm$ 0.005 & 12.614 $\pm$ 0.005 & & \\
        8.00 & 3.383 $\pm$ 0.046 & 10.025 $\pm$ 0.005 & 10.018 $\pm$ 0.005 & & \\
        9.00 & 2.639 $\pm$ 0.046 & 8.036 $\pm$ 0.005 & 8.033 $\pm$ 0.005 & & \\
        10.00 & 2.281 $\pm$ 0.046 & 6.963 $\pm$ 0.005 & 6.958 $\pm$ 0.005 & & \\
        11.00 & 2.087 $\pm$ 0.046 & 5.788 $\pm$ 0.005 & 5.801 $\pm$ 0.005 & & \\
        12.00 & 1.996 $\pm$ 0.046 & 4.886 $\pm$ 0.003 & 4.892 $\pm$ 0.004 & & \\
        13.00 & 1.963 $\pm$ 0.046 & 4.398 $\pm$ 0.003 & 4.400 $\pm$ 0.003 & & \\
        13.41 & & & & 3.162 $\pm$ 0.027 & \\
        14.00 & 1.942 $\pm$ 0.046 & 3.735 $\pm$ 0.003 & 3.737 $\pm$ 0.003 & & \\
        15.00 & 1.926 $\pm$ 0.046 & 3.190 $\pm$ 0.003 & 3.198 $\pm$ 0.003 & & \\
        16.00 & 1.919 $\pm$ 0.046 & 2.841 $\pm$ 0.003 & 2.850 $\pm$ 0.003 & & \\
        17.00 & 1.917 $\pm$ 0.046 & 2.576 $\pm$ 0.003 & 2.576 $\pm$ 0.003 & & \\
        18.00 & 1.916 $\pm$ 0.046 & 2.376 $\pm$ 0.003 & 2.374 $\pm$ 0.003 & & \\
        19.00 & 1.916 $\pm$ 0.046 & 2.118 $\pm$ 0.003 & 2.119 $\pm$ 0.003 & & \\
        20.00 & 1.916 $\pm$ 0.046 & 1.923 $\pm$ 0.003 & 1.921 $\pm$ 0.003 & & \\
        \midrule
        Trials & 300 & 300 & 300 & 50 & 45 \\
        \bottomrule
    \end{tabular}
    \end{sc}
    \end{small}
    \end{center}
\end{table*}

\begin{table*}[h]
    \caption{
        \textbf{Averaged Per-Class Error Rates (\%) and SEM of Figure \ref{fig: SATCs} (NMNIST-100f). Frames 1--50.} Blanks mean N/A.
    } \label{tab: Values of fig SATCs: NMNIST-100f, 1--50}
    \begin{center}
    \begin{small}
    \begin{sc}
    \begin{tabular}{llllll}
        \toprule
        Time & MSPRT-TANDEM & LSTM-s & LSTM-m & EARLIEST Lam1e-2 & EARLIEST Lam1e-4 \\
        \midrule
        1.00 & & 54.677 $\pm$ 0.016 & 54.696 $\pm$ 0.016 &  &  \\
        2.00 & 35.942 $\pm$ 0.148 & 38.748 $\pm$ 0.014 & 38.677 $\pm$ 0.015 & & \\
        3.00 & 25.203 $\pm$ 0.148 & 29.348 $\pm$ 0.014 & 29.350 $\pm$ 0.014 & & \\
        4.00 & 19.167 $\pm$ 0.148 & 24.048 $\pm$ 0.014 & 23.981 $\pm$ 0.015 & & \\
        5.00 & 15.450 $\pm$ 0.148 & 20.490 $\pm$ 0.014 & 20.422 $\pm$ 0.014 & & \\
        6.00 & 12.997 $\pm$ 0.148 & 18.065 $\pm$ 0.014 & 17.959 $\pm$ 0.015 & & \\
        7.00 & 11.224 $\pm$ 0.148 & 15.715 $\pm$ 0.014 & 15.606 $\pm$ 0.015 & & \\
        8.00 & 9.912 $\pm$ 0.148 & 14.306 $\pm$ 0.014 & 14.215 $\pm$ 0.014 & & \\
        9.00 & 8.880 $\pm$ 0.148 & 13.124 $\pm$ 0.014 & 13.046 $\pm$ 0.013 & & \\
        10.00 & 8.068 $\pm$ 0.148 & 12.129 $\pm$ 0.012 & 12.038 $\pm$ 0.012 & & \\
        11.00 & 7.405 $\pm$ 0.148 & 11.386 $\pm$ 0.013 & 11.302 $\pm$ 0.013 & & \\
        12.00 & 6.862 $\pm$ 0.148 & 10.838 $\pm$ 0.012 & 10.767 $\pm$ 0.013 & & \\
        13.00 & 6.409 $\pm$ 0.148 & 10.121 $\pm$ 0.012 & 10.034 $\pm$ 0.012 & & \\
        14.00 & 6.032 $\pm$ 0.148 & 9.626 $\pm$ 0.013 & 9.555 $\pm$ 0.013 & & \\
        15.00 & 5.715 $\pm$ 0.148 & 9.102 $\pm$ 0.012 & 9.007 $\pm$ 0.012 & & \\
        16.00 & 5.444 $\pm$ 0.148 & 8.710 $\pm$ 0.013 & 8.625 $\pm$ 0.013 & & \\
        17.00 & 5.212 $\pm$ 0.148 & 8.268 $\pm$ 0.011 & 8.235 $\pm$ 0.012 & & \\
        18.00 & 5.025 $\pm$ 0.148 & 7.903 $\pm$ 0.012 & 7.852 $\pm$ 0.012 & & \\
        19.00 & 4.865 $\pm$ 0.148 & 7.613 $\pm$ 0.012 & 7.586 $\pm$ 0.011 & & \\
        19.46 & & & & 20.044 $\pm$ 0.229 & \\
        20.00 & 4.730 $\pm$ 0.148 & 7.366 $\pm$ 0.012 & 7.354 $\pm$ 0.011 & & \\
        21.00 & 4.623 $\pm$ 0.148 & 7.150 $\pm$ 0.012 & 7.150 $\pm$ 0.011 & & \\
        22.00 & 4.536 $\pm$ 0.148 & 6.998 $\pm$ 0.011 & 7.006 $\pm$ 0.011 & & \\
        23.00 & 4.464 $\pm$ 0.148 & 6.816 $\pm$ 0.011 & 6.821 $\pm$ 0.011 & & \\
        24.00 & 4.401 $\pm$ 0.148 & 6.674 $\pm$ 0.011 & 6.655 $\pm$ 0.011 & & \\
        25.00 & 4.353 $\pm$ 0.148 & 6.491 $\pm$ 0.012 & 6.502 $\pm$ 0.013 & & \\
        26.00 & 4.310 $\pm$ 0.148 & 6.351 $\pm$ 0.011 & 6.373 $\pm$ 0.012 & & \\
        27.00 & 4.275 $\pm$ 0.148 & 6.200 $\pm$ 0.011 & 6.200 $\pm$ 0.013 & & \\
        28.00 & 4.246 $\pm$ 0.148 & 6.061 $\pm$ 0.011 & 6.077 $\pm$ 0.013 & & \\
        29.00 & 4.222 $\pm$ 0.148 & 5.935 $\pm$ 0.011 & 5.948 $\pm$ 0.012 & & \\
        30.00 & 4.203 $\pm$ 0.148 & 5.876 $\pm$ 0.012 & 5.870 $\pm$ 0.011 & & \\
        31.00 & 4.186 $\pm$ 0.148 & 5.811 $\pm$ 0.011 & 5.823 $\pm$ 0.012 & & \\
        32.00 & 4.171 $\pm$ 0.148 & 5.658 $\pm$ 0.011 & 5.676 $\pm$ 0.012 & & \\
        33.00 & 4.160 $\pm$ 0.148 & 5.552 $\pm$ 0.011 & 5.566 $\pm$ 0.012 & & \\
        34.00 & 4.147 $\pm$ 0.148 & 5.456 $\pm$ 0.012 & 5.474 $\pm$ 0.012 & & \\
        35.00 & 4.139 $\pm$ 0.148 & 5.395 $\pm$ 0.011 & 5.412 $\pm$ 0.012 & & \\
        36.00 & 4.133 $\pm$ 0.148 & 5.307 $\pm$ 0.011 & 5.319 $\pm$ 0.012 & & \\
        37.00 & 4.126 $\pm$ 0.148 & 5.231 $\pm$ 0.011 & 5.249 $\pm$ 0.012 & & \\
        38.00 & 4.121 $\pm$ 0.148 & 5.183 $\pm$ 0.011 & 5.211 $\pm$ 0.012 & & \\
        39.00 & 4.118 $\pm$ 0.148 & 5.148 $\pm$ 0.011 & 5.182 $\pm$ 0.011 & & \\
        40.00 & 4.116 $\pm$ 0.148 & 5.121 $\pm$ 0.011 & 5.168 $\pm$ 0.012 & & \\
        41.00 & 4.113 $\pm$ 0.148 & 5.055 $\pm$ 0.011 & 5.098 $\pm$ 0.012 & & \\
        42.00 & 4.112 $\pm$ 0.148 & 5.057 $\pm$ 0.012 & 5.091 $\pm$ 0.013 & & \\
        43.00 & 4.110 $\pm$ 0.148 & 4.980 $\pm$ 0.012 & 5.017 $\pm$ 0.012 & & \\
        44.00 & 4.109 $\pm$ 0.148 & 4.940 $\pm$ 0.011 & 4.976 $\pm$ 0.013 & & \\
        45.00 & 4.109 $\pm$ 0.148 & 4.892 $\pm$ 0.012 & 4.921 $\pm$ 0.013 & & \\
        46.00 & 4.108 $\pm$ 0.148 & 4.825 $\pm$ 0.012 & 4.865 $\pm$ 0.013 & & \\
        47.00 & 4.108 $\pm$ 0.148 & 4.780 $\pm$ 0.012 & 4.795 $\pm$ 0.013 & & \\
        48.00 & 4.108 $\pm$ 0.148 & 4.730 $\pm$ 0.012 & 4.767 $\pm$ 0.013 & & \\
        49.00 & 4.107 $\pm$ 0.148 & 4.709 $\pm$ 0.012 & 4.742 $\pm$ 0.013 & & \\
        50.00 & 4.107 $\pm$ 0.148 & 4.690 $\pm$ 0.012 & 4.731 $\pm$ 0.013 & & \\
        \midrule
        Trials & 150 & 150  & 150 & 50 & 60  \\
        \bottomrule
    \end{tabular}
    \end{sc}
    \end{small}
    \end{center}
\end{table*}

\begin{table*}[h]
    \caption{
        \textbf{Averaged Per-Class Error Rates (\%) and SEM of Figure \ref{fig: SATCs} (NMNIST-100f). Frames 51--100.} Blanks mean N/A.
    } \label{tab: Values of fig SATCs:NMNIST-100f, 51--100}
    \begin{center}
    \begin{small}
    \begin{sc}
    \begin{tabular}{llllll}
        \toprule
        Time & MSPRT-TANDEM & LSTM-s & LSTM-m & EARLIEST $10^{-2}$ & EARLIEST $10^{-4}$ \\
        \midrule
        51.00 & 4.107 $\pm$ 0.148 & 4.620 $\pm$ 0.012 & 4.661 $\pm$ 0.014 & & \\
        52.00 & 4.107 $\pm$ 0.148 & 4.588 $\pm$ 0.012 & 4.637 $\pm$ 0.014 & & \\
        53.00 & 4.107 $\pm$ 0.148 & 4.595 $\pm$ 0.011 & 4.634 $\pm$ 0.013 & & \\
        54.00 & 4.106 $\pm$ 0.148 & 4.538 $\pm$ 0.011 & 4.580 $\pm$ 0.013 & & \\
        55.00 & 4.107 $\pm$ 0.148 & 4.526 $\pm$ 0.010 & 4.566 $\pm$ 0.013 & & \\
        56.00 & 4.107 $\pm$ 0.148 & 4.531 $\pm$ 0.012 & 4.555 $\pm$ 0.013 & & \\
        57.00 & 4.107 $\pm$ 0.148 & 4.470 $\pm$ 0.011 & 4.509 $\pm$ 0.012 & & \\
        58.00 & 4.107 $\pm$ 0.148 & 4.444 $\pm$ 0.011 & 4.489 $\pm$ 0.012 & & \\
        59.00 & 4.108 $\pm$ 0.148 & 4.449 $\pm$ 0.011 & 4.496 $\pm$ 0.012 & & \\
        60.00 & 4.108 $\pm$ 0.148 & 4.447 $\pm$ 0.011 & 4.494 $\pm$ 0.013 & & \\
        61.00 & 4.108 $\pm$ 0.148 & 4.459 $\pm$ 0.011 & 4.484 $\pm$ 0.012 & & \\
        62.00 & 4.108 $\pm$ 0.148 & 4.408 $\pm$ 0.011 & 4.447 $\pm$ 0.012 & & \\
        63.00 & 4.108 $\pm$ 0.148 & 4.391 $\pm$ 0.011 & 4.429 $\pm$ 0.013 & & \\
        64.00 & 4.108 $\pm$ 0.148 & 4.377 $\pm$ 0.011 & 4.412 $\pm$ 0.013 & & \\
        65.00 & 4.108 $\pm$ 0.148 & 4.370 $\pm$ 0.011 & 4.397 $\pm$ 0.013 & & \\
        66.00 & 4.109 $\pm$ 0.148 & 4.378 $\pm$ 0.011 & 4.405 $\pm$ 0.013 & & \\
        67.00 & 4.109 $\pm$ 0.148 & 4.394 $\pm$ 0.011 & 4.422 $\pm$ 0.012 & & \\
        68.00 & 4.109 $\pm$ 0.148 & 4.360 $\pm$ 0.011 & 4.386 $\pm$ 0.012 & & \\
        69.00 & 4.109 $\pm$ 0.148 & 4.307 $\pm$ 0.012 & 4.324 $\pm$ 0.012 & & \\
        70.00 & 4.109 $\pm$ 0.148 & 4.277 $\pm$ 0.011 & 4.315 $\pm$ 0.012 & & \\
        71.00 & 4.109 $\pm$ 0.148 & 4.268 $\pm$ 0.012 & 4.312 $\pm$ 0.012 & & \\
        72.00 & 4.109 $\pm$ 0.148 & 4.252 $\pm$ 0.012 & 4.286 $\pm$ 0.011 & & \\
        73.00 & 4.109 $\pm$ 0.148 & 4.247 $\pm$ 0.011 & 4.278 $\pm$ 0.012 & & \\
        74.00 & 4.109 $\pm$ 0.148 & 4.203 $\pm$ 0.011 & 4.221 $\pm$ 0.012 & & \\
        75.00 & 4.109 $\pm$ 0.148 & 4.207 $\pm$ 0.011 & 4.234 $\pm$ 0.012 & & \\
        76.00 & 4.109 $\pm$ 0.148 & 4.203 $\pm$ 0.011 & 4.217 $\pm$ 0.011 & & \\
        77.00 & 4.109 $\pm$ 0.148 & 4.171 $\pm$ 0.011 & 4.191 $\pm$ 0.012 & & \\
        78.00 & 4.109 $\pm$ 0.148 & 4.154 $\pm$ 0.011 & 4.182 $\pm$ 0.012 & & \\
        79.00 & 4.109 $\pm$ 0.148 & 4.155 $\pm$ 0.010 & 4.179 $\pm$ 0.012 & & \\
        80.00 & 4.109 $\pm$ 0.148 & 4.129 $\pm$ 0.010 & 4.144 $\pm$ 0.012 & & \\
        81.00 & 4.109 $\pm$ 0.148 & 4.134 $\pm$ 0.011 & 4.138 $\pm$ 0.012 & & \\
        82.00 & 4.109 $\pm$ 0.148 & 4.113 $\pm$ 0.011 & 4.126 $\pm$ 0.013 & & \\
        83.00 & 4.109 $\pm$ 0.148 & 4.117 $\pm$ 0.011 & 4.135 $\pm$ 0.013 & & \\
        84.00 & 4.109 $\pm$ 0.148 & 4.093 $\pm$ 0.011 & 4.105 $\pm$ 0.012 & & \\
        85.00 & 4.109 $\pm$ 0.148 & 4.096 $\pm$ 0.012 & 4.113 $\pm$ 0.012 & & \\
        86.00 & 4.109 $\pm$ 0.148 & 4.106 $\pm$ 0.012 & 4.112 $\pm$ 0.012 & & \\
        87.00 & 4.109 $\pm$ 0.148 & 4.099 $\pm$ 0.011 & 4.110 $\pm$ 0.013 & & \\
        88.00 & 4.109 $\pm$ 0.148 & 4.093 $\pm$ 0.011 & 4.101 $\pm$ 0.012 & & \\
        89.00 & 4.109 $\pm$ 0.148 & 4.072 $\pm$ 0.011 & 4.094 $\pm$ 0.013 & & \\
        90.00 & 4.109 $\pm$ 0.148 & 4.077 $\pm$ 0.011 & 4.095 $\pm$ 0.013 & & \\
        91.00 & 4.109 $\pm$ 0.148 & 4.063 $\pm$ 0.011 & 4.086 $\pm$ 0.013 & & \\
        92.00 & 4.109 $\pm$ 0.148 & 4.069 $\pm$ 0.011 & 4.083 $\pm$ 0.013 & & \\
        93.00 & 4.109 $\pm$ 0.148 & 4.073 $\pm$ 0.011 & 4.096 $\pm$ 0.013 & & \\
        94.00 & 4.109 $\pm$ 0.148 & 4.069 $\pm$ 0.011 & 4.105 $\pm$ 0.013 & & \\
        95.00 & 4.109 $\pm$ 0.148 & 4.062 $\pm$ 0.011 & 4.101 $\pm$ 0.013 & & \\
        96.00 & 4.109 $\pm$ 0.148 & 4.076 $\pm$ 0.012 & 4.109 $\pm$ 0.013 & & \\
        97.00 & 4.109 $\pm$ 0.148 & 4.056 $\pm$ 0.011 & 4.098 $\pm$ 0.012 & & \\
        98.00 & 4.109 $\pm$ 0.148 & 4.063 $\pm$ 0.011 & 4.104 $\pm$ 0.012 & & \\
        99.00 & 4.109 $\pm$ 0.148 & 4.066 $\pm$ 0.011 & 4.100 $\pm$ 0.012 & & \\
        99.99 & & & & & 4.586 $\pm$ 0.020 \\
        100.00 & 4.109 $\pm$ 0.148 & 4.057 $\pm$ 0.011 & 4.097 $\pm$ 0.012 & & \\
        \midrule
        Trials & 150 & 150 & 150 & 50 & 60 \\
        \bottomrule
    \end{tabular}
    \end{sc}
    \end{small}
    \end{center}
\end{table*}

\begin{table*}[h]
    \caption{
        \textbf{Averaged Per-Class Error Rates (\%) and SEM of Figure \ref{fig: Ablation study Mult vs LSEL vs Combo}. Frames 1--50.} Blanks mean N/A.
    } \label{tab: Values of ablation on NMNSIT-100f 1--50}
    \begin{center}
    \begin{small}
    \begin{sc}
    \begin{tabular}{llll}
        \toprule
        Time & TANDEM LSEL & TANDEM Mult & TANDEM LSEL+Mult \\
        \midrule
        1.00 & 56.196 $\pm$ 0.044 & 55.705 $\pm$ 0.036 &  \\
        2.00 & 37.742 $\pm$ 0.044 & 36.650 $\pm$ 0.036 & 35.942 $\pm$ 0.148 \\
        3.00 & 27.721 $\pm$ 0.044 & 26.072 $\pm$ 0.036 & 25.203 $\pm$ 0.148 \\
        4.00 & 22.037 $\pm$ 0.044 & 20.019 $\pm$ 0.036 & 19.167 $\pm$ 0.148 \\
        5.00 & 18.486 $\pm$ 0.044 & 16.320 $\pm$ 0.036 & 15.450 $\pm$ 0.148 \\
        6.00 & 16.015 $\pm$ 0.044 & 13.842 $\pm$ 0.036 & 12.997 $\pm$ 0.148 \\
        7.00 & 14.127 $\pm$ 0.044 & 12.028 $\pm$ 0.036 & 11.224 $\pm$ 0.148 \\
        8.00 & 12.664 $\pm$ 0.044 & 10.636 $\pm$ 0.036 & 9.912 $\pm$ 0.148 \\
        9.00 & 11.487 $\pm$ 0.044 & 9.564 $\pm$ 0.036 & 8.880 $\pm$ 0.148 \\
        10.00 & 10.526 $\pm$ 0.044 & 8.692 $\pm$ 0.036 & 8.068 $\pm$ 0.148 \\
        11.00 & 9.711 $\pm$ 0.044 & 7.989 $\pm$ 0.036 & 7.405 $\pm$ 0.148 \\
        12.00 & 9.042 $\pm$ 0.044 & 7.393 $\pm$ 0.036 & 6.862 $\pm$ 0.148 \\
        13.00 & 8.454 $\pm$ 0.044 & 6.897 $\pm$ 0.036 & 6.409 $\pm$ 0.148 \\
        14.00 & 7.945 $\pm$ 0.044 & 6.477 $\pm$ 0.036 & 6.032 $\pm$ 0.148 \\
        15.00 & 7.510 $\pm$ 0.044 & 6.119 $\pm$ 0.036 & 5.715 $\pm$ 0.148 \\
        16.00 & 7.124 $\pm$ 0.044 & 5.830 $\pm$ 0.036 & 5.444 $\pm$ 0.148 \\
        17.00 & 6.803 $\pm$ 0.044 & 5.579 $\pm$ 0.036 & 5.212 $\pm$ 0.148 \\
        18.00 & 6.519 $\pm$ 0.044 & 5.370 $\pm$ 0.036 & 5.025 $\pm$ 0.148 \\
        19.00 & 6.276 $\pm$ 0.044 & 5.188 $\pm$ 0.036 & 4.865 $\pm$ 0.148 \\
        20.00 & 6.060 $\pm$ 0.044 & 5.032 $\pm$ 0.036 & 4.730 $\pm$ 0.148 \\
        21.00 & 5.869 $\pm$ 0.044 & 4.899 $\pm$ 0.036 & 4.623 $\pm$ 0.148 \\
        22.00 & 5.700 $\pm$ 0.044 & 4.785 $\pm$ 0.036 & 4.536 $\pm$ 0.148 \\
        23.00 & 5.556 $\pm$ 0.044 & 4.687 $\pm$ 0.036 & 4.464 $\pm$ 0.148 \\
        24.00 & 5.434 $\pm$ 0.044 & 4.604 $\pm$ 0.036 & 4.401 $\pm$ 0.148 \\
        25.00 & 5.327 $\pm$ 0.044 & 4.537 $\pm$ 0.036 & 4.353 $\pm$ 0.148 \\
        26.00 & 5.229 $\pm$ 0.044 & 4.477 $\pm$ 0.036 & 4.310 $\pm$ 0.148 \\
        27.00 & 5.143 $\pm$ 0.044 & 4.426 $\pm$ 0.036 & 4.275 $\pm$ 0.148 \\
        28.00 & 5.070 $\pm$ 0.044 & 4.383 $\pm$ 0.036 & 4.246 $\pm$ 0.148 \\
        29.00 & 5.006 $\pm$ 0.044 & 4.353 $\pm$ 0.036 & 4.222 $\pm$ 0.148 \\
        30.00 & 4.952 $\pm$ 0.044 & 4.325 $\pm$ 0.036 & 4.203 $\pm$ 0.148 \\
        31.00 & 4.903 $\pm$ 0.044 & 4.299 $\pm$ 0.036 & 4.186 $\pm$ 0.148 \\
        32.00 & 4.858 $\pm$ 0.044 & 4.279 $\pm$ 0.036 & 4.171 $\pm$ 0.148 \\
        33.00 & 4.820 $\pm$ 0.044 & 4.261 $\pm$ 0.036 & 4.160 $\pm$ 0.148 \\
        34.00 & 4.787 $\pm$ 0.044 & 4.246 $\pm$ 0.036 & 4.147 $\pm$ 0.148 \\
        35.00 & 4.760 $\pm$ 0.044 & 4.234 $\pm$ 0.036 & 4.139 $\pm$ 0.148 \\
        36.00 & 4.736 $\pm$ 0.044 & 4.222 $\pm$ 0.036 & 4.133 $\pm$ 0.148 \\
        37.00 & 4.715 $\pm$ 0.044 & 4.211 $\pm$ 0.036 & 4.126 $\pm$ 0.148 \\
        38.00 & 4.697 $\pm$ 0.044 & 4.204 $\pm$ 0.036 & 4.121 $\pm$ 0.148 \\
        39.00 & 4.679 $\pm$ 0.044 & 4.197 $\pm$ 0.036 & 4.118 $\pm$ 0.148 \\
        40.00 & 4.662 $\pm$ 0.044 & 4.193 $\pm$ 0.036 & 4.116 $\pm$ 0.148 \\
        41.00 & 4.651 $\pm$ 0.044 & 4.188 $\pm$ 0.036 & 4.113 $\pm$ 0.148 \\
        42.00 & 4.640 $\pm$ 0.044 & 4.183 $\pm$ 0.036 & 4.112 $\pm$ 0.148 \\
        43.00 & 4.631 $\pm$ 0.044 & 4.180 $\pm$ 0.036 & 4.110 $\pm$ 0.148 \\
        44.00 & 4.624 $\pm$ 0.044 & 4.176 $\pm$ 0.036 & 4.109 $\pm$ 0.148 \\
        45.00 & 4.615 $\pm$ 0.044 & 4.174 $\pm$ 0.036 & 4.109 $\pm$ 0.148 \\
        46.00 & 4.609 $\pm$ 0.044 & 4.172 $\pm$ 0.036 & 4.108 $\pm$ 0.148 \\
        47.00 & 4.604 $\pm$ 0.044 & 4.170 $\pm$ 0.036 & 4.108 $\pm$ 0.148 \\
        48.00 & 4.599 $\pm$ 0.044 & 4.168 $\pm$ 0.036 & 4.108 $\pm$ 0.148 \\
        49.00 & 4.594 $\pm$ 0.044 & 4.167 $\pm$ 0.036 & 4.107 $\pm$ 0.148 \\
        50.00 & 4.591 $\pm$ 0.044 & 4.166 $\pm$ 0.036 & 4.107 $\pm$ 0.148 \\
        \midrule
        Trials & 150 & 150 & 150 \\
        \bottomrule
    \end{tabular}
    \end{sc}
    \end{small}
    \end{center}
\end{table*}

\begin{table*}[h]
    \caption{
        \textbf{Averaged Per-Class Error Rates (\%) and SEM of Figure \ref{fig: Ablation study Mult vs LSEL vs Combo}. Frames 51--100.} Blanks mean N/A.
    } \label{tab: Values of ablation on NMNSIT-100f 51--100}
    \begin{center}
    \begin{small}
    \begin{sc}
    \begin{tabular}{llll}
        \toprule
        Time & TANDEM LSEL & TANDEM Mult & TANDEM LSEL+Mult \\
        \midrule
        51.00 & 4.588 $\pm$ 0.044 & 4.165 $\pm$ 0.036 & 4.107 $\pm$ 0.148 \\
        52.00 & 4.586 $\pm$ 0.044 & 4.164 $\pm$ 0.036 & 4.107 $\pm$ 0.148 \\
        53.00 & 4.584 $\pm$ 0.044 & 4.164 $\pm$ 0.036 & 4.107 $\pm$ 0.148 \\
        54.00 & 4.580 $\pm$ 0.044 & 4.163 $\pm$ 0.036 & 4.106 $\pm$ 0.148 \\
        55.00 & 4.577 $\pm$ 0.044 & 4.163 $\pm$ 0.036 & 4.107 $\pm$ 0.148 \\
        56.00 & 4.576 $\pm$ 0.044 & 4.163 $\pm$ 0.036 & 4.107 $\pm$ 0.148 \\
        57.00 & 4.575 $\pm$ 0.044 & 4.163 $\pm$ 0.036 & 4.107 $\pm$ 0.148 \\
        58.00 & 4.573 $\pm$ 0.044 & 4.162 $\pm$ 0.036 & 4.107 $\pm$ 0.148 \\
        59.00 & 4.572 $\pm$ 0.044 & 4.163 $\pm$ 0.036 & 4.108 $\pm$ 0.148 \\
        60.00 & 4.572 $\pm$ 0.044 & 4.163 $\pm$ 0.036 & 4.108 $\pm$ 0.148 \\
        61.00 & 4.571 $\pm$ 0.044 & 4.163 $\pm$ 0.036 & 4.108 $\pm$ 0.148 \\
        62.00 & 4.570 $\pm$ 0.044 & 4.164 $\pm$ 0.036 & 4.108 $\pm$ 0.148 \\
        63.00 & 4.570 $\pm$ 0.044 & 4.163 $\pm$ 0.036 & 4.108 $\pm$ 0.148 \\
        64.00 & 4.569 $\pm$ 0.044 & 4.163 $\pm$ 0.036 & 4.108 $\pm$ 0.148 \\
        65.00 & 4.569 $\pm$ 0.044 & 4.163 $\pm$ 0.036 & 4.108 $\pm$ 0.148 \\
        66.00 & 4.568 $\pm$ 0.044 & 4.164 $\pm$ 0.036 & 4.109 $\pm$ 0.148 \\
        67.00 & 4.568 $\pm$ 0.044 & 4.164 $\pm$ 0.036 & 4.109 $\pm$ 0.148 \\
        68.00 & 4.567 $\pm$ 0.044 & 4.164 $\pm$ 0.036 & 4.109 $\pm$ 0.148 \\
        69.00 & 4.567 $\pm$ 0.044 & 4.163 $\pm$ 0.036 & 4.109 $\pm$ 0.148 \\
        70.00 & 4.567 $\pm$ 0.044 & 4.164 $\pm$ 0.036 & 4.109 $\pm$ 0.148 \\
        71.00 & 4.566 $\pm$ 0.044 & 4.164 $\pm$ 0.036 & 4.109 $\pm$ 0.148 \\
        72.00 & 4.566 $\pm$ 0.044 & 4.164 $\pm$ 0.036 & 4.109 $\pm$ 0.148 \\
        73.00 & 4.565 $\pm$ 0.044 & 4.164 $\pm$ 0.036 & 4.109 $\pm$ 0.148 \\
        74.00 & 4.565 $\pm$ 0.044 & 4.164 $\pm$ 0.036 & 4.109 $\pm$ 0.148 \\
        75.00 & 4.565 $\pm$ 0.044 & 4.164 $\pm$ 0.036 & 4.109 $\pm$ 0.148 \\
        76.00 & 4.565 $\pm$ 0.044 & 4.165 $\pm$ 0.036 & 4.109 $\pm$ 0.148 \\
        77.00 & 4.565 $\pm$ 0.044 & 4.165 $\pm$ 0.036 & 4.109 $\pm$ 0.148 \\
        78.00 & 4.565 $\pm$ 0.044 & 4.165 $\pm$ 0.036 & 4.109 $\pm$ 0.148 \\
        79.00 & 4.564 $\pm$ 0.044 & 4.165 $\pm$ 0.036 & 4.109 $\pm$ 0.148 \\
        80.00 & 4.565 $\pm$ 0.044 & 4.165 $\pm$ 0.036 & 4.109 $\pm$ 0.148 \\
        81.00 & 4.564 $\pm$ 0.044 & 4.165 $\pm$ 0.036 & 4.109 $\pm$ 0.148 \\
        82.00 & 4.564 $\pm$ 0.044 & 4.165 $\pm$ 0.036 & 4.109 $\pm$ 0.148 \\
        83.00 & 4.564 $\pm$ 0.044 & 4.165 $\pm$ 0.036 & 4.109 $\pm$ 0.148 \\
        84.00 & 4.564 $\pm$ 0.044 & 4.165 $\pm$ 0.036 & 4.109 $\pm$ 0.148 \\
        85.00 & 4.564 $\pm$ 0.044 & 4.165 $\pm$ 0.036 & 4.109 $\pm$ 0.148 \\
        86.00 & 4.564 $\pm$ 0.044 & 4.165 $\pm$ 0.036 & 4.109 $\pm$ 0.148 \\
        87.00 & 4.564 $\pm$ 0.044 & 4.165 $\pm$ 0.036 & 4.109 $\pm$ 0.148 \\
        88.00 & 4.564 $\pm$ 0.044 & 4.165 $\pm$ 0.036 & 4.109 $\pm$ 0.148 \\
        89.00 & 4.564 $\pm$ 0.044 & 4.165 $\pm$ 0.036 & 4.109 $\pm$ 0.148 \\
        90.00 & 4.564 $\pm$ 0.044 & 4.165 $\pm$ 0.036 & 4.109 $\pm$ 0.148 \\
        91.00 & 4.564 $\pm$ 0.044 & 4.165 $\pm$ 0.036 & 4.109 $\pm$ 0.148 \\
        92.00 & 4.564 $\pm$ 0.044 & 4.166 $\pm$ 0.036 & 4.109 $\pm$ 0.148 \\
        93.00 & 4.564 $\pm$ 0.044 & 4.166 $\pm$ 0.036 & 4.109 $\pm$ 0.148 \\
        94.00 & 4.564 $\pm$ 0.044 & 4.166 $\pm$ 0.036 & 4.109 $\pm$ 0.148 \\
        95.00 & 4.564 $\pm$ 0.044 & 4.166 $\pm$ 0.036 & 4.109 $\pm$ 0.148 \\
        96.00 & 4.564 $\pm$ 0.044 & 4.166 $\pm$ 0.036 & 4.109 $\pm$ 0.148 \\
        97.00 & 4.564 $\pm$ 0.044 & 4.166 $\pm$ 0.036 & 4.109 $\pm$ 0.148 \\
        98.00 & 4.564 $\pm$ 0.044 & 4.166 $\pm$ 0.036 & 4.109 $\pm$ 0.148 \\
        99.00 & 4.564 $\pm$ 0.044 & 4.166 $\pm$ 0.036 & 4.109 $\pm$ 0.148 \\
        100.00 & 4.564 $\pm$ 0.044 & 4.166 $\pm$ 0.036 & 4.109 $\pm$ 0.148 \\
        \midrule
        Trials & 150 & 150 & 150 \\
        \bottomrule
    \end{tabular}
    \end{sc}
    \end{small}
    \end{center}
\end{table*}

\clearpage
\section{LLR Trajectories} \label{app: LLR trajectories}
Figures \ref{fig: LLR trajectories of all} and \ref{fig: LLR trajectories of all2} show the LLR trajectories of  NMNIST-H, NMNIST-100f, UCF101, and HMDB51. The base models are the same as those in Figure \ref{fig: SATCs}.

\begin{figure}[htb]
    \begin{minipage}[b]{1.0\linewidth}
        \centering
        \includegraphics[width=\columnwidth, keepaspectratio]
        {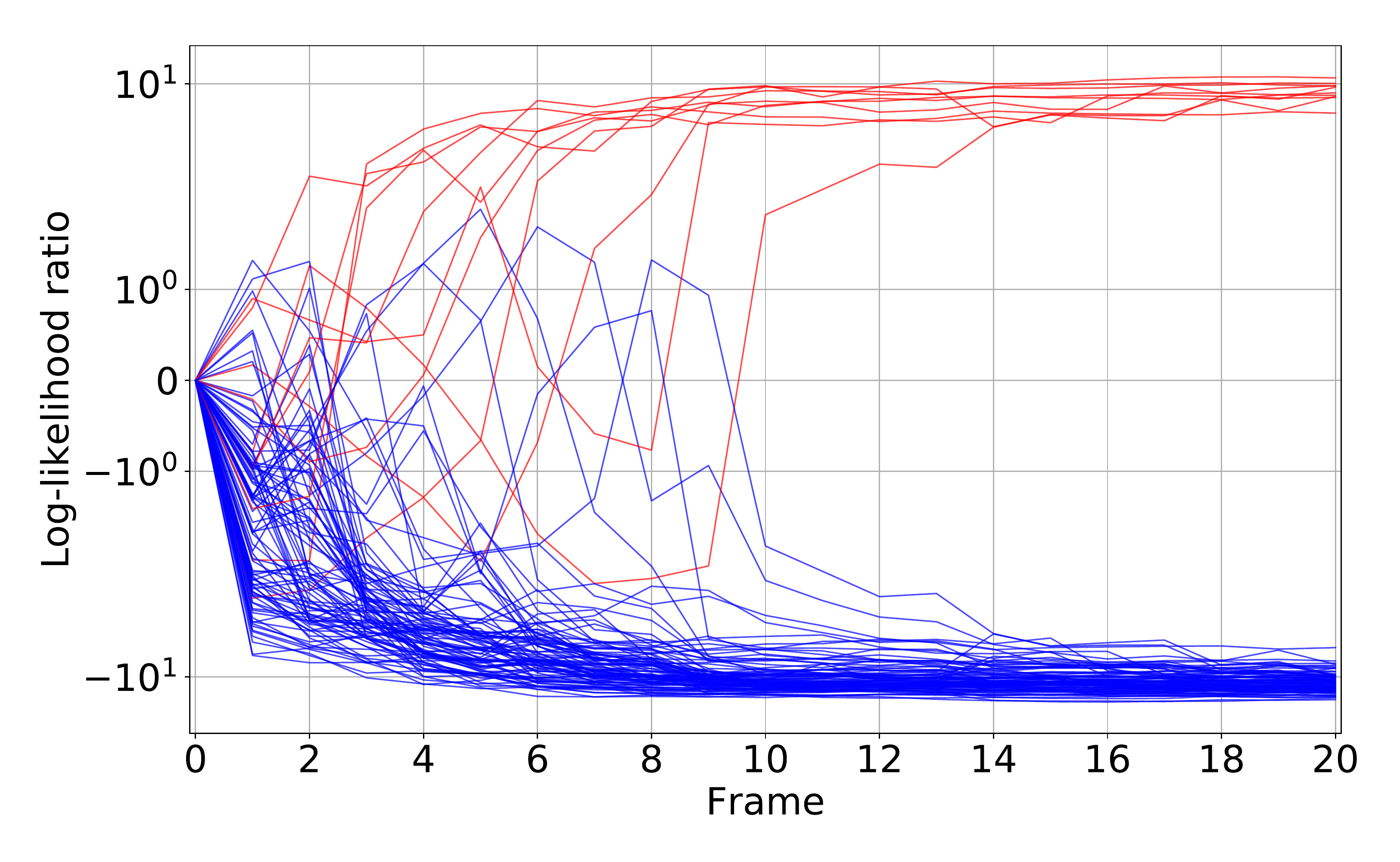} 
    \end{minipage}
    \begin{minipage}[b]{1.0\linewidth}
        \centering
        \includegraphics[width=\columnwidth, keepaspectratio]
        {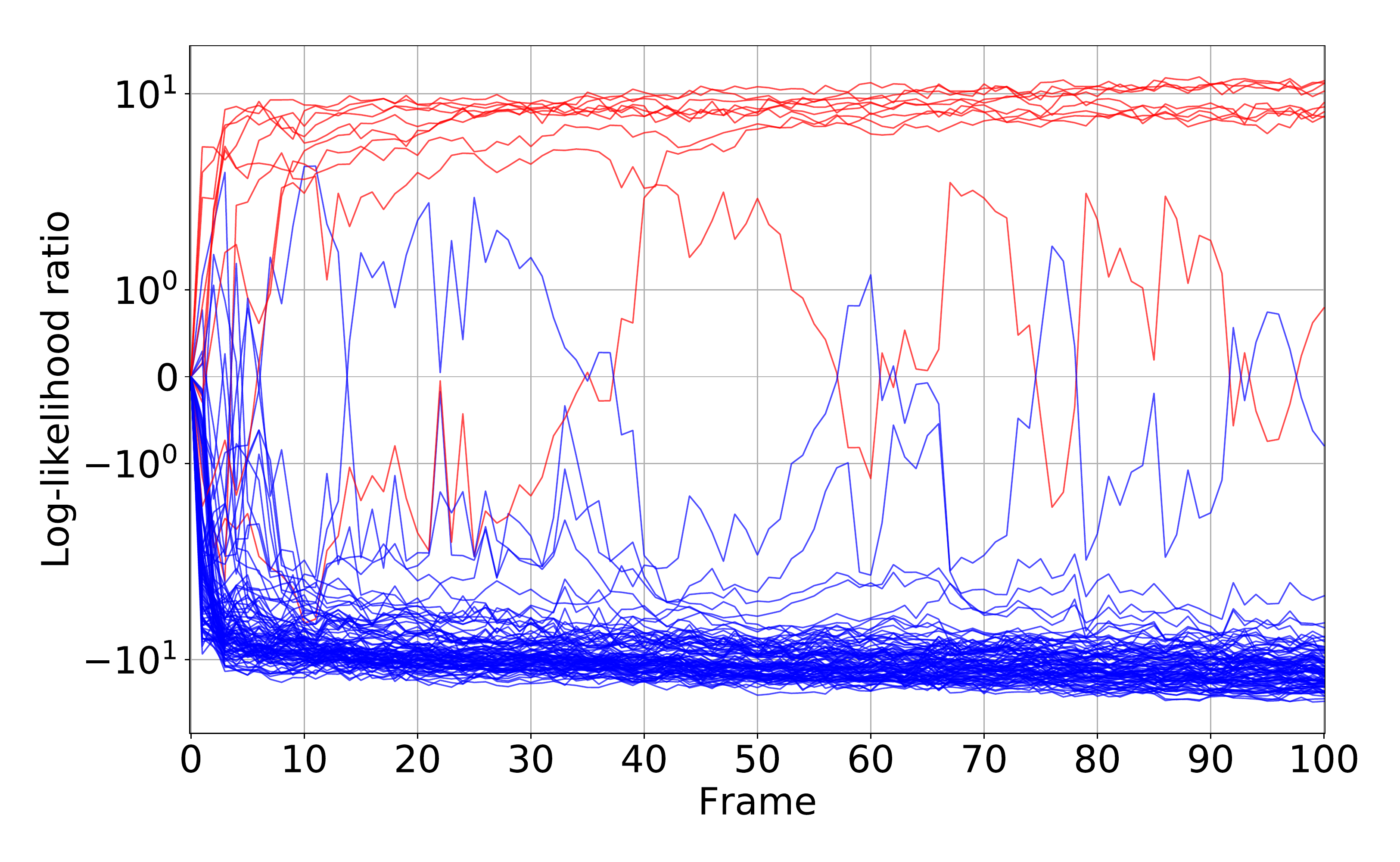} 
    \end{minipage}
    \caption{\textbf{Examples of LLR trajectories. Top: NMNIST-H. Bottom: NMNIST-100f.} The red curves represent $\min_{l} \{ \hat{\la}_{y_i l} \} $, while the blue curves represent $\min_{l} \{ \hat{\la}_{k l} \} $ ($k \neq y_i$). If the red curve reaches the threshold (necessarily positive), then the prediction is correct, while if the blue curve reaches the threshold, then the prediction is wrong. We plot ten different $i$'s randomly selected from the validation set. The red and blue curves are gradually separated as more frames are observed: evidence accumulation.}
    \label{fig: LLR trajectories of all}
\end{figure}

\begin{figure}[htb]
    \begin{minipage}[b]{1.0\linewidth}
        \centering
        \includegraphics[width=\columnwidth, keepaspectratio]
        {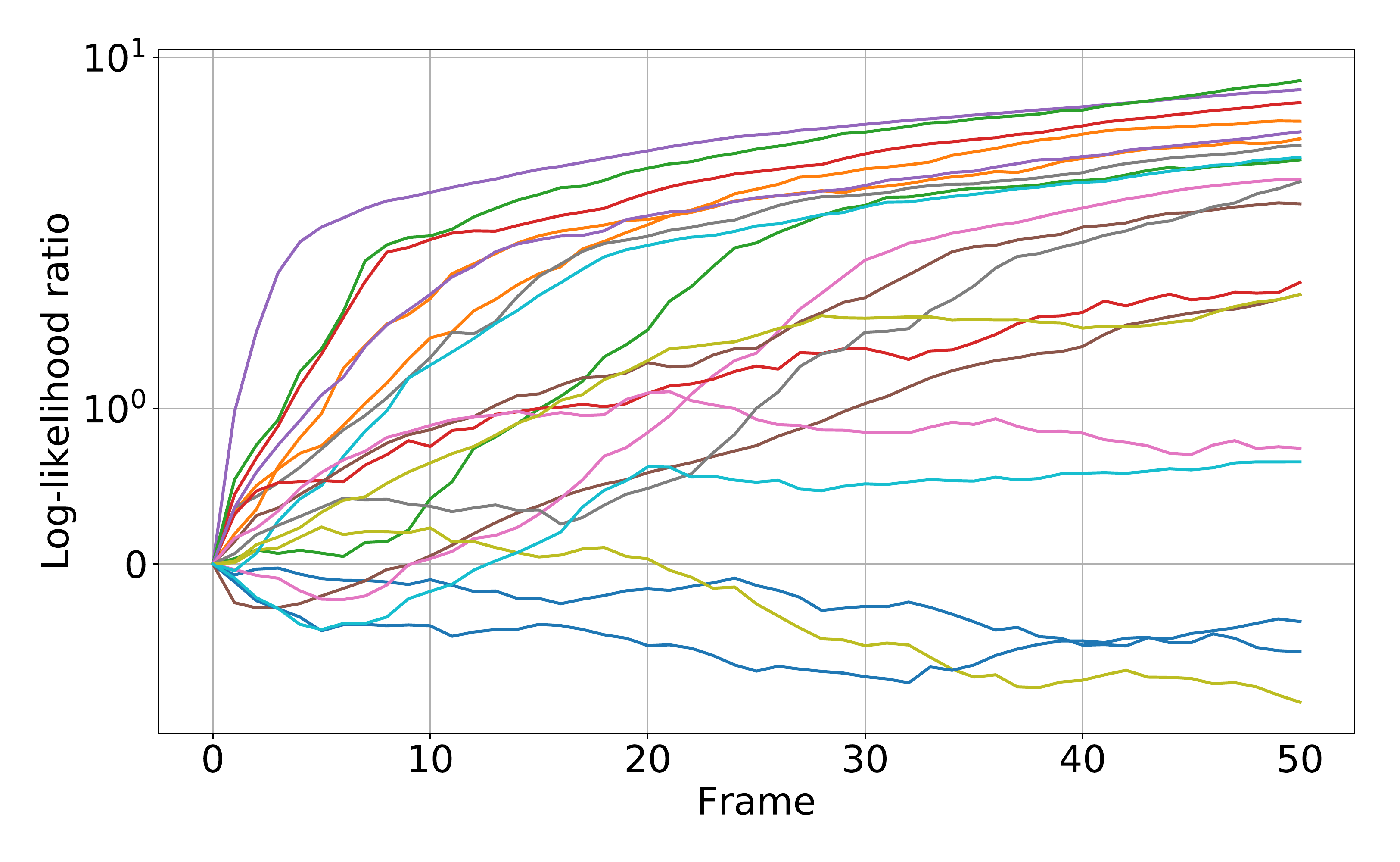} 
    \end{minipage}
    \begin{minipage}[b]{1.0\linewidth}
        \centering
        \includegraphics[width=\columnwidth, keepaspectratio]
        {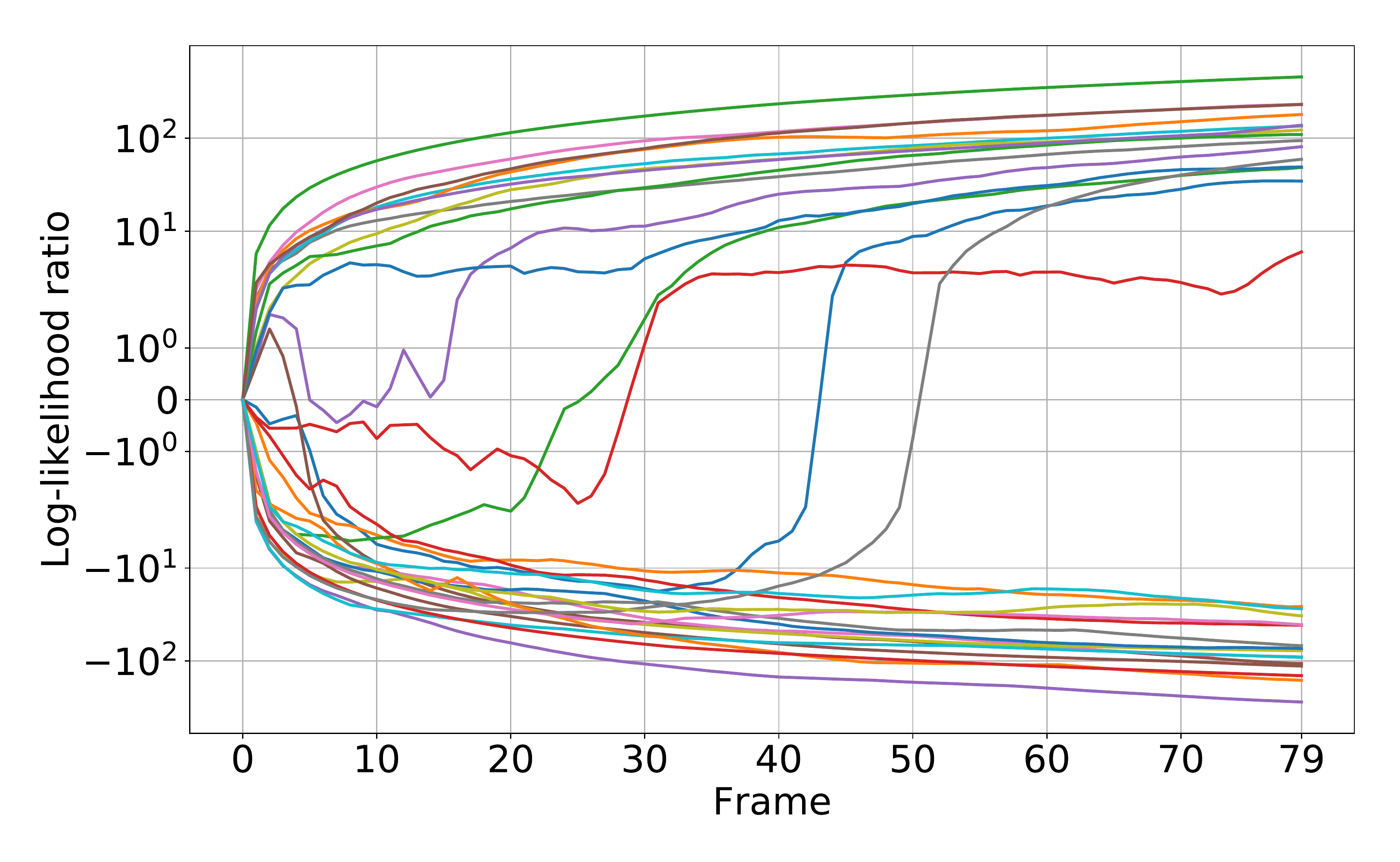} 
    \end{minipage}
    \caption{\textbf{Examples of LLR trajectories. Top: UCF101. Bottom: HMDB51.} All curves are $\min_{l} \{ \hat{\la}_{y_i l} \}$, and the negative rows $\min_{l} \{ \hat{\la}_{k l} \} $ ($k \neq y_i$) are omitted for clarity. Therefore, all the curves should be in the upper half-plane (positive LLRs). We plot 20 and 30 different $i$'s randomly selected from the validation sets of UCF101 and HMDB51, respectively. Several curves gradually go upwards as more frames are observed: evidence accumulation.}
    \label{fig: LLR trajectories of all2}
\end{figure}

\clearpage
\section{NMNIST-H and NMNIST-100f} \label{app: NMNIST-H and NMNIST-100f}
\citeApp{SPRT-TANDEM} propose an MNIST-based sequential dataset for early classification of time series:  \textit{NMNIST}; however, NMNIST is so simple that accuracy tends to saturate immediately and the performance comparison of models is difficult, especially in the early stage of sequential predictions.
We thus create a more complex dataset, NMNIST-H, with higher noise density than NMNIST. Figure \ref{fig: NMNIST-H} is an example video of NMNIST-H. It is hard, if not impossible, for humans to classify the video within 10 frames.

NMNIST-100f is a more challenging dataset than NMNIST-H. Each video consists of 100 frames, which is 5 times longer than in NMNIST and NMNSIT-H.
Figure \ref{fig: NMNIST-100f} is an example video of NMNIST-100f. Because of the dense noise, classification is unrealistic for humans, while MSPRT-TANDEM attains approximately 90 \% accuracy with only 8 frames (see Figure \ref{fig: SATCs}).

\begin{figure*}[htb]
    \begin{center}
    \centerline{\includegraphics[width=450pt]{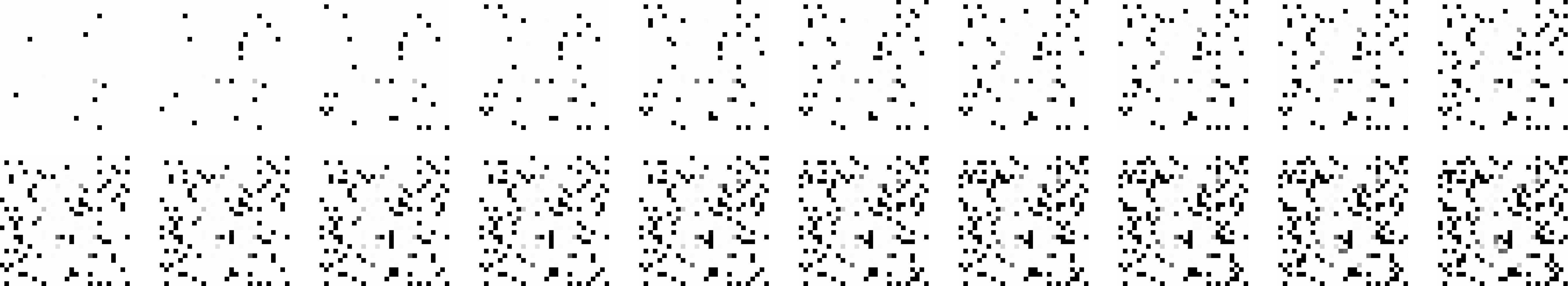}}
    \caption{
        \textbf{NMNIST-H.} The first frame is at the top left, and the last frame is at the bottom right. The original MNIST image (28$\times$28 pixels) is gradually revealed (10 pixels per frame). The label of this example is 6. The mean image is given in Figure \ref{fig: Mean images} (left).
        } \label{fig: NMNIST-H}
    \end{center}
\end{figure*}

\begin{figure*}[htb]
    \begin{center}
    \centerline{\includegraphics[width=450pt]{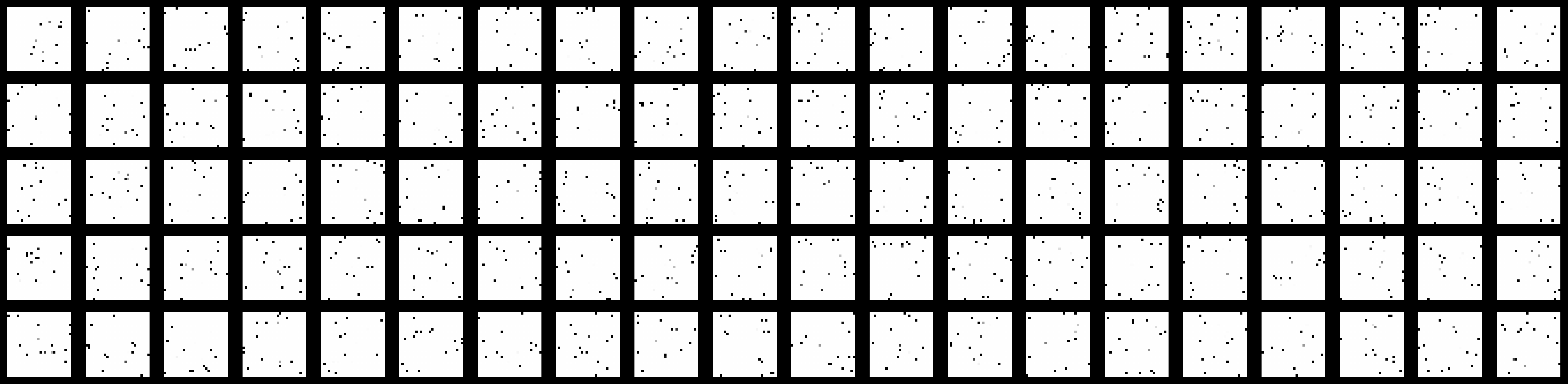}}
    \caption{
        \textbf{NMNIST-100f}. The first frame is at the top left, and the last frame is at the bottom right. An MNIST image (28$\times$28 pixels) is filled with white pixels except for 15 randomly selected pixels. Unlike NMNIST-H, the number of original pixels (15) is fixed throughout all frames. The label of this example is 3. The mean image is given in Figure \ref{fig: Mean images} (right). 
        } \label{fig: NMNIST-100f}
    \end{center}
\end{figure*}

\begin{figure*}[htb]
    \begin{minipage}[b]{0.5\linewidth}
        \centering
        \includegraphics[width=\columnwidth, keepaspectratio]
        {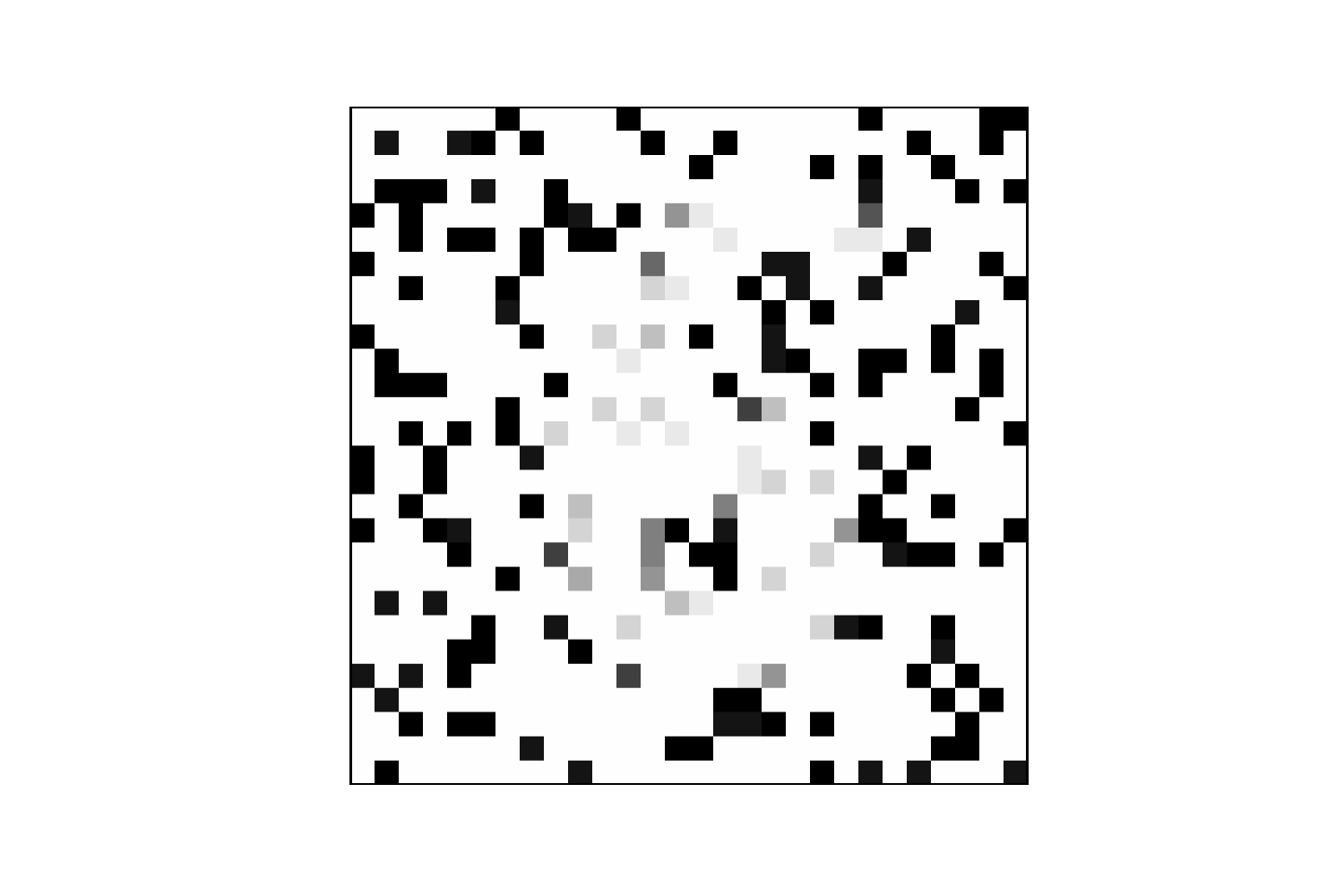} 
    \end{minipage}
    \begin{minipage}[b]{0.5\linewidth}
        \centering
        \includegraphics[width=\columnwidth, keepaspectratio]
        {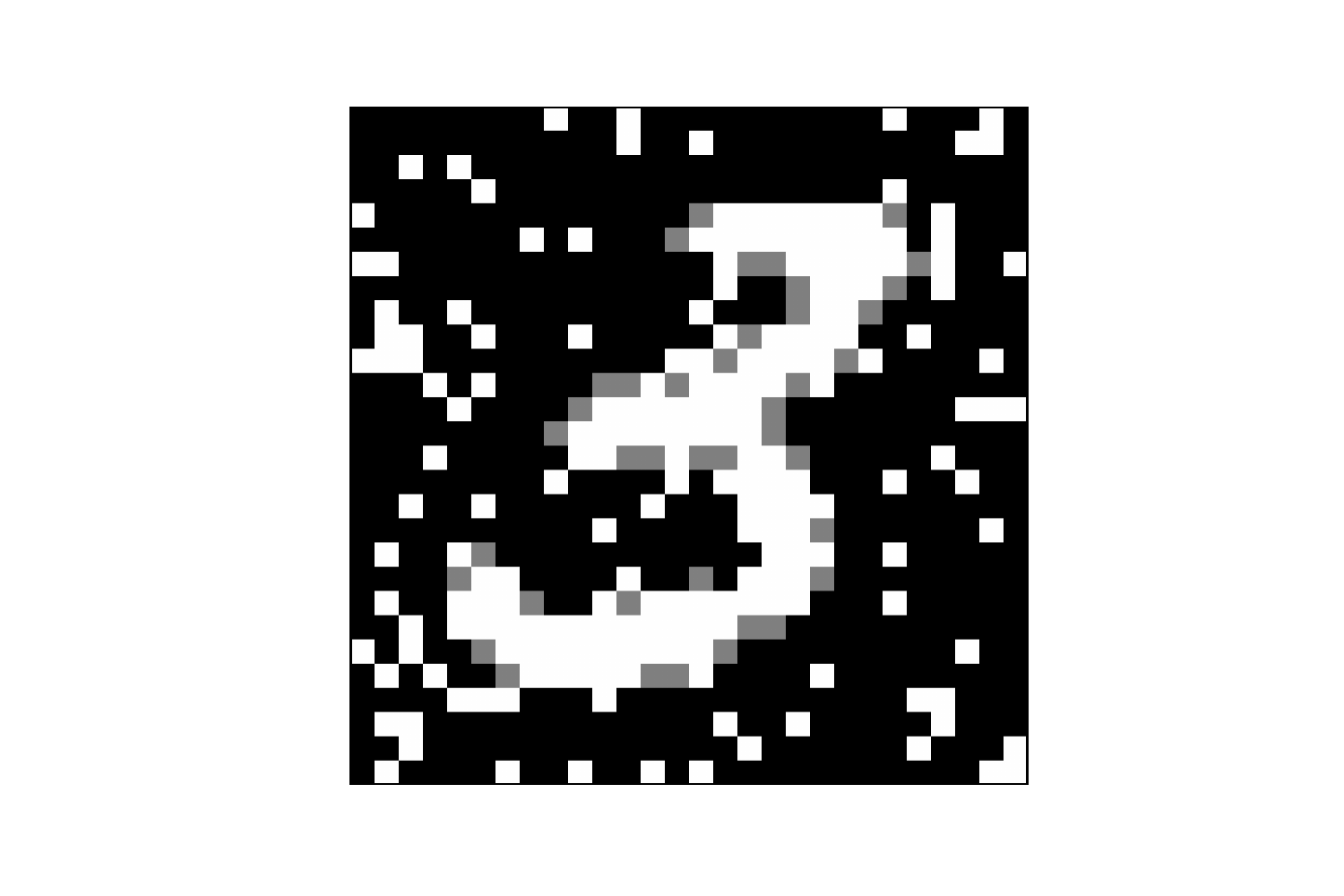} 
    \end{minipage}
    \caption{\textbf{Mean images of Figures \ref{fig: NMNIST-H} (left) and \ref{fig: NMNIST-100f} (right).}}
    \label{fig: Mean images}
\end{figure*}


\clearpage 
\section{Details of Statistical Tests} \label{app: Statistical Tests}

\subsection{Model Comparison: Figure \ref{fig: SATCs}}

For an objective comparison, we conduct statistical tests: two-way ANOVA \citeApp{FisherBook_ANOVA} followed by Tukey-Kramer multi-comparison test \citeApp{Tukey1949, Kramer1956}. In the tests, a small number of trials 
reduces test statistics, making it difficult to claim significance because the test statistic of the Tukey-Kramer test is proportional to $1/\sqrt{(1/n + 1/m)}$, where $n$ and $m$ are trial numbers of two models to be compared. These statistical tests are standard, e.g., in biological science, in which variable trial numbers are inevitable in experiments. All the statistical tests are executed with a customized \citeApp{MATLAB} script. 

In the two-way ANOVA, the two factors are defined as the phase (early and late stages of the SAT curve) and model. The actual numbers of frames shown in Table \ref{tab:phase_def} are chosen so that the compared models can use as similar frames as possible and thus depend only on the dataset. Note that EARLIEST cannot flexibly change the mean hitting time; thus, we include the results of EARLIEST to the groups with as close to the number of frames as possible. 
 
The $p$-values are summarized in Tables \ref{tab:NMNIST_100f_LSELvsLogistic} to \ref{tab:HMDB51_stat}. The $p$-values with asterisks are statistically significant: one, two and three asterisks show $p < 0.05$, $p < 0.01$, and $p < 0.001$, respectively.
Our results, especially in the late phase, are statistically significantly better than those in the early phase, confirming that accumulating evidence leads to better performance. 

\begin{table*}[htb]
  \centering
  \tiny
  \caption{Definition of the phases (in the number of frames).}
    \begin{tabular}{ccccc}
          & \multicolumn{2}{c}{Early phase} & \multicolumn{2}{c}{Late phase}\\
          & EARLIEST & All but EARLIEST  & EARLIEST & All but EARLIEST \\
    \midrule
    NMNIST-H & 1.34 & 2 & 13.41 & 13 \\
    NMNIST-100f & 13.41 & 19 & 99.99 & 100 \\
    UCF101 & 1.36 & 1 & 49.93 & 50 \\
    HMDB51 & 1.43 & 1 & 36.20 & 36 \\
    \bottomrule
    \end{tabular}%
  \label{tab:phase_def}%
  \vskip -0.1in
\end{table*}%

\begin{table*}[htp]
  \centering
  \tiny
  \caption{$p$-values from the Tukey-Kramer multi-comparison test conducted on NMNIST-100f.(Figure \ref{fig: DRE Losses vs LSEL})}
    \begin{tabular}{cc|cccc}
          &       & \multicolumn{2}{c|}{Logistic} & \multicolumn{1}{c|}{LSEL}  \\
          &       & early & \multicolumn{1}{c|}{late} & \multicolumn{1}{c|}{early}  \\
    \midrule
    Logistic & late  & ***1E-07 &              \\
\cmidrule{1-2}    \multirow{1}[2]{*}{LSEL} & early & ***1E-09 & ***1E-09 &   \\
          & late  & ***1E-09  & *2E-3 & ***1E-09          \\
    \bottomrule
    \end{tabular}%
  \label{tab:NMNIST_100f_LSELvsLogistic}%
  \vskip -0.1in
\end{table*}%

\begin{table*}[htb]
  \centering
  \tiny
  \caption{$p$-values from the Tukey-Kramer multi-comparison test conducted on NMNIST-H (Figure \ref{fig: SATCs}).}
    \begin{tabular}{cc|ccccccccc}
          &       & \multicolumn{2}{c|}{MSPRT-TANDEM} & \multicolumn{2}{c|}{NP test} & \multicolumn{2}{c|}{LSTM-s} & \multicolumn{2}{c|}{LSTM-m} & \multicolumn{1}{c}{EARLIEST} \\
          &       & early & \multicolumn{1}{c|}{late} & early & \multicolumn{1}{c|}{late} & early & \multicolumn{1}{c|}{late} & early & \multicolumn{1}{c|}{late} & early \\
    \midrule
    MSPRT-TANDEM & late  & ***1E-07 &       &       &       &       &       &         \\
\cmidrule{1-2}    \multirow{1}[2]{*}{NP test} & early & ***1E-07 & ***1E-07 &       &       &       &       &         \\
          & late  & ***1E-07 & ***1E-07 & ***1E-07 &       &       &       &         \\
\cmidrule{1-2}    \multirow{1}[2]{*}{LSTM-s} & early & ***1E-07 & ***1E-07 &  1.0     &   ***1E-07    &       &       &         \\
          & late  & ***1E-07 & ***1E-07 & ***1E-07 &   1.0    &  ***1E-07     &       &         \\
\cmidrule{1-2}    \multirow{1}[2]{*}{LSTM-m} & early & ***1E-07 & ***1E-07 &  1.0     &   ***1E-07    &  1.0     &  ***1E-07     &         \\
          & late  & ***1E-07 & ***1E-07 & ***1E-07 &   1.0    &   ***1E-07    &   1.0    &  ***1E-07       \\
\cmidrule{1-2}    \multirow{1}[2]{*}{EARLIEST} & early & ***1E-07 & ***1E-07 & ***1E-07 &      ***1E-07 & ***1E-07 &   ***1E-07    &   ***1E-07   & ***1E-07   \\
          & late  & ***1E-07 & ***1E-07 & ***1E-07 & ***1E-07 & ***1E-07 &   ***1E-07    &  ***1E-07     & ***1E-07& ***1E-07  \\
    \bottomrule
    \end{tabular}%
  \label{tab:NMNIST_H_stat}%
  \vskip -0.1in
\end{table*}%

\begin{table*}[htb]
  \centering
  \tiny
  \caption{$p$-values from the Tukey-Kramer multi-comparison test conducted on NMNIST-100f (Figure \ref{fig: SATCs}).}
    \begin{tabular}{cc|ccccccccc}
          &       & \multicolumn{2}{c|}{MSPRT-TANDEM} & \multicolumn{2}{c|}{NP test} & \multicolumn{2}{c|}{LSTM-s} & \multicolumn{2}{c|}{LSTM-m} & \multicolumn{1}{c}{EARLIEST} \\
          &       & early & \multicolumn{1}{c|}{late} & early & \multicolumn{1}{c|}{late} & early & \multicolumn{1}{c|}{late} & early & \multicolumn{1}{c|}{late} & early \\
    \midrule
    MSPRT-TANDEM & late  & ***1E-07 &       &       &       &       &       &         \\
\cmidrule{1-2}    \multirow{1}[2]{*}{NP test} & early & ***1E-07 & ***1E-07 &       &       &       &       &         \\
          & late  & ***1E-07 & 1.0 & ***1E-07 &       &       &       &         \\
\cmidrule{1-2}    \multirow{1}[2]{*}{LSTM-s} & early & ***1E-07 & ***1E-07 &  ***1E-07   &   ***1E-07    &       &       &         \\
          & late  & ***1E-07 & 1.0 & ***1E-07 &  1.0  &  ***1E-07     &       &         \\
\cmidrule{1-2}    \multirow{1}[2]{*}{LSTM-m} & early & ***1E-07 & ***1E-07 &  ***1E-07   &   ***1E-07    &  1.0    &  ***1E-07     &         \\
          & late  & ***1E-07 & 1.0 & ***1E-07 &  1.0 &   ***1E-07    &  1.0  &  ***1E-07       \\
\cmidrule{1-2}    \multirow{1}[2]{*}{EARLIEST} & early & ***1E-07 & ***1E-07 & ***1E-07 &      ***1E-07 & ***1E-07 &   ***1E-07    &   ***1E-07   & ***1E-07   \\
          & late  & ***7E-05 & ***1E-07 & ***1E-07 & ***1E-07 & ***1E-07 &   ***1E-07    &  ***1E-07     & ***1E-07& ***1E-07  \\
    \bottomrule
    \end{tabular}%
  \label{tab:NMNIST_100f_stat}%
  \vskip -0.1in
\end{table*}%

\begin{table*}[htb]
  \centering
  \tiny
  \caption{$p$-values from the Tukey-Kramer multi-comparison test conducted on UCF101 (Figure \ref{fig: SATCs}).}
    \begin{tabular}{cc|ccccccccc}
          &       & \multicolumn{2}{c|}{MSPRT-TANDEM} & \multicolumn{2}{c|}{NP test} & \multicolumn{2}{c|}{LSTM-s} & \multicolumn{2}{c|}{LSTM-m} & \multicolumn{1}{c}{EARLIEST} \\
          &       & early & \multicolumn{1}{c|}{late} & early & \multicolumn{1}{c|}{late} & early & \multicolumn{1}{c|}{late} & early & \multicolumn{1}{c|}{late} & early \\
    \midrule
    MSPRT-TANDEM & late  & ***1E-07 &       &       &       &       &       &         \\
\cmidrule{1-2}    \multirow{1}[2]{*}{NP test} & early & 1.0 & ***1E-07 &       &       &       &       &         \\
          & late  & ***1E-07 & 1.0 & ***1E-07 &       &       &       &         \\
\cmidrule{1-2}    \multirow{1}[2]{*}{LSTM-s} & early & ***1E-07 & ***1E-07 &  ***1E-07   &   ***1E-07    &       &       &         \\
          & late  & ***1E-07 & ***1E-07 & ***1E-07 &  ***1E-07  &  ***1E-07     &       &         \\
\cmidrule{1-2}    \multirow{1}[2]{*}{LSTM-m} & early & ***1E-07 & ***1E-07 &  ***1E-07   &   ***1E-07    & 1.0 & ***1E-07    &         \\
          & late  & ***1E-07 & ***1E-07 & ***1E-07 &  ***1E-07 &   ***1E-07    &  *3E-03  &  ***1E-07       \\
\cmidrule{1-2}    \multirow{1}[2]{*}{EARLIEST} & early & ***1E-07 & ***1E-07 & ***1E-07 &      ***1E-07 & 1E-01 &   ***1E-07    &   *6E-03   & ***1E-07   \\
          & late  & ***1E-07 & ***1E-07 & ***1E-07 & ***1E-07 & ***1E-07 &   ***1E-07    &  ***1E-07     & ***1E-07& ***1E-07  \\
    \bottomrule
    \end{tabular}%
  \label{tab:UCF101_stat}%
  \vskip -0.1in
\end{table*}%

\begin{table*}[htb]
  \centering
  \tiny
  \caption{$p$-values from the Tukey-Kramer multi-comparison test conducted on HMDB51 (Figure \ref{fig: SATCs}).}
    \begin{tabular}{cc|ccccccccc}
          &       & \multicolumn{2}{c|}{MSPRT-TANDEM} & \multicolumn{2}{c|}{NP test} & \multicolumn{2}{c|}{LSTM-s} & \multicolumn{2}{c|}{LSTM-m} & \multicolumn{1}{c}{EARLIEST} \\
          &       & early & \multicolumn{1}{c|}{late} & early & \multicolumn{1}{c|}{late} & early & \multicolumn{1}{c|}{late} & early & \multicolumn{1}{c|}{late} & early \\
    \midrule
    MSPRT-TANDEM & late  & ***1E-07 &       &       &       &       &       &         \\
\cmidrule{1-2}    \multirow{1}[2]{*}{NP test} & early & 1.0 &  ***1E-07  &       &       &       &       &         \\
          & late  & ***1E-07 & 0.2 & ***1E-07 &       &       &       &         \\
\cmidrule{1-2}    \multirow{1}[2]{*}{LSTM-s} & early & 1.0 & ***1E-07 &  1.0     &   ***1E-07    &       &       &         \\
          & late  & ***1E-07 & ***2E-07 & ***1E-07 &   ***2E-02    &  ***1E-07     &       &         \\
\cmidrule{1-2}    \multirow{1}[2]{*}{LSTM-m} & early & 1.0 & ***1E-07 &  1.0     &   ***1E-07    &  1.0     &  ***1E-07     &         \\
          & late  & ***1E-07 & ***1E-07 & ***1E-07 &   ***1E-04   &   ***1E-07    &   1.0    &  ***1E-07       \\
\cmidrule{1-2}    \multirow{1}[2]{*}{EARLIEST} & early & 1.0 & ***1E-07 & 1.0 &      ***1E-07 & 1.0 &   ***1E-07    &  1.0  & ***1E-07   \\
          & late  & ***1E-07 & ***1E-07 & ***1E-07 & ***1E-07 & ***1E-07 &   ***1E-07    &  ***1E-07     & ***7E-07& ***1E-07  \\
    \bottomrule
    \end{tabular}%
  \label{tab:HMDB51_stat}%
  \vskip -0.1in
\end{table*}%

\subsection{Ablation Study: Figure \ref{fig: Ablation study Mult vs LSEL vs Combo}}
We also test the three conditions in the ablation study. We select one phase at the 20th frame to conduct the one-way ANOVA with one model factor: LSEL + Multiplet loss, LSEL only, and Multiplet only. The $p$-values are summarized in Table \ref{tab:ablation_stat}. The result shows that using both the LSEL and the multiplet loss is statistically significantly better than using either of the two losses.

\begin{table*}[htb]
  \centering
  \tiny
  \caption{Figure \ref{fig: Ablation study Mult vs LSEL vs Combo}: $p$-values from the Tukey-Kramer multi-comparison test conducted on the ablation test.}
    \begin{tabular}{c|cc}
          &        \multicolumn{2}{c}{MSPRT-TANDEM} \\
          &        LSEL+Multiplet & LSEL only \\
    \midrule
    LSEL only  & ***1E-09 &  \\
    Multiplet only  & ***2E-07 & ***1E-09  \\
    \bottomrule
    \end{tabular}%
  \label{tab:ablation_stat}%
  \vskip -0.1in
\end{table*}%

\clearpage 
\section{Supplementary Discussion} \label{app: Supplementary Discussion}

\paragraph{Interpretability.}
Interpretability of classification results is one of the important interests in early classification of time series \citeApp{xing2011extracting_localized_class_signature, ghalwash2012early_localized_class_signature, ghalwash2013extraction_localized_class_signature, ghalwash2014utilizing_localized_class_signature, karim2019framework_localized_class_signature}. MSPRT-TANDEM can use the LLR trajectory to visualize the prediction process (see Figures \ref{fig: DRE Losses vs LSEL} (Right) and \ref{fig: LLR trajectories of all}); a large gap of LLRs between two timestamps means that these timestamps are decisive.

\paragraph{Threshold matrix.}
In our experiment, we use single-valued threshold matrices for simplicity. General threshold matrices may enhance performance, especially when the dataset is class-imbalanced \citeApp{longadge2013class, ali2015classification, hong2016dealing}. Tuning the threshold after training is referred to as thresholding, or threshold-moving \citeApp{richard1991neural, buda2018systematic_class_imbalance}. MSPRT-TANDEM has multiple thresholds in the matrix form, and thus it is an interesting future work to exploit such a matrix structure to attain higher accuracy.


\paragraph{How to determine threshold.}
A user can choose a threshold by evaluating the mean hitting time and accuracy on a dataset at hand (possibly the validation dataset, training dataset, or both). As mentioned in Section \ref{sec: Overall Architecture: MSPRT-TANDEM}, we do not have to retrain the model (mentioned in Section \ref{sec: Overall Architecture: MSPRT-TANDEM}). We can modify the threshold even after deployment, if necessary, to address domain shift. This flexibility is a huge advantage compared with other models that require additional training every time the user wants to control the speed-accuracy tradeoff \citeApp{Cai2020ICLROnce-for-all_efficient_deploy}.

\clearpage
{\small
\bibliographystyleApp{abbrv}
\bibliographyApp{_main}
}

\end{document}